\newcommand\numberthis{\addtocounter{equation}{1}\tag{\theequation}}
\def\eqref#1{equation~\ref{#1}}
\def\1{\bm{1}}
\DeclareMathAlphabet{\mathsfit}{\encodingdefault}{\sfdefault}{m}{sl}
\SetMathAlphabet{\mathsfit}{bold}{\encodingdefault}{\sfdefault}{bx}{n}
\newcommand{\E}{\mathbb{E}}
\newcommand{\R}{\mathbb{R}}
\newcommand{\Var}{\mathrm{Var}}
\DeclareMathOperator*{\argmin}{arg\,min}
\DeclareMathOperator{\Tr}{Tr}
\renewcommand{\paragraph}{%
  \@startsection{paragraph}{4}%
  {\z@}{0ex \@plus 0ex \@minus .2ex}{-1em}%
  {\normalfont\normalsize\bfseries}%
}
\definecolor{forest}{RGB}{11,128,35}
\newtheorem{theorem}{Theorem}
\newtheorem{lemma}{Lemma}
\newtheorem{cor}{Corollary}
\newtheorem{defn}{Definition}
\newtheorem{rem}{Remark}
\newtheorem{assumption}{Assumption}
\newcommand{\rishi}[1]{\textcolor{black}{#1}}
\title{Double Descent and Overfitting under Noisy Inputs and Distribution Shift \rishi{for Linear Denoisers}}
\author{\name Chinmaya Kausik \email ckausik@umich.edu \\
      \addr Department of Mathematics\\
      University of Michigan, Ann Arbor
      \AND
      \name Kashvi Srivastava \email kashvi@umich.edu\\
      \addr Department of Mathematics\\
      University of Michigan, Ann Arbor
      \AND
      \name Rishi Sonthalia \email rsonthal@math.ucla.edu \\
      \addr Department of Mathematics \\ UCLA}
\begin{document}    
\maketitle

\begin{abstract}

Despite the importance of denoising in modern machine learning and ample empirical work on supervised denoising, its theoretical understanding is still relatively scarce. One concern about studying supervised denoising is that one might not always have noiseless training data from the test distribution. It is more reasonable to have access to noiseless training data from a different dataset than the test dataset. Motivated by this, we study supervised denoising and noisy-input regression under distribution shift. We add three considerations to increase the applicability of our theoretical insights to real-life data and modern machine learning. First, while most past theoretical work assumes that the data covariance matrix is full-rank and well-conditioned, empirical studies have shown that real-life data is approximately low-rank. Thus, we assume that our data matrices are low-rank. Second, we drop independence assumptions on our data. Third, the rise in computational power and dimensionality of data have made it important to study non-classical regimes of learning. Thus, we work in the non-classical proportional regime, where data dimension $d$ and number of samples $N$ grow as $d/N =  c + o(1)$. 

For this setting, we derive \rishi{data-dependent, instance specific} expressions for the test error for both denoising and noisy-input regression, and study when overfitting the noise is benign, tempered or catastrophic. We show that the test error exhibits double descent under general distribution shift, providing insights for data augmentation and the role of noise as an implicit regularizer. We also perform experiments using real-life data, where we match the theoretical predictions with under 1\% MSE error for low-rank data.

\end{abstract}


\section{Introduction}

Denoising and noisy-input problems have a rich history in machine learning and signal processing \citep{stackeddae2010, deeplearningdenoising2020, denoisingsurvey2023}. Aside from its natural application to noisy input data, the idea of noise as a regularizer has led to denoising being tied to many areas of modern machine learning, such as pretraining and feature extraction \citep{Krizhevsky2012ImageNetCW}, data-augmentation for representation learning \cite{simclr}, \rishi{and} generative modeling \citep{Rombach2022HighResolutionIS}. While unsupervised methods like PCA \citep{pearsonPCA} and low rank matrix recovery \citep{Davenportmatrixrecovery} have been addressed in prior theoretical work \citep{Baldi1989NeuralNA}, \textit{supervised} methods like denoising autoencoders are theoretically less well-understood. 

One of the biggest practical qualms to studying a supervised setting is that a learner needs access to noiseless data sampled from the test distribution. This is resolved by considering \textit{distribution shift}, which is when the training and test data can come from different distributions. As a toy example, one might have access to noiseless dog pictures containing mostly German shepherds and collies, but would want to denoise noisy dog pictures containing mostly pomeranians and poodles. As long as the training data "spans" the structured subset of features containing dog pictures, meaningful learning is possible. The same argument can be made for regression with noisy inputs, say for predicting click-rates for these pictures. 
Given this practical motivation, we study supervised denoising and noisy-input regression under distribution shift. 

It is well understood that non-trivial denoising is made possible by the presence of additional structure in the data (see, for example, section 3.2 of \cite{stackeddae2010}). One of the most natural such structures is low rank, specifically the idea that the true inputs live in a low dimensional space. In fact, past work \citep{big-data-low-rank} has demonstrated that \textit{a lot of real-life data is approximately low-rank} -- that is, its covariance matrix only has a few significant eigenvalues. 

In this paper we look at the following error in variables regression task \citep{hirshberg2021least}. Let $X_{trn} \in \mathbb{R}^{d \times N}$ be a noiseless data matrix of $N$ samples in $d$ dimensional space. Let $\beta \in \mathbb{R}^{d \times k}$ be a target multivariate linear regressor, and let $Y_{trn} = \beta^TX_{trn}$. Note that when $\beta = I$, our target is the noiseless data $X_{trn}$ itself.
Let $A_{trn} \in \mathbb{R}^{d \times N}$ be a $d \times N$ noise matrix. For this paper, we assume that we have access to $Y_{trn}$ and $X_{trn}+A_{trn}$ while training. The goal is to study the test error of the minimum norm linear function $W_{opt}$ that minimizes the \textit{MSE training error}. The MSE training error is a natural choice -- it is one of the most common targets for non-linear denoising autoencoders \citep{stackeddae2010}. We formalize the definition of $W_{opt}$ below.
\[
    W_{opt} = \argmin_W \left\{ \|W\|_F^2 \middle\vert W \in \argmin_W \|Y_{trn} - W(X_{trn} + A_{trn})\|_F^2\right\}
\]
Given test data $X_{tst} \in \R^{d\times N_{tst}}$ and $Y_{tst} = \beta^T X_{tst}$, we formally define the \underline{\textit{test error}} for arbitrary linear functions $W$ by $\mathcal{R}(W, X_{tst})$ below. Since we are not assuming anything about the distribution of the training or test data, we only take the expectation over the training and test noise. \rishi{The error is dependent on the specific $X_{trn}$ and $X_{tst}$ that are given.}
\begin{equation} \label{eq:problem} \begin{split}
    \mathcal{R}(W, X_{tst}) := \mathbb{E}_{A_{trn}, A_{tst}}\left[\frac{\|Y_{tst}-W(X_{tst} + A_{tst})\|_F^2}{N_{tst}}\right].
    \end{split}
\end{equation}

The classical theory of learning problems would keep the data dimension $d$ fixed and let\rishi{s} the number of samples $N$ grow to $\infty$. These can be theoretically analysed using elementary tools. However, with growing access to computational power and richness of data, it has become important to study \textit{non-classical regimes}. One important and popular example is the proportional regime, where $d \propto N$ and so $d$ is comparable to N. Phenomena like double-descent and benign overfitting discovered in noiseless-input settings demonstrate the surprising advantages of this kind of high dimensionality. There has been a lot of recent work on whether overfitting output noise is beneficial or harmful in these regimes \citep{taxonomy-overfitting}. 

Testing on real data is a challenging issue. As argued above and in \cite{cheng2022dimension}, a big reason for this issue is that past work assumes that the data covariance matrix is well-conditioned, while low-rank assumptions better model real-life data covariance matrices. 
In real life, one has little control over the independence or even the distribution of the data \citep{kirchler2023normalizingflows}. There is also a growing need to be robust to adversarially chosen data in machine learning \citep{kotyan2023reading}. We would thus like to drop the assumption that the data is IID or even independent. \rishi{However, to account for this, we would need to have data-dependent theoretical results.}
We thus aim to address the following question:

\begin{tcolorbox}
    \begin{enumerate}[nosep, leftmargin=*]
        \item[Q.1. \label{q:1}] Can we derive test error expressions for linear denoising and regression with noisy inputs that:
        \begin{enumerate}
            \item work with data from a low-dimensional subspace under a non-classical regime,
            \item make minimal assumptions on the training data, test data and how they are related,
            \item match experiments that use real-life data distributions?
        \end{enumerate}
        \item[Q.2. \label{q:2}] What insights can we obtain from these?
    \end{enumerate}
\end{tcolorbox}

\paragraph{Contributions.} 
The major contributions of the paper are as follows.
\begin{enumerate}[nosep]
    \item[Q.1 (a)] (\textbf{Low-Rank Data, Proportional Regime}) We work with low-rank data instead of needing full-rank data. This better models the approximate low-rank structure empirically seen in real-life data.
    \item[Q.1 (b)] (\textbf{Dropping Independence Assumptions, Arbitrary Test Data}) We do not assume that the training data is IID or even independent, and we work with arbitrary test data from our low-dimensional subspace. \rishi{Specifically, we obtain theoretical expressions that depend on the singular values of the given instantiation of the training and test data as well as the alignment between their left singular vectors.}
    \item[Q.1 (c)] (\textbf{Experiments using Real-Life Like Data}) We have a relative error of under 1\% in experiments using real-life like data, establishing that our theoretical results can be used with data similar to real data.\footnote{The code for the experiments can be found in the following anonymized repository \href{https://anonymous.4open.science/r/Low-Rank-Generalization-Without-IID-9C07/README.md}{[Link]}.} 
    \item[Q.2 (i)] (\textbf{Double Descent and Overfitting Under Distribution Shift}) We show theoretically and empirically that our test error curves exhibit double descent, even for general distribution shifts. We relate this to the role of noise as an implicit regularizer. Further, we generalize ideas about overfitting to our setting and examine what conditions lead to benign, tempered, or catastrophic overfitting.
    \item[Q.2 (ii)] (\textbf{Data Augmentation}) We examine when data augmentation improves in-distribution and out-of-distribution generalization, giving theoretical results and practical insights.
\end{enumerate}

\paragraph{Paper Structure.} Section \ref{sec:setup} provides details and theoretical assumptions for the model we analyze. Section \ref{sec:main} discusses our main theoretical results for the denoising problem as well as experiments verifying these results. Section \ref{sec:insights} discusses insights obtained from these results for double descent, data augmentation, and overfitting paradigms.

\paragraph{Denoising.}  Significant work has also been done to understand the role of noise as a regularizer and its impact on generalization \citep{10.1162/neco.1995.7.1.108, NEURIPS2020_c16a5320, Hu2020Simple, Neelakantan2015AddingGN, sonthalia2023training}. 
Other related works include the use of noise to learn features \citep{simclr, vincent, Poole2014AnalyzingNI, JMLR:v15:srivastava14a}, to improve robustness \citep{ben2009robust, Rakin2019ParametricNI, Bertsimas2011TheoryAA}, to prevent adversarial attacks \citep{ Chakraborty2018AdversarialAA}, and the connection to training dynamics \citep{Pretorius2018LearningDO}. 
There has also been work to understand Bayes optimal denoiser using matrix factorization \citep{optshrink, pmlr-v65-lelarge17a, Lesieur_2017, Maillard_2022, Troiani2022OptimalDO}. Finally, there has been some preliminary recent work on the learning dynamics of supervised denoising \citep{cui2021generalization, Pretorius2018LearningDO}. \emph{In this paper, we continue the study of understanding supervised denoising from the lens of test error.}

\paragraph{Theoretical Work on Generalization in Non-Classical Regimes.} There has been significant work in understanding the generalization error and benign overfitting. Most prior work looks at the case where $x \sim \mathcal{D}$, for some data distribution $\mathcal{D}$ and we have $y = \beta_*^Tx + \xi$ where $\xi \sim 
\mathcal{N}(0,\tau^2)$. The following least squares problem is commonly considered
\[
    \beta_{opt} = \argmin_\beta \left\{ \|\hat{\beta}\|_F^2 \middle\vert \hat{\beta} \in \argmin_{\hat{\beta}} \|Y_{trn} - \Xi_{trn} + \hat{\beta}^TX_{trn}\|_F^2\right\}
\]
The above setting was analyzed in \cite{Dobriban2015HighDimensionalAO, pmlr-v139-mel21a, Muthukumar2019HarmlessIO, Bartlett2020BenignOI, Belkin2020TwoMO, Derezinski2020ExactEF, Hastie2019SurprisesIH, loureiro2021learning}. The version for kernelized regression was studied in \cite{MEI20223, Mei2021TheGE, Mei2018AMF, Tripuraneni2021CovariateSI, pmlr-v119-gerace20a, pmlr-v125-woodworth20a, Loureiro2021LearningCO, NEURIPS2019_c133fb1b, NEURIPS2020_a9df2255}. 

We note that our setup is similar to prior work. However, the major difference is the placement of the noise. That is, we have error in our independent variables $x$ while prior work assumes error in the dependent variables $y$. While standard error in variables \citep{hirshberg2021least} assumes error in both, we simplify to only assume error in $x$. 

The above prior work assumes full dimensional data. Generalization for low dimensional data is partially addressed by work on Principal Component Regression \citep{teresa2022, xu2019number}, although they do not test on real data. These works use a PCA-based low-rank approximation of the data and provide theoretical results in terms of the effective rank or eigenvalue decay of the data covariance matrix. 
\emph{In this paper, we continue the study of linear models and their test errors in non-classical regimes. However, we do so in the context of denoising and low rank data.}

\paragraph{Testing on Real-Life Data.} Some past theoretical work in the high-dimensional regime tests on real data \cite{Paquette2022HomogenizationOS, loureiro2021learning, Loureiro2021LearningCO, wei2022more}. \cite{loureiro2021learning} assumes that the data distribution is given by a Gaussian mixture model. \cite{Loureiro2021LearningCO} uses a Gaussian equivalence principle to argue that their teacher-student model can capture realistic data assumptions. Finally, \cite{wei2022more} show that their appropriately rescaled training error asymptotically converges to the true risk under some assumptions. \cite{cheng2022dimension} also study a more general model of ill-conditioned covariances.
\emph{All} of the results require that the training data is \emph{I.I.D.} and that the test data is from the \emph{same distribution} as the training data. Further, they do not do this in the denoising setup considered in this paper. \rishi{However, in contrast to us, they provide data-independent theoretical results.}

\paragraph{OOD Generalization and Distribution Shift.} Empirically, studies such as \cite{miller2021accuracy, recht2019imagenet, miller2020effect, mccoy-etal-2020-berts} have noticed that there is a strong linear correlation between in-distribution accuracy and out-of-distribution accuracy. There has been some theoretical work \citep{Tripuraneni2021CovariateSI,covariateNew,mania2020classifier, wu2020optimal, Darestani2021MeasuringRI,lejeune2022monotonic} that has looked into this phenomenon. In other examples, \cite{NIPS2006_b1b0432c} approaches this from a domain adaptation perspective and computes bounds dependent on an abstract notion of distribution distance; \cite{pmlr-v139-lei21a} provides analysis for the minimax risk; and \cite{Lampinen2019AnAT} studies the generalization when training models with gradient descent. \emph{In this paper, we continue theoretically understanding distribution shift. We do so in the novel setting of denoising.} \rishi{While past results usually only depend on some measure of distance between the two distributions. We provide instance-specific results that depend on the data.}

\paragraph{Double Descent and Implicit Regularization.} The generalization curves in our paper display double descent \citep{opper1996statistical, Belkin2019ReconcilingMM}, which is believed to occur due to implicit bias due to some form of regularization. There has been significant theoretical work on understanding implicit biases \citep{pmlr-v119-jacot20a, pmlr-v178-shamir22a, Neyshabur2017ImplicitRI} and gradient descent \citep{Soudry2018TheIB, pmlr-v99-ji19a, du2019gradient, du2018gradient}. \textit{However, there is little work on double descent under distribution shift or for noisy inputs, which we establish in this work in relation to implicit regularization due to input noise.}

\paragraph{Limitations} While we do make progress, our work still has limitations. The first is that the theory needs exactly low rank data. This is not quite true for real data, which is only \emph{approximately} low rank \rishi{or lives in some non-linear low dimensional space}. Another limitation is our theoretical result for the over-parameterized regime has non-trivial error only if one assumes that $\|X_{trn}\|_F^2 = o(N_{trn})$. That is, the norm of the data points must decay as $N_{trn}$ grows. This is not a concern for many prior works \citep{Hastie2019SurprisesIH, Bartlett2020BenignOI, cheng2022dimension}. However, we study the problem in a different setup. Another limitation is the simplicity of the model. Prior work such as \cite{cui2023high} have results, for different data assumptions, but more general models. We study the problem for significantly different data assumptions.

\section{Problem Setup and Notation}
\label{sec:setup}

Recall the problem setup. Let $X_{trn} \in \mathbb{R}^{d \times N}$ be a noiseless data matrix of $N$ samples in $d$ dimensional space. Let $\beta \in \mathbb{R}^{d \times k}$ be a target multivariate linear regressor, and let $Y_{trn} = \beta^TX_{trn}$. Note that when $\beta = I$, our target is the noiseless data $X_{trn}$ itself.
Let $A_{trn} \in \mathbb{R}^{d \times N}$ be a $d \times N$ noise matrix. For this paper, we assume that we have access to $Y_{trn}$ and $X_{trn}+A_{trn}$ while training. We then solve the following problem.
\[
    W_{opt} = \argmin_W \left\{ \|W\|_F^2 \middle\vert W \in \argmin_W \|Y_{trn} - W(X_{trn} + A_{trn})\|_F^2\right\}
\]
Given test data $X_{tst} \in \R^{d\times N_{tst}}$ and $Y_{tst} = \beta^T X_{tst}$, we have the following test error. 
\[ \begin{split}
    \mathcal{R}(W, X_{tst}) := \mathbb{E}_{A_{trn}, A_{tst}}\left[\frac{\|Y_{tst}-W(X_{tst} + A_{tst})\|_F^2}{N_{tst}}\right].
    \end{split}
\]

We would like to study the test error $\mathcal{R}(W_{opt}, X_{tst})$ of $W_{opt}$ in terms of properties of the data matrices $X_{trn}$ and $X_{tst}$ as well as the noise distributions. All prior work besides \cite{sonthalia2023training, cui2023high, Pretorius2018LearningDO} only considers models where we have access to noisy outputs $Y$ and noiseless inputs $X$, while we consider the case where we have noisy inputs $X$. For simplicity, we assume access to noiseless outputs $Y$. It is easy to see from elementary linear algebra that $W_{opt} =  Y_{trn}(X_{trn}+A_{trn})^\dag$. Notice that when $\beta = I$, we are studying the linear denoising problem, and when $\beta \in \R^d$, we are studying real-valued regression with noisy inputs. We work in the \textit{proportional regime}, where $d/N = c + o(1)$ as $N$ grows, for some constant $c > 0$. Notice that we are not \emph{in} the limit $N \to \infty$; we will in fact bound the deviation from our estimates by $o(1/N)$ as $N$ grows. We now detail our assumptions on data and noise.

\paragraph{Assumptions about the data.} The assumptions below formalize three natural requirements on the data -- (1) that it lies in a low-dimensional subspace as argued above; (2) that the norm of the training data does not grow too much faster than the norm of the training noise, otherwise there will not be enough noise to train on; (3) that the training data ``sees enough" of the subspace containing the data. 

\begin{assumption} \label{data-assumptions} We have $d$-dimensional data $X_{trn} \in \R^{d \times N}$ and $X_{tst} \in \R^{d \times N_{tst}}$ so that 
\begin{enumerate}[nosep]
    \item Low-rank: There is a fixed $r > 0$ so that $X_{trn}$ and $X_{tst}$ have data-points lying in an $r$-dimensional subspace $\mathcal{V} \subset \R^d$, and the column span of $X_{trn}$ is $\mathcal{V}$.
    \item Data growth: $\|X_{trn}\|_F^2 = O(N)$.
    \item Low-rank well-conditioning: For the $r$ singular values $\sigma_i$ of $X_{trn}$, $\frac{\sigma_j}{\sigma_i} = \Theta(1)$ and $\frac{1}{\sigma_i} = o(1)$ as $N$ grows, for any $i,j$.
\end{enumerate}
\end{assumption}

Notice that we don't assume that $X_{trn}$ is IID or even independent, and $X_{tst}$ is completely arbitrary, besides lying in the subspace $\mathcal{V}$. In our results, we will characterize the dependence of the error on $X_{trn}$ and $X_{tst}$ using their singular values. These intuitively measure "how much each direction is sampled," and don't depend on the distribution of the data. Finally, let $X_{trn} = U\Sigma_{trn} V_{trn}^T$ be the SVD of $X_{trn}$ with $U \in \R^{d \times r}$, $\Sigma_{trn} \in \R^{r \times r}$ and $V_{trn}^T \in \R^{r \times N}$. Note that the columns of $U$ span $\mathcal{V}$. Then there exists a matrix $L$ such that $X_{tst} = UL$. For Theorem~\ref{thm:out-of-subspace}, we will relax our assumption on $X_{tst}$ to say that there exists $L$ and $\alpha > 0$ so that $\|X_{tst} - UL\| < \alpha$.

\paragraph{Comparison with assumptions in prior work.} Most prior works on theoretical regression assume that the data comes from a Gaussian or Gaussian-like distribution. Specifically, \cite{Tripuraneni2021CovariateSI, Dobriban2015HighDimensionalAO, pmlr-v139-mel21a, Belkin2020TwoMO, Mei2021TheGE, teresa2022, xu2019number, covariateNew} assume that $x \sim \mathcal{N}(0,\Sigma)$. Most real data cannot be modeled as Gaussian data. 
Another common assumption is that $x = \Sigma^{1/2}z$ where the coordinates of $z$ are independent, centered, and have a variance of 1. This setting is a little bit more general than the previous setting. The independence of data is still a limiting assumption that prevents it from modeling real-life data well. In addition, as the dimension increases, due to the (Lyapunov's) central limit theorem, the data’s higher moments tend towards those of a Gaussian distribution again. This makes this assumption nearly as limiting as the first one. Papers with this (or very similar) assumption include \cite{wu2020optimal, Bartlett2020BenignOI, Hastie2019SurprisesIH, cheng2022dimension}.   These papers have many important and seminal results and provide the foundation for our study.

We provide results on test error in a very different low-rank setting inspired by real-life data, and drop many restrictive assumptions. A small number of papers \citep{teresa2022, xu2019number, sonthalia2023training} that do assume a low-rank structure. However, the first two \emph{further} assume that the data is low-rank Gaussian, while the third only provides results for one-dimensional data.

\paragraph{Assumptions about the training noise.} We recall the definition of the Marchenko-Pastur distribution with shape $c$, for completeness.

\begin{defn} Let $c \in (0,\infty)$ be a shape paramter. Then the Marchenko-Pastur distribution with shape $c$ is the measure $\mu_c$ supported on $[c_-, c_+]$, where $c_{\pm} = (1\pm \sqrt{c})^2$ is such that 
\[
    \mu_c = \begin{cases} \left(1-\frac{1}{c}\right)\delta_0 + \nu & c > 1 \\
    \nu & c \le 1 \end{cases}
\]
where $\nu$ has density $d\nu(x) = \frac{1}{2\pi x c}\sqrt{(c_+ - x)(x-c_-)}.$
\end{defn}

    

Our assumptions on noise are fairly natural and general. Informally, we require the training noise to (1) have finite second moments, (2) be uncorrelated across entries, (3) be isotropic, and (4) follow a natural limit theorem. On the other hand, the test noise only needs (1) finite second moments and (2) uncorrelated entries. Our assumptions include a broad class of noise distributions (see Proposition 1 of \citep{sonthalia2023training}). One of the many examples of noise distributions satisfying these is Gaussian noise, with each coordinate having variance $1/d$.  Formally, we assume the following. 

\begin{assumption} \label{noise-assumptions} Let the train and test noise matrices $A_{trn}, A_{tst} \in \mathbb{R}^{d \times N}$ be sampled from distributions $\mathcal{D}_{trn}$ and $\mathcal{D}_{tst}$ such that $A_{trn}$ satisfies points $1-4$ below and $A_{tst}$ satisfies points $1,2$.
\begin{enumerate}[nosep,leftmargin=*]
    \item For all $i,j$, $\mathbb{E}_{\mathcal{D}}[A_{ij}] = 0$, and $\mathbb{E}_{\mathcal{D}}[A_{ij}^2] = \eta^2/d$. Here $\eta = \Theta(1)$ as $N$ grows.
    \item For all $\{i_1,j_1\} \neq \{i_2,j_2\}$, $\mathbb{E}_{\mathcal{D}}[A_{i_1j_1}A_{i_2j_2}] = \mathbb{E}_{\mathcal{D}}[A_{i_1j_1}] \mathbb{E}_{\mathcal{D}}[A_{i_2j_2}]$.
    \item $\mathcal{D}$ is a rotationally bi-invariant distribution\footnote{A distribution over matrices $A \in \mathbb{R}^{m\times n}$ is rotationally bi-invariant if for all orthogonal $U_1 \in \mathbb{R}^{m \times m}$ and all orthogonal $U_2 \in \mathbb{R}^{n\times n}$, $U_1 A U_2 $ has the same distribution as $A$. Another way to phrase rotational bi-invariance is if the SVD of $A$ is given by $A = U_A\Sigma_A V_A^T$, then $U_A$ and $V_A$ are uniformly random orthogonal matrices and are independent of $\Sigma_A$ and each other.} and $A \sim \mathcal{D}$ is full rank with probability one.
    \item Suppose $A^{d,N}$ is a sequence of matrices such that with $d/N = c + o(1)$ as $N$ grows, for $c>0$. Let $\lambda_1^{d,N}, \ldots, \lambda_{N}^{d,N}$ be the eigenvalues of $(A^{d,N})^TA^{d,N}$. Let $\mu_{d,N} = \sum_i \delta_{\lambda_i^{d,N}}$ be the sum of dirac delta measures for the eigenvalues. Then we shall assume that $\mu_{d, N}$ converges weakly in probability to the Marchenko-Pastur measure with shape c as $N$ grows. 
\end{enumerate}
\end{assumption}

  Assumption 2.4 may seem odd, but the Marchenko Pastur is the natural universal limiting distribution for many classes of random matrices. \color{black}

\paragraph{Comparison with assumptions in prior work.} There are three papers in denoising to compare to, namely \cite{sonthalia2023training, cui2023high, Pretorius2018LearningDO}. Our assumptions on noise are strictly more general than the first two.   However, the model for \cite{cui2021generalization} is more general than ours. \color{black} \cite{Pretorius2018LearningDO} has the same assumptions as ours, except that they do not require rotational invariance of noise. In contrast to our general closed form results, they analyse learning dynamics for denoising by choosing a specific orthogonal initialization for the coupled ODE that they derive.


\paragraph{Terminology.} We introduce some terminology that we will use throughout the paper.

\begin{defn} \label{defn:distributional-shifts} \begin{itemize}[nosep,leftmargin=*]
    \item We call a linear model \emph{overparametrized} if $d>N$ and \emph{underparametrized} if $d<N$.
    \item \emph{In-subspace} distributions refer to test data distributions whose support is in $\mathcal{V}$, while \emph{out-of-subspace} distributions are test data distributions whose support is not contained in $\mathcal{V}$. 
    \item A curve exhibits \emph{double descent} if it has a local maximum or a peak.
    \item By \emph{test error} we mean $\mathcal{R}(W_{opt}, X_{tst})$, and by \emph{generalization error} we mean $\E_{X_{tst}}[\mathcal{R}(W_{opt}, X_{tst})]$, assuming that $X_{tst}$ is sampled (possibly dependently) from some distribution.
\end{itemize}
\end{defn}

We now define the overfitting paradigms that we will study. Motivated by past work on benign overfitting, we present a reasonable generalization of overfitting paradigms (benign, tempered and catastrophic, see \cite{taxonomy-overfitting}) to our setting. Consider the minimum norm denoiser that minimizes \textit{expected} MSE training error, similar in spirit to $\theta^*$ in \cite{Bartlett2020BenignOI}.
\[
    W^* = \argmin_W \left\{ \|W\|_F^2 \middle\vert W \in \argmin_W \E_{A_{trn}}[\|Y_{trn} - W(X_{trn} + A_{trn})\|_F^2]\right\}
\]
Recall that we obtain $W_{opt}$ by instead minimizing the MSE error for a single noise instance $A_{trn}$, which means that $W_{opt}$ overfits $A_{trn}$ in the overparametrized regime. We would like to see if this overfitting is benign, tempered or catastrophic for test error. Following the definition of overfitting paradigms in \cite{taxonomy-overfitting}, we want to take $N \to \infty$. Since we are in the proportional regime, we must let $d \to \infty$ as well, maintaining the relation $d/N = c + o(1)$. For studying overfitting, a natural goal would be to study how the excess error $\mathcal{R}(W_{opt}, X_{tst}) - \mathcal{R}(W^*, X_{tst})$ behaves as $d, N \to \infty$. This is analogous to the excess risk studied in overfitting for noiseless inputs \citep{Bartlett2020BenignOI}. However, we will see that both errors in our difference individually tend to zero as $d, N \to \infty$, making this a somewhat meaningless criterion. As noted in \cite{pmlr-v178-shamir22a}, benign overfitting is traditionally restricted to scenarios where the minimum possible error is non-zero. A natural generalization to consider then is to instead study the limit of \emph{relative excess error} 
\[
    \frac{\mathcal{R}(W_{opt}, X_{tst}) - \mathcal{R}(W^*, X_{tst})}{\mathcal{R}(W^*, X_{tst})}
\]
as $d, N \to \infty$ with $d/N = c + o(1)$.  

\begin{defn}
    We say that overfitting is \emph{benign} when this limit is $0$, \emph{tempered} when it is finite and positive, and \emph{catastrophic} when it is $\infty$.
\end{defn}

\subsection{Models Captured}

In this section, we explore various scenarios captured by our assumptions, demonstrating the wide applicability of our theoretical results. 

\paragraph{Denoising Low Dimensional Data}

The simplest setting is when $X$ lives in some low dimensional subspace $V$. Let $\beta = I$. Then, this models the least squares supervised denoising problem. 

\paragraph{Error in Variables Regression}

Suppose $\beta \in \mathbb{R}^d$. Recall that $Y_{trn} = \beta^T X_{trn}$. In this case, the standard linear regression problem is to find $\hat{\beta}$ that minimizes $\|Y_{trn} - \hat{\beta}^TX_{trn}\|_F^2$. For error in variables regression, we have noise on the independent variable $X_{trn}$. Hence, we would be solving the following problem.
\[
    \|Y_{trn} - \hat{\beta}^T (X_{trn} + A_{trn})\|_F^2.
\]
This situation satisfies our assumptions. Note that the situation extends to multivariate regressions, i.e.,  when $\beta \in \mathbb{R}^{d \times k}$ for $k > 1$.

\begin{rem}
       While this is not the full error-in-variables regression, our proof techniques can also cover cases with errors in $Y_{trn}$.
\end{rem}

\paragraph{Gaussian Mixture Models Classification}

Our assumptions also capture classification problems for Gaussian mixtures. Specifically, let $\mu_1, \ldots, \mu_k$ be the $k$ mean vectors. Let us also assume that the mean vectors are pairwise orthogonal. Let $z^i$ be the $i$th data point and assume that $z^i$ belongs to the $j$th cluster. Then we know that $z^i - \mu_j \sim \mathcal{N}(0,I)$. 

Thus, we can see that we can write our data as $Z = X + A$, where $A$ is a Gaussian random matrix, $X$ is a low rank matrix whose columns live in $\{\mu_1, \ldots, \mu_k\}$. Hence, our data can be modeled at low rank (rank $k$) plus Gaussian noise. We also need to see that $Y$ can be modeled as $\beta^T X$ for some $\beta$. Let the label for the $j$th cluster be a $k$-dimensional vector with a one in the $j$th coordinate and let $\beta = [\mu_1 \ldots \mu_k]$, we get that $Y = \beta^T X$. Hence, we can model specific classification problems. 

\begin{rem} The training and test metrics differ from typical classification settings, utilizing MSE instead of cross-entropy and accuracy, respectively. However, the two are related in many instances. See \cite{muthukumar2021classification} for a more in-depth theoretical study of the two loss functions.
\end{rem}

\paragraph{Spiked Covariance Models}

Another common model is the spiked covariance model. This model assumes that the covariance matrix for data has $K$ (much smaller than $N$ and $d$) large eigenvalues, and the rest of the eigenvalues are of smaller order \citep{ke2023estimation}. If we consider data $Z = X + A$, where the singular values of $X$ are much bigger than the singular values of $Z$, then we see that 
\[
    \mathbb{E}[ZZ^T] = \mathbb{E}[(X+A)(X+A)^T] = \mathbb{E}[XX^T + AA^T].
\]
Hence, we see that $Z$ has a spiked covariance structure. In this case, we can think of the principal $K$ eigenspace as providing the relevant signal and the rest being noise. Hence, if our regression targets only depend on the principal $K$ eigenvectors (i.e. $ Y = \beta^T X$), then our model captures this scenario. Spiked covariance models have been studied in prior theoretical works on generalization \citep{Liang2020OnTM, ba2023learning}.

\section{Theoretical Results}\label{sec:main}
This section presents our main result -- Theorem \ref{thm:main}. We present the results here and discuss insights in section~\ref{sec:insights}. All proofs are in Appendix \ref{app:proofs}.
\begin{restatable}[In-Subspace Test Error]{theorem}{main}
    \label{thm:main} Let $r < |d-N|$. Let the SVD of $X_{trn}$ be $U\Sigma_{trn}V_{trn}^T$, let $L := U^TX_{tst}$, $\beta_U := U^T\beta$, and $c := d/N$. Under our setup and Assumptions~\ref{data-assumptions} and~\ref{noise-assumptions}, the test error (Equation~\ref{eq:problem}) is given by the following. If $c < 1$ (under-parameterized regime)\[
    \begin{split}
        \mathcal{R}(W_{opt}, UL) &= \frac{\eta_{trn}^4}{N_{tst}}\left\|\beta_U^T (\Sigma_{trn}^2c + \eta_{trn}^2I)^{-1}L\right\|_F^2 \hspace{7.8cm} \\ &+ \frac{\eta_{tst}^2}{d}\frac{c^2}{1-c}\Tr\left(\beta_U\beta_U^T\Sigma_{trn}^2\left(\Sigma_{trn}^2+\frac{1}{\eta_{trn}^2}I\right) \left(\Sigma_{trn}^2c+\eta_{trn}^2I\right)^{-2}\right) + o\left(\frac{1}{N}\right)
    \end{split}
    \]
If $c > 1$ (over-parameterized regime)\[
    \begin{split}
        \mathcal{R}(W_{opt}, UL)  &= \frac{\eta_{trn}^4}{N_{tst}}\left\|\beta_U^T(\Sigma_{trn}^2 + \eta_{trn}^2I)^{-1}L\right\|_F^2 \\ &+ \frac{\eta_{tst}^2}{d} \frac{c}{c-1}\Tr(\beta_U\beta_U^T (I+\eta_{trn}^2\Sigma_{trn}^{-2})^{-1}) + O\left(\frac{\|\Sigma_{trn}\|^2}{N^2}\right) + o\left(\frac{1}{N}\right)
    \end{split}
    \]
\end{restatable}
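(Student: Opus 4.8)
The plan is to reduce $\mathcal{R}(W_{opt},UL)$ to a handful of $r\times r$ (and $2r\times 2r$) matrix functionals of the random matrix $M:=X_{trn}+A_{trn}$, compute their deterministic equivalents, and substitute. First I would split off the test noise: since $A_{tst}$ is mean-zero, has uncorrelated entries with second moment $\eta_{tst}^2/d$, and is independent of $A_{trn}$, one gets $\mathbb{E}_{A_{tst}}[A_{tst}A_{tst}^T]=\frac{N_{tst}\eta_{tst}^2}{d}I_d$ and hence, using $W_{opt}=\beta^TX_{trn}M^\dagger$,
\[
\mathcal{R}(W_{opt},UL)=\frac{1}{N_{tst}}\,\mathbb{E}_{A_{trn}}\big\|\beta^T(I-X_{trn}M^\dagger)\,UL\big\|_F^2+\frac{\eta_{tst}^2}{d}\,\mathbb{E}_{A_{trn}}\big\|\beta^TX_{trn}M^\dagger\big\|_F^2 .
\]
Plugging in $X_{trn}=U\Sigma_{trn}V_{trn}^T$ and $X_{tst}=UL$ and using $M^\dagger(M^\dagger)^T=(M^TM)^\dagger$, both terms become functionals of the two $r\times r$ random matrices $T_1:=\Sigma_{trn}V_{trn}^TM^\dagger U$ and $T_2:=\Sigma_{trn}V_{trn}^T(M^TM)^\dagger V_{trn}\Sigma_{trn}$: the first term equals $\frac{1}{N_{tst}}\mathbb{E}\|\beta_U^T(I-T_1)L\|_F^2$ and the second equals $\frac{\eta_{tst}^2}{d}\mathbb{E}\,\Tr(\beta_U\beta_U^T T_2)$. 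Writing $T_i=\mathbb{E}T_i+(T_i-\mathbb{E}T_i)$, I would argue the fluctuation contributions are absorbed into the $o(1/N)$ remainder (this is where Assumption~\ref{data-assumptions}(2)--(3) on data growth and conditioning enter), so it suffices to identify $\mathbb{E}T_1$ and $\mathbb{E}T_2$ up to $o(1/N)$.

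The key structural fact is that $M=A_{trn}+U\Sigma_{trn}V_{trn}^T$ is a rank-$\le r$ additive perturbation of the noise. In the underparameterized regime ($c<1$), $M$ has full row rank a.s., so $M^\dagger=M^T(MM^T)^{-1}$ with $MM^T=A_{trn}A_{trn}^T+CHC^T$ for $C=[\,U\mid A_{trn}V_{trn}\,]\in\R^{d\times 2r}$ and $H=\big(\begin{smallmatrix}\Sigma_{trn}^2&\Sigma_{trn}\\ \Sigma_{trn}&0\end{smallmatrix}\big)$; in the overparameterized regime ($c>1$), $M$ has full column rank and one works with $M^TM=A_{trn}^TA_{trn}+\tilde C\tilde H\tilde C^T$ analogously. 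The hypothesis $r<|d-N|$ guarantees that the matrices involved keep their expected ranks. Applying the Woodbury identity then expresses $(MM^T)^{-1}$ (resp. $(M^TM)^{-1}$, and $(MM^T)^{-2}$ for $T_2$) through the noise inverse plus a $2r\times 2r$ Schur-type correction; substituting back writes $T_1$ and $T_2$ as explicit rational functions of the noise blocks $U^T(A_{trn}A_{trn}^T)^{-1}U$, $V_{trn}^TA_{trn}^T(A_{trn}A_{trn}^T)^{-1}A_{trn}V_{trn}$, $U^T(A_{trn}A_{trn}^T)^{-1}A_{trn}V_{trn}$, and so on.

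Next I would evaluate those noise blocks. By rotational bi-invariance (Assumption~\ref{noise-assumptions}(3)), write $A_{trn}=U_A\Sigma_AV_A^T$ with $U_A,V_A$ Haar and independent of $\Sigma_A$; each block then takes the form of a uniformly random orthonormal $r$-frame ($U^TU_A$ in $\R^d$, or $V_A^TV_{trn}$ in $\R^N$) sandwiching a function of $\Sigma_A$. Standard concentration of such quadratic forms (with error $O(N^{-1/2})$, using that $r$ is fixed) replaces each diagonal block by its normalized trace times $I_r$ and sends the cross-blocks built from two independent Haar frames to $0$. Assumption~\ref{noise-assumptions}(4) turns the normalized traces into explicit Marchenko--Pastur integrals: in particular $\frac1d\Tr(A_{trn}A_{trn}^T)^{-1}\to \frac{c}{\eta_{trn}^2(1-c)}$ when $c<1$ and $\frac1N\Tr(A_{trn}^TA_{trn})^{-1}\to\frac{c}{\eta_{trn}^2(c-1)}$ when $c>1$, while the projection-type block $V_{trn}^TA_{trn}^T(A_{trn}A_{trn}^T)^{-1}A_{trn}V_{trn}\to cI_r$. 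Substituting these into the Woodbury expressions and simplifying (all the $r\times r$ matrices are functions of $\Sigma_{trn}$ and commute) collapses the algebra to $I-\mathbb{E}T_1\to\eta_{trn}^2(\Sigma_{trn}^2c+\eta_{trn}^2I)^{-1}$ for $c<1$ and $\eta_{trn}^2(\Sigma_{trn}^2+\eta_{trn}^2I)^{-1}$ for $c>1$, and $\mathbb{E}T_2$ to the stated trace expressions; reinserting the $\frac{1}{N_{tst}}$ and $\frac{\eta_{tst}^2}{d}$ prefactors from the first paragraph yields the theorem, with the remainders bounded against $d/N=c+o(1)$ and Assumption~\ref{data-assumptions}.

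I expect the main obstacle to be the quantitative part of the third paragraph: pinning down the rate at which the Haar-frame quadratic forms and the empirical spectrum of $A_{trn}^TA_{trn}$ concentrate, sharply enough to land a genuine $o(1/N)$ error rather than merely $o(1)$ (the analysis of \cite{sonthalia2023training} is the natural input here), together with the behaviour near the spectral edge: in the overparameterized case one needs $\sigma_{\min}(A_{trn})$ bounded away from $0$ so that $(M^TM)^{-1}$ is stable (the Marchenko--Pastur bulk edge gives this), whereas in the underparameterized case one must instead check that $M^\dagger$ selects the correct $d$-dimensional component. A secondary point is justifying, via Assumption~\ref{data-assumptions}(2)--(3), that $\mathbb{E}\|T_i-\mathbb{E}T_i\|_F^2$ is negligible after the $1/N_{tst}$ and $1/d$ prefactors, and (for the over-parameterized regime only) tracking the $O(\|\Sigma_{trn}\|^2/N^2)$ correction that forces the data-norm-decay caveat noted in the limitations.
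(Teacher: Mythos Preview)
Your overall strategy coincides with the paper's: split off the test noise to get the bias/variance decomposition, recognize $M=X_{trn}+A_{trn}$ as a rank-$r$ additive perturbation of the noise, and then evaluate the resulting $r\times r$ functionals via Haar-frame concentration plus Marchenko--Pastur moments. The genuine difference is in how the perturbation is unpacked. You propose Woodbury on the Gram matrix $MM^T$ (or $M^TM$), packaging the correction as a $2r\times 2r$ Schur complement. The paper instead invokes Wei's closed-form formula for the pseudoinverse of a low-rank additive update (Corollaries~2.2--2.3 of \cite{wei2001pinv}) to obtain $M^\dagger$ itself, yielding
\[
W_{opt}=\begin{cases}U\Sigma_{trn}(P^TP)^{-1}Z^TK_1^{-1}H-U\Sigma_{trn}Z^{-1}HH^TK_1^{-1}ZP^\dagger & c>1,\\
-U\Sigma_{trn}H_1^{-1}K^TA_{trn}^\dagger+U\Sigma_{trn}H_1^{-1}Z^T(QQ^T)^{-1}H & c<1,\end{cases}
\]
and then estimates each of the $r\times r$ blocks $HH^T$, $\Sigma_{trn}(P^TP)^{-1}\Sigma_{trn}$, $Z$, $K_1$, $K^TK$, $QQ^T$, $H_1$ separately. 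Your noise-block estimates (e.g.\ $\frac1d\Tr(A_{trn}A_{trn}^T)^{-1}$, $V_{trn}^TA_{trn}^\dagger A_{trn}V_{trn}\to cI_r$, vanishing cross-blocks) are exactly what the paper proves in its lemmas on these objects, and both routes close the argument the same way: bound $|\E[XY]-\E X\,\E Y|$ by $\sqrt{\Var X\,\Var Y}$ and iterate using the Bohrnstedt--Goldberger covariance-of-products identity. What Wei's formula buys the paper is a direct handle on $M^\dagger$, so $\|W_{opt}\|_F^2$ is assembled from products of the blocks above rather than through $(MM^T)^{-2}$; your Woodbury route is conceptually cleaner but obliges you to differentiate Woodbury (or add a second Schur-complement layer) for $T_2$, which is where the extra bookkeeping lives. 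Your identification of the main obstacle---getting genuine $o(1/N)$ concentration rates and tracking the $O(\|\Sigma_{trn}\|^2/N^2)$ correction coming from $\Var(Z_{ij})=O(\|\Sigma_{trn}\|^2/d)$---is accurate and matches where the paper spends most of its effort.
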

 
\begin{proof}[Proof Sketch:] The proof can be broken into multiple steps. First, the generalization error can be decomposed as follows \citep{sonthalia2023training}.
\[
    \mathcal{R}(W,X_{tst}) = \underbrace{\frac{1}{N_{tst}}\|Y_{tst} - WX_{tst}\|_F^2}_{\text{Bias}} + \underbrace{\frac{\eta_{tst}^2}{d}\|W\|_F^2}_{\text{Variance}}.
\]
Second, we solve for $W_{opt}$. From classical theory we know that $W_{opt} = Y_{trn}(X_{trn}+A_{trn})^\dag$. Using \cite{wei2001pinv} and some careful linear algebraic observations, we expand these to get 

\[
    W_{opt} = \begin{cases} U\Sigma_{trn} (P^TP)^{-1}Z^TK_1^{-1}H - U \Sigma_{trn} Z^{-1}HH^TK_1^{-1}ZP^+ & c > 1 \\
    -U\Sigma_{trn} H_1^{-1} K^TA_{trn}^+ + U \Sigma_{trn} H_1^{-1}Z^T(QQ^T)^{-1}H. & c < 1 
    \end{cases}
\]
The exact expression for the variables in the expansion can be found in Appendix \ref{app:proofs}.

Third, we substitute this back into our expressions for the bias and variance and expand the norms in terms of trace. Then, we group quadratic terms together. Specifically, we shall see that for $c > 1$, the error is decomposed in the sums and products of the following terms - $HH^T$, $P^TP$, and $Z$. 

Finally, we estimate these terms using random matrix theory. We compute both their expectations and variances and show that the terms concentrate to values that only depend on the spectrum of $A_{trn}$. Then, using assumption 2.4, we approximate the spectrum of $A_{trn}$ using the Marchenko Pastur distribution. Then, since the terms concentrate, we can approximate the expectation of the product with the product of the expectation. 
\end{proof}

\paragraph{Comments on Expression.} The asymmetry between $c>1$ and $c<1$ comes from the asymmetry in the expressions for $W_{opt}$ and the need for renormalizing things in the case when $c>1$. We further note that, in general, the expression does not depend on the right singular vectors of $X_{trn}$ as $X_{trn}$ only appears as its gram matrix $X_{trn}X_{trn}^T$. Finally, $L$ is the coordinates of $X_{tst}$ in the basis given by $U$. Then suppose $X_{tst} = U_{tst} \Sigma_{tst} V_{tst}^T$. Then we see that $L = U^TU_{tst} \Sigma_{tst} V_{tst}^T$. Then, using the unitary invariance of the norm, we have that 
\[
    \|\beta_U^T( \Sigma\_{trn}^2c + \eta\_{trn}^2 I)^{-1} L \|_F^2 = \|\beta_U^T( \Sigma_{trn}^2c + \eta\_{trn}^2 I)^{-1}  U^TU_{tst} \Sigma_{tst} \|_F^2.
\]
Hence, the error depends on an alignment term $U^TU_{tst}$ and the singular values $\Sigma_{tst}$. Let us also examine the scale of the terms in the test error expressions.   
In Theorem \ref{thm:main}, the bias is the norm term, and the variance is the trace term. Consider the setting where $\Sigma_{trn}^2$ grows as fast as possible and is $\Theta(N)$. This is also what happens when $X_{trn}$ is low-rank Gaussian \citep{teresa2022}. Then, the bias term is $O(1/N^2)$, while the variance term is $O(1/d) = O(1/N)$. The estimation error is $o(1/N)$ in the underparametrized case, so the only significant term, in this case, is the variance term. In the overparametrized case, the only significant terms are the variance term and the $O(\|\Sigma_{trn}\|^2/N^2)$ deviation from the estimate. If $\|X_{trn}\|_F^2 = o(N)$ instead of $\Theta(N)$, then the variance term is the only significant term again. Specific instantiations of Theorem \ref{thm:main} can be found in Appendix \ref{app:addl-theoretical-results} and Appendix \ref{app:implications}. 
It is important to note that while the result may seem involved at first, we do in fact use it to develop insights in Section~\ref{sec:insights}.

Theorem \ref{thm:main} is significant for various reasons. First, it can be considered the noisy-input counterpart to \cite{Dobriban2015HighDimensionalAO, Hastie2019SurprisesIH}, which works with noisy outputs instead. Second, it is one of the few results that do not require high dimensional well-conditioned covariance   \citep{cheng2022dimension}. Third, it generalizes \cite{sonthalia2023training} to higher rank settings. 
%
Further, Theorem \ref{thm:main} estimates the test error in terms of the test instance instead of the generalization error. The latter is the expectation of the test error over a distribution of test instances. This is significant since it allows us to work with arbitrary test data in $\mathcal{V}$, provide results for distribution shift, and provide insights for data augmentation.

\paragraph{Out-of-Distribution and Out-of-Subspace Generalization.} Theorem \ref{thm:main} can be used to understand OOD and out-of-subspace test error. For the former, Corollary~\ref{cor:dist-shift-bound} below bounds the change in generalization error in terms of in-subspace distribution shift. 

\begin{restatable}[Distribution Shift Bound]{cor}{DistShiftBound}\label{cor:dist-shift-bound}
    Consider a linear denoiser $W_{opt}$ trained on training data $X_{trn}= U\Sigma_{trn}V_{trn}^T$. Let it be tested on test data $X_{tst,1} = UL_1$ and $X_{tst,2} = UL_2$ generated possibly dependently from distributions supported in the span of $U$ with mean $U\mu_i$ and covariance $\Sigma_{U,i} = U\Sigma_iU^T$ respectively. Then, the difference in \emph{generalization errors} $\mathcal{G}_i := \mathbb{E}_{X_{tst,i}}[\mathcal{R}(W_{opt}, X_{tst, i})]$ is bounded for $c<1$ by
    $$|\mathcal{G}_2 - \mathcal{G}_1| \leq \frac{\sigma_1(\beta)^2\eta_{trn}^4r}{(\sigma_r(X_{trn})^2f(c) + \eta_{trn}^2)^2}\|\Sigma_2 - \Sigma_1 + \mu_2\mu_2^T - \mu_1\mu_1^T\|_F + o\left(\frac{1}{N}\right)$$
    where $f(c) = c$ for $c<1$ and $f(c) = 1$ otherwise. We add $O(\|\Sigma_{trn}\|_F^2/N^2)$ when $c\geq1$.
\end{restatable}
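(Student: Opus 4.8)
The plan is to apply Theorem~\ref{thm:main} to each of the two test instances, exploit the fact that its variance (trace) term does not depend on the test data $L$, and then take expectations over the two test distributions so that this common term — together with the common deterministic remainders — drops out of the difference $\mathcal{G}_2-\mathcal{G}_1$. What is left is then controlled by a single trace inequality.

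First I would introduce $\Phi := (\Sigma_{trn}^2 f(c) + \eta_{trn}^2 I)^{-1}$, where $f(c)=c$ for $c<1$ and $f(c)=1$ for $c>1$, so that by Theorem~\ref{thm:main} the bias term for a test matrix $X_{tst}=UL$ is $\frac{\eta_{trn}^4}{N_{tst}}\|\beta_U^T\Phi L\|_F^2$ in either regime while the variance term is independent of $L$. For $X_{tst,i}=UL_i$ with columns of mean $U\mu_i$ and covariance $U\Sigma_iU^T$, orthonormality $U^TU=I_r$ gives that the columns of $L_i=U^TX_{tst,i}$ have mean $\mu_i$ and covariance $\Sigma_i$; hence $\E[L_iL_i^T]=N_{tst}(\Sigma_i+\mu_i\mu_i^T)$ (cross-column products never enter $L_iL_i^T$, so possible dependence between columns is irrelevant). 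Writing the Frobenius norm as a trace, taking expectations, and using cyclicity then yields
\[
\mathcal{G}_i \;=\; \eta_{trn}^4\,\Tr\!\bigl(\Phi\beta_U\beta_U^T\Phi(\Sigma_i+\mu_i\mu_i^T)\bigr) \;+\; (\text{term independent of }i)\;+\;(\text{remainders}),
\]
so that, with $\Delta := \Sigma_2-\Sigma_1+\mu_2\mu_2^T-\mu_1\mu_1^T$, the common term and (for $c<1$) the remainders cancel and
\[
\mathcal{G}_2-\mathcal{G}_1 \;=\; \eta_{trn}^4\,\Tr\!\bigl(\Phi\beta_U\beta_U^T\Phi\,\Delta\bigr)\;+\;(\text{remainders}).
\]

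To finish I would apply Hölder's inequality for Schatten norms, $|\Tr(MN)|\le\|M\|_{S_1}\|N\|_{\mathrm{op}}\le r\,\|M\|_{\mathrm{op}}\,\|N\|_F$, with $M=\Phi\beta_U\beta_U^T\Phi\in\R^{r\times r}$ (so $\|M\|_{S_1}\le r\|M\|_{\mathrm{op}}$) and $N=\Delta$ (so $\|N\|_{\mathrm{op}}\le\|N\|_F$). Since $\|M\|_{\mathrm{op}}\le\|\Phi\|_{\mathrm{op}}^2\|\beta_U\|_{\mathrm{op}}^2$, $\|\Phi\|_{\mathrm{op}}=(\sigma_r(X_{trn})^2 f(c)+\eta_{trn}^2)^{-1}$, and $\|\beta_U\|_{\mathrm{op}}=\|U^T\beta\|_{\mathrm{op}}\le\|\beta\|_{\mathrm{op}}=\sigma_1(\beta)$, this gives exactly
\[
|\mathcal{G}_2-\mathcal{G}_1|\;\le\;\frac{\sigma_1(\beta)^2\eta_{trn}^4\,r}{\bigl(\sigma_r(X_{trn})^2 f(c)+\eta_{trn}^2\bigr)^2}\,\|\Delta\|_F\;+\;(\text{remainders}).
\]

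The only genuinely delicate point — and the step I expect to need the most care — is the handling of the remainder terms of Theorem~\ref{thm:main} once we pass to expectations: they are $o(1/N)$ (underparametrized) and $o(1/N)+O(\|\Sigma_{trn}\|_F^2/N^2)$ (overparametrized), stated for a fixed test instance. Because the test distributions are supported in $\mathcal{V}$ (with enough moments for the expectations above to be finite), these bounds hold uniformly over the support, so their expectations retain the same order; the $o(1/N)$ parts are absorbed into the $o(1/N)$ of the statement, while for $c\ge 1$ the two $O(\|\Sigma_{trn}\|_F^2/N^2)$ contributions need not cancel, which is precisely why that term is added back in the statement for that regime. Everything else is routine linear algebra.
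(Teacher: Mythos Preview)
Your proposal is correct and follows essentially the same route as the paper: apply Theorem~\ref{thm:main}, note that the variance (trace) term is independent of $L$ so it cancels in the difference, compute $\E[L_iL_i^T]=N_{tst}(\Sigma_i+\mu_i\mu_i^T)$, and bound the remaining trace. The only cosmetic difference is the trace inequality used---the paper applies Cauchy--Schwarz $|\Tr(AB)|\le\|A\|_F\|B\|_F$ followed by $\|AB\|_F\le\|A\|_2\|B\|_F$, whereas you use the Schatten H\"older inequality $|\Tr(MN)|\le\|M\|_{S_1}\|N\|_{\mathrm{op}}$ together with $\|M\|_{S_1}\le r\|M\|_{\mathrm{op}}$; both routes land on the same final bound.
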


Corollary \ref{cor:dist-shift-bound} is interesting as it implies that out-of-distribution error is primarily governed by the different between the means and covariances of the two distributions. Since these are relatively constant (population means and covariance changes slightly based on the data), this implies that in practice the error curves versus $c$ are very similar. So far, we have considered in-subspace distributional shifts. However, the distributional shift can take the test data out of the low-dimensional subspace $\mathcal{V}$. 

\begin{restatable}[Out-of-Subspace Shift Bound]{theorem}{OutOfSubspace}
\label{thm:out-of-subspace} If we have the same training data and solution $W_{opt}$ assumptions as in Theorem \ref{thm:main}. Then, for \textbf{\emph{any}} $X_{tst}$ for which there exists an $L$ and an $\alpha > 0$ such that $\|X_{tst} - UL \|_F \le \alpha$, and $A_{tst}$ that satisfies 1,2 from Assumption  \ref{noise-assumptions},
we have that the generalization error $\mathcal{R}(W_{opt}, X_{tst})$ satisfies 
\[
    |\mathcal{R}(W_{opt}, X_{tst}) - \mathcal{R}(W_{opt}, UL)| \le \alpha^2 \sigma_1(W_{opt} + I)^2. 
\]
\end{restatable}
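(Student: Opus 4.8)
The plan is to reduce everything to the bias–variance decomposition already invoked in the sketch of Theorem~\ref{thm:main}, and then treat the out-of-subspace displacement $E := X_{tst} - UL$ (with $\|E\|_F \le \alpha$) as a perturbation. The first step is to notice that the identity $\mathcal{R}(W, Z) = \tfrac{1}{N_{tst}}\|(\beta^T - W)Z\|_F^2 + \tfrac{\eta_{tst}^2}{d}\|W\|_F^2$ uses only properties~1 and~2 of the test noise in Assumption~\ref{noise-assumptions}: mean-zero entries kill the cross term $\langle (\beta^T - W)Z,\, WA_{tst}\rangle$, and uncorrelated entries of variance $\eta_{tst}^2/d$ give $\mathbb{E}[A_{tst}A_{tst}^T] = \tfrac{N_{tst}\eta_{tst}^2}{d}I_d$, hence $\mathbb{E}\|WA_{tst}\|_F^2 = \tfrac{N_{tst}\eta_{tst}^2}{d}\|W\|_F^2$. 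Since the hypotheses on $A_{tst}$ in the theorem are exactly these two properties, I apply this to $W = W_{opt}$ with $Z = X_{tst}$ and with $Z = UL$; the variance term $\tfrac{\eta_{tst}^2}{d}\|W_{opt}\|_F^2$ is the same in both, so it cancels, and (conditionally on $A_{trn}$) $\mathcal{R}(W_{opt}, X_{tst}) - \mathcal{R}(W_{opt}, UL) = \tfrac{1}{N_{tst}}\big(\|M X_{tst}\|_F^2 - \|M\,UL\|_F^2\big)$, where $M := \beta^T - W_{opt}$ (which is $I - W_{opt}$ in the denoising case $\beta = I$; its top singular value is the quantity written $\sigma_1(W_{opt}+I)$ in the statement).

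The second step expands $\|M(UL + E)\|_F^2 - \|M\,UL\|_F^2 = 2\langle M\,UL,\, ME\rangle + \|ME\|_F^2$. The quadratic term is handled by the operator-norm bound $\|ME\|_F \le \sigma_1(M)\|E\|_F \le \sigma_1(M)\alpha$, and since $N_{tst}\ge 1$ this alone gives $\tfrac{1}{N_{tst}}\|ME\|_F^2 \le \alpha^2\sigma_1(M)^2$. The cross term needs more care: Cauchy–Schwarz gives $\tfrac{2}{N_{tst}}|\langle M\,UL,\, ME\rangle| \le 2\big(\tfrac{\|M\,UL\|_F}{\sqrt{N_{tst}}}\big)\big(\tfrac{\|ME\|_F}{\sqrt{N_{tst}}}\big) \le 2\sqrt{\tfrac{\|M\,UL\|_F^2}{N_{tst}}}\,\cdot\tfrac{\sigma_1(M)\alpha}{\sqrt{N_{tst}}}$, and the factor $\tfrac{1}{N_{tst}}\|M\,UL\|_F^2 = \tfrac{1}{N_{tst}}\|(\beta^T - W_{opt})UL\|_F^2$ is, upon taking $\mathbb{E}_{A_{trn}}$, exactly the in-subspace bias term computed in Theorem~\ref{thm:main}, which is $o(1)$ (indeed $O(1/N^2)$ when $\|\Sigma_{trn}\|^2 = \Theta(N)$). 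So the cross term is of strictly smaller order than $\alpha^2\sigma_1(M)^2$ and is absorbed into the same lower-order slack already present in Theorem~\ref{thm:main}, giving $|\mathcal{R}(W_{opt}, X_{tst}) - \mathcal{R}(W_{opt}, UL)| \le \alpha^2\sigma_1(M)^2$. Finally, everything above was derived conditionally on $A_{trn}$; taking $\mathbb{E}_{A_{trn}}$ and using $|\mathbb{E}[\cdot]|\le\mathbb{E}|\cdot|$ transfers the inequality to the noise-averaged errors $\mathcal{R}(W_{opt},\cdot)$, with $\sigma_1(M)$ read as its realized value (or a uniform bound on it).

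The main obstacle is precisely this cross term $\langle (\beta^T - W_{opt})UL,\, (\beta^T - W_{opt})E\rangle$: a direct Cauchy–Schwarz leaves behind $\|(\beta^T - W_{opt})UL\|_F$, which is not controlled by $\alpha$ and $\sigma_1(M)$ alone. The key realization is that, normalized by $N_{tst}$, this is exactly the in-subspace bias already bounded (and shown to be negligible) in Theorem~\ref{thm:main}, so no new random-matrix estimate is required; everything else is elementary linear algebra together with the mean-zero/uncorrelated structure of $A_{tst}$.
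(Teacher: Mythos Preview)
Your approach differs from the paper's in a substantive way. The paper's proof is three lines: it bounds $\|(I-W)(X_{tst}-UL)\|_F \le \alpha\,\sigma_1(I-W)$ via the operator-norm inequality (exactly your quadratic-term bound), and then simply invokes ``the reverse triangle inequality'' to pass directly to
\[
\big|\,\|(I-W)X_{tst}\|_F^2 - \|(I-W)UL\|_F^2\,\big| \le \alpha^2\,\sigma_1(W+I)^2.
\]
Taken literally that step is not right --- the reverse triangle inequality controls $|\|a\|-\|b\||$, not $|\|a\|^2-\|b\|^2|$ --- so the paper's argument silently skips over precisely the cross term $2\langle M\,UL,\,ME\rangle$ that you isolate and treat explicitly. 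Your route, which controls this cross term by recognizing $\tfrac{1}{N_{tst}}\|M\,UL\|_F^2$ as the in-subspace bias already shown to be $o(1)$ in Theorem~\ref{thm:main}, is the more rigorous of the two and yields $\alpha^2\sigma_1(M)^2$ plus a lower-order remainder tied to the $o(1/N)$ slack in Theorem~\ref{thm:main}. That matches how the bound is actually deployed in the paper's experiments (as an asymptotic envelope rather than a sharp finite-sample inequality). What the paper's intended one-liner would buy, if it worked, is a self-contained bound that does not call back to Theorem~\ref{thm:main}; what your expansion buys is an honest accounting of where the difficulty lies, at the cost of making the dependence on Theorem~\ref{thm:main} explicit and weakening the conclusion to ``up to lower order.'' Your handling of the $\mathbb{E}_{A_{trn}}$ averaging (reading $\sigma_1(M)$ as its realized value) also flags an ambiguity the paper leaves implicit, since $\sigma_1(W_{opt}+I)$ is itself random in $A_{trn}$.
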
 

Theorem \ref{thm:out-of-subspace} shows us the surprising result that even if our test data is not in the subspace, we can exhibit double descent. This is also verified experimentally in the next section.

\paragraph{Transfer Learning.} So far, we have considered the case where we have a covariate shift. That is, we have a distribution shift for $X$. However, we still assumed that the targets were given by the same function of $X$. That is, $\beta$ did not change. We now consider the scenario  where $\beta$ can change to $\beta_{tst}$. Concretely, we are in the scenario where we train our model with $X_{trn}+A_{trn}$ and $Y_{trn} = \beta^T X_{trn}$ and test the model with data $X_{tst}+A_{tst}$ and $Y_{tst} = \beta_{tst}^T X_{tst}$. In this case, we have the following theorem. 

\begin{restatable}[Transfer Learning]{theorem}{transfer}
    \label{thm:transfer} Let $r < |d-N|$. Let the SVD of $X_{trn}$ be $U\Sigma_{trn}V_{trn}^T$, let $L := U^TX_{tst}$, $\beta_U := U^T\beta$, $\beta_{tst,U} = U^T\beta_{tst}$, and $c := d/N$. Under our setup and Assumptions~\ref{data-assumptions} and~\ref{noise-assumptions}, the test error (Equation~\ref{eq:problem}) is given by the following. If $c < 1$ (under-parameterized regime)\[
    \begin{split}
        \mathbb{E}_{A_{trn}, A_{tst}}\left[\frac{1}{N_{tst}}\|Y_{tst} - W_{opt}(X_{tst}+A_{tst})\|_F^2\right] &= \frac{1}{N_{tst}}\|(\beta_{tst,U}-\beta_U)^T L\|_F^2 \\
        &- \frac{2\eta_{tst}^2}{N_{tst}}\Tr(\beta_U^T (\Sigma_{trn}^2c + \eta_{trn}^2I)^{-1}L L^T(\beta_{tst,U}-\beta_U)) \\
        &+ \frac{\eta_{tst}^2}{d}\frac{c^2}{1-c}\Tr\left(\beta_U\beta_U^T\Sigma_{trn}^2\left(\Sigma_{trn}^2+\frac{1}{\eta_{trn}^2}I\right) \left(\Sigma_{trn}^2c+\eta_{trn}^2I\right)^{-2}\right) \\
        &+\frac{\eta_{trn}^4}{N_{tst}}\left\|\beta_U^T (\Sigma_{trn}^2c + \eta_{trn}^2I)^{-1}L\right\|_F^2  + o\left(\frac{1}{N}\right)
    \end{split}
    \]
If $c > 1$ (over-parameterized regime)\[
     \begin{split}
        \mathbb{E}_{A_{trn}, A_{tst}}\left[\frac{1}{N_{tst}}\|Y_{tst} - W_{opt}(X_{tst}+A_{tst})\|_F^2\right] &= \frac{1}{N_{tst}}\|(\beta_{tst,U}-\beta_U)^T L\|_F^2 \\
        &- \frac{2\eta_{tst}^2}{N_{tst}}\Tr(\beta_U^T(\Sigma_{trn}^2 + \eta_{trn}^2I)^{-1}L L^T(\beta_{tst,U}-\beta_U)) \\
        &+ \frac{\eta_{tst}^2}{d} \frac{c}{c-1}\Tr(\beta_U\beta_U^T (I+\eta_{trn}^2\Sigma_{trn}^{-2})^{-1}) + O\left(\frac{\|\Sigma_{trn}\|^2}{N^2}\right) \\
        &+\frac{\eta_{trn}^4}{N_{tst}}\left\|\beta_U^T(\Sigma_{trn}^2 + \eta_{trn}^2I)^{-1}L\right\|_F^2  + o\left(\frac{1}{N}\right)
    \end{split}
\]
\end{restatable}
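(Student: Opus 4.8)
The plan is to reduce Theorem~\ref{thm:transfer} almost entirely to the computation already performed for Theorem~\ref{thm:main}, isolating the single new term introduced by the change of target $\beta \to \beta_{tst}$. Since the training data, training targets, and training noise are unchanged, classical linear algebra still gives $W_{opt} = Y_{trn}(X_{trn}+A_{trn})^\dag = \beta^T X_{trn}(X_{trn}+A_{trn})^\dag$, and the rank condition $r<|d-N|$ still permits the block expansion of $W_{opt}$ used in Theorem~\ref{thm:main}. First I would reuse the bias--variance split: taking the expectation over $A_{tst}$ first, and using $\mathbb{E}_{A_{tst}}[A_{tst}]=0$ together with $\mathbb{E}_{A_{tst}}[A_{tst}A_{tst}^T] = \tfrac{N_{tst}\eta_{tst}^2}{d}I$, the test error becomes
\[
    \mathbb{E}_{A_{trn}}\!\left[\tfrac{1}{N_{tst}}\big\|(\beta_{tst}^T - W_{opt})X_{tst}\big\|_F^2\right] \;+\; \tfrac{\eta_{tst}^2}{d}\,\mathbb{E}_{A_{trn}}\big[\|W_{opt}\|_F^2\big].
\]
The variance term $\tfrac{\eta_{tst}^2}{d}\mathbb{E}_{A_{trn}}\|W_{opt}\|_F^2$ does not involve $\beta_{tst}$, so it equals the variance term of Theorem~\ref{thm:main} verbatim, contributing $\tfrac{\eta_{tst}^2}{d}\tfrac{c^2}{1-c}\Tr(\cdots)$ for $c<1$ and $\tfrac{\eta_{tst}^2}{d}\tfrac{c}{c-1}\Tr(\cdots) + O(\|\Sigma_{trn}\|^2/N^2)$ for $c>1$.

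Next I would expand the remaining (bias) term by writing $\beta_{tst}^T - W_{opt} = (\beta_{tst}^T-\beta^T) + (\beta^T-W_{opt})$, so that
\[
    \big\|(\beta_{tst}^T-W_{opt})X_{tst}\big\|_F^2 = \big\|(\beta_{tst}^T-\beta^T)X_{tst}\big\|_F^2 + \big\|(\beta^T-W_{opt})X_{tst}\big\|_F^2 + 2\Tr\!\big(X_{tst}^T(\beta_{tst}-\beta)(\beta^T-W_{opt})X_{tst}\big).
\]
The first summand is deterministic; substituting $X_{tst}=UL$, $U^T\beta=\beta_U$, $U^T\beta_{tst}=\beta_{tst,U}$ turns it into $\|(\beta_{tst,U}-\beta_U)^T L\|_F^2$, the new leading contribution. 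The second summand is exactly the ``Bias'' quantity analysed in Theorem~\ref{thm:main}, whose $\mathbb{E}_{A_{trn}}$ is $\eta_{trn}^4\|\beta_U^T(\Sigma_{trn}^2 c + \eta_{trn}^2 I)^{-1}L\|_F^2$ for $c<1$ (with $\Sigma_{trn}^2 + \eta_{trn}^2 I$ replacing $\Sigma_{trn}^2 c + \eta_{trn}^2 I$ for $c>1$), up to the remainders already recorded there; no new estimation is needed.

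That leaves the cross term, the one genuinely new ingredient: after taking $\mathbb{E}_{A_{trn}}$ and using cyclicity of the trace it is $2\,\Tr\!\big(L^T(\beta_{tst,U}-\beta_U)\,\mathbb{E}_{A_{trn}}[(\beta^T-W_{opt})X_{tst}]\big)$, so whereas Theorem~\ref{thm:main} only needed the \emph{second} moment of $(\beta^T-W_{opt})X_{tst}$, here I need its \emph{first} moment over $A_{trn}$. This is obtained by the same substitute--expand--concentrate procedure as in Theorem~\ref{thm:main}: inserting the block expansion of $W_{opt}$, grouping the quadratic matrix blocks ($HH^T$, $P^TP$, $Z$ for $c>1$, and their $c<1$ analogues), and replacing their spectra by the Marchenko--Pastur law of Assumption~\ref{noise-assumptions} yields $\mathbb{E}_{A_{trn}}[(\beta^T-W_{opt})X_{tst}] = \eta_{trn}^2\,\beta_U^T(\Sigma_{trn}^2 f(c)+\eta_{trn}^2 I)^{-1}L + o(1/N)$, with $f(c)=c$ for $c<1$, $f(c)=1$ for $c>1$, and an extra $O(\|\Sigma_{trn}\|^2/N^2)$ in the latter case --- indeed this estimate is already produced en route to the bias term in the proof of Theorem~\ref{thm:main}. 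Substituting it and using cyclicity again gives the cross term $\Tr\!\big(\beta_U^T(\Sigma_{trn}^2 f(c)+\eta_{trn}^2 I)^{-1}LL^T(\beta_{tst,U}-\beta_U)\big)$ with the coefficient recorded in the statement; adding the deterministic term, the two inherited terms, and this cross term, and dividing by $N_{tst}$, produces the claimed formulas in both regimes.

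The main obstacle is thus bookkeeping rather than new probabilistic input. I would need to verify that the first-moment estimate above holds to the accuracy required so that the cross term's fluctuations, once divided by $N_{tst}$, are $o(1/N)$ (respectively absorbed into the $O(\|\Sigma_{trn}\|^2/N^2)$ remainder) and never force a genuinely new mixed-moment concentration of $W_{opt}$ against $\mathbb{E}[W_{opt}]$, and that $\|(\beta_{tst,U}-\beta_U)^T L\|_F^2$ and the cross term sit at the orders written in the theorem. Both follow from the concentration bounds already established for Theorem~\ref{thm:main}, which is why the transfer-learning result is essentially a corollary of that proof with the cross term as the sole genuine addition.
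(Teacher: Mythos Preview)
Your proposal is correct and follows essentially the same route as the paper: both insert $\beta^T X_{tst}$ to write $Y_{tst}-W_{opt}(X_{tst}+A_{tst}) = (\beta_{tst}^T-\beta^T)X_{tst} + \big(\beta^T X_{tst}-W_{opt}(X_{tst}+A_{tst})\big)$ and then expand, so that the second summand's squared norm is exactly the quantity handled by Theorem~\ref{thm:main} and only the deterministic term and the cross term are new. The paper's proof is in fact a one-line sketch (``expand the norm twice''), while you are more explicit about how the cross term is evaluated via the first-moment estimate of $(\beta^T-W_{opt})X_{tst}$ already produced in the proof of Theorem~\ref{thm:main}; this extra detail is correct and does not diverge from the paper's intended argument.
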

\begin{proof}
    Note that 
    \[
        \mathbb{E}_{A_{trn}, A_{tst}}\left[\frac{1}{N_{tst}}\|Y_{tst} - W_{opt}(X_{tst}+A_{tst})\|_F^2\right] = \mathbb{E}_{A_{trn}, A_{tst}}\left[\frac{1}{N_{tst}} Y_{tst} - \beta^T X_{tst} + \beta^T X_{tst} - W_{opt}(X_{tst}+A_{tst})\|_F^2\right].
    \]
    Then, expand the norm twice to get the above result. 
\end{proof}

Here it is easy to see that since $\beta, \beta_{tst}$ are constant and $\|X_{tst}\|^2 =\|L\|^2 = O(N_{tst})$, we see that the first two terms in both expressions are $O_c(1)$ as a function of $c$. \textbf{Hence, we see that as a function of $c$, this error also has double descent.}
\color{black}

\paragraph{Overfitting Paradigms.} We now compute the test error from $W^*$ in Theorem~\ref{thm:benign}, from which we compute the limit of the relative excess error in Corollary~\ref{cor:rel-excess-error}, when data does not grow too slowly.
\begin{restatable}[Test Error for $W^*$]{theorem}{benign} \label{thm:benign} 
In the same setting as Theorem \ref{thm:main}, we have that $W^* = \beta_U^T\left(I + \frac{\eta_{trn}^2}{c}\Sigma_{trn}^{-2}\right)^{-1} U^T$
and 
\[
    \mathcal{R}(W^*, UL) = \frac{\eta_{trn}^4N^2}{{N_{tst}}d^2}\left\|\beta_U^T\left(\Sigma_{trn}^2 + \frac{\eta_{trn}^2N}{d}I\right)^{-1}L\right\|_F^2 + \frac{\eta_{tst}^2}{d}Tr\left(\beta_U\beta_U^T\left(I + \frac{\eta_{trn}^2N}{d}\Sigma_{trn}^{-2}\right)^{-2}\right). 
\]
\end{restatable}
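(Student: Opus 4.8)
The plan is to compute $W^*$ explicitly first, then substitute into the bias–variance decomposition of $\mathcal{R}$. For $W^*$, I would start from the defining optimization problem and take the gradient of the expected MSE training error with respect to $W$. Using the noise assumptions (Assumption~\ref{noise-assumptions}, points 1 and 2), one has $\E_{A_{trn}}[(X_{trn}+A_{trn})(X_{trn}+A_{trn})^T] = X_{trn}X_{trn}^T + \frac{\eta_{trn}^2 N}{d}I$, since $\E[A_{trn}A_{trn}^T] = \frac{\eta_{trn}^2}{d}\cdot N\cdot I$ (each of the $d$ rows contributes $N$ entries of variance $\eta_{trn}^2/d$ along the diagonal, off-diagonals vanish by uncorrelatedness). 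Likewise $\E_{A_{trn}}[Y_{trn}(X_{trn}+A_{trn})^T] = Y_{trn}X_{trn}^T = \beta^T X_{trn}X_{trn}^T$. Setting the gradient to zero gives the normal equations $W\bigl(X_{trn}X_{trn}^T + \tfrac{\eta_{trn}^2 N}{d}I\bigr) = \beta^T X_{trn}X_{trn}^T$; the minimum-norm solution is $W^* = \beta^T X_{trn}X_{trn}^T\bigl(X_{trn}X_{trn}^T + \tfrac{\eta_{trn}^2 N}{d}I\bigr)^{-1}$. Substituting the SVD $X_{trn} = U\Sigma_{trn}V_{trn}^T$ so that $X_{trn}X_{trn}^T = U\Sigma_{trn}^2U^T$, and using $U^TU = I$, this collapses to $W^* = \beta_U^T \Sigma_{trn}^2(\Sigma_{trn}^2 + \tfrac{\eta_{trn}^2 N}{d}I)^{-1}U^T = \beta_U^T(I + \tfrac{\eta_{trn}^2}{c}\Sigma_{trn}^{-2})^{-1}U^T$, where $c = d/N$; this matches the claimed formula. (One must check minimum-norm-ness: any solution differs from $W^*$ by something supported on the kernel of $X_{trn}X_{trn}^T + \tfrac{\eta_{trn}^2N}{d}I$, but that matrix is full rank, so the solution is unique and the min-norm qualifier is automatic — worth a remark.)

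For the test error, I would invoke the decomposition stated in the proof sketch of Theorem~\ref{thm:main}: $\mathcal{R}(W, X_{tst}) = \frac{1}{N_{tst}}\|Y_{tst} - WX_{tst}\|_F^2 + \frac{\eta_{tst}^2}{d}\|W\|_F^2$, which holds for any fixed $W$ since $A_{tst}$ has mean zero and uncorrelated entries with variance $\eta_{tst}^2/d$ (the cross term vanishes in expectation, and $\E\|WA_{tst}\|_F^2 = \frac{\eta_{tst}^2}{d}\|W\|_F^2$ by the same row-wise variance computation). Plugging in $W = W^*$ and $X_{tst} = UL$, $Y_{tst} = \beta^T X_{tst} = \beta_U^T L$: the bias term becomes $\frac{1}{N_{tst}}\|\beta_U^T L - \beta_U^T(I+\tfrac{\eta_{trn}^2}{c}\Sigma_{trn}^{-2})^{-1}L\|_F^2 = \frac{1}{N_{tst}}\|\beta_U^T\bigl(I - (I+\tfrac{\eta_{trn}^2}{c}\Sigma_{trn}^{-2})^{-1}\bigr)L\|_F^2$. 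The matrix identity $I - (I+M)^{-1} = (I+M)^{-1}M = (M^{-1}+I)^{-1}$ with $M = \tfrac{\eta_{trn}^2}{c}\Sigma_{trn}^{-2}$ turns $I - (I+\tfrac{\eta_{trn}^2}{c}\Sigma_{trn}^{-2})^{-1}$ into $\tfrac{\eta_{trn}^2}{c}\Sigma_{trn}^{-2}(I+\tfrac{\eta_{trn}^2}{c}\Sigma_{trn}^{-2})^{-1} = \tfrac{\eta_{trn}^2}{c}(\Sigma_{trn}^2 + \tfrac{\eta_{trn}^2}{c}I)^{-1}$; substituting $c = d/N$ gives $\tfrac{\eta_{trn}^2 N}{d}(\Sigma_{trn}^2 + \tfrac{\eta_{trn}^2N}{d}I)^{-1}$, so the bias term is $\frac{\eta_{trn}^4 N^2}{N_{tst}d^2}\|\beta_U^T(\Sigma_{trn}^2 + \tfrac{\eta_{trn}^2N}{d}I)^{-1}L\|_F^2$, exactly the first claimed term. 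For the variance term, $\|W^*\|_F^2 = \Tr\bigl(U(I+\tfrac{\eta_{trn}^2}{c}\Sigma_{trn}^{-2})^{-1}\beta_U\beta_U^T(I+\tfrac{\eta_{trn}^2}{c}\Sigma_{trn}^{-2})^{-1}U^T\bigr) = \Tr\bigl(\beta_U\beta_U^T(I+\tfrac{\eta_{trn}^2}{c}\Sigma_{trn}^{-2})^{-2}\bigr)$ using $U^TU=I$ and cyclicity of trace; rewriting $\tfrac{\eta_{trn}^2}{c} = \tfrac{\eta_{trn}^2 N}{d}$ gives $\frac{\eta_{tst}^2}{d}\Tr\bigl(\beta_U\beta_U^T(I + \tfrac{\eta_{trn}^2N}{d}\Sigma_{trn}^{-2})^{-2}\bigr)$, the second claimed term. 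Assembling the two pieces yields the stated expression exactly, with no error terms because $W^*$ is deterministic (the only randomness was $A_{tst}$, already integrated out exactly).

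The main subtlety — and the only place one has to be careful — is the exact-versus-asymptotic status of the result: unlike Theorem~\ref{thm:main}, there is no $o(1/N)$ or random-matrix approximation here, because $W^*$ is defined via the \emph{expected} training loss and hence is a closed-form deterministic matrix; the bias–variance decomposition is an exact identity once $A_{tst}$ is integrated out. So the proof is essentially two gradient/normal-equation computations plus the matrix identity $I - (I+M)^{-1} = (M^{-1}+I)^{-1}$ and trace cyclicity. I would present it in that order: (i) derive $W^*$ from the normal equations, noting full-rankness forces uniqueness; (ii) recall the exact bias–variance decomposition; (iii) simplify the bias via the resolvent identity; (iv) simplify the variance via trace cyclicity; (v) collect terms. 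I do not anticipate a genuine obstacle — the one thing to watch is bookkeeping the $c$ versus $N/d$ conversions consistently so the final constants line up with the statement.
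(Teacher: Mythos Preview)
Your proposal is correct and complete. The only genuine difference from the paper's argument is in how you derive $W^*$: you go directly through the normal equations for the expected loss, observing that $\E_{A_{trn}}[(X_{trn}+A_{trn})(X_{trn}+A_{trn})^T] = X_{trn}X_{trn}^T + \tfrac{\eta_{trn}^2 N}{d}I$ is full rank and hence the ridge-type system $W\bigl(X_{trn}X_{trn}^T + \tfrac{\eta_{trn}^2 N}{d}I\bigr) = \beta^T X_{trn}X_{trn}^T$ has a unique solution, which you then read off from the SVD. The paper instead first rewrites the expected loss as $\|Y_{trn}-WX_{trn}\|_F^2 + \mu^2\|W\|_F^2$ with $\mu^2 = \eta_{trn}^2 N/d$, recasts this as the augmented least-squares problem $\|[Y_{trn}\ 0] - W[X_{trn}\ \mu I]\|_F^2$, and then invokes external lemmas (from \cite{sonthaliaUnderPeak}) giving the SVD of $[X_{trn}\ \mu I]$ to compute the pseudoinverse explicitly. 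Your route is more elementary and self-contained; the paper's route makes the ridge/augmented-data correspondence explicit but is otherwise a detour for this particular statement. For the test error portion (bias--variance split, resolvent identity $I-(I+M)^{-1}=M(I+M)^{-1}$, trace cyclicity), the two arguments are essentially identical.
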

\begin{restatable}[Relative Excess Error]{cor}{RelExcessError} \label{cor:rel-excess-error}
Let $\|\Sigma_{trn}\|_F^2 = \Omega(N^{1/2+\epsilon})$. As $d, N \to \infty$ with $d/N \to c$, the relative excess error tends to $\frac{c}{1-c}$ in the underparametrized regime. In the overparametrized regime, when $\|\Sigma_{trn}\|_F^2 = o(N)$, it tends to $\frac{1}{c-1}$ and to $\frac{1}{c-1} + k$ for some constant $k$ when $\|\Sigma_{trn}\|_F^2 = \Theta(N)$.
\end{restatable}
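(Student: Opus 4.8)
The plan is to divide through by the leading-order part of $\mathcal{R}(W^*, X_{tst})$ and track which terms survive. By Theorems~\ref{thm:main} and~\ref{thm:benign}, each of $\mathcal{R}(W_{opt}, UL)$ and $\mathcal{R}(W^*, UL)$ is a sum of a ``bias'' term of the shape $\tfrac{\eta_{trn}^4}{N_{tst}}\|\beta_U^T M L\|_F^2$, where $M$ is the inverse of a matrix of size $\gtrsim \sigma_r(X_{trn})^2$, and a ``variance'' term of the shape $\tfrac{\eta_{tst}^2}{d}\Tr(\beta_U\beta_U^T M')$ with $M'$ a bounded PSD matrix, plus an $o(1/N)$ remainder (and, for $W_{opt}$ with $c>1$, an extra $O(\|\Sigma_{trn}\|^2/N^2)$ term). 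First I would note that since $r$ is fixed and the singular values of $X_{trn}$ are mutually $\Theta(1)$ by Assumption~\ref{data-assumptions}, the hypothesis $\|\Sigma_{trn}\|_F^2 = \Omega(N^{1/2+\epsilon})$ is equivalent to $\sigma_r(X_{trn})^2 = \Omega(N^{1/2+\epsilon})$; hence $\|M\|_{op} = O(\sigma_r(X_{trn})^{-2})$ and, using $\|L\|_F^2 = O(N_{tst})$ and $d, N_{tst} = \Theta(N)$, every bias term is $O(\sigma_r(X_{trn})^{-4}) = O(N^{-1-2\epsilon}) = o(1/N)$. So the bias terms, together with the stated $o(1/N)$ remainders, are negligible next to the variance terms, which are $\Theta(1/N)$ (as already observed after Theorem~\ref{thm:main}).

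Next I would compute the limits of the variance terms. Assumption~\ref{data-assumptions} already forces $\|\Sigma_{trn}^{-2}\|_{op}\to 0$. Substituting this into the matrix functions appearing in the traces: for $c<1$ one gets $\Sigma_{trn}^2(\Sigma_{trn}^2+\tfrac{1}{\eta_{trn}^2}I)(\Sigma_{trn}^2 c+\eta_{trn}^2 I)^{-2}\to \tfrac{1}{c^2}I$ and $(I+\tfrac{\eta_{trn}^2}{c}\Sigma_{trn}^{-2})^{-2}\to I$, so the variance term of $\mathcal{R}(W_{opt},UL)$ is $\tfrac{\eta_{tst}^2}{d}\cdot\tfrac{c^2}{1-c}\cdot\tfrac{1}{c^2}\Tr(\beta_U\beta_U^T)(1+o(1))$ while that of $\mathcal{R}(W^*,UL)$ is $\tfrac{\eta_{tst}^2}{d}\Tr(\beta_U\beta_U^T)(1+o(1))$; here I use that an $o(1)$ operator-norm perturbation of $M'$ changes $\Tr(\beta_U\beta_U^T M')$ by a multiplicative factor $1+o(1)$, because $\beta_U\beta_U^T$ is PSD so its nuclear norm equals its trace. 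Dividing, $\mathcal{R}(W_{opt},UL)/\mathcal{R}(W^*,UL)\to \tfrac{1}{1-c}$, hence the relative excess error tends to $\tfrac{1}{1-c}-1=\tfrac{c}{1-c}$. For $c>1$ the analogous substitution $(I+\eta_{trn}^2\Sigma_{trn}^{-2})^{-1}\to I$ gives ratio $\tfrac{c}{c-1}$ and hence relative excess error $\tfrac{1}{c-1}$ --- but only once the $O(\|\Sigma_{trn}\|^2/N^2)$ deviation is itself $o(1/N)$, which is exactly the subcase $\|\Sigma_{trn}\|_F^2 = o(N)$.

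In the remaining subcase $\|\Sigma_{trn}\|_F^2=\Theta(N)$ with $c>1$, the deviation $O(\|\Sigma_{trn}\|^2/N^2)$ is $\Theta(1/N)$, i.e.\ of the same order as the variance terms, so it persists in the ratio and contributes an additive constant: writing $\Sigma_{trn}^2 = N\hat\Sigma^2$ with $\hat\Sigma$ having $\Theta(1)$ entries turns every quantity into an explicit convergent function of $N$, and the persistent part of this deviation, normalized by $\mathcal{R}(W^*,UL)$, converges to the constant $k$ in the statement. I expect this last point to be the main obstacle: Theorem~\ref{thm:main} only controls the $c>1$ deviation through an $O(\cdot)$ bound, so to rigorously produce the number $k$ one must either re-run the random-matrix estimates in its proof with enough precision to extract the $\Theta(N)$-scaled limit, or restrict to the canonical scaling $\Sigma_{trn}^2=N\hat\Sigma^2+o(N)$ where convergence is automatic. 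Everything else --- bounding inverses of well-conditioned matrices and pushing $o(1)$ perturbations through traces --- is routine, modulo a mild non-degeneracy requirement on $\beta_U$ (automatic for denoising, where $\beta_U\beta_U^T=I_r$) so that $\Tr(\beta_U\beta_U^T)$ does not decay too fast relative to the negligible terms.
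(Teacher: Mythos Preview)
Your proposal is correct and follows essentially the same route as the paper: both discard the bias terms as $O(N^{-1-2\epsilon})=o(1/N)$ under the growth hypothesis, keep only the $\Theta(1/N)$ variance terms (plus, for $c>1$, the $O(\|\Sigma_{trn}\|^2/N^2)$ remainder), and then send $\Sigma_{trn}^{-2}\to 0$ inside the traces to read off the ratios $\tfrac{1}{1-c}$ and $\tfrac{c}{c-1}$. Your worry about extracting the constant $k$ in the $\|\Sigma_{trn}\|_F^2=\Theta(N)$ subcase is well-founded but overcautious relative to the paper, which does not resolve it either---it simply records $\lim O(\|\Sigma_{trn}\|_F^2/N)$ as ``some unknown problem-dependent constant $k$'' without establishing that the limit exists, so the intended reading is $\tfrac{1}{c-1}+O(1)$ rather than convergence to a specific number.
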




\begin{figure}[!htb]
    \centering
        \subfloat[CIFAR Dataset]{\includegraphics[width = 0.33\linewidth]{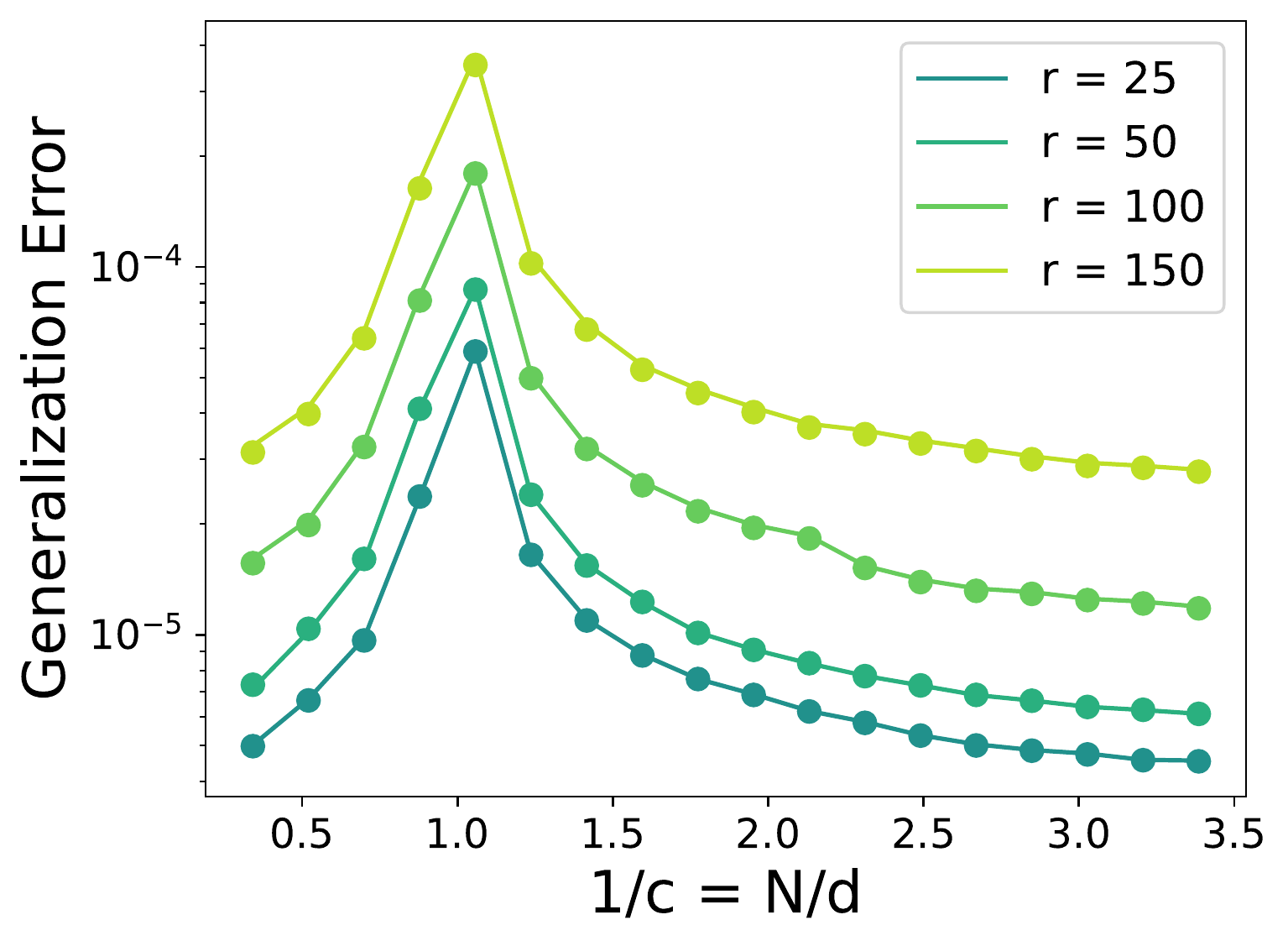}}
        \subfloat[STL10 Dataset]{\includegraphics[width = 0.33\linewidth]{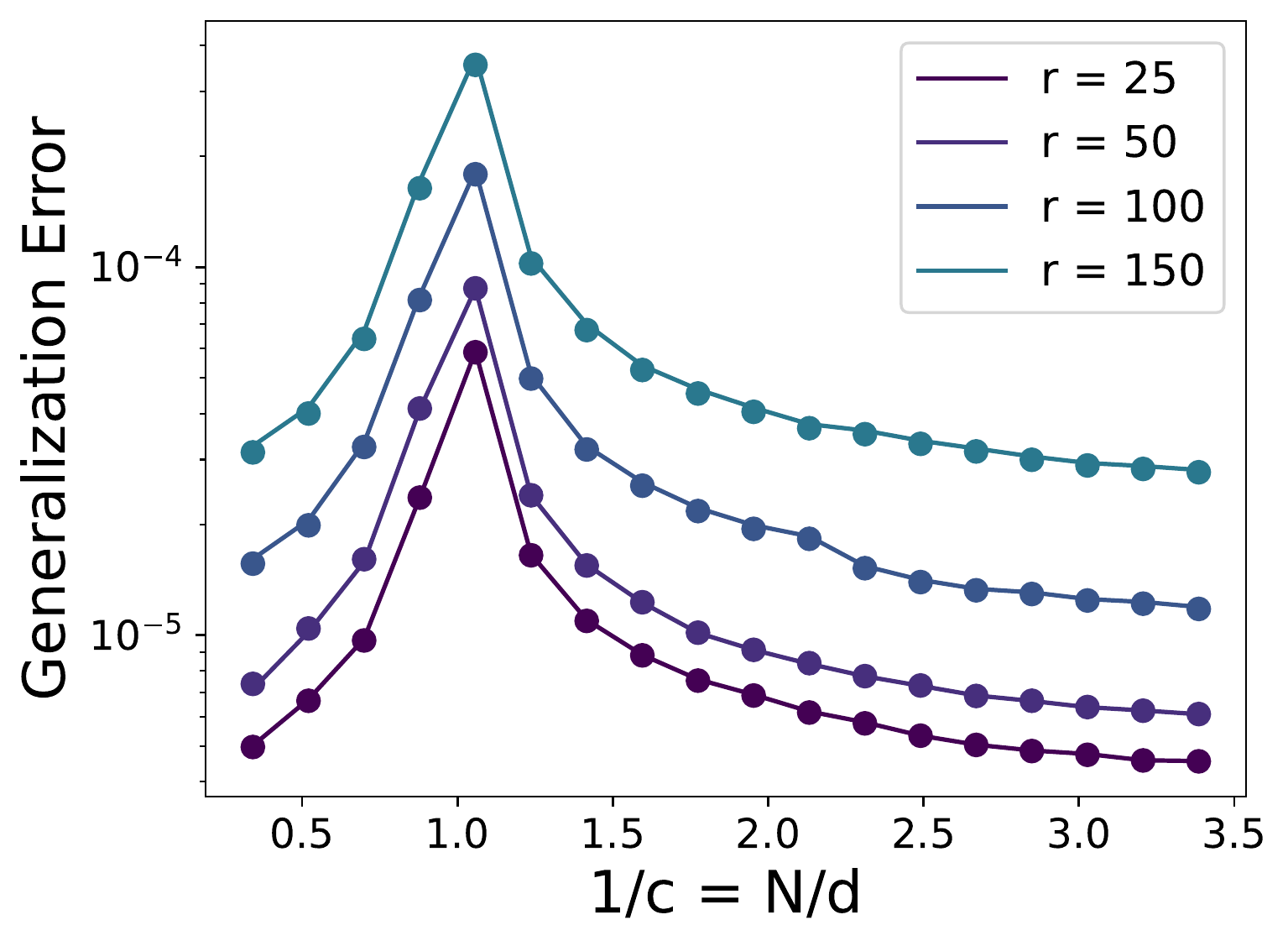}}
        \subfloat[SVHN Dataset]{\includegraphics[width = 0.33\linewidth]{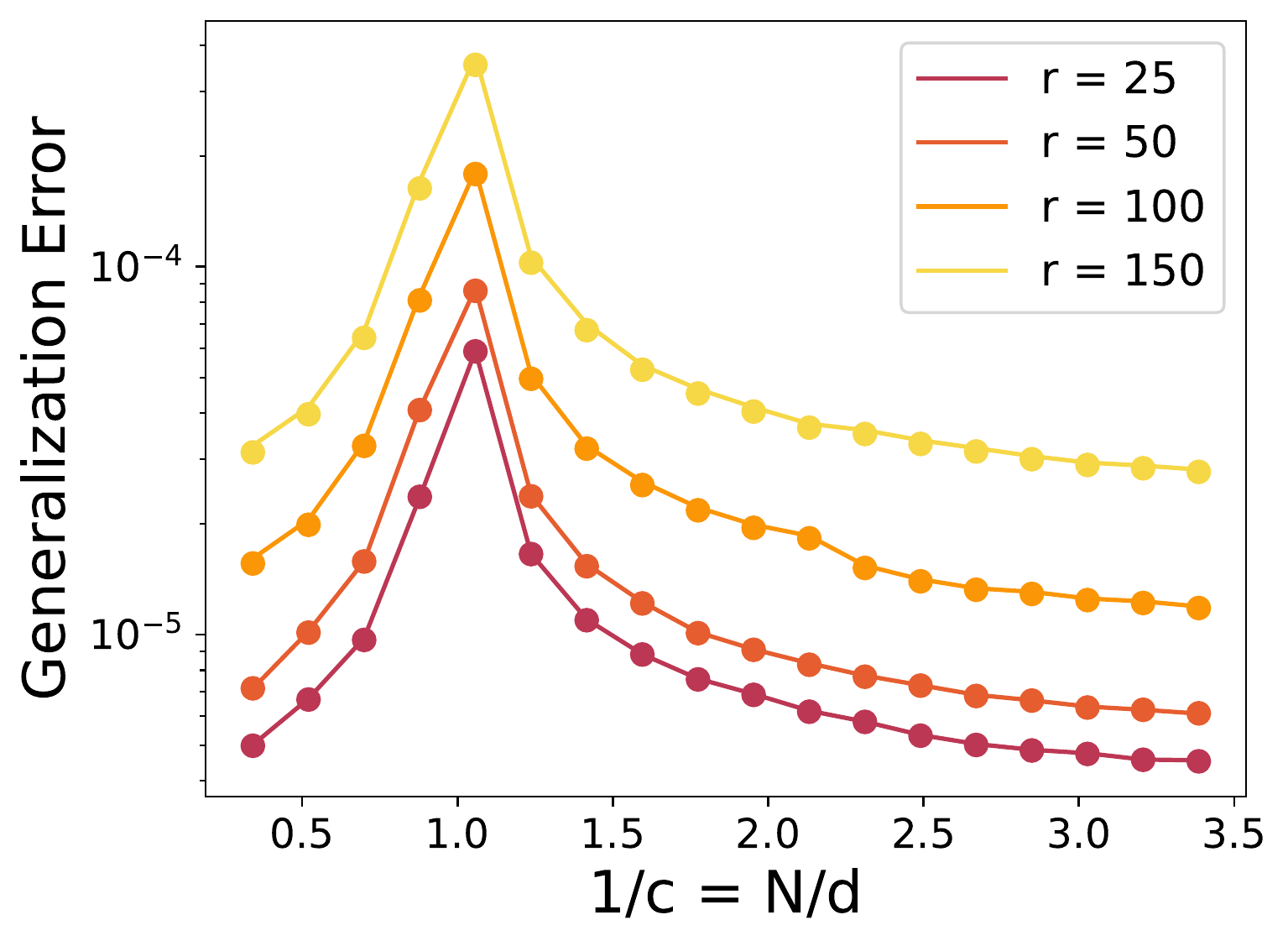}}
        \caption{Figures showing the test error for Linear Regression vs $1/c = N/d$. Training data from the CIFAR dataset is projected onto its first $r$ principal components for $r = 25,50,100,150$. 2500 test data points from CIFAR, STL10, and SVHN datasets are projected onto the same low-dimensional subspace. For empirical data points, shown by markers, we report the mean test error over at least 200 trials.}
        \label{fig:regression_tf}
\end{figure}

\paragraph{Experimental Verification}

To experimentally verify our test error predictions using real-life data with distribution shift, we train a linear function $W_{opt}$ on CIFAR \citep{Krizhevsky2009LearningML} and test on CIFAR, STL10 \citep{Coates2011AnAO}, and SVHN \citep{Netzer2011ReadingDI}. We \textbf{emphasize} that we run all experiments in the covariate shift case. That is, all models are trained on CIFAR, but then test posted different datasets. For computing test error, we simply compute $W_{opt}$ and plot the empirical average of $\frac{1}{N_{tst}}\|X_{tst} - W_{opt}(X_{tst} + A_{tst})\|_F^2$ over 200 trials. We run three main kinds of experiments. (a) First, we enforce the low-rank assumption to isolate the effect of distribution shift. To do this, we use principal component regression or PCR \citep{xu2019number, teresa2022}. In PCR, instead of working with the true (and approximately low-rank) training data matrices $X_{tst}$, we find the best low-rank approximation $\hat{X}_{trn}$ of the training data by projecting it to an embedded subspace of the highest principal components. When testing, we project the test datasets to the same subspace to enforce the low-rank assumption before computing the empirical test error. (b) Second, to explicitly control the amount of deviation $\alpha$ from the low-rank assumption, we perturb the low-rank testing data from setting (a) and test using $\tilde{X}_{tst} := \hat{X}_{tst} + K_{tst}$, where $K_{tst}$ is Gaussian noise with covariance designed to control $\alpha$. (c) In the third case, we do not try to artificially enforce the low-rank assumption, relying on the approximate low-rank nature of real-life data. To do this, we report the test error for the matrices $X_{tst}$ themselves. 
%
Since $d$ is fixed, we vary $c$ by varying $N$. Figure \ref{fig:combined} shows that the theoretical curves and the empirical results align perfectly for experimental setup (a) and that we have tight bounds for experimental setup (b). Numerically, we find that the relative error between the generalization error estimate and the average empirical error in experimental setup (a) is under $1\%$ on average. For setup (c), since real-life data is only \emph{approximately} low rank, we see a non-negligible error. However, the predictions still align well with the empirical results. \textbf{This shows that our low-rank assumption is meaningful, and aligns well with the behaviour of real-life data.}

\begin{figure}[!ht]
    \begin{subfigure}{\linewidth}
        \includegraphics[width = 0.33\linewidth]{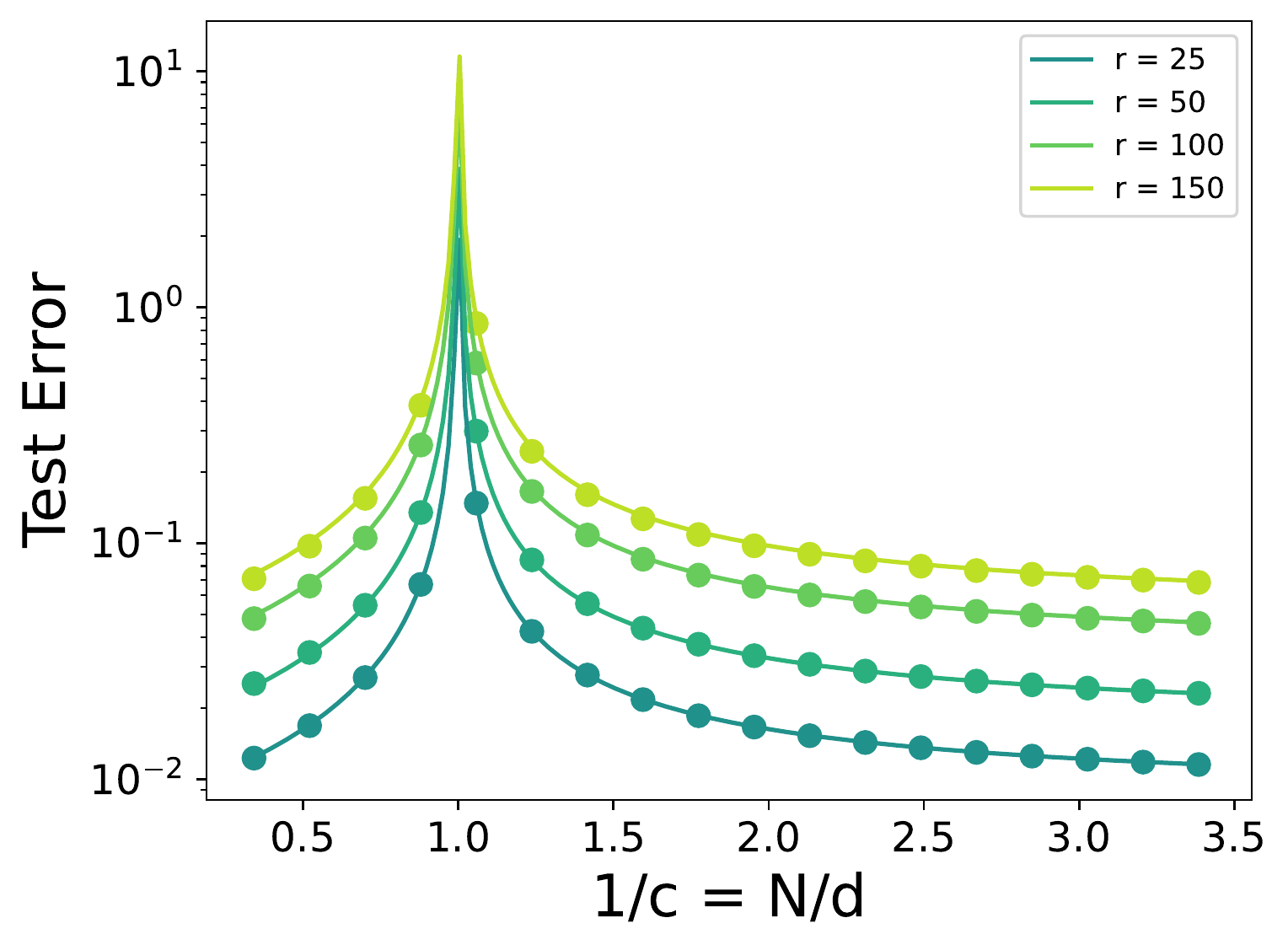}
        \includegraphics[width = 0.33\linewidth]{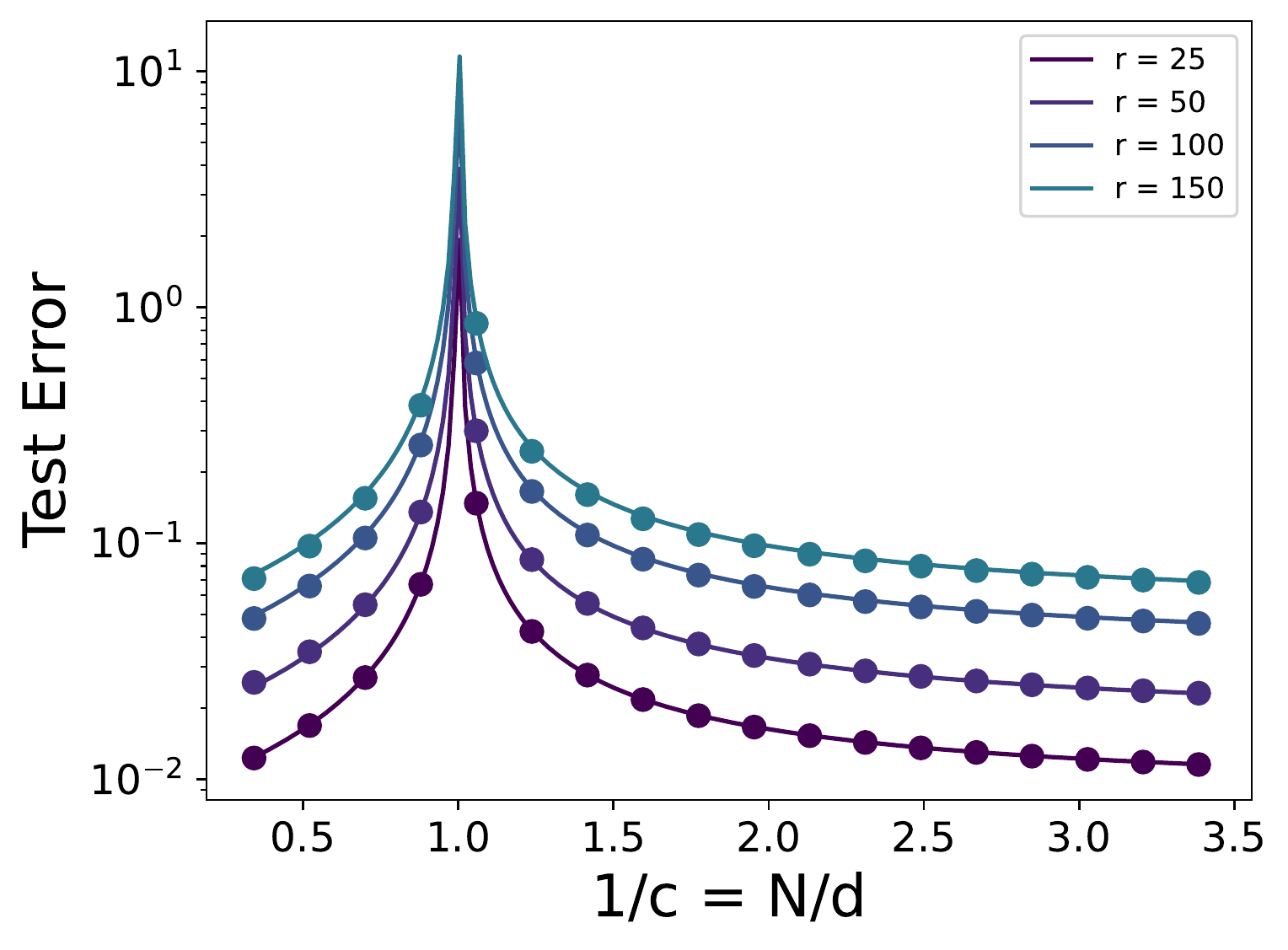}
        \includegraphics[width = 0.33\linewidth]{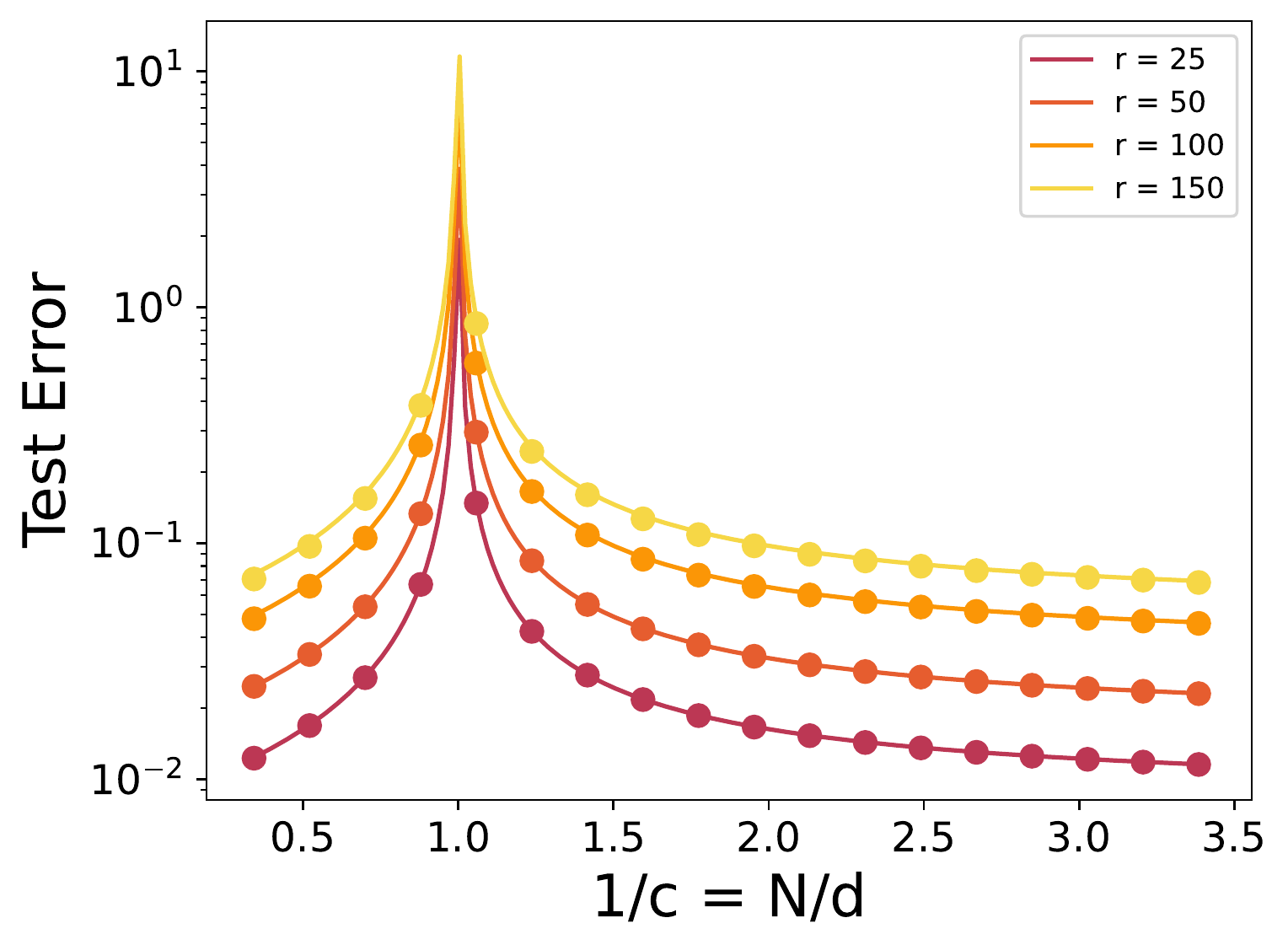}
        \caption{In-subspace test error.}
        \label{fig:insubspace_tf}
    \end{subfigure}
    \begin{subfigure}{\linewidth}
        \includegraphics[width = 0.33\linewidth]{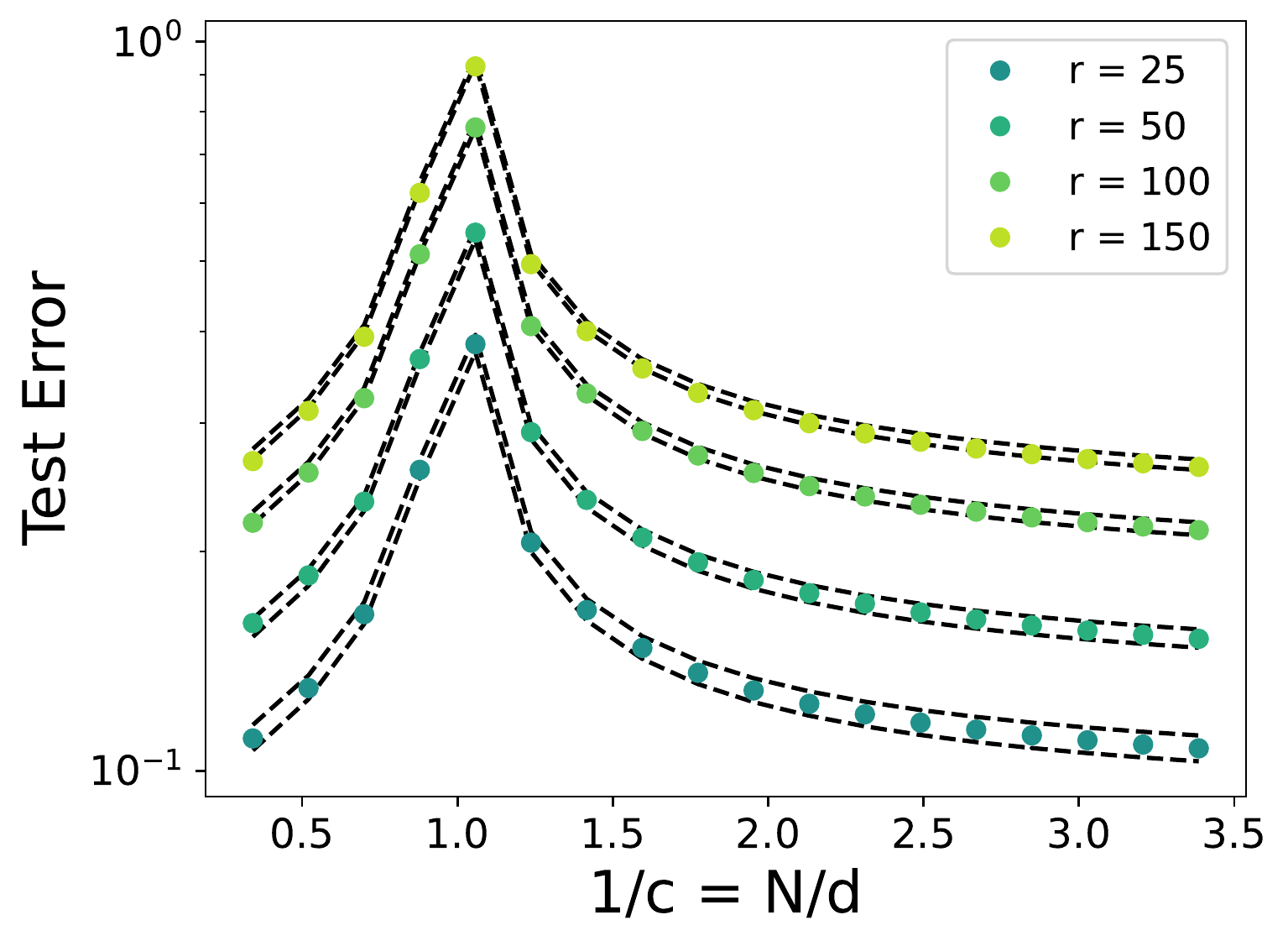}
        \includegraphics[width = 0.33\linewidth]{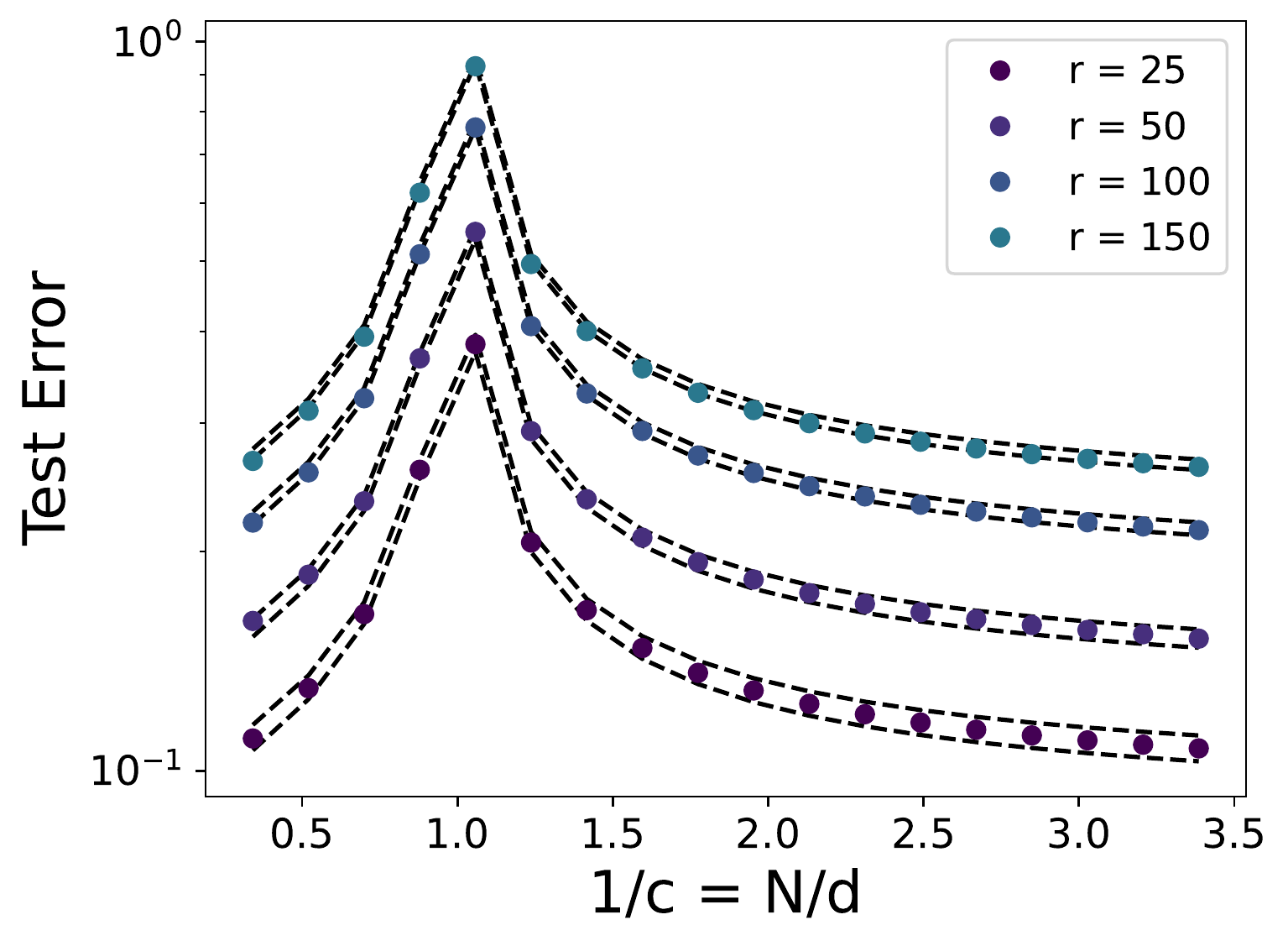}
        \includegraphics[width = 0.33\linewidth]{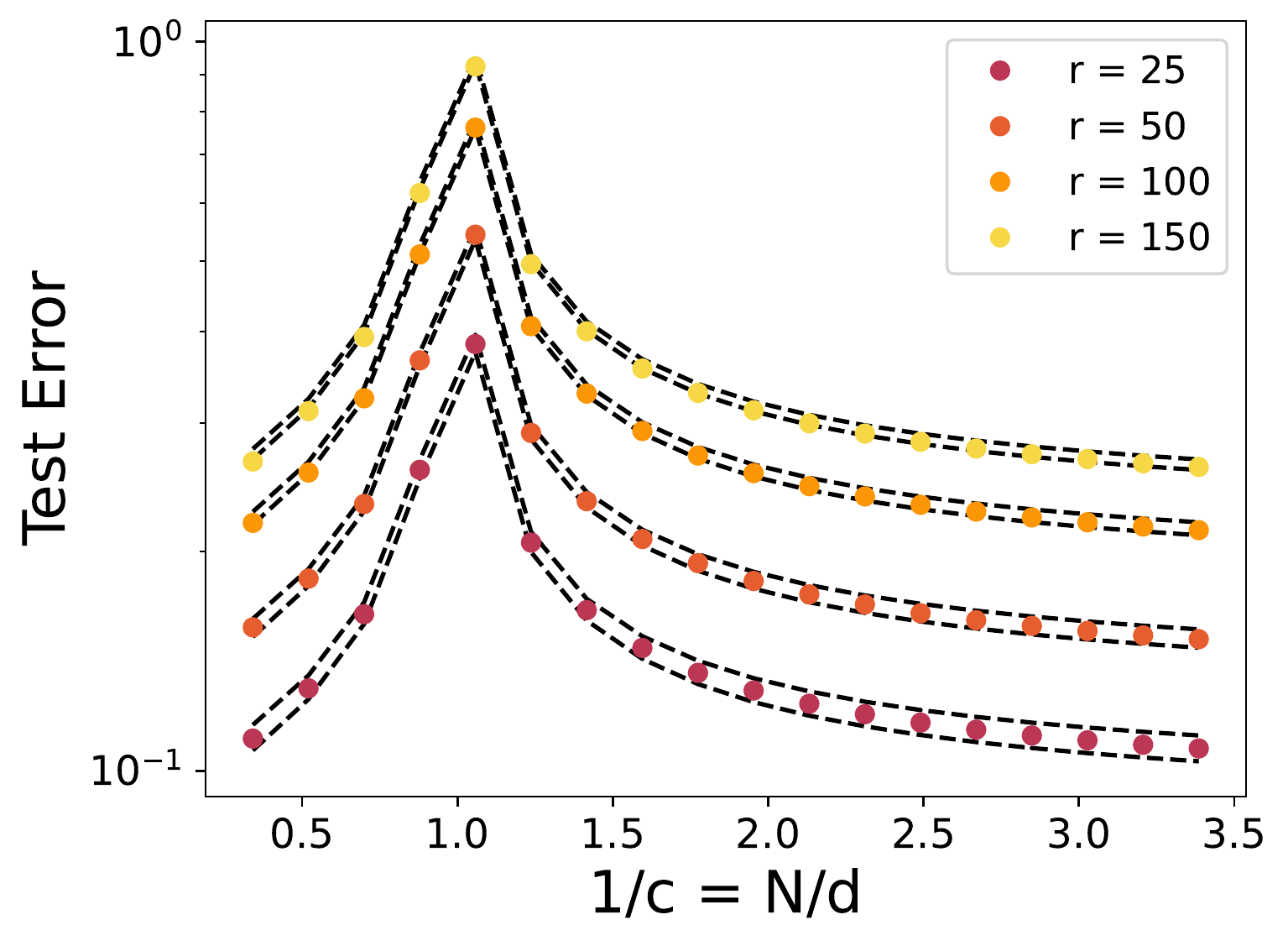}
        \caption{For the out-of-subspace curves, we add full-dimensional Gaussian noise such that $\alpha = 0.1$. The upper and lower bounds for the empirical markers are given by Theorem \ref{thm:out-of-subspace}).}
        \label{fig:outsubspaceperturb_tf}
    \end{subfigure}
    \begin{subfigure}{\linewidth}
        \includegraphics[width = 0.33\linewidth]{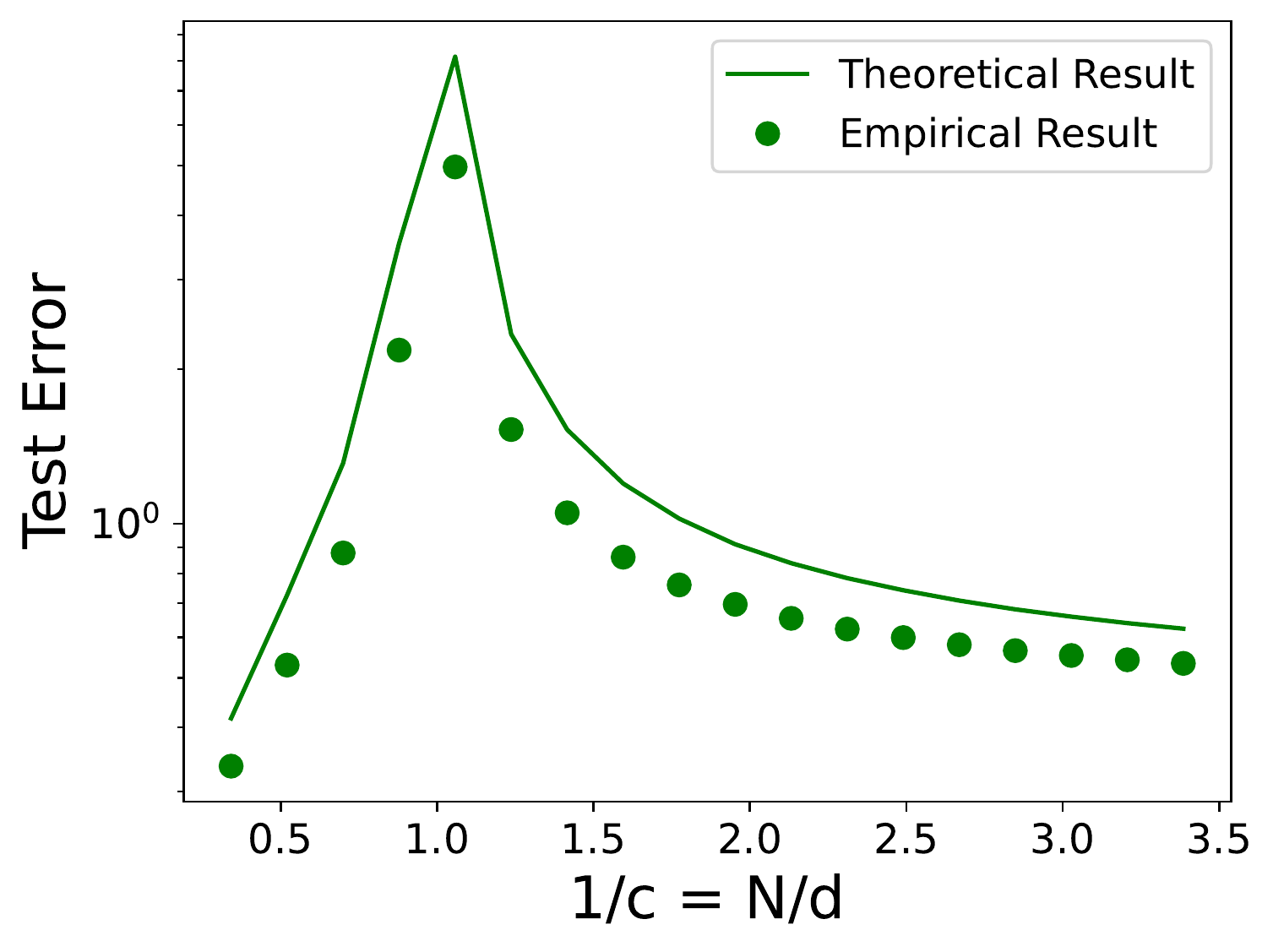}
        \includegraphics[width = 0.33\linewidth]{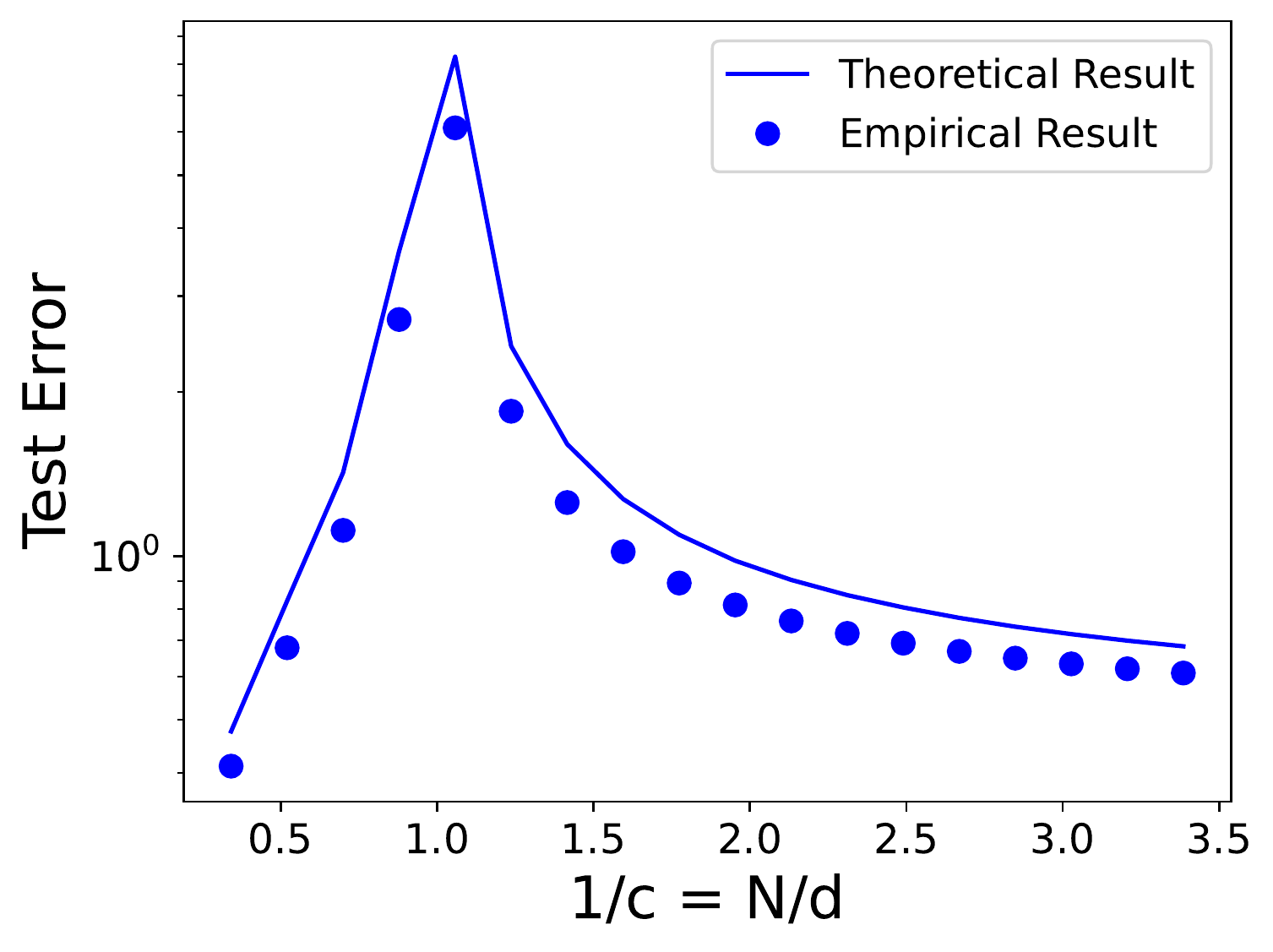}
        \includegraphics[width = 0.33\linewidth]{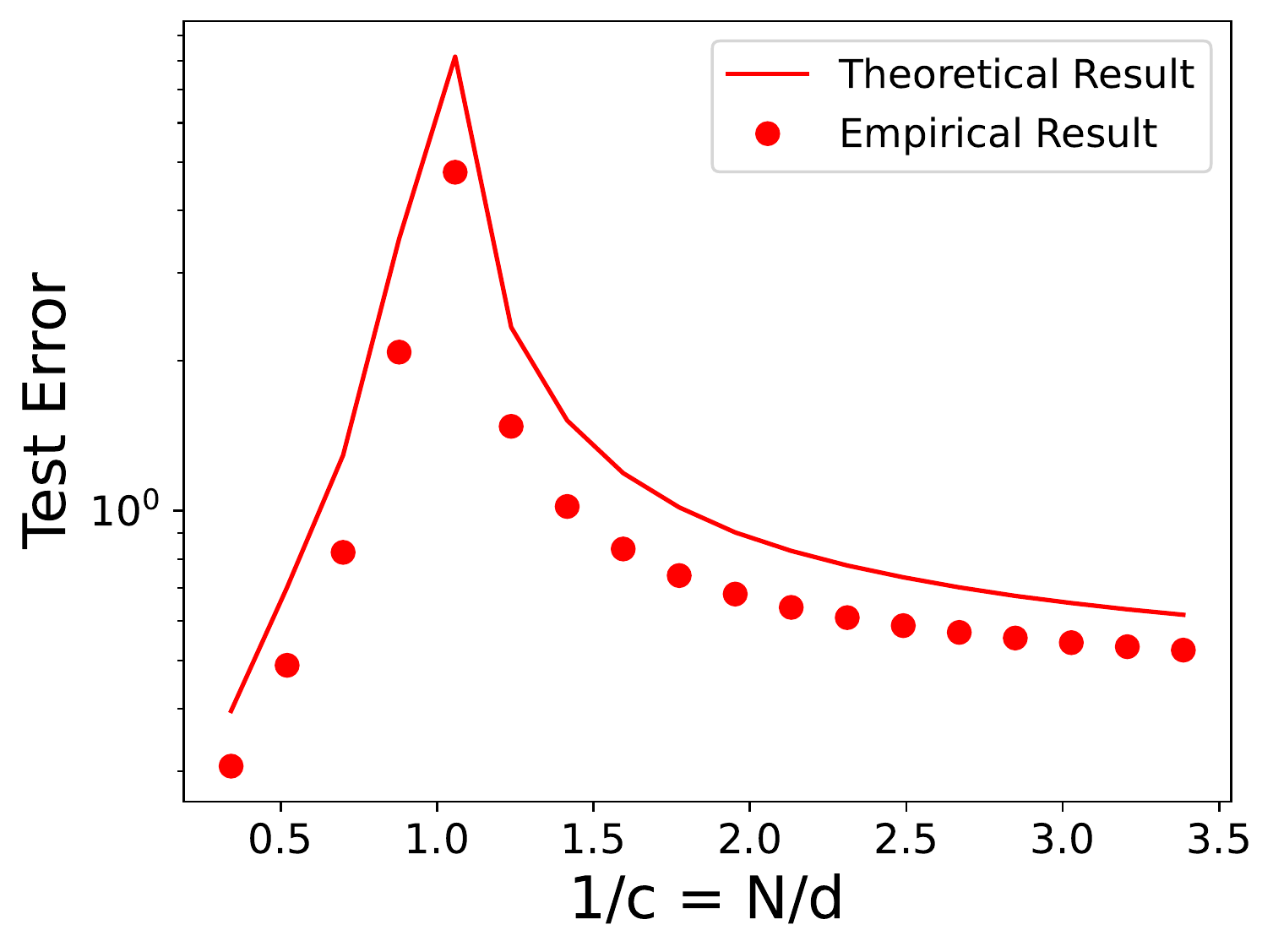}
        \caption{Test error estimated without projecting data, relying on the approximate low-rank structure of real-life data.}
        \label{fig:full}
    \end{subfigure}
    \caption{Figures showing the test error for $\beta = I$ vs $1/c = N/d$. In (a) and (b), training data from the CIFAR dataset is projected onto its first $r$ principal components for $r = 25,50,100,150$. 2500 test data points from CIFAR (Green, Left col.), STL10 (Blue, Middle col.), and SVHN (Red, Right col.) datasets are projected onto the same low-dimensional subspace. (a) is in-subspace test error and (b) is out-of-subspace test error. In (c), we don't project the test data and report the standard test error, relying on the approximate low-rank structure in data instead of imposing it. For empirical data points, shown by markers, we report the mean test error over at least 200 trials. Similar results are obtained for single-variable regression with $\beta \in \R^d$.}
    \label{fig:combined}
\end{figure}

\paragraph{Single-variable Regression}

Theorem \ref{thm:main} is for any $\beta \in \mathbb{R}^{d \times k}$. Above, we presented figures for when $\beta = I_d$. Here, we present a similar figure for the single variable regression case. That is when $\beta \in \mathbb{R}^{d \times 1}$. Figure \ref{fig:regression_tf} shows the results. As before, we train all models on CIFAR and then test on CIFAR, SVHN, and STL10 with the same target $\beta$. As we can see the theory closely matches the empirical results.

\paragraph{Classification} We verify our theoretical predictions in the case of classification via two  experiments. For the first experiment, we have two datasets. First, we have two Gaussian clusters with means $\pm \mu$ and have labels $\pm 1$. Second, we have three clusters with mutually orthogonal means and labels $e_1, e_2, e_3$. For this model, we train the model using mean squared error and then plot the test mean squared error and the test classification accuracy. Figure \ref{fig:class1} shows that our theoretical prediction of the MSE exactly aligns with the empirical observations. Further, we see that the double descent in the MSE results in double descent in the classification error. 

\begin{figure}
    \centering
    \begin{subfigure}{0.24\linewidth}
        \includegraphics[width =    \linewidth]{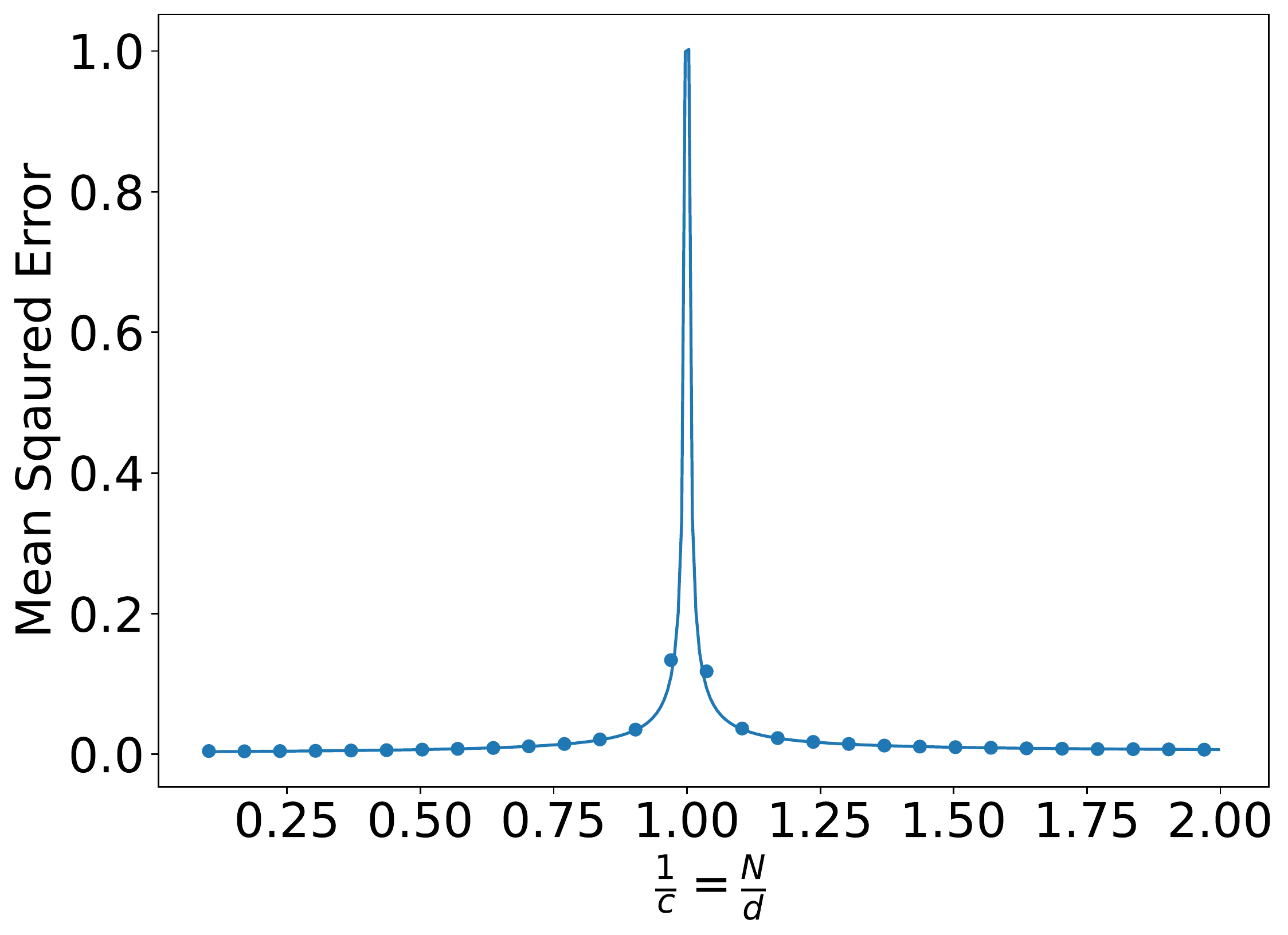}
        \caption{2 Cluster MSE}
    \end{subfigure}
    \begin{subfigure}{0.24\linewidth}
        \includegraphics[width =    \linewidth]{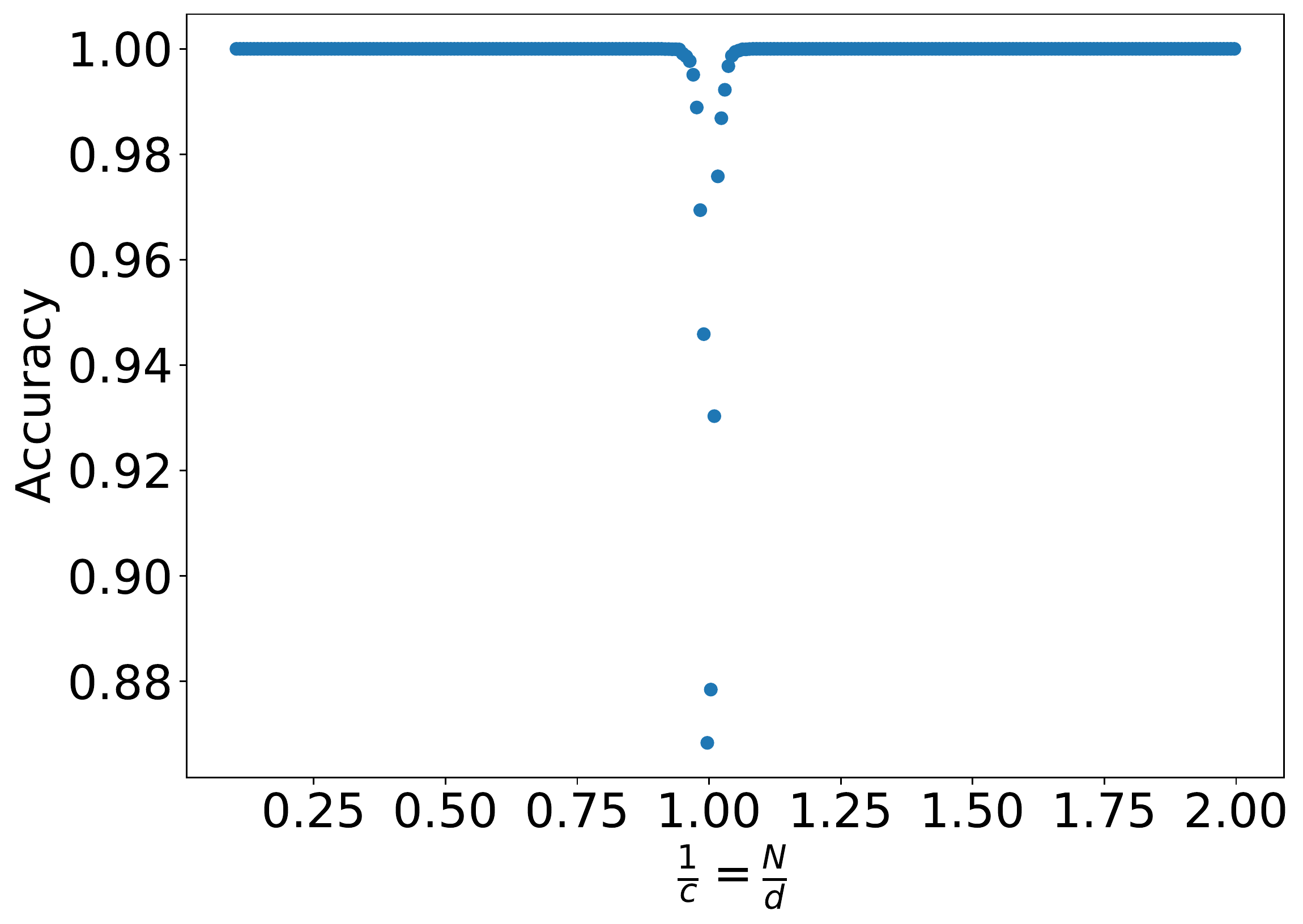}
        \caption{2 Cluster Accuracy}
    \end{subfigure}
    \begin{subfigure}{0.24\linewidth}
        \includegraphics[width =    \linewidth]{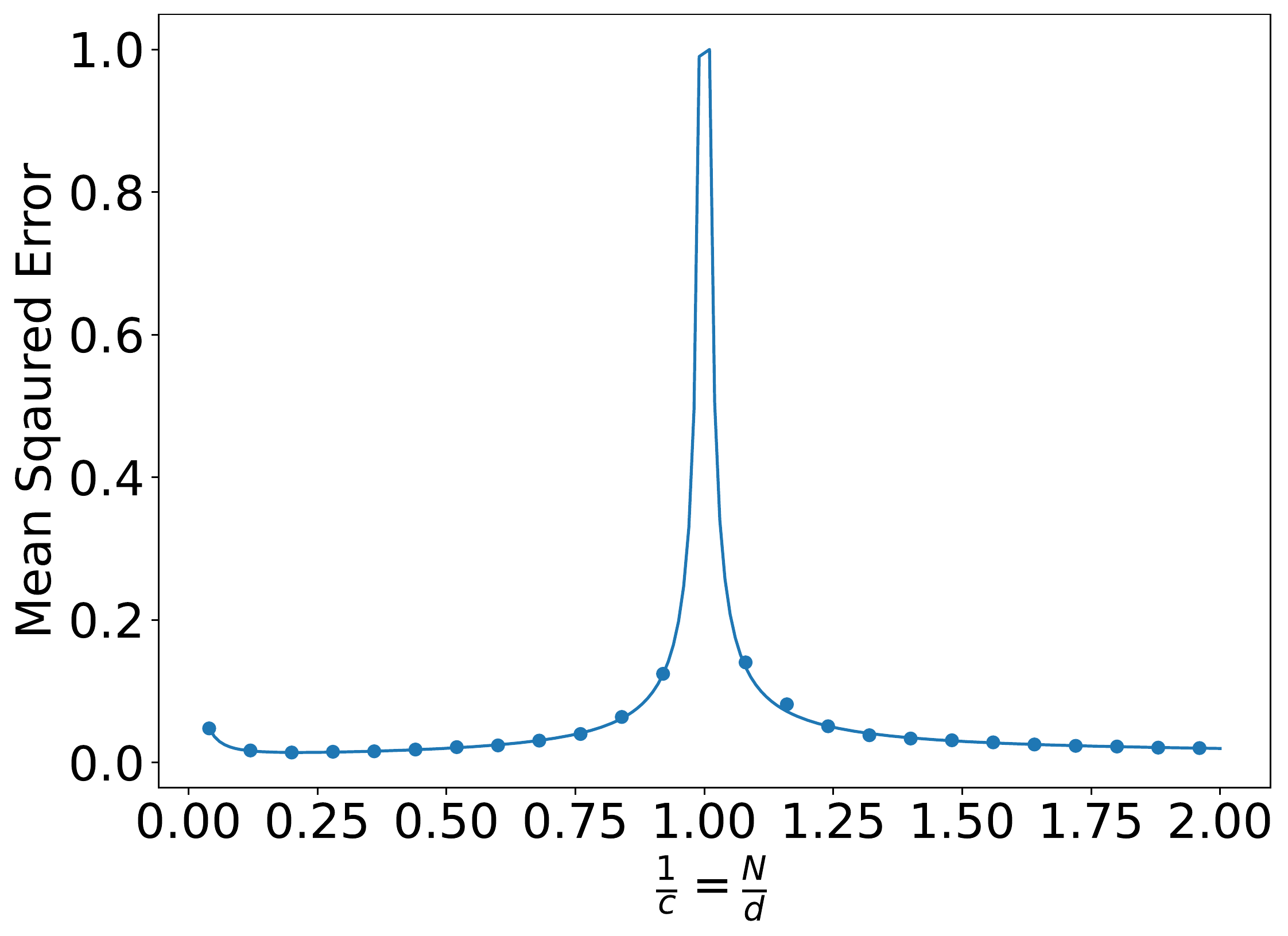}
        \caption{3 Cluster MSE}
    \end{subfigure}
    \begin{subfigure}{0.24\linewidth}
        \includegraphics[width =    \linewidth]{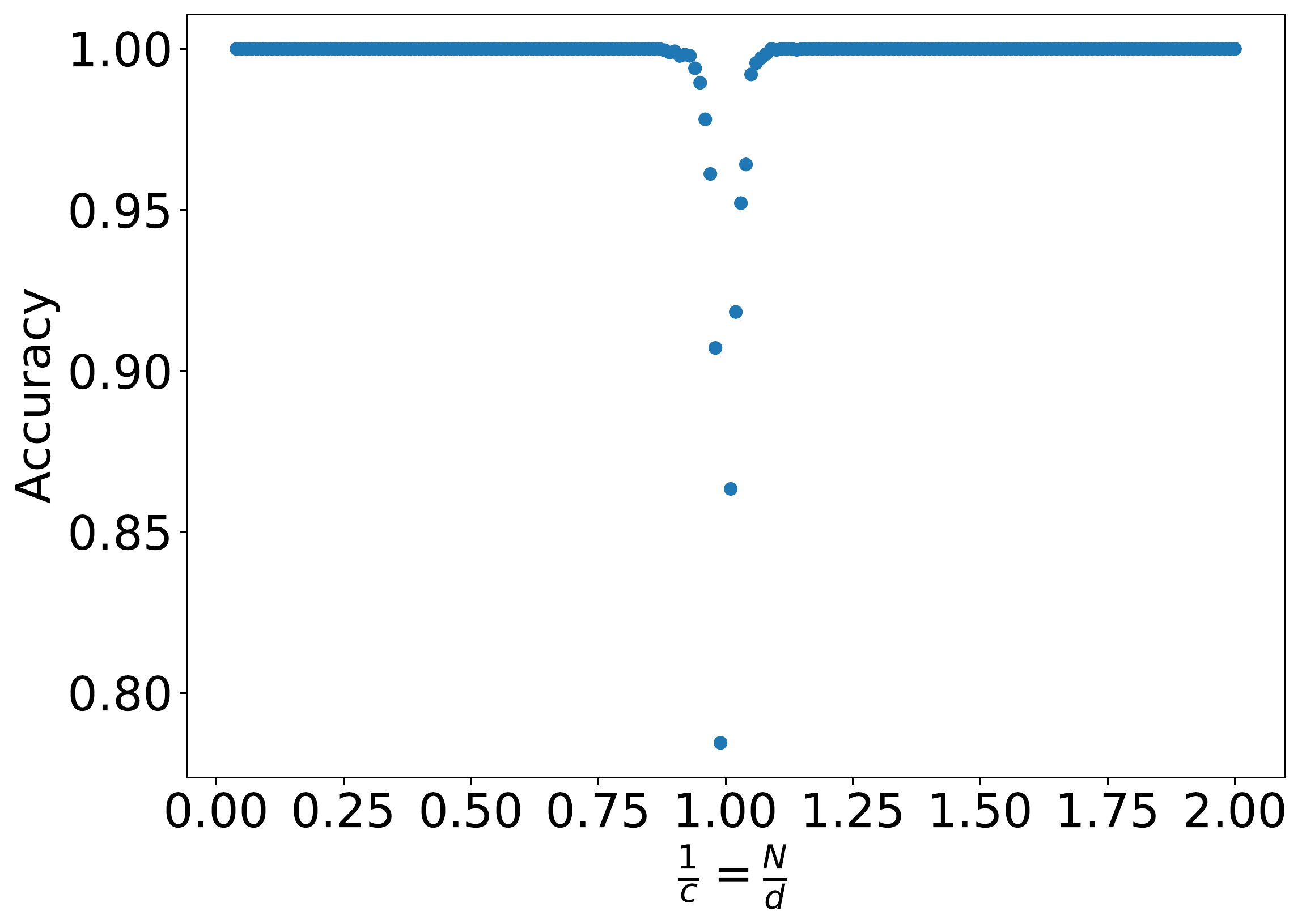}
        \caption{3 Cluster Accuracy}
    \end{subfigure}
    \caption{MSE and Accuracy when we only have covariate shift and $\beta_{tst} = \beta$.}
    \label{fig:class1}
\end{figure}

Since we are also interested in showing these phenomena also occur during transfer learning, we conduct the following experiment. We again have three mean vectors $\mu_1, \mu_2, \mu_3$ that are mutually orthogonal. We also have that the target for $\mu_i$ is the vector $e_i$. Then, during the test time, we assume that the means vectors for the three clusters have moved to $\mu_{tst,1}, \mu_{tst,2}, \mu_{tst,3}$ and we interested in mapping $\mu_{tst,i}$ to $e_i$. Hence, we have both covariate shifts, and the target function has changed. \textbf{Thus, we are predicting the error for transfer learning with covariates shift}. Figure \ref{fig:class2} shows that Theorem \ref{thm:transfer} exactly predicts the error. \emph{Again, we see double descent in the test mean squared error.} However, in this case, this double descent doesn't always translate to double descent in the classification accuracy (for the new classes). Two different shifts, with different trends, can be seen in Figure \ref{fig:class2}. 

\begin{figure}
    \centering
    \begin{subfigure}{0.32\linewidth}
        \includegraphics[width=\linewidth]{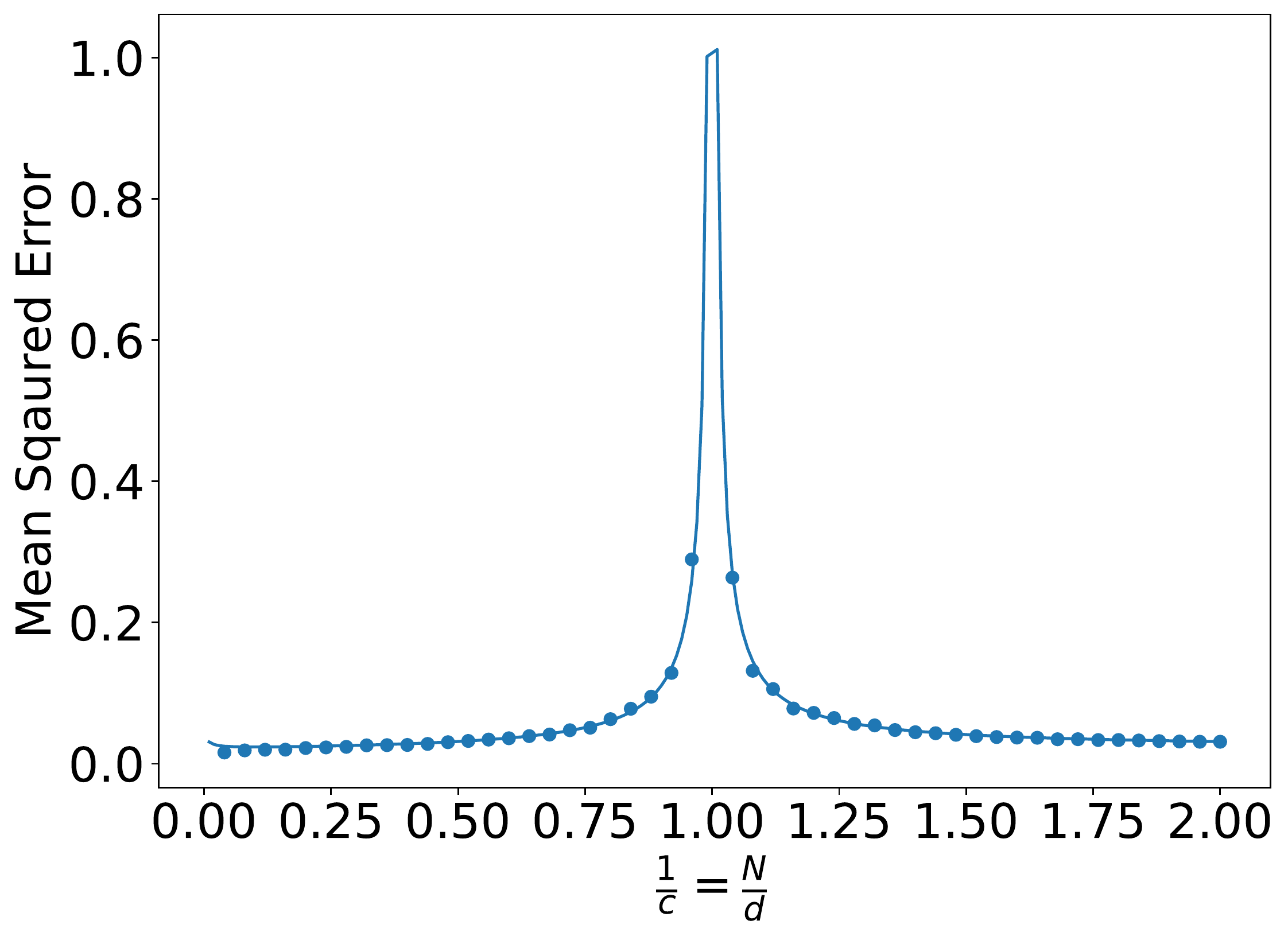}
        \caption{MSE for classification for three Gaussian clusters with covariates shift and transfer learning}
    \end{subfigure}
    \begin{subfigure}{0.32\linewidth}
        \includegraphics[width=\linewidth]{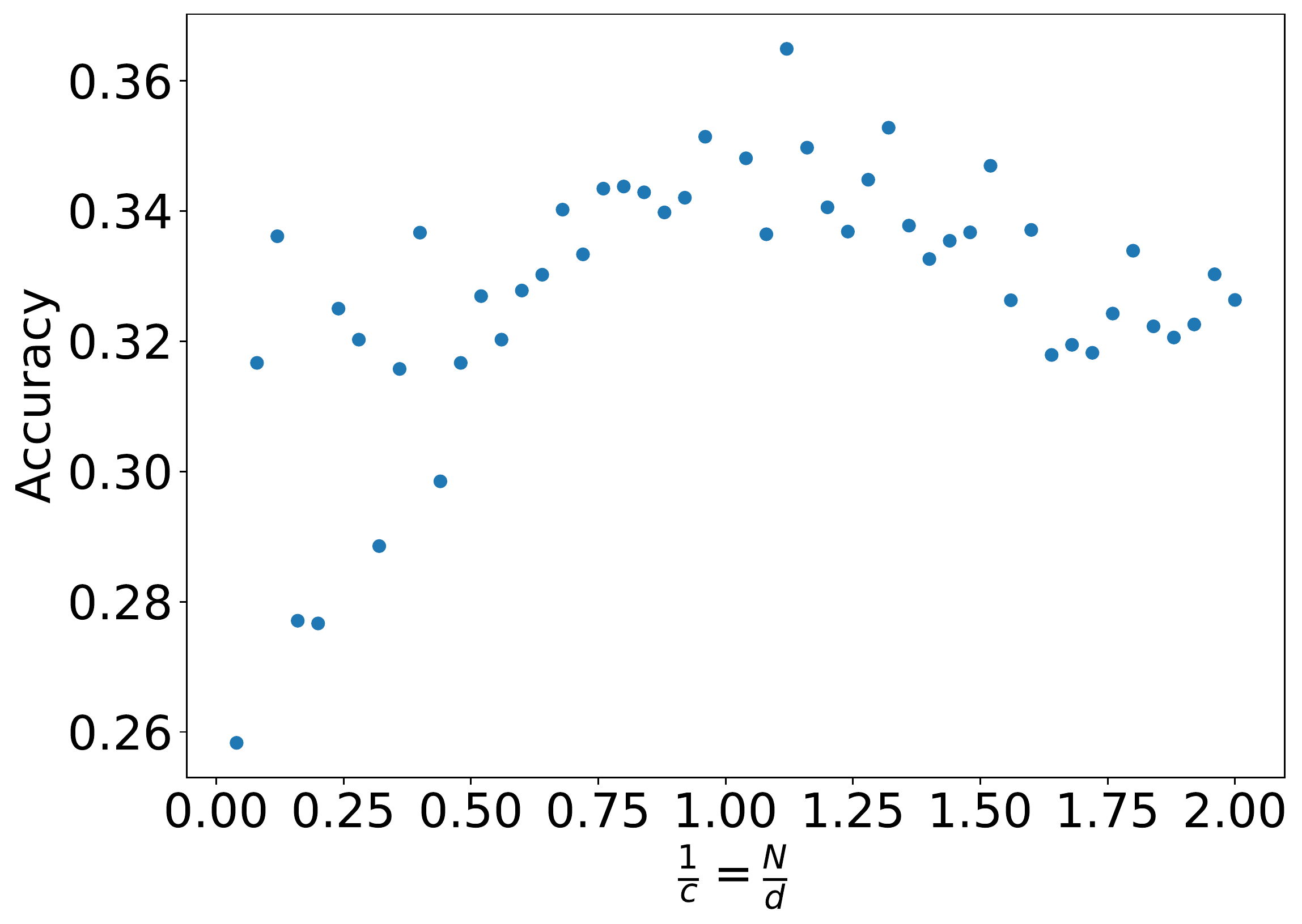}
        \caption{Accuracy for first covariate shift}
    \end{subfigure}
    \begin{subfigure}{0.32\linewidth}
        \includegraphics[width=\linewidth]{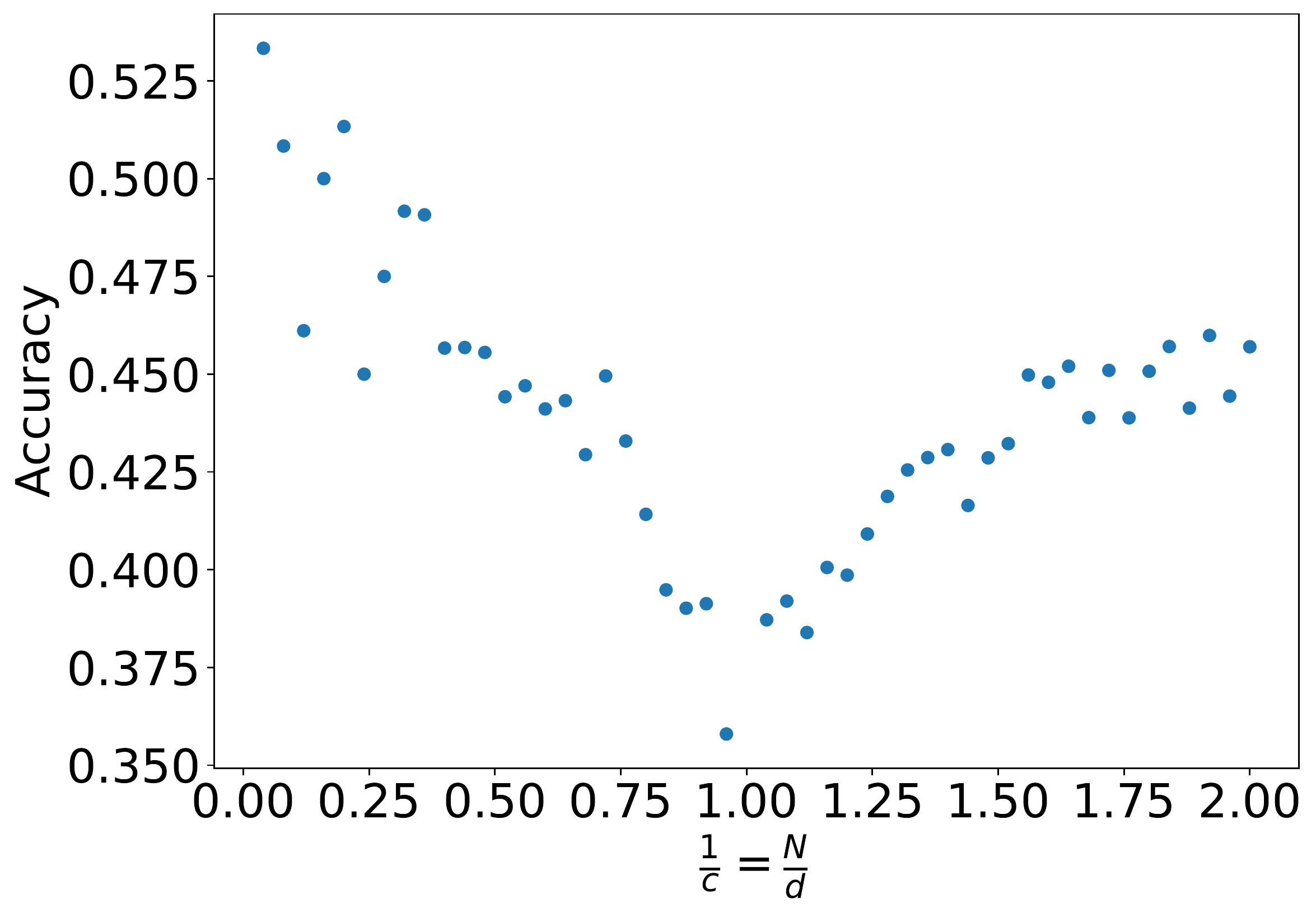}
        \caption{Accuracy for second covariate shift}
    \end{subfigure}
    \caption{Experiment when we have \emph{both} covariate shift and a shift in the target function.}
    \label{fig:class2}
\end{figure}

\section{Insights For Noisy-Input Problems}
\label{sec:insights}

\paragraph{Double Descent under Distribution Shift} Notice that all our curves plotting test error against $1/c$ have a similar shape -- they rise when $c$ approaches $1$ from either side, and there is a peak at $c=1$.  
This matches our theoretical results and establishes that denoising test error curves exhibit double descent, even for arbitrary test data in $\mathcal{V}$. To understand why this is happening, 
consider the denoising target, given by the MSE error below.
\[
    \mathbb{E}_{A_{tst}}[\|Y_{trn} - W(X_{trn}+A_{trn})\|_F^2] = \|Y_{trn} - WX_{trn}\|_F^2 + 2Tr(Y_{trn} - WX_{trn})^TA_{trn}) + \|WA_{trn}\|_F^2.
\]
Notice that minimizing this sum forces us to reduce \emph{both} $\|Y_{trn} - WX_{trn}\|_F$ and $\|W\|_F$, due to the third term (the "variance" term). We see that 
the noise is regularizing $\|W\|_F$ through the variance term $Tr(W^TWA_{trn}A_{trn}^T)$. This is the implicit regularization of $W$ due to noise. 
However, the strength of regularization due to the noise instance $A_{trn}$ is not the same across different values of $c$. When $c$ is close to $1$, the distribution of the spectrum of $A_{trn}A_{trn}^T$ (the Marchenko-Pastur distribution) has support very close to zero. On the other hand, for $c$ far from $1$, the non-zero eigenvalues of $A_{trn}A_{trn}^T$ are all bounded away from zero. This establishes that the effect of regularization weakens most near $c=1$,\footnote{The eigenvalues that are exactly zero do not contribute to weakening of the regularization. This is because we are choosing the minimum-norm optimizer $W^*$ for expected MSE error, and more zero eigenvalues increases flexibility, creating a larger set of optimizers to minimize the norm over. This helps decrease the components of $W^*$ by spreading them into more dimensions. This is identical in spirit to arguments about variance in overparametrized regimes in section 1.1 of \cite{Hastie2019SurprisesIH}.} leading to a spike in the test error coming from the large norm of the learnt $W_{opt}$. This explanation is similar in spirit to the explanations for double descent in \cite{xu2019number} and others, but crucially adapts to implicit regularization due to noise.

\paragraph{Amount of Training noise}

\begin{figure}[h!]
    \centering
    \subfloat[Training Data with New Noise]{\includegraphics[width = 0.24\linewidth]{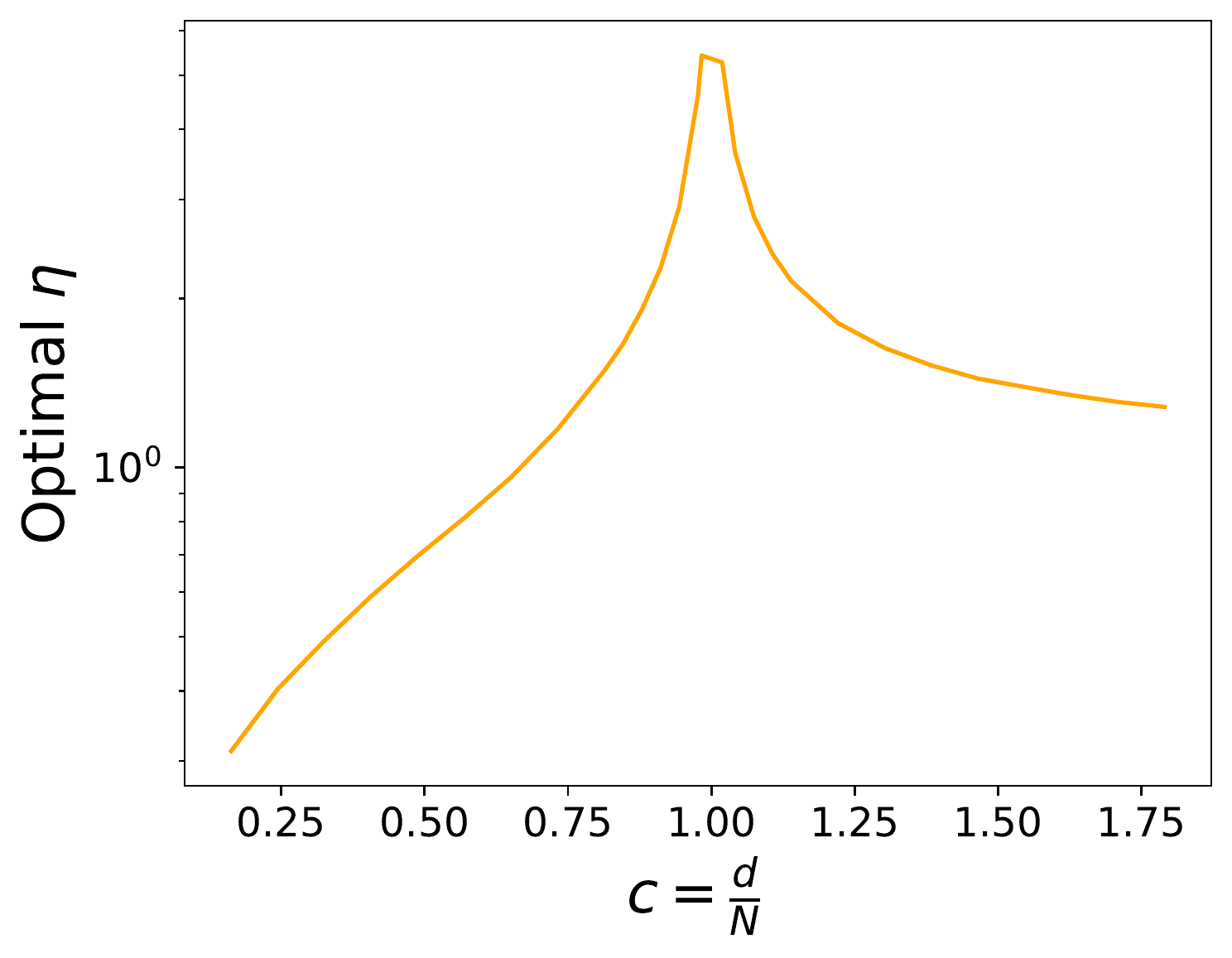}}
    \subfloat[CIFAR Dataset]{\includegraphics[width = 0.24\linewidth]{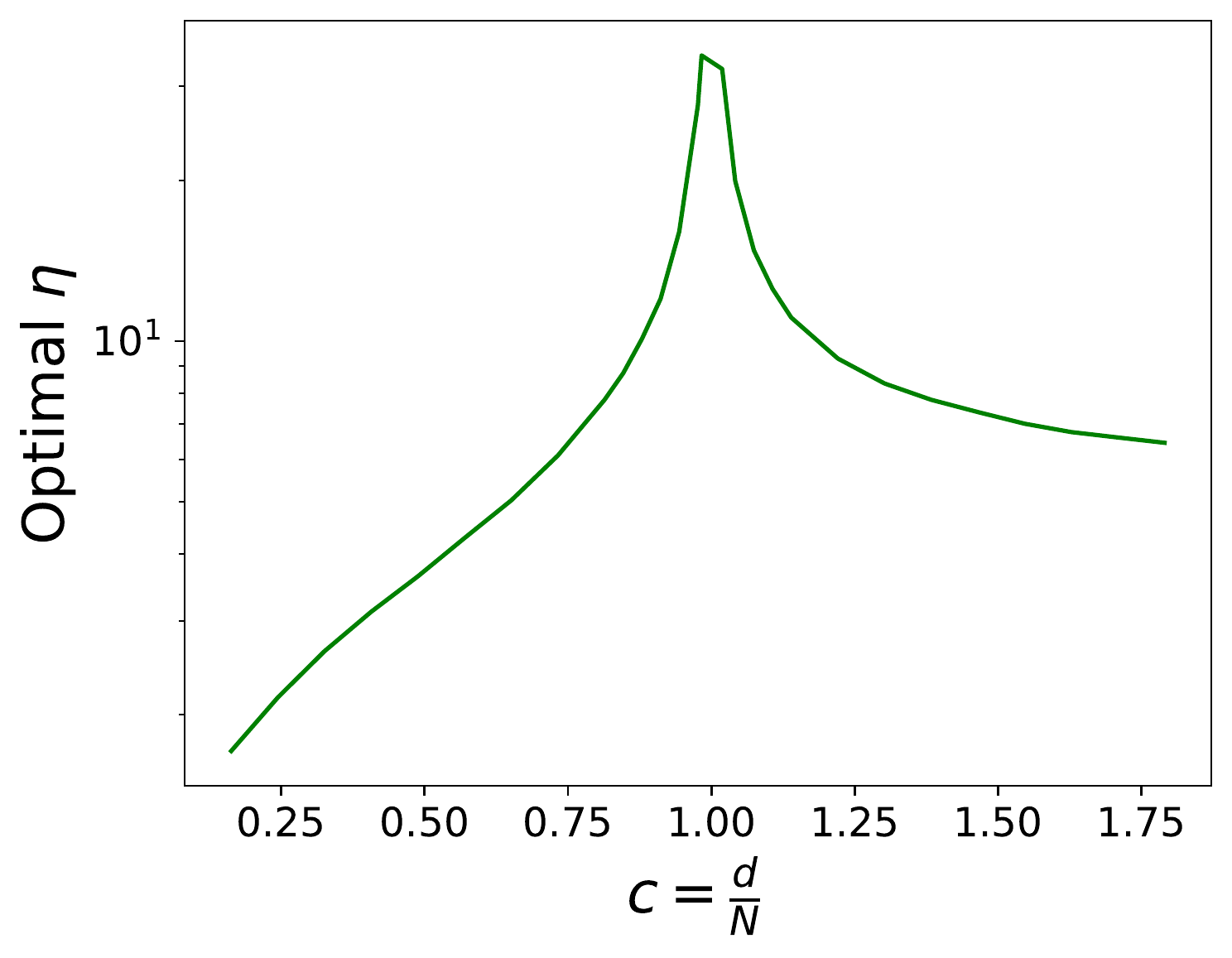}}
    \subfloat[STL10 Dataset]{\includegraphics[width = 0.24\linewidth]{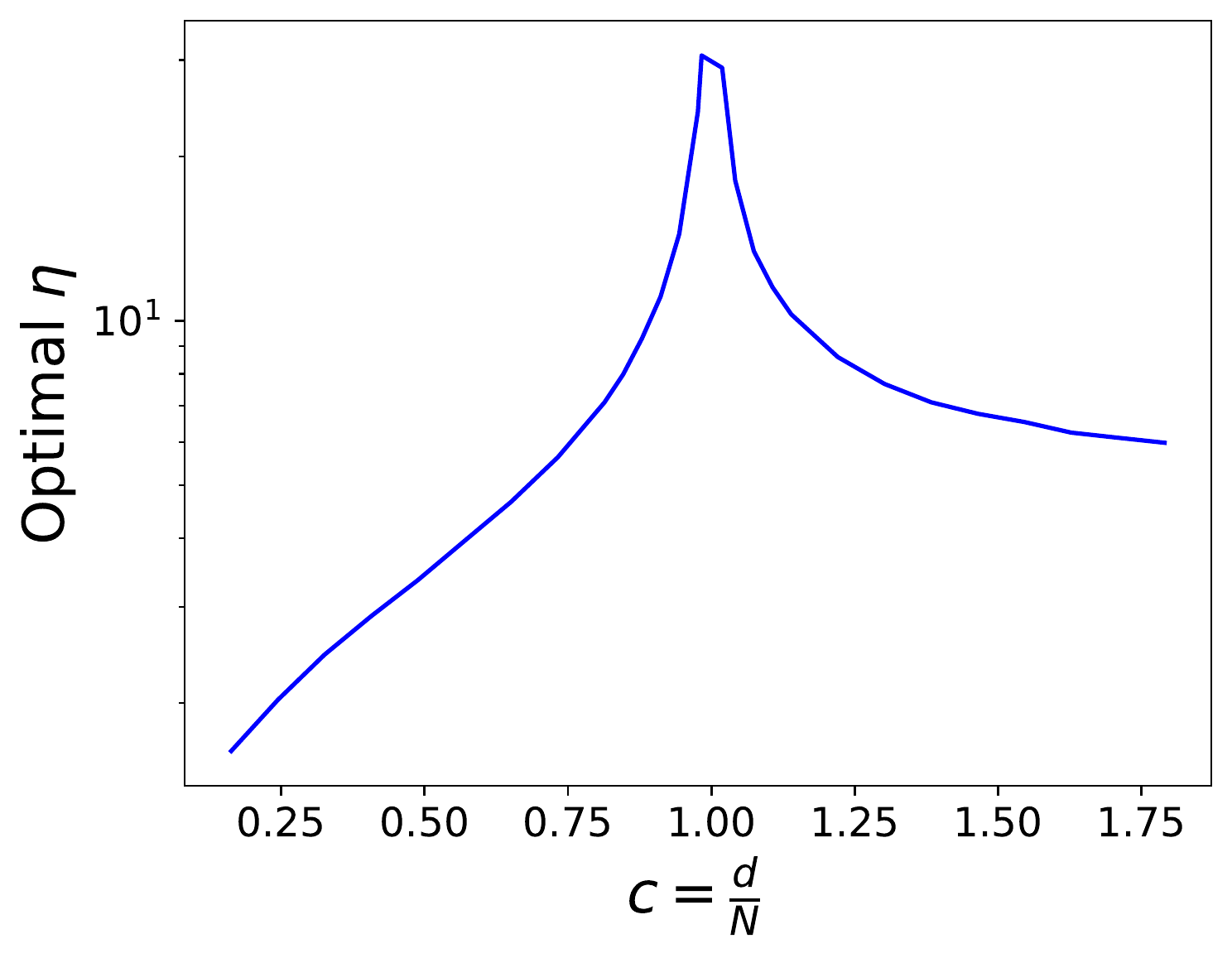}}
    \subfloat[SVHN Dataset]{\includegraphics[width = 0.24\linewidth]{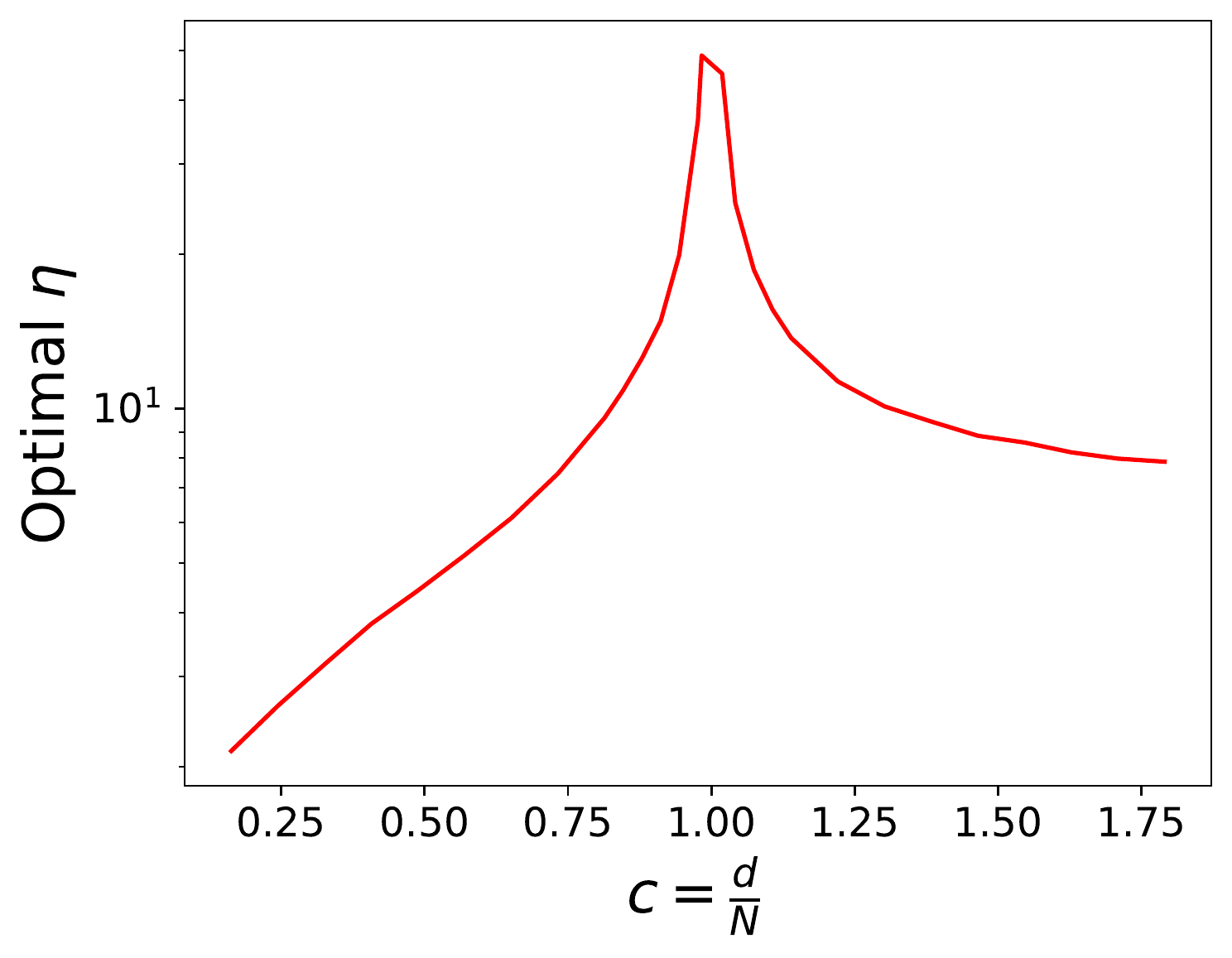}}
    \caption{Optimal $\eta_{trn}$ that minimizes the test error given in Theorem \ref{thm:main} versus $c = d/N_{trn}$.}
    \label{fig:optimalNoise}
\end{figure}

\begin{figure}[h!]
    \centering
    \subfloat[Training Data with New Noise]{\includegraphics[width = 0.24\linewidth]{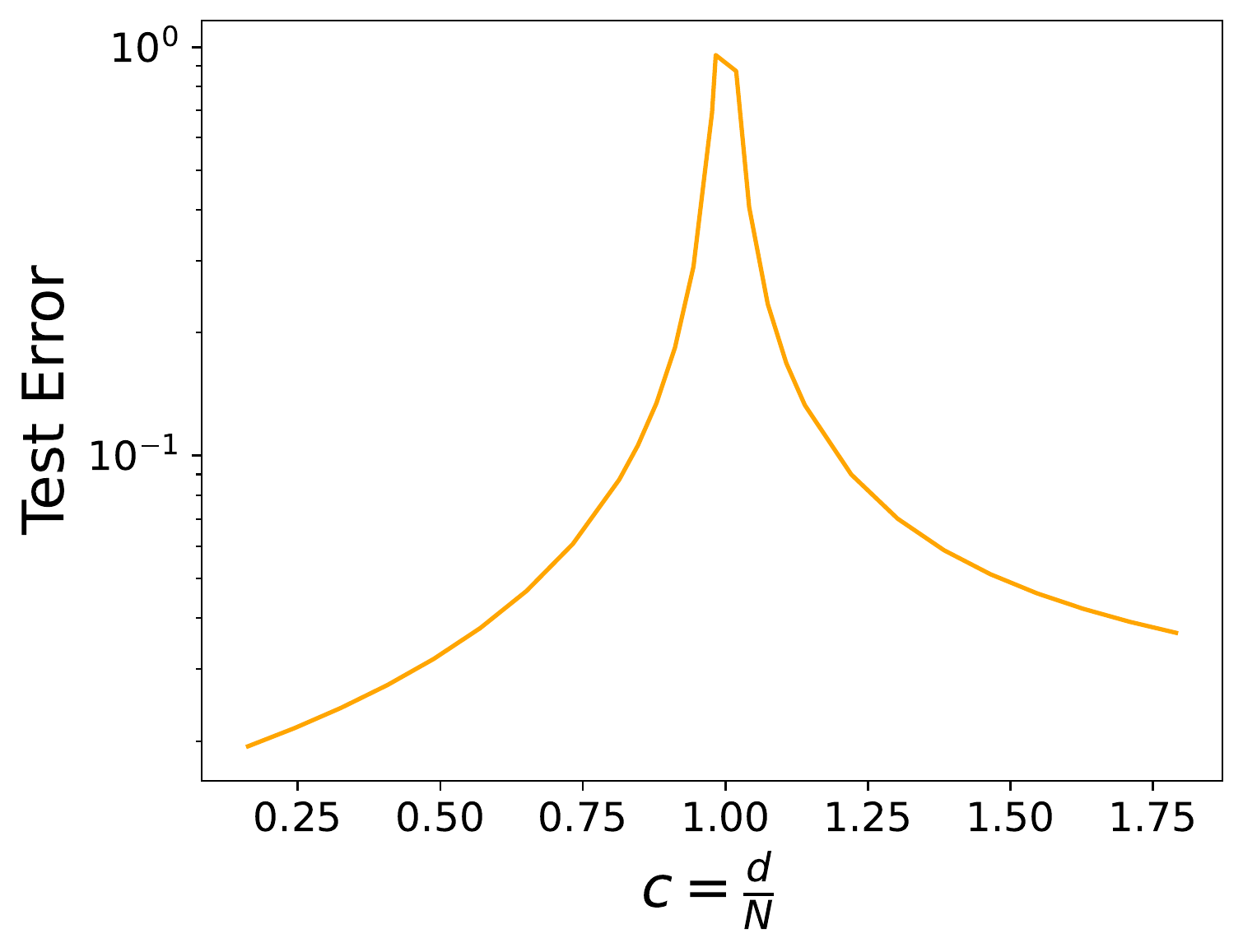}}
    \subfloat[CIFAR Dataset]{\includegraphics[width = 0.24\linewidth]{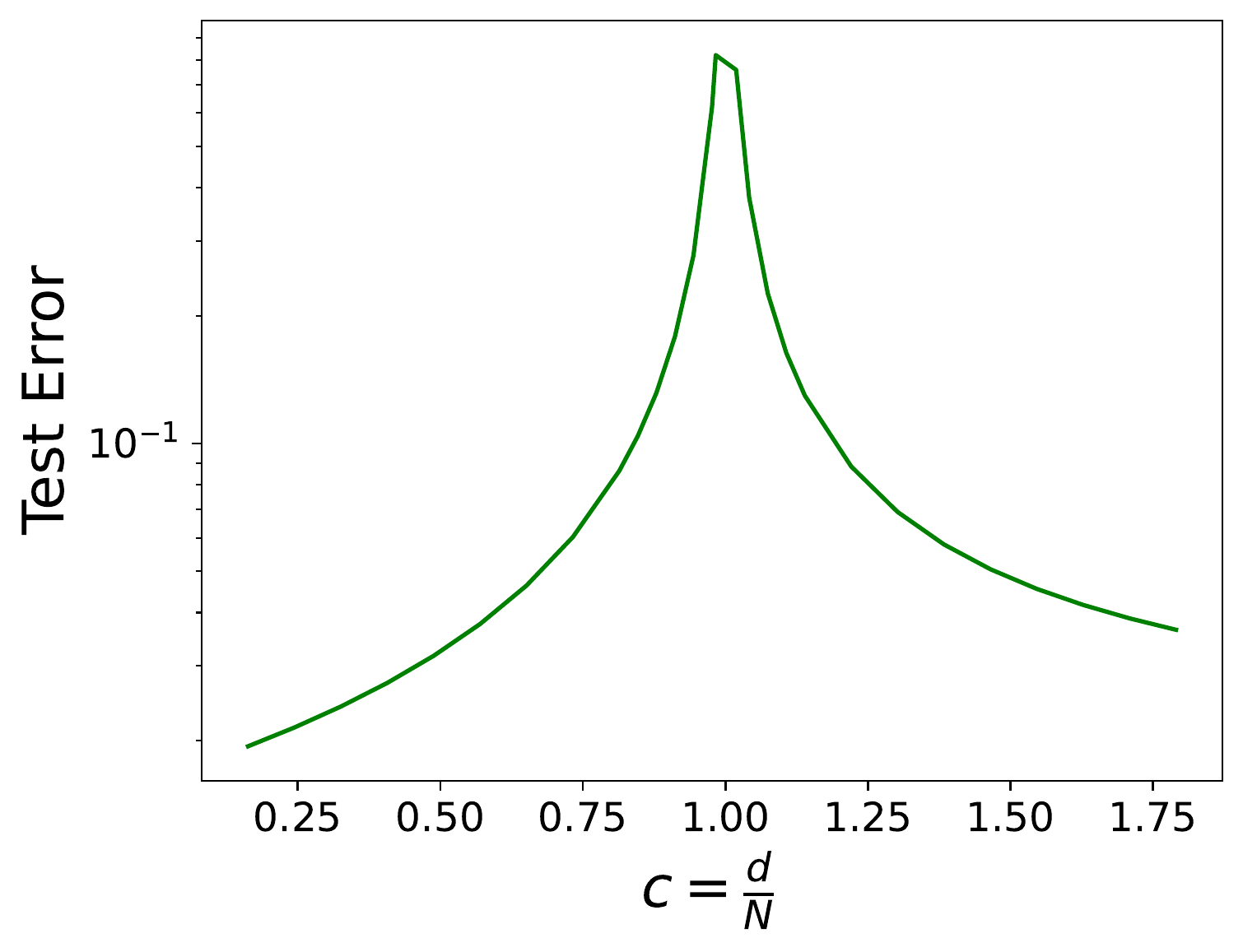}}
    \subfloat[STL10 Dataset]{\includegraphics[width = 0.24\linewidth]{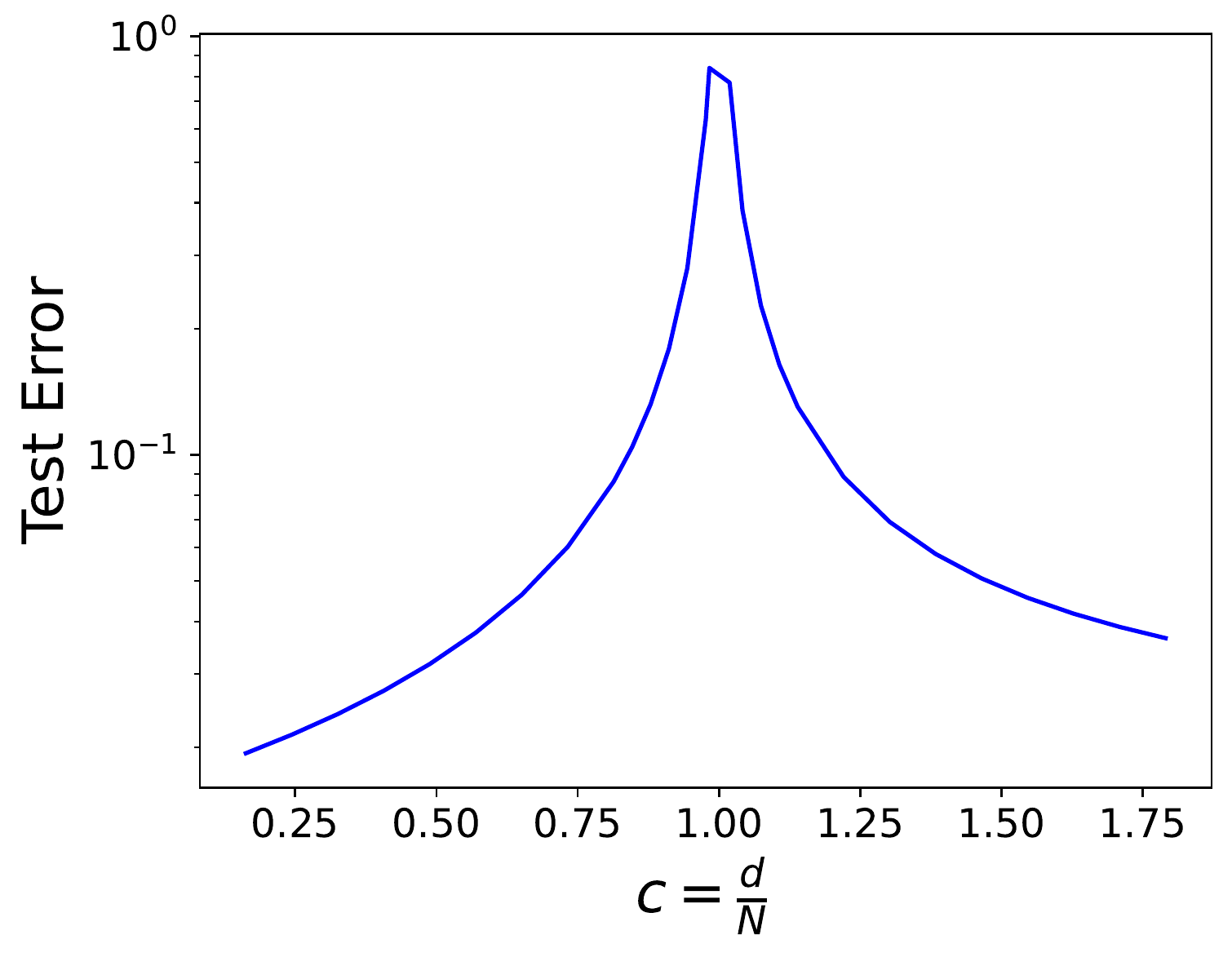}}
    \subfloat[SVHN Dataset]{\includegraphics[width = 0.24\linewidth]{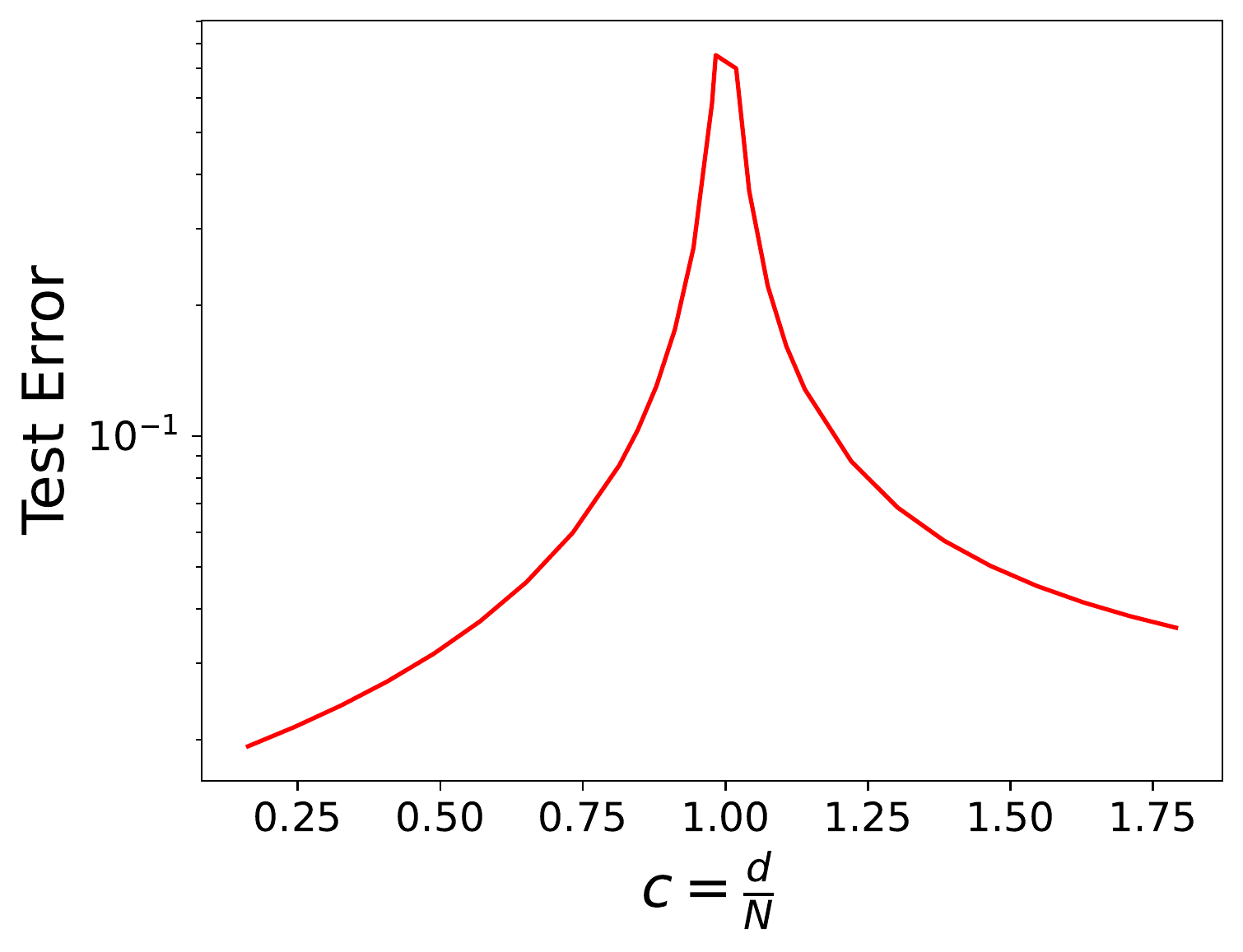}}
    \caption{Test Error using Theorem \ref{thm:main} versus $1/c$ with optimal $\eta_{trn}$. }
    \label{fig:opt-gen}
\end{figure}

It was highlighted in \cite{sonthalia2023training} that optimally picking the training noise level does not mitigate the double-descent phenomena observed in the generalization error for a linear model. In this section, we support this claim using our result from Theorem \ref{thm:main}. Figure \ref{fig:optimalNoise} shows the double descent curve of $\eta_{trn}$ and Figure \ref{fig:opt-gen} shows the generalization error when using the optimal amount of training noise. As in other works such as \cite{sonthalia2023training, yilmaz2022regularization}, we see double descent in the regularization strength.

\paragraph{Data Augmentation to Reduce Test Error.}  In contrast with \cite{nakkiran2021optimal}, but similar to \cite{sonthalia2023training}, optimally picking the noise parameter will not remove the peak in the test error. Instead, we use data augmentation and increase $N$ to try to move away from the peak, studying Theorem~\ref{thm:main} to understand how this will affect test error. We take two approaches to data augmentation that individually exploit the absence of the IID assumptions. Since \emph{the data does not have to be independent}, we can take the same training data and add fresh noise to increase $N$. Alternatively, since \emph{the data does not have to be sampled from a specific distribution}, we can combine two different datasets into a larger training dataset to increase $N$. 
The subtlety to account for is that while increasing $N$ this way decreases $c$, it also increases $\Sigma_{trn}$. However, recall from the insights on the scale of terms above that the variance term is commonly the dominant one, and note that the variance term is roughly constant as $\Sigma_{trn}$ grows\footnote{The bias term decreases or stays constant with $\Sigma_{trn}$, but the bias term is irrelevant at common scales of $\Sigma_{trn}$.}.
When $c<1$, applying data augmentation increases $N$, thus decreasing $c$ further away from the peak at $1$ and decreasing test error. When $c>1$, applying data augmentation increases $N$, decreasing $c$ towards the peak at $1$ and increasing test error. Of course, the latter phenomenon could be mitigated by adding other regularizers or by further augmenting the data. Figures \ref{fig:dataaug_tf} and \ref{fig:non-identical_tf} empirically verify the validity of Theorem \ref{thm:main} for the training data obtained from data augmentation. 
\emph{We also see that increasing the number of in-distribution training data points reduces the out-of-distribution test error.}
%

\begin{figure}[!htb]
    \begin{subfigure}{\linewidth}
        \centering
        \subfloat[CIFAR Dataset]{\includegraphics[width = 0.33\linewidth]{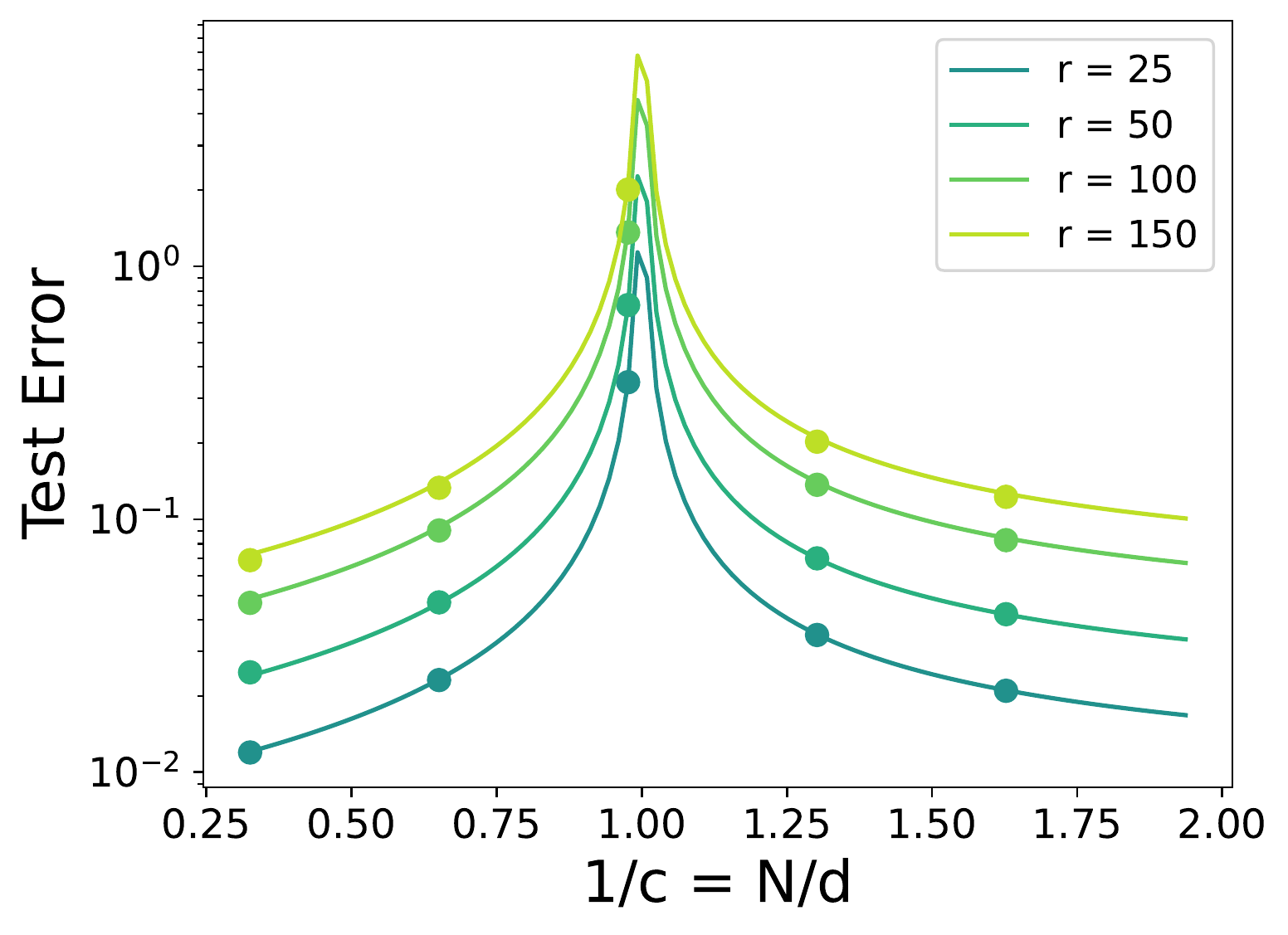}}
        \subfloat[STL10 Dataset]{\includegraphics[width = 0.33\linewidth]{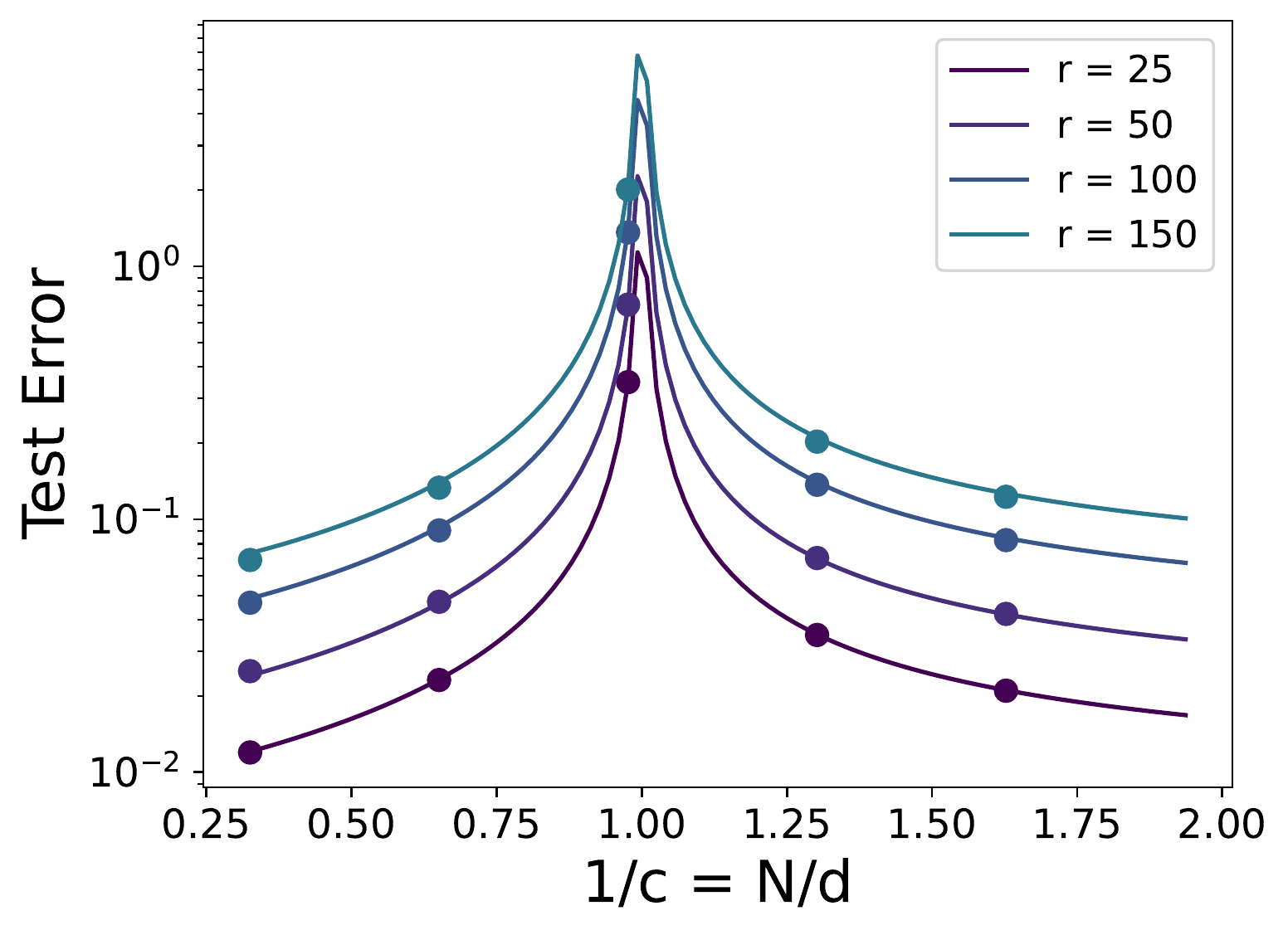}}
        \subfloat[SVHN Dataset]{\includegraphics[width = 0.33\linewidth]{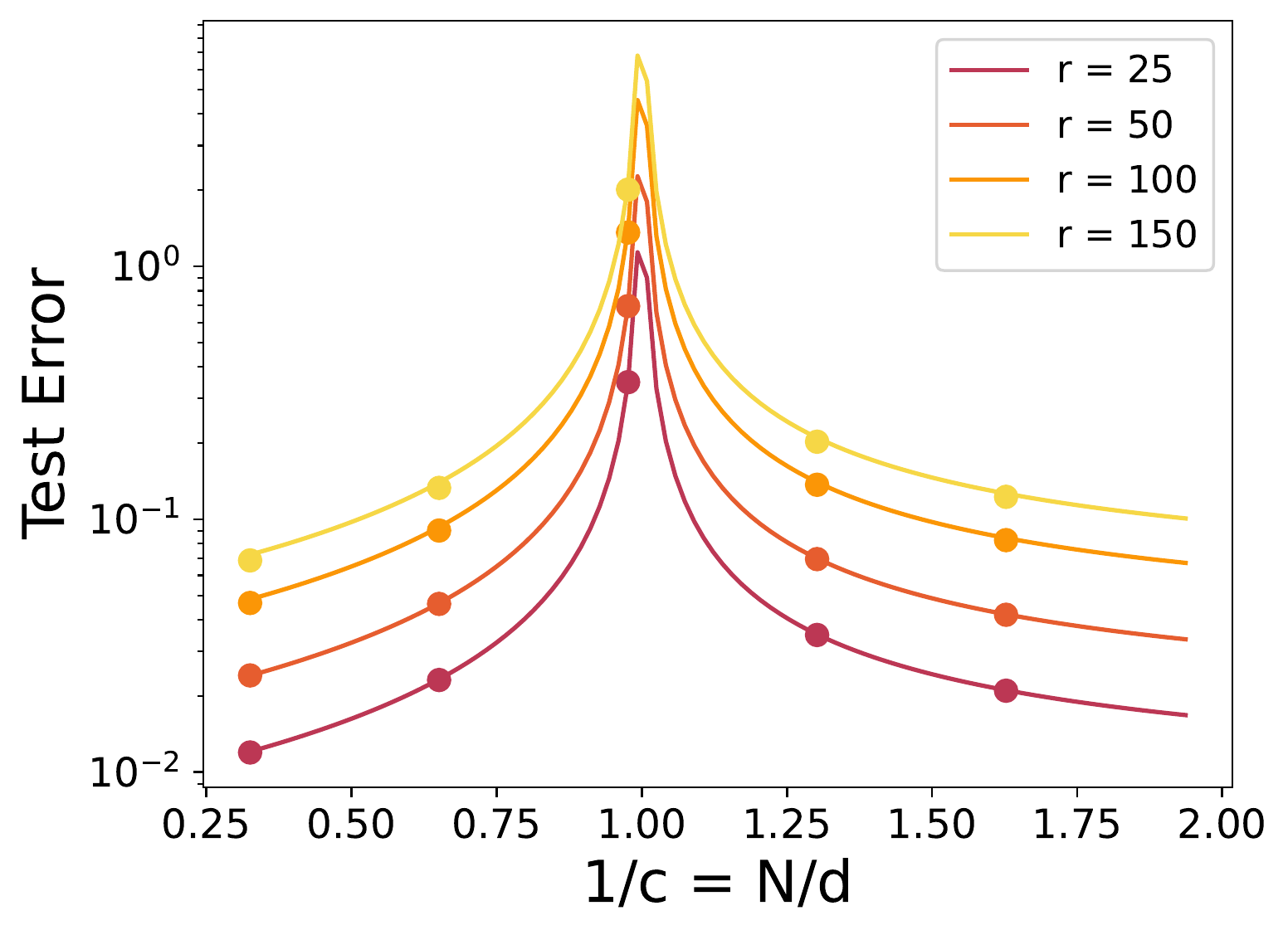}}
        \caption{Data augmentation exploiting non-independence. For different $N_{trn}$ the training data is formed by repeating the same 1000 images from the CIFAR dataset.}
        \label{fig:dataaug_tf}
    \end{subfigure}
    \begin{subfigure}{\linewidth}
        \centering
        \subfloat[CIFAR Dataset]{\includegraphics[width = 0.33\linewidth]{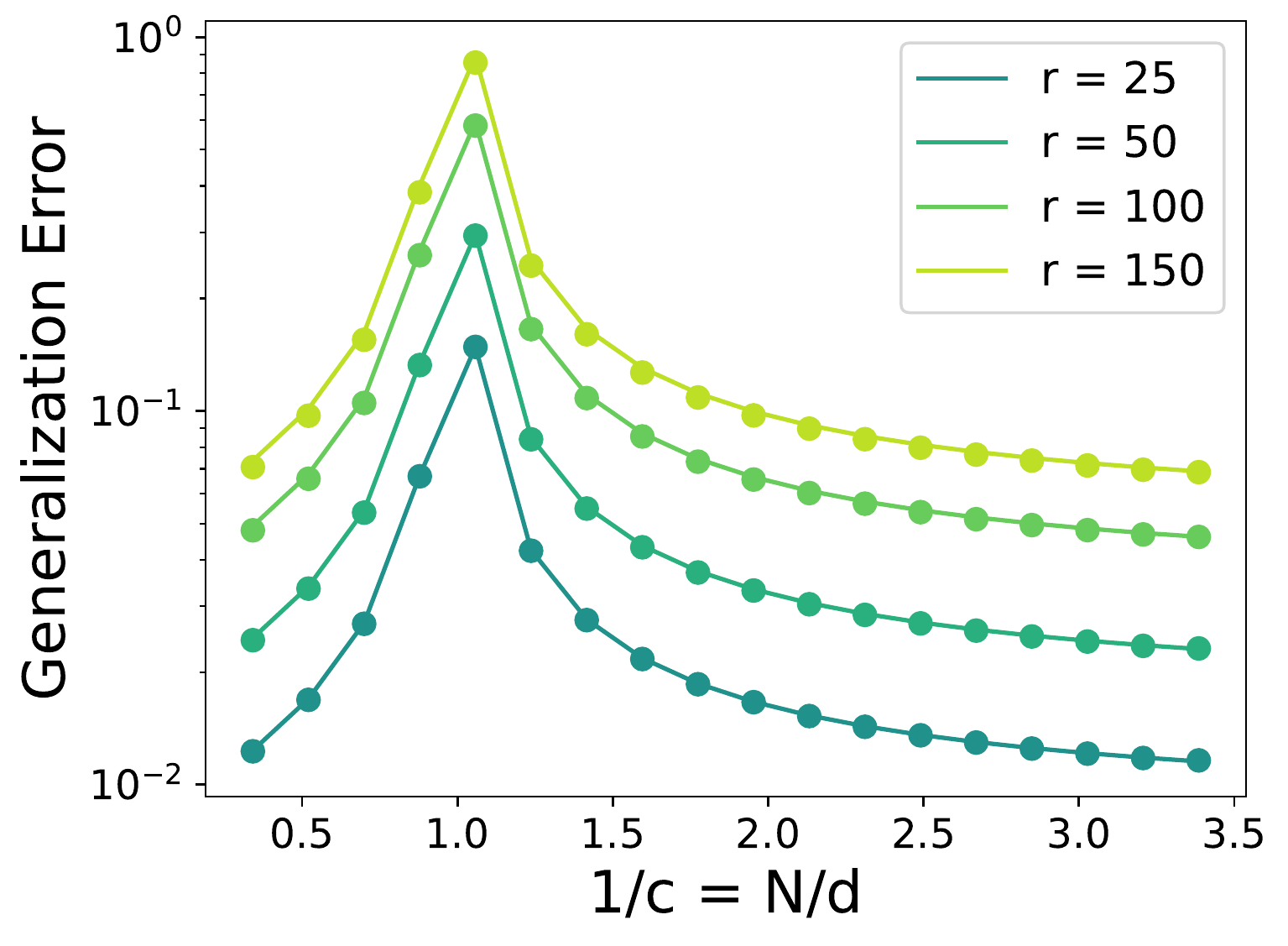}}
        \subfloat[STL10 Dataset]{\includegraphics[width = 0.33\linewidth]{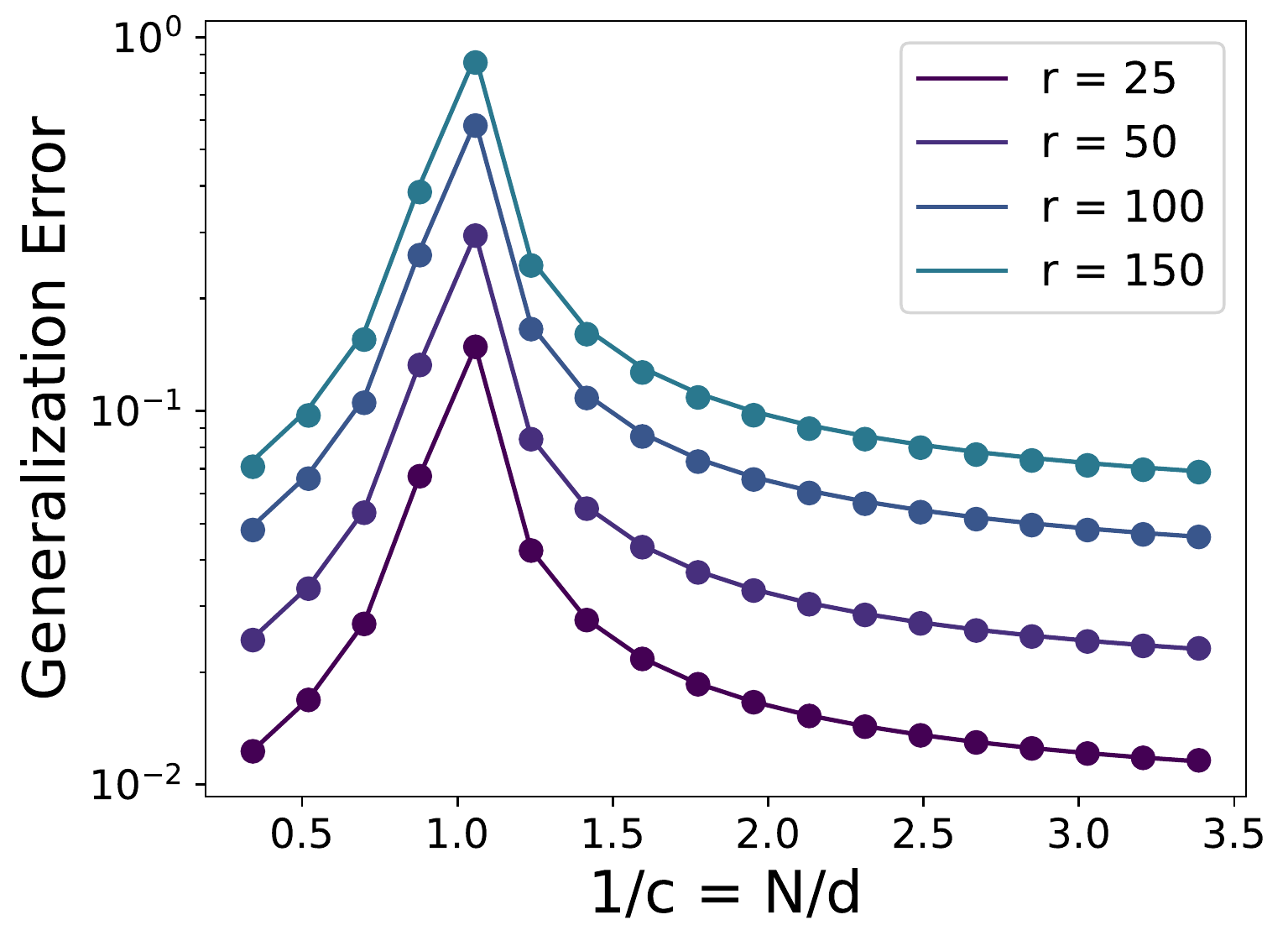}}
        \subfloat[SVHN Dataset]{\includegraphics[width = 0.33\linewidth]{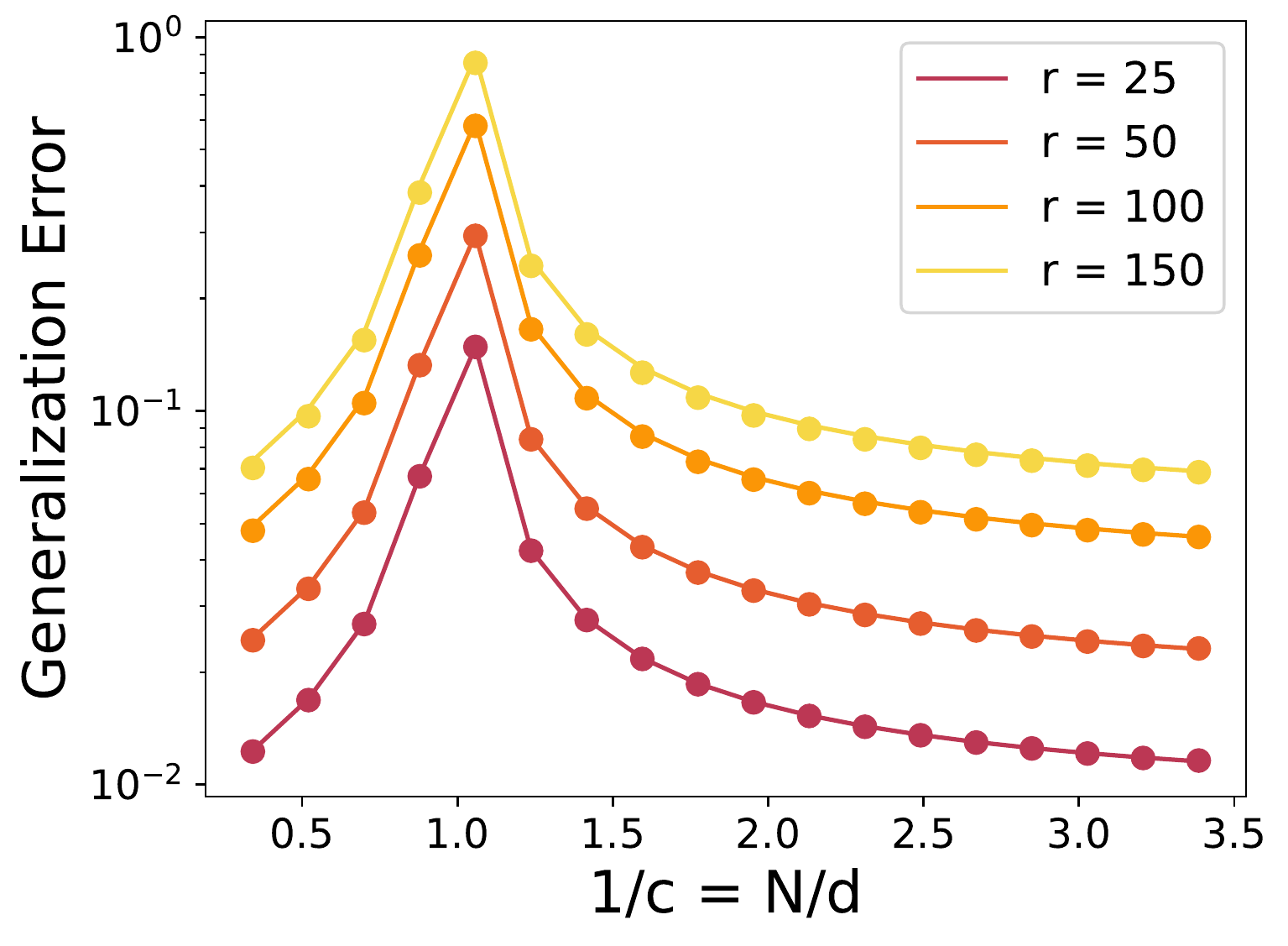}}
        \caption{Data augmentation exploiting non-identicality of the distribution. The training data is formed by mixing CIFAR train split with STL10 train split dataset.}
        \label{fig:non-identical_tf}
    \end{subfigure}
    \caption{Data augmentation exploiting non-independence. For different $N_{trn}$ the training data is formed by repeating the same 1000 images from the CIFAR dataset.} 
    \label{fig:combined-aug}
\end{figure}


\paragraph{Overfitting Paradigms.} Recall from Corollary \ref{cor:rel-excess-error} that the relative excess error is given by $\frac{c}{1-c}$ for the underparameterized regime.  This means that we approach benign overfitting as $c$ becomes arbitrarily small (which is essentially the classical regime). On the other hand, for the overparameterized case, we have that the relative excess error is given by $\frac{1}{1-c}$ when $\|\Sigma_{trn}\|^2 = o(N)$. Then as $c$ becomes arbitrarily large we again approach benign overfitting. This is interesting as this is the case when the norm of the signal grows \emph{slower} than the norm of the noise. Hence we see benign overfilling when the signal-to-noise ratio goes to zero. If we had $\|\Sigma_{trn}\|^2 = \Theta(N)$, then our results suggest that the relative excess error \emph{may} increase by a constant, leading to no benign overfitting. However, we believe that this is an artifact of the proof technique. 

Notice we don't observe benign overfitting except in the limit of arbitrarily large or arbitrarily small $c$. We make sense of this phenomenon using the following argument. Recall that while $W_{opt}$ is the minimum-norm matrix minimizing the MSE error for a \textit{single noise instance}, $W^*$ is the minimum-norm matrix minimizing its \textit{expectation} over noise. 
We can use this observation to view $W^*$ as $W_{opt}$ over an augmented dataset. Taking the expectation over noise in the training target is in spirit like augmenting the data with ``infinitely many'' copies of itself, each with fresh noise. So, $W^*$ can be viewed as the outcome of training $W_{opt}$ over an augmented dataset with a ``changed $c$,'' where $c$ is replaced with a vanishingly small value while keeping $\Sigma_{trn}^2/N = \Sigma_{trn}^2c/d$ constant. 

\color{black}

\section{Conclusion and Future Work}

We studied the problem of denoising low-dimensional input data perturbed with high-dimensional noise. Under very general assumptions, we provided estimated test error in terms of the specific instantiations of the training data and test data. This result is significant, as there is scarce prior work in the area of generalization for noisy inputs as well as generalization for low-rank data. Further, we tested our results with \emph{real data} and achieved a relative MSE of 1\%. Finally, the instance-specific estimate lets us provide many insights that would be harder to get with results on generalization error, such as showing double descent for arbitrary test data in our low-dimensional subspace, theoretically understanding data augmentation, and provably demonstrating as well as explaining the lack of benign overfitting. 
\bibliography{citations,references}
\bibliographystyle{tmlr}

\newpage
\appendix



\section{Additional Theoretical Results}\label{app:addl-theoretical-results}

\subsection{Test Error and Generalization Error}

Recall from the introduction that the work of \cite{lejeune2022monotonic} requires the simultaneous diagonalizability of the covariance matrices of training and test data. In a similar spirit, if we assume that the training and test data have the same left singular vectors, we recover the conjectured formula in \citep{sonthalia2023training} as an immediate consequence of Theorem~\ref{thm:main}.

\begin{cor}[Conjecture of \cite{sonthalia2023training}]\label{cor:aligned-svd-test-error} Let the SVD of $X_{tst}$ be $U_{tst}\Sigma_{tst}V_{tst}^T$. In Theorem \ref{thm:main}, if we further assume that $U^TU_{tst} = I$, then 
If $c < 1$ (under-parameterized regime)\[
    \begin{split}
        \mathcal{R}(W_{opt}, UL) &= \eta_{trn}^4\left\|\beta_U^T (\Sigma_{trn}^2c + \eta_{trn}^2I)^{-1}\Sigma_{tst}\right\|_F^2 \hspace{7.8cm} \\ &+ \frac{\eta_{tst}^2}{d}\frac{c^2}{1-c}\Tr\left(\beta_U\beta_U^T\Sigma_{trn}^2\left(\Sigma_{trn}^2+\frac{1}{\eta_{trn}^2}I\right) \left(\Sigma_{trn}^2c+\eta_{trn}^2I\right)^{-2}\right) + o\left(\frac{1}{N}\right)
    \end{split}
    \]
If $c > 1$ (over-parameterized regime)\[
    \begin{split}
        \mathcal{R}(W_{opt}, UL)  &= \eta_{trn}^4\left\|\beta_U^T(\Sigma_{trn}^2 + \eta_{trn}^2I)^{-1}\Sigma_{tst}\right\|_F^2 \\ &+ \frac{\eta_{tst}^2}{d} \frac{c}{c-1}\Tr(\beta_U\beta_U^T (I+\eta_{trn}^2\Sigma_{trn}^{-2})^{-1}) + O\left(\frac{\|\Sigma_{trn}\|^2}{N^2}\right) + o\left(\frac{1}{N}\right)
    \end{split}
    \]
\end{cor}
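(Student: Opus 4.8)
The plan is to derive Corollary~\ref{cor:aligned-svd-test-error} as a direct specialization of Theorem~\ref{thm:main}, so the entire argument is a substitution plus a simplification of norms. Recall that in Theorem~\ref{thm:main} we set $L := U^T X_{tst}$. Writing the SVD of the test data as $X_{tst} = U_{tst}\Sigma_{tst}V_{tst}^T$, we get $L = U^T U_{tst}\Sigma_{tst}V_{tst}^T$. The only difference between the general statement and the corollary is the extra hypothesis $U^T U_{tst} = I$, which forces $L = \Sigma_{tst}V_{tst}^T$. So the first step is simply to plug this expression for $L$ into both the under-parameterized and over-parameterized formulas of Theorem~\ref{thm:main}.

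Next I would simplify the two places where $L$ appears. In the ``bias'' (norm) term, we have $\bigl\|\beta_U^T(\Sigma_{trn}^2 f(c) + \eta_{trn}^2 I)^{-1} L\bigr\|_F^2 = \bigl\|\beta_U^T(\Sigma_{trn}^2 f(c) + \eta_{trn}^2 I)^{-1}\Sigma_{tst}V_{tst}^T\bigr\|_F^2$, and since $V_{tst}$ has orthonormal columns (it is the right factor of an SVD), right-multiplication by $V_{tst}^T$ is norm-preserving under $\|\cdot\|_F$, so this equals $\bigl\|\beta_U^T(\Sigma_{trn}^2 f(c) + \eta_{trn}^2 I)^{-1}\Sigma_{tst}\bigr\|_F^2$. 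The ``variance'' (trace) terms in Theorem~\ref{thm:main} do not involve $L$ at all, so they carry over verbatim. The one remaining bookkeeping point is the factor of $1/N_{tst}$: in Theorem~\ref{thm:main} the bias term carries a $1/N_{tst}$, whereas in the corollary statement it does not. This is accounted for by the implicit normalization convention relating $\|\Sigma_{tst}\|$ and $N_{tst}$ — under Assumption~\ref{data-assumptions} applied to the test data, $\|X_{tst}\|_F^2 = \Theta(N_{tst})$, so one absorbs the $1/N_{tst}$ into $\Sigma_{tst}$ by the rescaling $\Sigma_{tst} \mapsto \Sigma_{tst}/\sqrt{N_{tst}}$; I would state this convention explicitly when presenting the corollary so the formulas match those conjectured in \cite{sonthalia2023training}. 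With that, both cases of the corollary follow line by line from Theorem~\ref{thm:main}, and the $o(1/N)$ and $O(\|\Sigma_{trn}\|^2/N^2)$ error terms are inherited unchanged.

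There is essentially no hard step here — the content of the corollary is entirely in Theorem~\ref{thm:main}, and this is a worked-out special case. The only thing requiring a modicum of care is the normalization convention for the test data (the $1/N_{tst}$ versus $\|\Sigma_{tst}\|$ scaling), since getting this wrong would make the stated formula off by a factor; I would double-check it against the definition of $\mathcal{R}$ in Equation~\ref{eq:problem} and against the precise statement of the conjecture in \cite{sonthalia2023training} to make sure the match is exact.
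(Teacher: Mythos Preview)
Your proposal is correct and matches the paper's approach: the paper simply states that this corollary is ``an immediate consequence of Theorem~\ref{thm:main}'' and gives no separate proof, relying on exactly the substitution $L = U^T U_{tst}\Sigma_{tst}V_{tst}^T = \Sigma_{tst}V_{tst}^T$ and the unitary invariance of the Frobenius norm that you spell out. Your observation about the missing $1/N_{tst}$ factor is well taken---the paper does not comment on it, so your instinct to flag it as a normalization convention inherited from \cite{sonthalia2023training} and to verify it against Equation~\ref{eq:problem} is the right move; in this respect you are being more careful than the paper itself.
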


Additionally, we can use Theorem~\ref{thm:main} to give an expression for generalization error when the test data points are drawn from a distribution, possibly dependently.

\begin{restatable}[Generalization Error]{cor}{TestDist} \label{cor:test-dist} Let $r < |d-N|$. Let the SVD of $X_{trn}$ be $U\Sigma_{trn}V_{trn}^T$, let $L := U^TX_{tst}$, $\beta_U := U^T\beta$, and $c := d/N$. Under our setup and Assumptions~\ref{data-assumptions} and~\ref{noise-assumptions}, with the further assumption that the columns of $L$ are drawn IID from a distribution with mean $\mu$ and Covariance $\Sigma$, the test error (Equation~\ref{eq:problem}) is given by the following.\newline
    If $c < 1$ (under-parameterized regime)\[
    \begin{split}
        \mathbb{E}_L[\mathcal{R}(W_{opt}, UL)] &= \eta_{trn}^4\left\|\beta_U^T (\Sigma_{trn}^2c + \eta_{trn}^2I)^{-1}(\Sigma+\mu\mu^T)^{1/2}\right\|_F^2 \hspace{7.8cm} \\ &+ \frac{\eta_{tst}^2}{d}\frac{c^2}{1-c}\Tr\left(\beta_U\beta_U^T\Sigma_{trn}^2\left(\Sigma_{trn}^2+\frac{1}{\eta_{trn}^2}I\right) \left(\Sigma_{trn}^2c+\eta_{trn}^2I\right)^{-2}\right) + o\left(\frac{1}{N}\right)
    \end{split}
    \]
If $c > 1$ (over-parameterized regime)\[
    \begin{split}
        \mathbb{E}_L[\mathcal{R}(W_{opt}, UL)]  &= \eta_{trn}^4\left\|\beta_U^T(\Sigma_{trn}^2 + \eta_{trn}^2I)^{-1}(\Sigma+\mu\mu^T)^{1/2}\right\|_F^2 \\ &+ \frac{\eta_{tst}^2}{d} \frac{c}{c-1}\Tr(\beta_U\beta_U^T (I+\eta_{trn}^2\Sigma_{trn}^{-2})^{-1}) + O\left(\frac{\|\Sigma_{trn}\|^2}{N^2}\right) + o\left(\frac{1}{N}\right)
    \end{split}
    \]
\end{restatable}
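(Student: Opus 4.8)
The plan is to derive Corollary~\ref{cor:test-dist} directly from Theorem~\ref{thm:main} by taking the expectation over the randomness in the test data $L$, exploiting the fact that $L$ appears in the theorem's expression only through the first (bias) term, which is quadratic in $L$. Writing $L = [\ell_1, \dots, \ell_{N_{tst}}]$ with columns $\ell_i$ drawn IID with mean $\mu$ and covariance $\Sigma$, the bias term in Theorem~\ref{thm:main} is of the form $\frac{\eta_{trn}^4}{N_{tst}}\|M L\|_F^2$ where $M := \beta_U^T(\Sigma_{trn}^2 f(c) + \eta_{trn}^2 I)^{-1}$ (with $f(c)=c$ for $c<1$ and $f(c)=1$ for $c>1$). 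The variance (trace) term in Theorem~\ref{thm:main} does not depend on $L$ at all, so it passes through the expectation unchanged, as does the error term $o(1/N)$ (and the $O(\|\Sigma_{trn}\|^2/N^2)$ term when $c>1$).

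The key computation is therefore just $\E_L\left[\frac{1}{N_{tst}}\|ML\|_F^2\right]$. First I would expand $\|ML\|_F^2 = \sum_{i=1}^{N_{tst}} \|M\ell_i\|_F^2 = \sum_i \Tr(M\ell_i\ell_i^T M^T)$. Taking expectations and using linearity of trace, $\E\left[\frac{1}{N_{tst}}\|ML\|_F^2\right] = \frac{1}{N_{tst}}\sum_i \Tr(M \,\E[\ell_i\ell_i^T]\, M^T) = \Tr(M(\Sigma + \mu\mu^T)M^T)$, since $\E[\ell_i \ell_i^T] = \Sigma + \mu\mu^T$ for every $i$. Finally, rewriting $\Tr(M(\Sigma+\mu\mu^T)M^T) = \|M(\Sigma+\mu\mu^T)^{1/2}\|_F^2$ (valid because $\Sigma+\mu\mu^T$ is symmetric positive semidefinite, so it has a well-defined symmetric square root) yields exactly the stated bias term $\eta_{trn}^4\|\beta_U^T(\Sigma_{trn}^2 f(c)+\eta_{trn}^2 I)^{-1}(\Sigma+\mu\mu^T)^{1/2}\|_F^2$ — note the $1/N_{tst}$ has been absorbed, matching the corollary's statement. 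Substituting back into both the $c<1$ and $c>1$ cases of Theorem~\ref{thm:main} completes the proof.

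There is essentially no hard step here: this is a routine ``take expectations and use $\E[\ell\ell^T] = \Sigma + \mu\mu^T$'' argument, and the only thing to be slightly careful about is confirming that the error terms in Theorem~\ref{thm:main} are deterministic in $L$ (they depend only on $X_{trn}$, $\eta_{trn}$, $\eta_{tst}$, $c$, $d$, $N$) so that they survive the expectation untouched, and that the IID assumption is used only to get a common second moment $\Sigma+\mu\mu^T$ for each column (independence across columns is not even needed for the expectation, only for the per-column moment to be well-defined and identical — though stating it as IID is cleanest and is what the corollary assumes). If anything, the one modeling subtlety worth a sentence is that $L = U^T X_{tst}$, so assuming the columns of $L$ are IID is an assumption on the projected test data; one should note this is consistent with, e.g., $X_{tst}$ having IID columns supported in $\mathcal{V}$ with the stated mean and covariance after projection.
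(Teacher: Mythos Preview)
Your proposal is correct and takes essentially the same approach as the paper: both note that only the bias term depends on $L$, take the expectation of $\frac{1}{N_{tst}}\|ML\|_F^2$ using $\E[\ell_i\ell_i^T]=\Sigma+\mu\mu^T$, and rewrite the resulting trace as $\|M(\Sigma+\mu\mu^T)^{1/2}\|_F^2$. The paper organizes the second-moment computation via the centered matrix $\bar L = L - [\mu\ \cdots\ \mu]$ to get $\E[LL^T]=N_{tst}(\Sigma+\mu\mu^T)$, whereas you go column-by-column, but this is a cosmetic difference; your remark that only identical (not independent) columns are needed is also apt.
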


\subsection{Out-of-Distribution Generalization}
Consider the following theorem bounding the difference in generalization error in terms of the change in the test set. Our main distribution shift result is a corollary of its proof.
\begin{restatable}[Test Set Shift Bound]{theorem}{TestSetShiftBound}\label{thm:test-set-shift-bound}
Under the assumptions of Theorem~\ref{thm:main}, consider a linear regressor $W_{opt}$ trained on training data $X_{trn}= U\Sigma_{trn}V_{trn}^T$ with $\Sigma_{trn}$ such that $\sigma_r(X_{trn}) > M$, and tested on test data $X_{tst,1} = UL_1$ and $X_{tst,2} = UL_2$ with noise $A_{tst,1}, A_{tst,2}$ with the same variance $\eta_{tst^2}/d$. Then, the generalization errors $\mathcal{R}_1$ and $\mathcal{R}_2$ differ for $c<1$ by
\[
    |\mathcal{R}_2 - \mathcal{R}_1|  \leq \frac{\sigma_1(\beta)^2}{N_{tst}}\frac{\eta_{trn}^4r}{(\sigma_r(X_{trn})^2f(c) + \eta_{trn}^2)^2}\|L_2L_2^T - L_1L_1^T\|_F +o\left(\frac{1}{N}\right)\\
\] 
where $f(c) = c$ for $c < 1$ and $f(c) = 1$ for $c \geq 1$. We add $O(\|\Sigma_{trn}\|_F^2/N^2)$ to the bound when $c>1$.
\end{restatable}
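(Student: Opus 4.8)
The plan is to derive this bound as a direct consequence of Theorem~\ref{thm:main}, treating the two test-error expressions as functions of $L$ and controlling their difference. First I would write down $\mathcal{R}(W_{opt}, UL_i)$ from Theorem~\ref{thm:main} for each $i \in \{1,2\}$. The key observation is that the variance term (the trace term) does \emph{not} depend on the test data $L$ at all --- it only involves $\beta_U$, $\Sigma_{trn}$, $\eta_{trn}$, $\eta_{tst}$, $c$, and $d$ --- so when we subtract $\mathcal{R}_2 - \mathcal{R}_1$, the variance terms cancel exactly, and only the bias (norm) terms survive, up to the $o(1/N)$ and $O(\|\Sigma_{trn}\|_F^2/N^2)$ error terms. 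Thus we are left with bounding
\[
    \frac{\eta_{trn}^4}{N_{tst}}\left|\,\|\beta_U^T M_c L_2\|_F^2 - \|\beta_U^T M_c L_1\|_F^2\,\right|,
\]
where $M_c := (\Sigma_{trn}^2 f(c) + \eta_{trn}^2 I)^{-1}$ with $f(c) = c$ for $c<1$ and $f(c)=1$ otherwise.

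The next step is to rewrite each squared Frobenius norm as a trace: $\|\beta_U^T M_c L_i\|_F^2 = \Tr(\beta_U^T M_c L_i L_i^T M_c \beta_U) = \Tr\big(M_c \beta_U\beta_U^T M_c \cdot L_i L_i^T\big)$. Then the difference becomes $\Tr\big(M_c\beta_U\beta_U^T M_c (L_2L_2^T - L_1L_1^T)\big)$, and I would apply the trace inequality $|\Tr(AB)| \le \|A\|_F \|B\|_F$ (or $\|A\|_* \|B\|_{op}$, but the Frobenius--Frobenius version is what matches the stated bound). This gives the upper bound $\|M_c\beta_U\beta_U^T M_c\|_F \cdot \|L_2L_2^T - L_1L_1^T\|_F$. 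To finish, I bound $\|M_c\beta_U\beta_U^T M_c\|_F \le \|M_c\|_{op}^2 \|\beta_U\beta_U^T\|_F$. Since $M_c$ is symmetric PSD with eigenvalues $(\sigma_i(X_{trn})^2 f(c) + \eta_{trn}^2)^{-1}$, its operator norm is $(\sigma_r(X_{trn})^2 f(c) + \eta_{trn}^2)^{-1}$ using that $\sigma_r$ is the smallest singular value; and $\|\beta_U\beta_U^T\|_F \le \sqrt{r}\,\sigma_1(\beta_U)^2 \le \sqrt{r}\,\sigma_1(\beta)^2$ since $\beta_U = U^T\beta$ has at most $r$ nonzero singular values each bounded by $\sigma_1(\beta)$. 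Combining, the bias difference is at most $\frac{\eta_{trn}^4}{N_{tst}} \cdot \frac{\sqrt{r}\,\sigma_1(\beta)^2}{(\sigma_r(X_{trn})^2 f(c)+\eta_{trn}^2)^2}\|L_2L_2^T - L_1L_1^T\|_F$, which is exactly the claimed bound (the statement writes $r$ where I get $\sqrt r$ from the Frobenius bound on a rank-$\le r$ matrix; one can also get a clean $\sqrt r$ or absorb constants --- I would double-check whether they intend the nuclear-norm version $|\Tr(AB)|\le\|A\|_*\|B\|_{op}$, which would give $r\,\sigma_1(\beta)^2\|L_2L_2^T-L_1L_1^T\|_{op}$, a slightly different but analogous statement).

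The remaining bookkeeping is the error terms: in the $c<1$ case both expansions carry an $o(1/N)$ remainder, so the difference carries $o(1/N)$; in the $c>1$ case each also carries $O(\|\Sigma_{trn}\|_F^2/N^2)$, so we add that, exactly as the statement says, with $f(c)=1$ there. I would also note that $M>0$ and $\sigma_r(X_{trn}) > M$ is only used to ensure the denominator is bounded away from zero so the bound is meaningful (and is consistent with Assumption~\ref{data-assumptions}(3)); it is not otherwise essential to the algebra. The main (and really only) obstacle is matching the precise form of the stated constant --- i.e., pinning down whether the intended bound uses the Frobenius--Frobenius or nuclear--operator version of the trace inequality and whether $\sigma_1(\beta)$ versus $\sigma_1(\beta_U)$ and $r$ versus $\sqrt r$ appear --- but since $\|\beta_U\beta_U^T\|_F$, $\sqrt r\,\sigma_1(\beta_U)^2$, and related quantities differ only by absorbable constants of the kind already hidden in $\Theta(1)$ assumptions, this is cosmetic rather than substantive. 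Everything else is a one-line cancellation plus standard matrix-norm inequalities, and Corollary~\ref{cor:dist-shift-bound} then follows by taking expectations over $L_1, L_2$ and using $\mathbb{E}[L_iL_i^T] = N_{tst}(\Sigma_i + \mu_i\mu_i^T)$ together with the triangle inequality.
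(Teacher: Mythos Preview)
Your proposal is correct and follows essentially the same route as the paper: cancel the variance (trace) term since it is independent of $L$, write the bias difference as $\Tr\big(M_c\beta_U\beta_U^T M_c(L_2L_2^T-L_1L_1^T)\big)$, and apply the Frobenius--Frobenius Cauchy--Schwarz inequality. The only cosmetic difference is in the last step: the paper bounds $\|M_c\beta_U\beta_U^T M_c\|_F \le \|\beta_U\beta_U^T\|_2\,\|M_c^2\|_F$ (pulling out the operator norm of $\beta_U\beta_U^T$ and getting the dimensional factor from $\|M_c^2\|_F$), whereas you pull out $\|M_c\|_{op}^2$ and bound $\|\beta_U\beta_U^T\|_F$; both routes give $\sqrt{r}$, and the $r$ in the stated bound is simply a looser constant, so your hesitation about $r$ versus $\sqrt{r}$ is well founded but inconsequential.
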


\newpage
\section{Additional Implications of our Results}\label{app:implications}

\subsection{Out-of-Subspace Generalization}
\begin{figure}[!ht]
    \begin{subfigure}{\linewidth}
        \centering
       \subfloat[CIFAR Dataset]{\includegraphics[width = 0.33\linewidth]{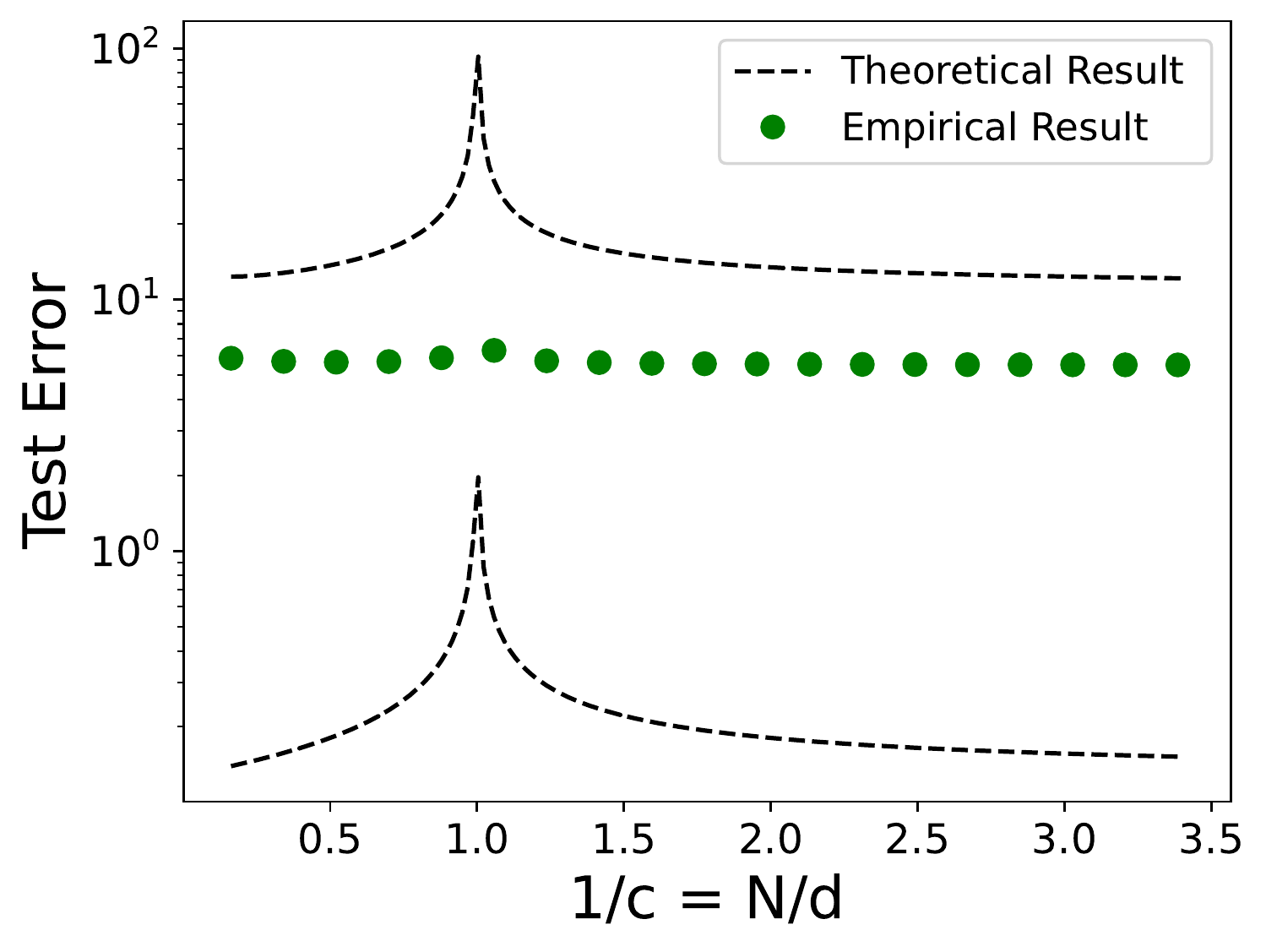}}
    \subfloat[STL10 Dataset]{\includegraphics[width = 0.33\linewidth]{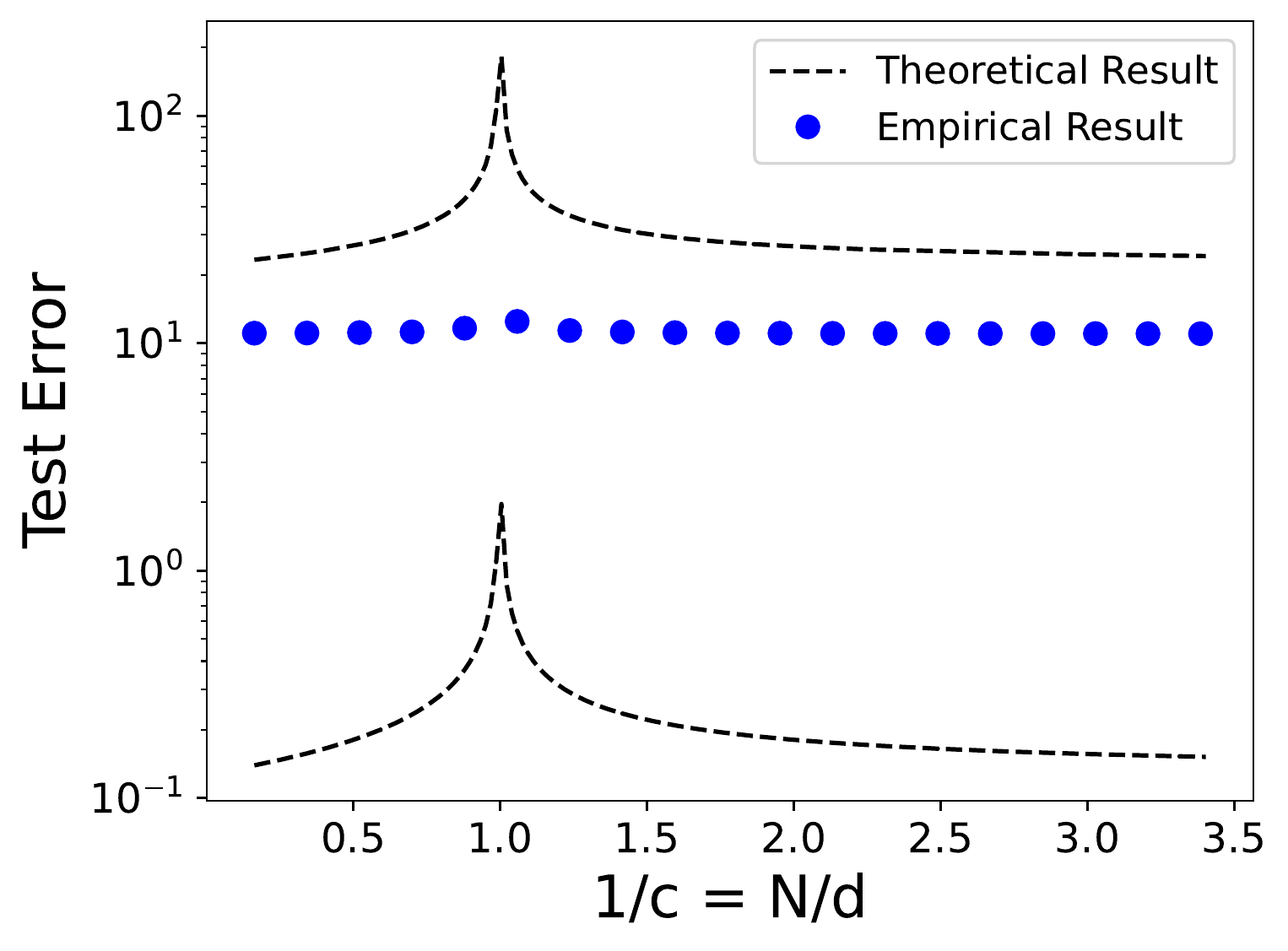}}
    \subfloat[SVHN Dataset]{\includegraphics[width = 0.33\linewidth]{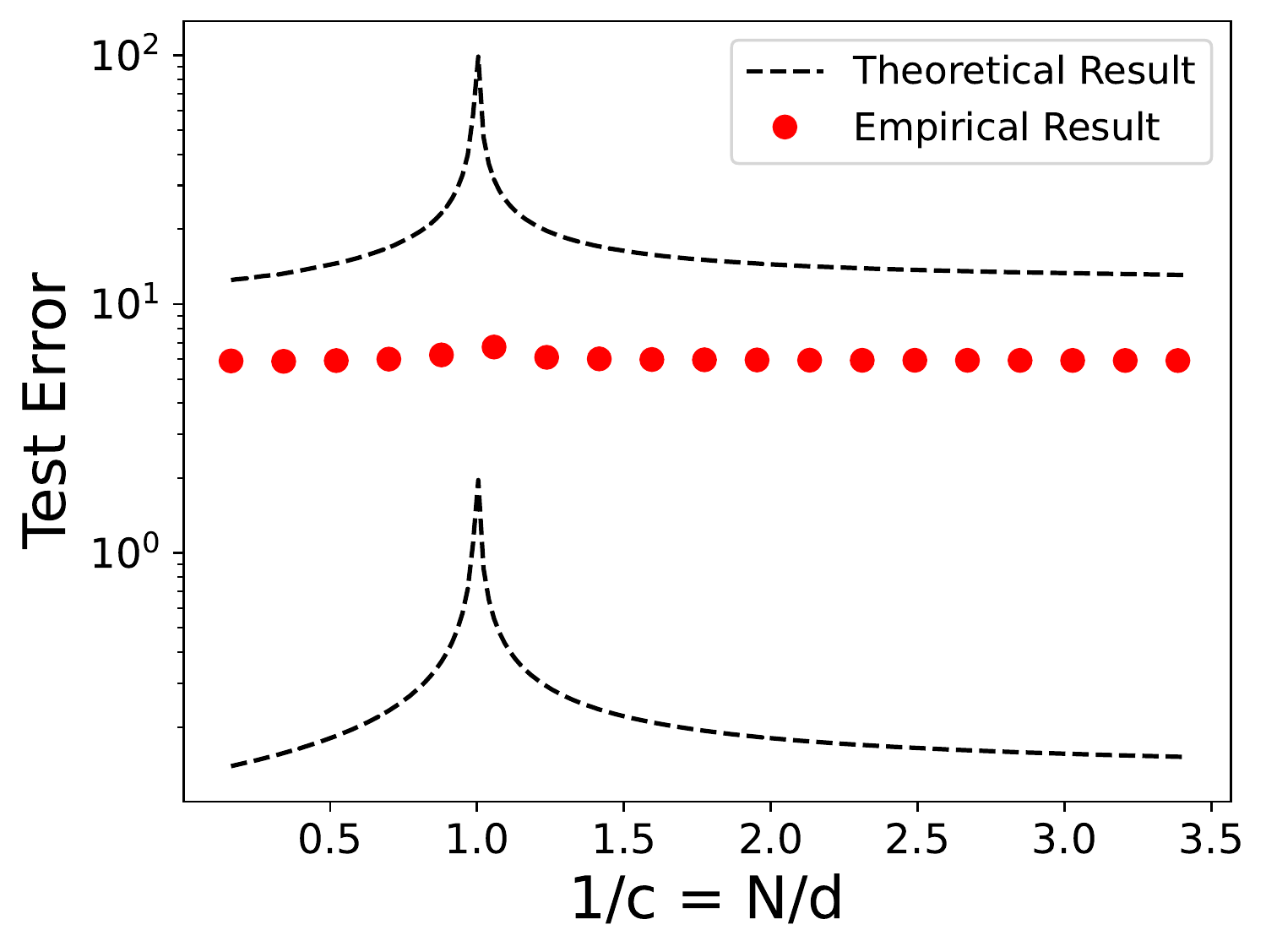}}
        \caption{$r$ = 50; We find that $\alpha$ is approximately 54, 75 and 31 for (a)-(c) respectively.}
        \label{fig:outsubspacefull50_tf}
    \end{subfigure}
    \begin{subfigure}{\linewidth}
        \centering
        \subfloat[CIFAR Dataset]{\includegraphics[width = 0.33\linewidth]{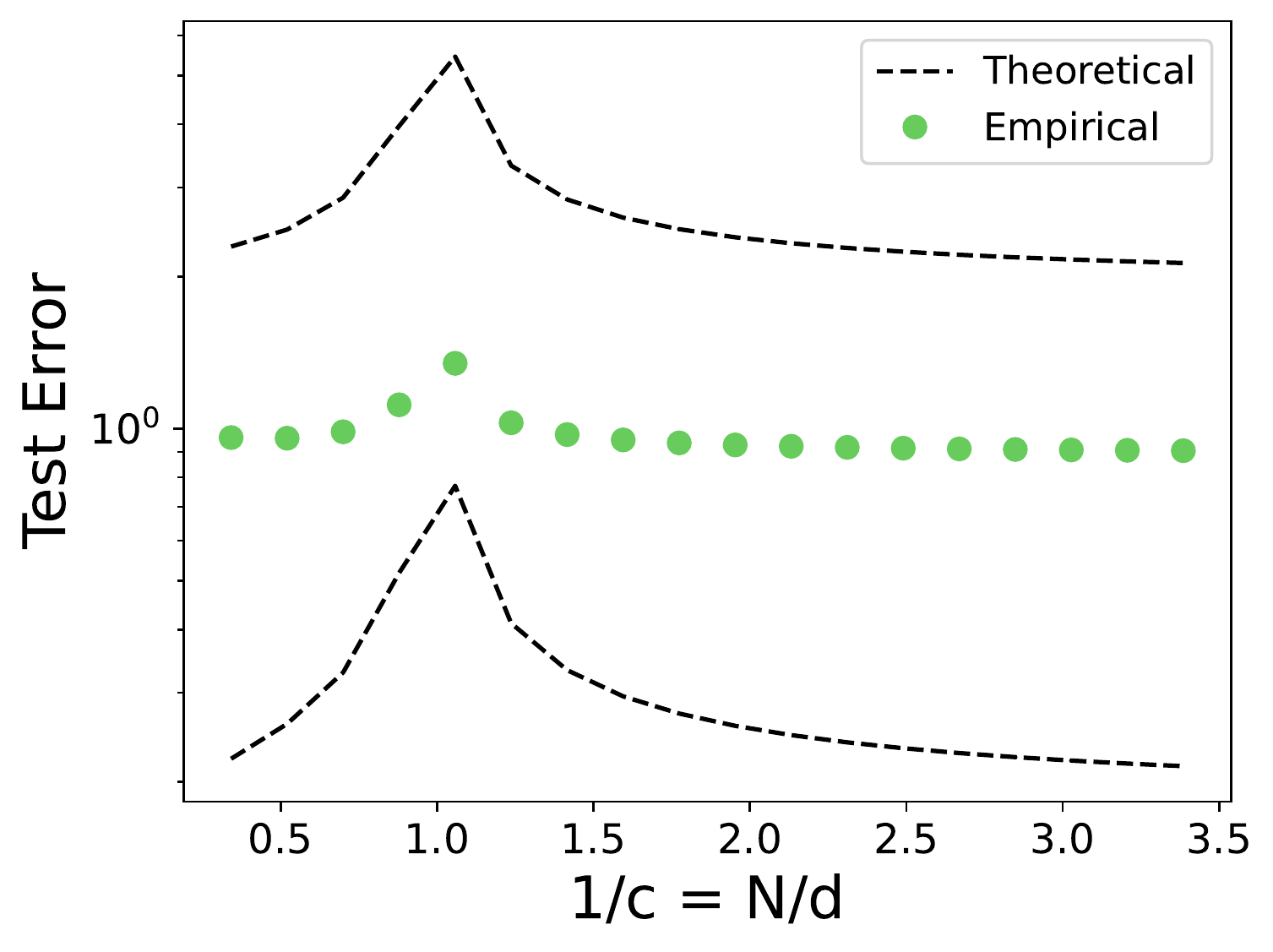}}
    \subfloat[STL10 Dataset]{\includegraphics[width = 0.33\linewidth]{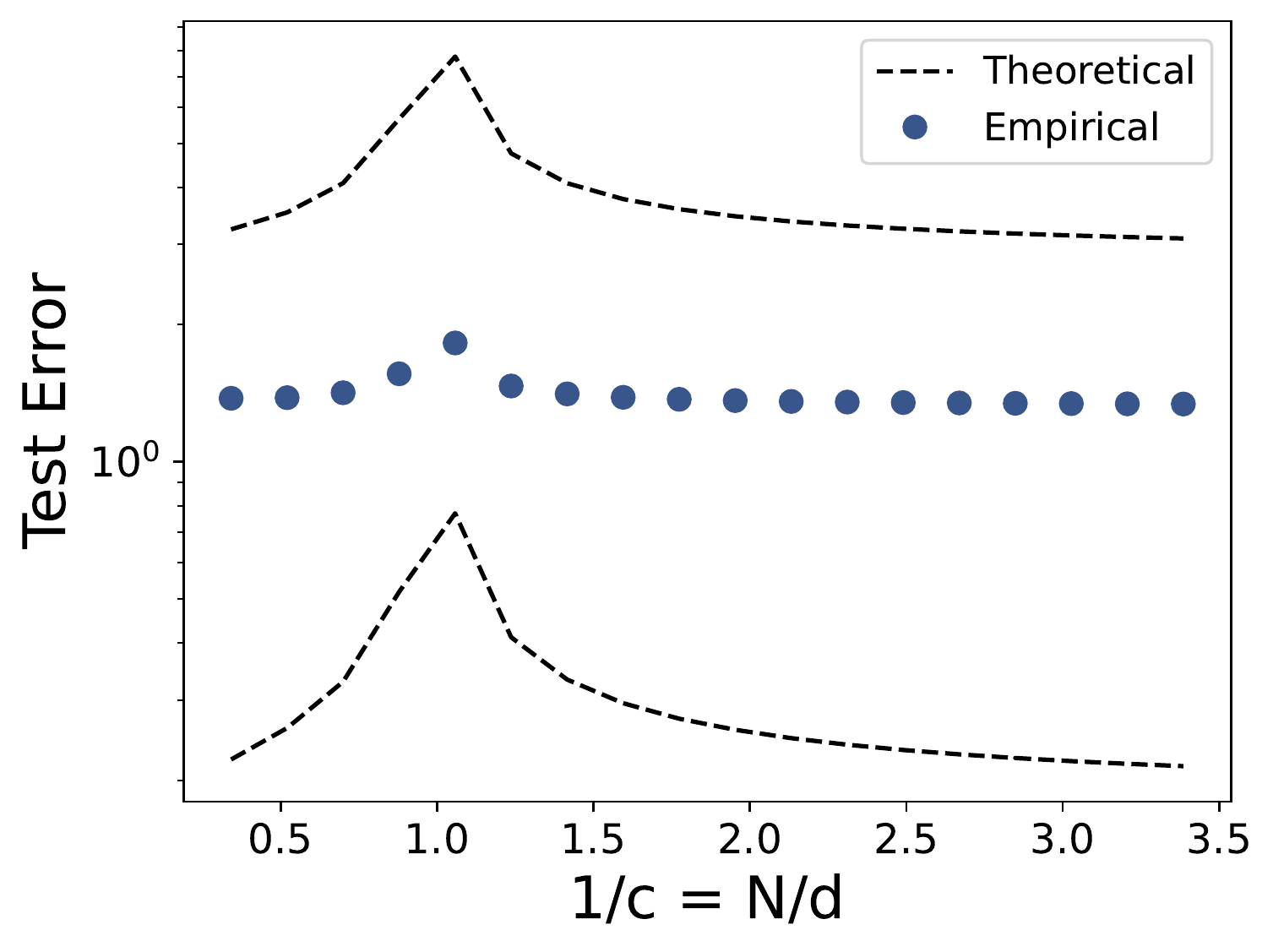}}
    \subfloat[SVHN Dataset]{\includegraphics[width = 0.33\linewidth]{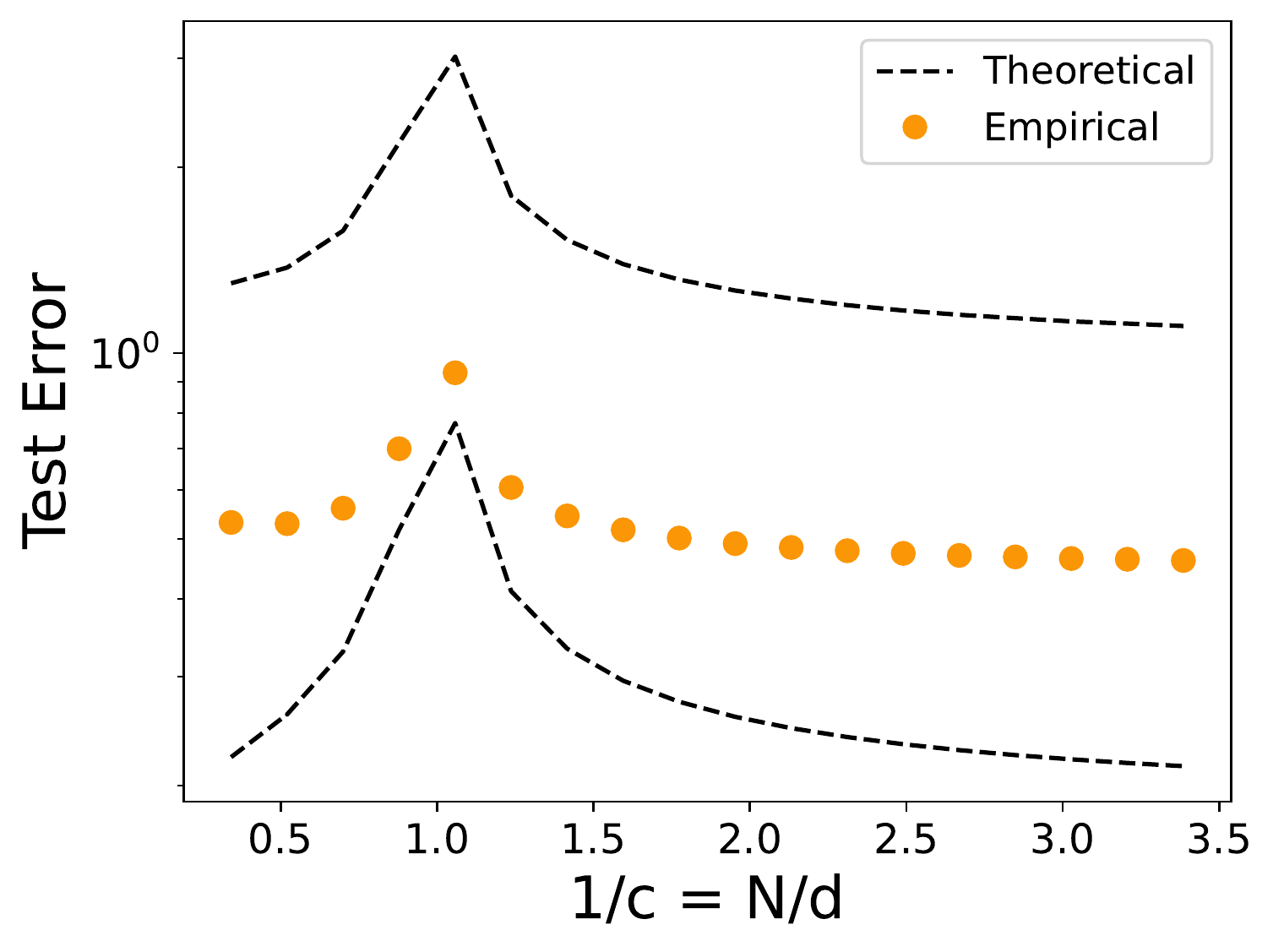}}
        \caption{$r$ = 100; We find that $\alpha$ is approximately 44, 66 and 20 for (a)-(c) respectively.}
        \label{fig:outsubspacefull100_tf}
    \end{subfigure}
    \begin{subfigure}{\linewidth}
        \centering
       \subfloat[CIFAR Dataset]{\includegraphics[width = 0.33\linewidth]{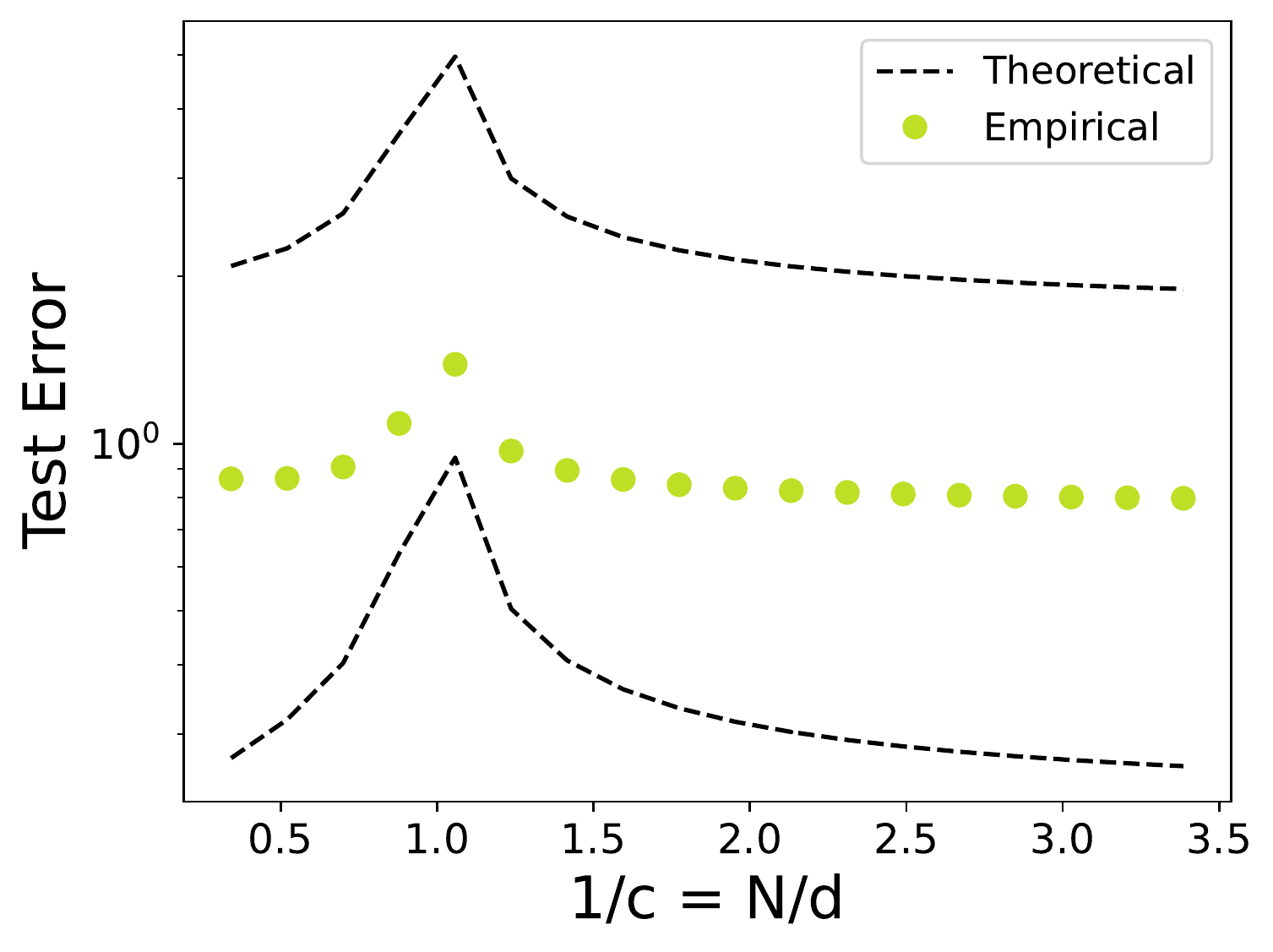}}
    \subfloat[STL10 Dataset]{\includegraphics[width = 0.33\linewidth]{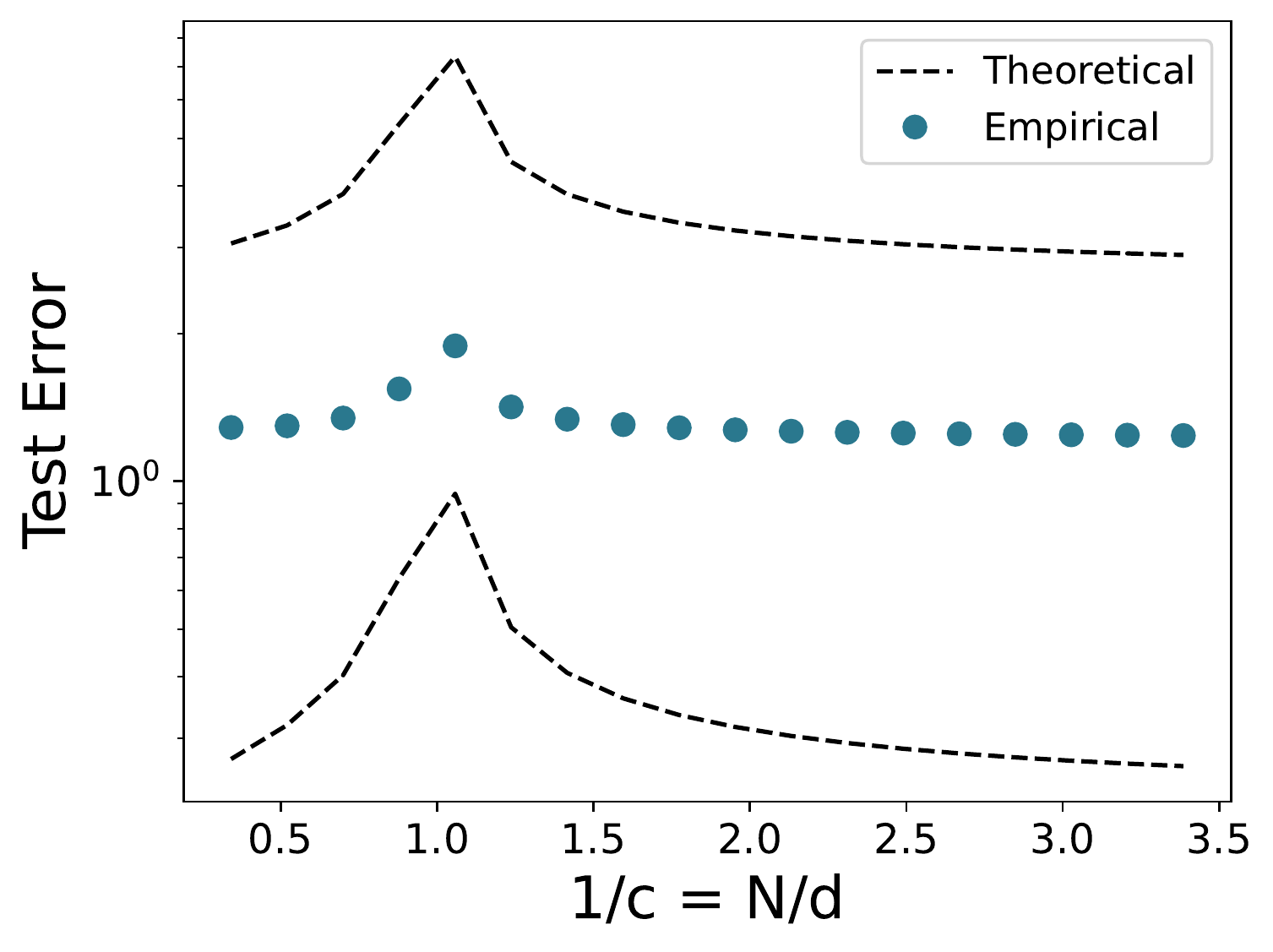}}
    \subfloat[SVHN Dataset]{\includegraphics[width = 0.33\linewidth]{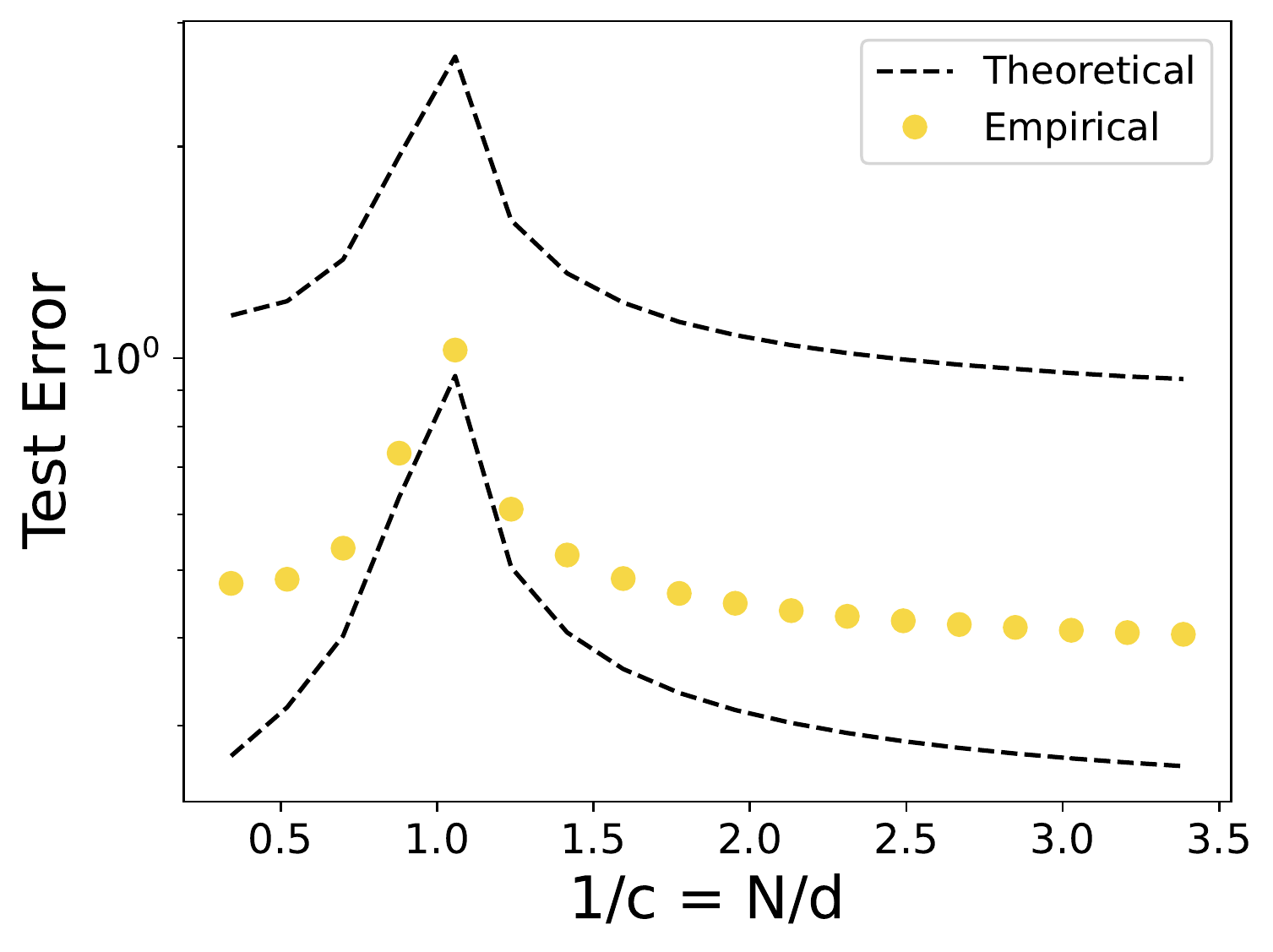}}
        \caption{$r$ = 150; We find that $\alpha$ is approximately 37, 60 and 15 for (a)-(c) respectively.}
        \label{fig:outsubspacefull150_tf}
    \end{subfigure}
    \caption{Figure showing the test error vs $1/c$ when the test datasets retain their high dimensions. The training data is projected onto its first $r$ principal components. The markers denote the square root of test error obtained from empirical experiments. The dashed black lines, which act as the upper bounds for the empirical results, are given by $\sqrt{\mathcal{R}(UL)} + \alpha \sigma_1 (W_{opt} + I)$ where $\mathcal{R}(UL)$ is the theoretical generalization error (refer Theorem \ref{thm:out-of-subspace}). The dashed black lines, which act as the lower bounds, are given by $\sqrt{\mathcal{R}(UL)}$.} 
    \label{fig:outsubspacefull_tf}
\end{figure}

As mentioned in Section \ref{sec:main}, we numerically verify Theorem \ref{thm:out-of-subspace} in two out-of-distribution setups namely small $\alpha$ and large $\alpha$. The application of our result to the small $\alpha$ case was already presented in the main paper; see Figure \ref{fig:outsubspaceperturb_tf}. Here, we present the additional numerical results when the value of $\alpha$ is relatively large. We do not project the test datasets onto the low-dimensional subspace for this. The training dataset from the CIFAR train split is projected onto its first $r$ principal components where $r = 25, 50, 100$ and 150. Figure \ref{fig:outsubspacefull_tf} shows the theoretical bounds on the generalization error from Theorem \ref{thm:out-of-subspace}. Unfortunately, for the large $\alpha$ case, the proposed lower bound in Theorem \ref{thm:out-of-subspace} is negative. However, we conjecture that $\mathcal{R}(UL)$ is a lower bound instead. The results for the large $\alpha$ case, shown in Figure \ref{fig:outsubspacefull_tf}, suggest the same. However, these bounds do not tell us anything about the shape of the generalization error curve.

The following corollary follows immediately from Theorem~\ref{thm:out-of-subspace} and Theorem~\ref{thm:test-set-shift-bound}.
\begin{cor} \label{cor:bound-out-of-subspace} If $X_{tst,1}$ and $X_{tst,2}$ are two different test datasets and $X_{trn} = U\Sigma_{trn} V_{trn}^T$ is the training data such that there exists $L_i$ with $\alpha_i = \|X_{tst,i} - UL_i\|_F$, then for $\mathcal{R}_i := \mathcal{R}(W_{opt}, X_{tst,i})$
\begin{align*}
    |\mathcal{R}_2 - \mathcal{R}_1| &\leq (\alpha_1^2 + \alpha_2^2)\sigma_1(W_{opt}+I)^2\\
    & \qquad + \frac{\sigma_1(\beta)^2}{N_{tst}}\frac{\eta_{trn}^4r}{(\sigma_r(X_{trn})^2f(c) + \eta_{trn}^2)^2}\|L_2L_2^T - L_1L_1^T\|_F +o\left(\frac{1}{N}\right)
\end{align*}
\end{cor}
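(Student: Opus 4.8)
The plan is to obtain the bound by a single application of the triangle inequality for the real number $\mathcal{R}(W_{opt},\cdot)$, interpolating through the two in-subspace surrogates $UL_1$ and $UL_2$. Concretely, I would write
\[
|\mathcal{R}_2 - \mathcal{R}_1| \le |\mathcal{R}(W_{opt},X_{tst,2}) - \mathcal{R}(W_{opt},UL_2)| + |\mathcal{R}(W_{opt},UL_2) - \mathcal{R}(W_{opt},UL_1)| + |\mathcal{R}(W_{opt},UL_1) - \mathcal{R}(W_{opt},X_{tst,1})|,
\]
and then bound the three pieces separately.

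First I would handle the two "outer" terms. By hypothesis there exist $L_1,L_2$ with $\|X_{tst,i} - UL_i\|_F \le \alpha_i$, so Theorem~\ref{thm:out-of-subspace} applies verbatim with $X_{tst} = X_{tst,i}$, $L = L_i$, $\alpha = \alpha_i$ (noting that $A_{tst,i}$ is assumed to satisfy points 1,2 of Assumption~\ref{noise-assumptions}, as required there). This yields $|\mathcal{R}(W_{opt},X_{tst,i}) - \mathcal{R}(W_{opt},UL_i)| \le \alpha_i^2\,\sigma_1(W_{opt}+I)^2$ for $i=1,2$, and summing the two gives the $(\alpha_1^2+\alpha_2^2)\sigma_1(W_{opt}+I)^2$ contribution.

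Next I would handle the "inner" term. Both $UL_1$ and $UL_2$ are genuine in-subspace test sets of the form required by Theorem~\ref{thm:test-set-shift-bound} (same training data $X_{trn} = U\Sigma_{trn}V_{trn}^T$, test noises with the same variance $\eta_{tst}^2/d$), so that theorem gives directly
\[
|\mathcal{R}(W_{opt},UL_2) - \mathcal{R}(W_{opt},UL_1)| \le \frac{\sigma_1(\beta)^2}{N_{tst}}\,\frac{\eta_{trn}^4 r}{(\sigma_r(X_{trn})^2 f(c)+\eta_{trn}^2)^2}\,\|L_2L_2^T - L_1L_1^T\|_F + o\!\left(\tfrac{1}{N}\right),
\]
with $f(c)=c$ for $c<1$ and $f(c)=1$ for $c\ge 1$. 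Adding the three bounds produces exactly the claimed inequality; in the overparametrized case $c\ge 1$ one additionally carries along the $O(\|\Sigma_{trn}\|_F^2/N^2)$ term inherited from Theorem~\ref{thm:test-set-shift-bound}, as in the statements of the cited results.

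There is essentially no genuine obstacle here — the content is entirely in Theorems~\ref{thm:out-of-subspace} and~\ref{thm:test-set-shift-bound}, and the corollary is just their composition. The only points requiring a line of care are (i) checking that the hypotheses of Theorem~\ref{thm:test-set-shift-bound} (notably $\sigma_r(X_{trn})$ bounded below, matching test-noise variances) are inherited from the hypotheses of Theorem~\ref{thm:main} already assumed in the corollary, and (ii) tracking the regime-dependent $f(c)$ and the extra $O(\|\Sigma_{trn}\|_F^2/N^2)$ remainder so that the $c<1$ and $c\ge 1$ cases are both covered by the stated bound.
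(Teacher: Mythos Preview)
Your proposal is correct and matches the paper's approach exactly: the paper states that the corollary ``follows immediately from Theorem~\ref{thm:out-of-subspace} and Theorem~\ref{thm:test-set-shift-bound},'' and your triangle-inequality decomposition through the in-subspace surrogates $UL_1,UL_2$ is precisely how one combines those two results.
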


\subsection{Independent Identical Test data}

\begin{figure}[!ht]
    \centering
    \includegraphics[width=0.5\linewidth]{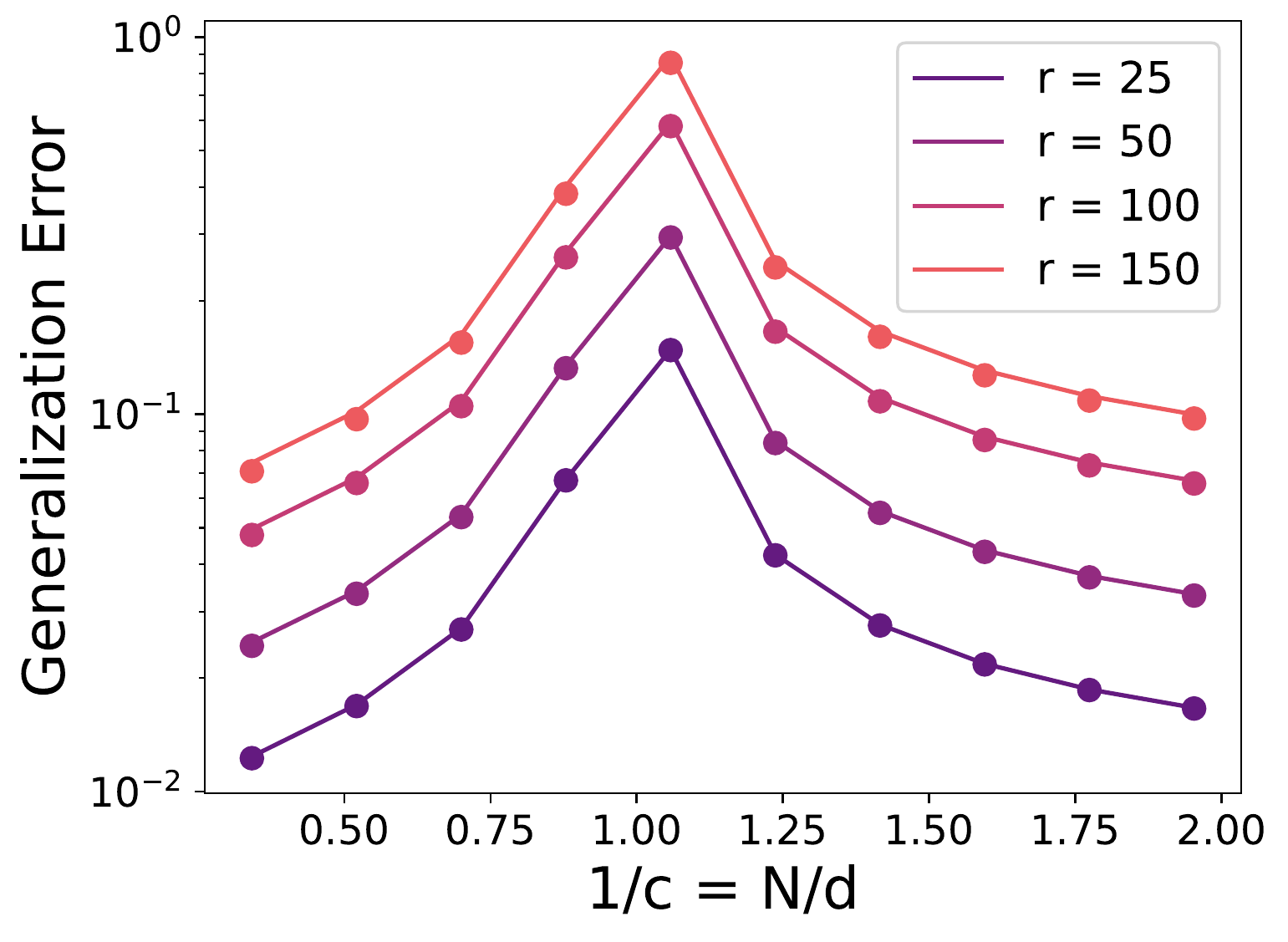}
    \caption{Figure showing the generalization error vs 1/c obtained for IID test data for $r=25,50,100,150$. The theoretical solid line curve is given by Corollary \ref{thm:iid-test}. We report the mean generalization error over at least 200 trials for empirical data points, shown by markers.  }
    \label{fig:iid_test_error}
\end{figure}

Let us assume that the test data is identically and independently drawn from some distribution $\mathcal{D}_{tst}$ with mean zero and covariance $\Sigma$. 
Then the generalization error is given by the following corollary.

\begin{restatable}[IID Test Data]{cor}{iidtest} \label{thm:iid-test} 
Let $r < |d-N|$. Let the SVD of $X_{trn}$ be $U\Sigma_{trn}V_{trn}^T$, let $L := U^TX_{tst}$, $\beta_U := U^T\beta$, and $c := d/N$. Under our setup and Assumptions~\ref{data-assumptions} and~\ref{noise-assumptions}, with the further assumption that the columns of $L$ are drawn IID from a distribution with mean zero and Covariance $\Sigma$, the test error (Equation~\ref{eq:problem}) is given by the following.\newline
    If $c < 1$ (under-parameterized regime)\[
    \begin{split}
        \mathbb{E}_L[\mathcal{R}(W_{opt}, UL)] &= \eta_{trn}^4\left\|\beta_U^T (\Sigma_{trn}^2c + \eta_{trn}^2I)^{-1}\Sigma^{1/2}\right\|_F^2 \hspace{7.8cm} \\ &+ \frac{\eta_{tst}^2}{d}\frac{c^2}{1-c}\Tr\left(\beta_U\beta_U^T\Sigma_{trn}^2\left(\Sigma_{trn}^2+\frac{1}{\eta_{trn}^2}I\right) \left(\Sigma_{trn}^2c+\eta_{trn}^2I\right)^{-2}\right) + o\left(\frac{1}{N}\right)
    \end{split}
    \]
If $c > 1$ (over-parameterized regime)\[
    \begin{split}
        \mathbb{E}_L[\mathcal{R}(W_{opt}, UL)]  &= \eta_{trn}^4\left\|\beta_U^T(\Sigma_{trn}^2 + \eta_{trn}^2I)^{-1}\Sigma^{1/2}\right\|_F^2 \\ &+ \frac{\eta_{tst}^2}{d} \frac{c}{c-1}\Tr(\beta_U\beta_U^T (I+\eta_{trn}^2\Sigma_{trn}^{-2})^{-1}) + O\left(\frac{\|\Sigma_{trn}\|^2}{N^2}\right) + o\left(\frac{1}{N}\right)
    \end{split}
    \]
\end{restatable}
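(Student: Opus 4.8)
The plan is to obtain this as an immediate consequence of Theorem~\ref{thm:main} by integrating over the randomness in $L$ and noting that $L$ enters the test-error formula through exactly one term. Write $M := \beta_U^T(\Sigma_{trn}^2 c + \eta_{trn}^2 I)^{-1}$ in the underparametrized case and $M := \beta_U^T(\Sigma_{trn}^2 + \eta_{trn}^2 I)^{-1}$ in the overparametrized case. Then Theorem~\ref{thm:main} reads $\mathcal{R}(W_{opt}, UL) = \frac{\eta_{trn}^4}{N_{tst}}\|ML\|_F^2 + (\text{a term independent of } L) + (\text{remainder})$, where the $L$-independent term is the trace/variance term, the remainder is $o(1/N)$ (plus $O(\|\Sigma_{trn}\|^2/N^2)$ when $c>1$), and neither depends on $L$. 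So the whole task reduces to computing $\mathbb{E}_L\|ML\|_F^2$.

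For that second moment, let $\ell_1,\dots,\ell_{N_{tst}}$ denote the columns of $L$, which by hypothesis are IID with $\mathbb{E}[\ell_i]=0$ and $\mathbb{E}[\ell_i\ell_i^T]=\Sigma$. Expanding the Frobenius norm columnwise gives $\|ML\|_F^2 = \sum_{i=1}^{N_{tst}} \ell_i^T M^T M \ell_i$, so by linearity of expectation and the cyclic invariance of the trace, $\mathbb{E}_L\|ML\|_F^2 = \sum_{i=1}^{N_{tst}} \Tr\!\big(M^T M\, \mathbb{E}[\ell_i\ell_i^T]\big) = N_{tst}\,\Tr(M^T M \Sigma) = N_{tst}\,\|M\Sigma^{1/2}\|_F^2$, using $\Sigma = \Sigma^{1/2}\Sigma^{1/2}$. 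Substituting back, the factor $N_{tst}$ cancels the $1/N_{tst}$, leaving precisely $\eta_{trn}^4\|M\Sigma^{1/2}\|_F^2$, i.e.\ the stated bias term with $\Sigma^{1/2}$ in place of $L$; the variance and remainder terms are copied over unchanged. This also makes transparent why Corollary~\ref{cor:test-dist}, which allows a nonzero mean $\mu$, has $(\Sigma+\mu\mu^T)^{1/2}$ instead: there $\mathbb{E}[\ell_i\ell_i^T] = \Sigma + \mu\mu^T$.

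There is no substantive obstacle here; the only point requiring a line of justification is that the remainder estimates in Theorem~\ref{thm:main} survive the expectation over $L$. This holds because those estimates are uniform over test matrices lying in $\mathcal{V}$ with $\|X_{tst}\|_F^2 = O(N_{tst})$, and for $L$ with IID columns of finite second moment the realized $\|L\|_F^2$ is $O(N_{tst})$ (concentrating at $N_{tst}\Tr(\Sigma)$), so the $o(1/N)$ and $O(\|\Sigma_{trn}\|^2/N^2)$ bounds pass through $\mathbb{E}_L[\cdot]$. Hence the corollary is just the specialization of Theorem~\ref{thm:main} to random in-subspace test data together with the rank-one identity $\mathbb{E}[\ell\ell^T]=\Sigma$.
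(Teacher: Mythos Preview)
Your proof is correct and follows essentially the same route as the paper: the paper proves Corollary~\ref{cor:test-dist} (general mean~$\mu$) from Theorem~\ref{thm:main} by observing that only the bias term depends on $L$, computing $\mathbb{E}_L[LL^T]=N_{tst}(\Sigma+\mu\mu^T)$, and substituting into the trace form of $\|ML\|_F^2$; the present corollary is then recorded as the $\mu=0$ specialization. Your direct derivation from Theorem~\ref{thm:main} via $\mathbb{E}_L\|ML\|_F^2=N_{tst}\Tr(M^TM\Sigma)$ is the same computation, and your extra sentence justifying that the remainder terms pass through $\mathbb{E}_L$ is a point the paper leaves implicit.
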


\begin{rem}
    Given any distribution on $\mathcal{V}$, we can consider the diffeomorphism that changes the basis to $U$. Hence, making assumptions on the distribution of $L$ versus the distribution of $X_{tst}$ does not cost us any generality. 
\end{rem}

Figure \ref{fig:iid_test_error}, shows that the theoretical error aligns perfectly with the empirical result. The model is trained on the CIFAR dataset and tested on data drawn from an anisotropic Gaussian. The case of IID training data is presented in Appendix \ref{sec:train-iid}.

\subsection{Independent Isotropic Identical Training Data}
\label{sec:train-iid}

\begin{figure}[h!]
    \centering
    \subfloat[CIFAR Dataset]{\includegraphics[width = 0.33\linewidth]{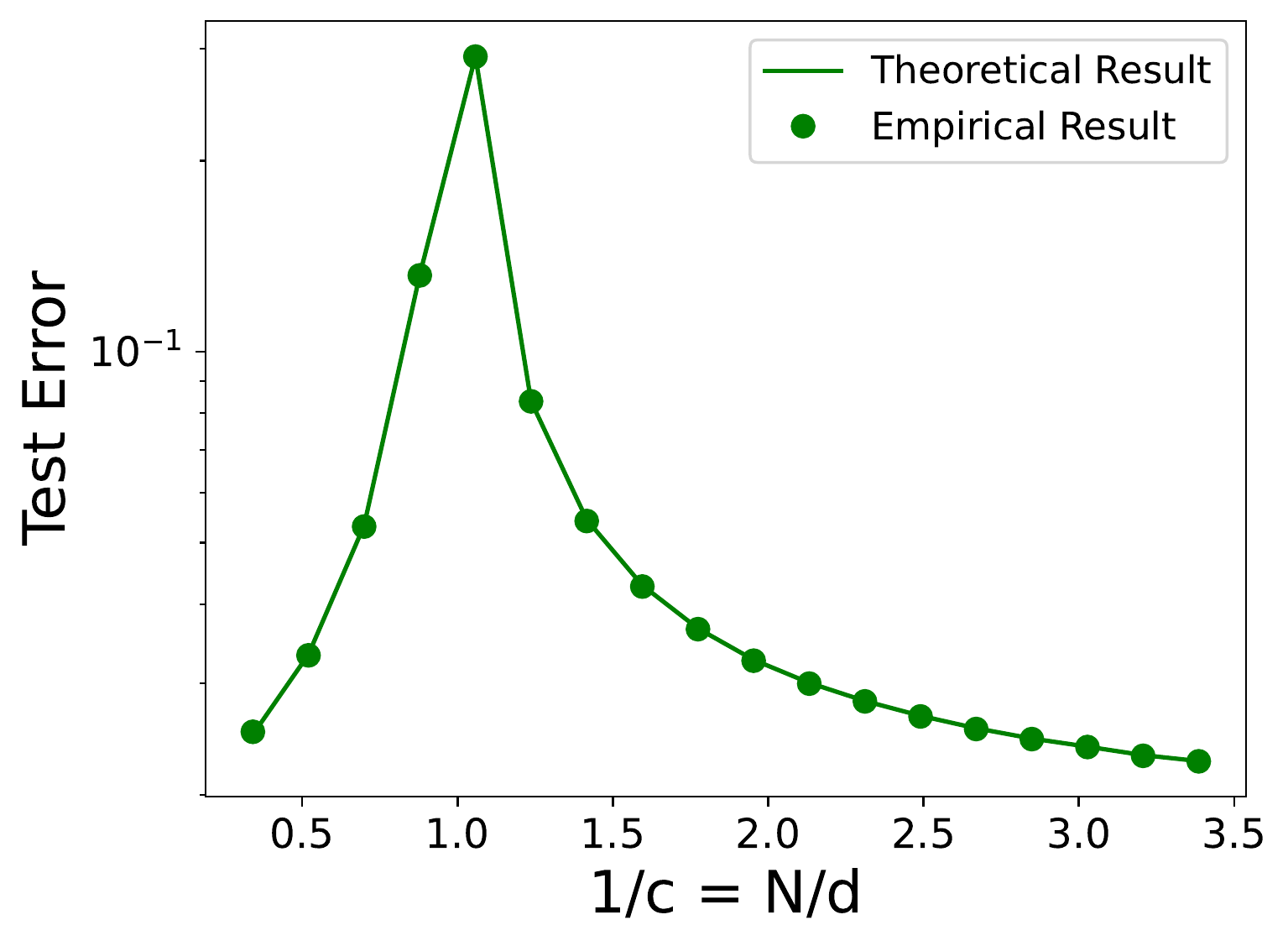}}
    \subfloat[STL10 Dataset]{\includegraphics[width = 0.33\linewidth]{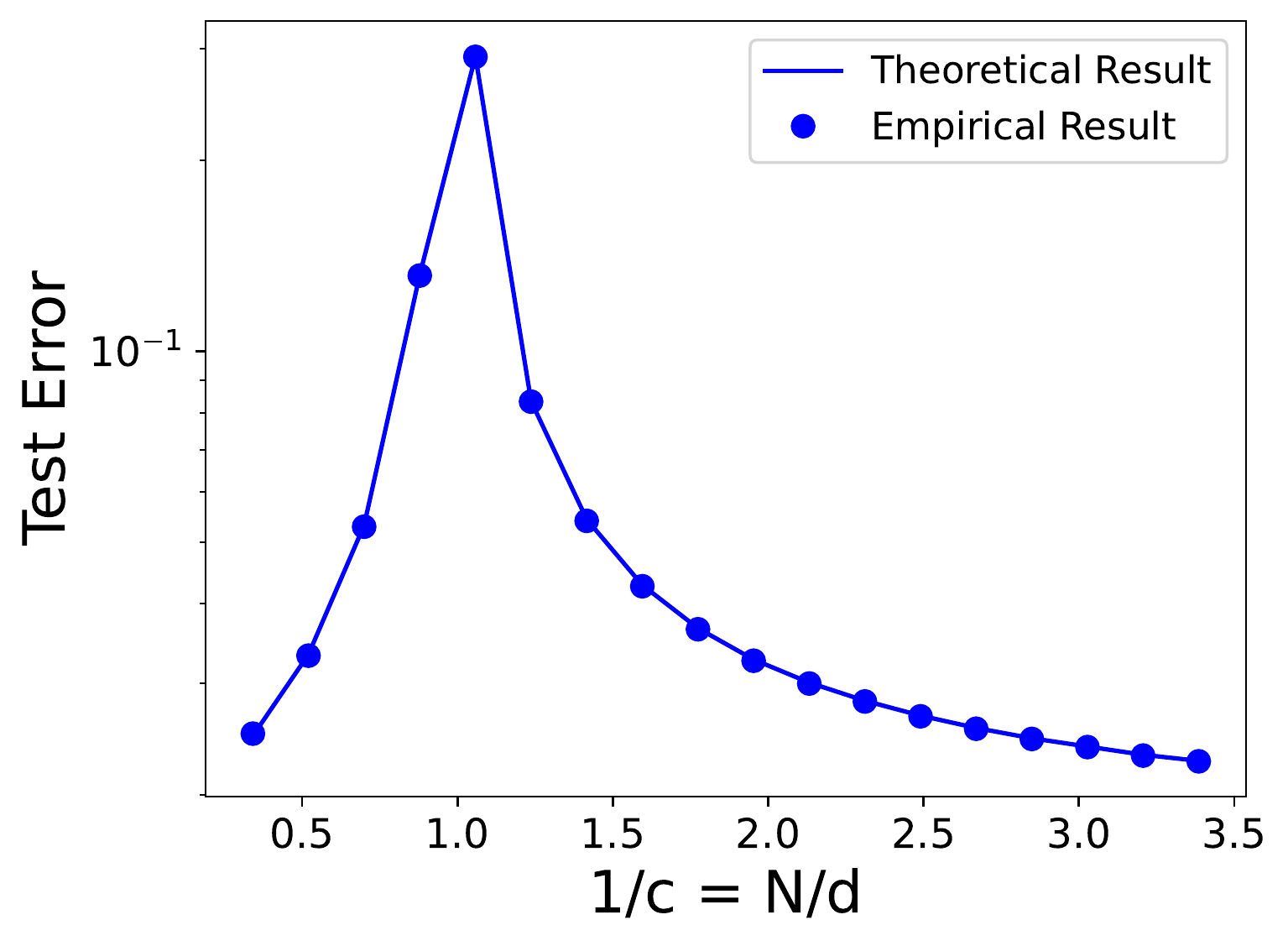}}
    \subfloat[SVHN Dataset]{\includegraphics[width = 0.33\linewidth]{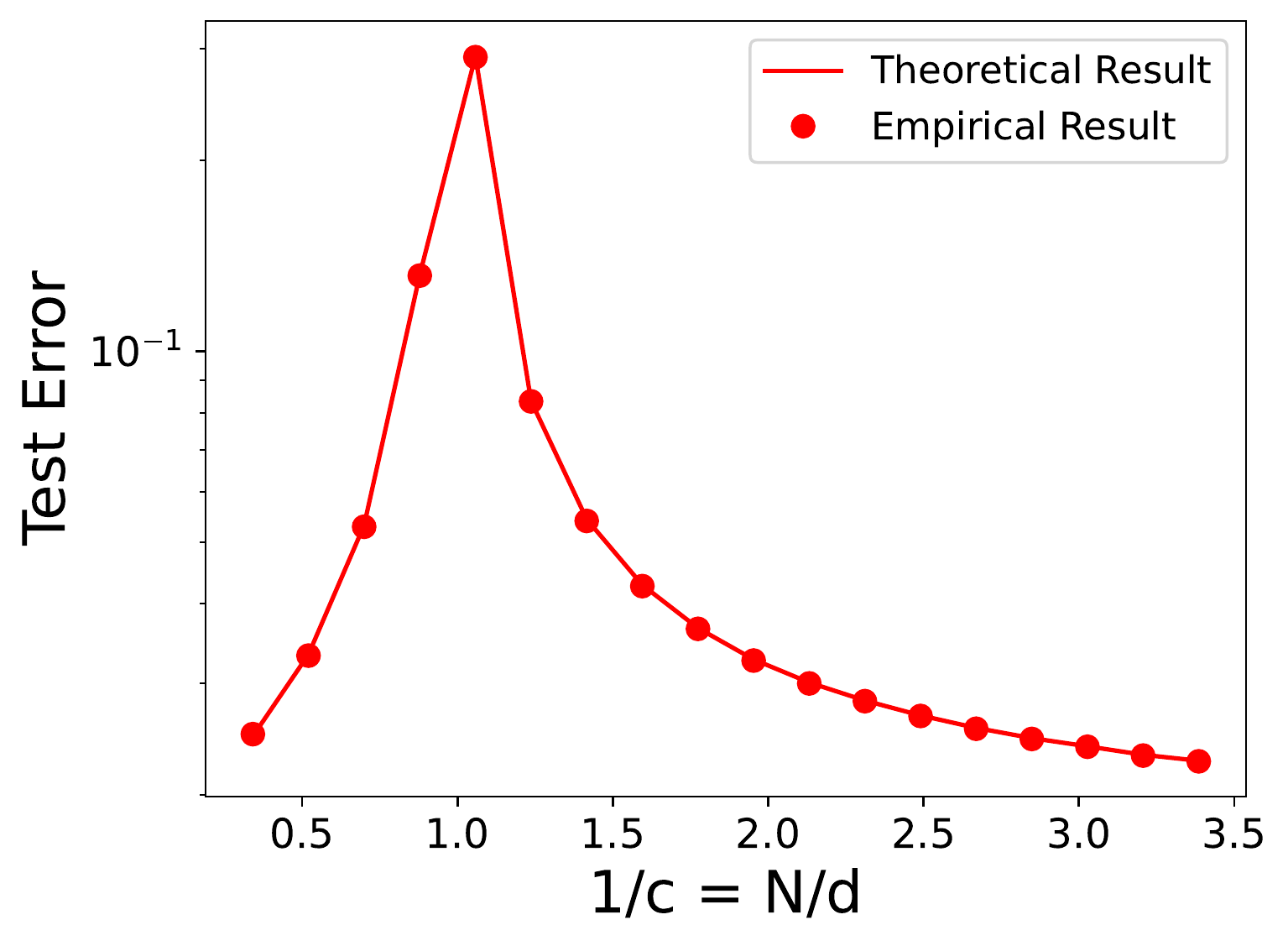}}
    \caption{Figure showing the test error vs 1/c for I.I.D. training data. The theoretical solid curves are obtained from the formula in Theorem \ref{thm:iid-train}. We report the mean test error over at least 200 trials for empirical data points, shown by markers.}
    \label{fig:gaussian_train}
\end{figure}

Next, we consider the case of I.I.D training data. Let $U \in \mathbb{R}^{d \times r}$ be a matrix whose columns form an orthonormal basis for an $r$-dimensional space $\mathcal{V}$. Suppose the data is of the form $Uz$ for $z \in \mathbb{R}^r$ such that the coordinates of $z$ are sampled independently, have mean 0, variance $1/r$, and have bounded forth moments.
Hence, in this case, we get the following theorem. Proof in Section \ref{sec:proof-iid-train}.

\begin{restatable}[I.I.D. Training Data With Isotropic Covariance]{theorem}{iidtrain} \label{thm:iid-train} Let $c = d/N$ and $c_r = r/N$. Then if $c < 1$
\begin{align*}
    \mathbb{E}_{X_{trn}}[\mathcal{R}] &= \frac{\eta_{trn}^4}{N_{tst}}\|(\Sigma_{trn}^2c + \eta_{trn}^2I)^{-1} L \|_F^2\\
    & \qquad +  \eta_{tst}^2\frac{r}{d}\frac{1}{1-c}\left(T_1(c_r,\eta_{trn}^2/c) + \frac{1}{\eta_{trn}^2}T_2(c_r,\eta_{trn}^2/c) \right) + o\left(\frac{1}{N}\right)
\end{align*}

and if $c > 1$
\[
    \mathbb{E}_{X_{trn}}[\mathcal{R}] = \frac{\eta_{trn}^4}{N_{tst}}\|(\Sigma_{trn}^2 + \eta_{trn}^2I)^{-1} L \|_F^2 + \eta_{tst}^2\frac{r}{d} \frac{c}{c-1}T_3(c_r,\eta_{trn}^2) + O\left(\frac{1}{N}\right)
\]
where $T_1(c_r,z) = T_3(cr,z) - zT_2(cr,z)$, and 
\[
    T_2(c_r,z)  = \frac{1+c_r + zc_r}{2\sqrt{(1-c_r +c_rz)^2 +4c_r^2z}} - \frac{1}{2},\ 
    T_3(c_r,z) =  \frac{1}{2} + \frac{1+zc_r - \sqrt{(1-c_r+zc_r)^2 + 4c_r^2z}}{2c_r}.
\]
\end{restatable}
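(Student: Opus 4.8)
The plan is to specialize Theorem~\ref{thm:main} (more precisely, the expression for the generalization error over test distributions as in Corollary~\ref{cor:test-dist}) to the case where $X_{trn} = UZ$ with $Z \in \R^{r \times N}$ having i.i.d., mean-zero, variance-$1/r$, bounded-fourth-moment entries, and then take the expectation over $X_{trn}$. The key observation is that under this model the nontrivial singular values of $X_{trn}$ are the singular values of the $r \times N$ matrix $Z$, so $\Sigma_{trn}^2$ has the same eigenvalues as $ZZ^T$, and $ZZ^T$ is (up to the scaling $1/r$) a Wishart matrix whose empirical spectral distribution converges to the Marchenko–Pastur law with shape parameter $c_r = r/N$. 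Thus the deterministic functions $\Sigma_{trn}^2$, $(\Sigma_{trn}^2 c + \eta_{trn}^2 I)^{-1}$, etc., appearing in Theorem~\ref{thm:main} become, after taking $\E_{X_{trn}}$ and invoking concentration, integrals of rational functions against $\mu_{c_r}$. The bias term is left as $\frac{\eta_{trn}^4}{N_{tst}}\|(\Sigma_{trn}^2 f(c) + \eta_{trn}^2 I)^{-1} L\|_F^2$ because $L = U^T X_{tst}$ still depends on the given test data; only the variance (trace) term is averaged into closed form.

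The core steps, in order, are as follows. First, I would write the variance term from Theorem~\ref{thm:main} as $\frac{\eta_{tst}^2}{d}$ times a trace of the form $\Tr\big(g(\Sigma_{trn}^2)\big)$ for an explicit scalar function $g$ (for $c>1$, $g(x) = \frac{c}{c-1}\cdot\frac{1}{1+\eta_{trn}^2/x}$; for $c<1$, $g(x) = \frac{c^2}{1-c}\cdot x(x + 1/\eta_{trn}^2)(xc+\eta_{trn}^2)^{-2}$), using here that $\beta_U\beta_U^T$ specializes appropriately (in the denoising-type normalization implicit in the statement, $\beta_U\beta_U^T$ contributes the factor that turns the trace into $r$ times an average). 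Second, I would replace $\Tr(g(\Sigma_{trn}^2)) = \sum_{i=1}^r g(\sigma_i^2)$ by $r \int g(x)\, d\mu_{c_r}(x)$ up to an $o(1)$ error, justified by the Marchenko–Pastur convergence of the empirical spectrum of $ZZ^T$ together with boundedness/continuity of $g$ on the support (the edge cases near $c_r \to 1$ handled exactly as in the noise analysis of Theorem~\ref{thm:main}, and concentration giving the $o(1/N)$ or $O(1/N)$ remainder). Third, I would evaluate the resulting MP integrals explicitly. These are standard Stieltjes-transform computations: $\int \frac{1}{x+z}\,d\mu_{c_r}(x)$ and $\int \frac{x}{x+z}\,d\mu_{c_r}(x)$ have closed forms involving $\sqrt{(1-c_r+zc_r)^2 + 4c_r^2 z}$, which is exactly the radical appearing in $T_2$ and $T_3$. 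Matching $z = \eta_{trn}^2$ (for $c>1$) and $z = \eta_{trn}^2/c$ (for $c<1$) to the structure of $g$, and using the algebraic identity $T_1(c_r,z) = T_3(c_r,z) - z T_2(c_r,z)$ to repackage the $c<1$ integrand (which is a combination of $\int \frac{x}{x+z}$ and $\int \frac{1}{x+z}$-type pieces after partial fractions on $x(x+1/\eta_{trn}^2)(xc+\eta_{trn}^2)^{-2}$), yields the stated $T_1, T_2, T_3$ expressions. Finally, collecting the $\frac{\eta_{tst}^2}{d}$ prefactor as $\eta_{tst}^2 \frac{r}{d} \cdot \frac{1}{r}\sum(\cdots)$ gives the claimed $\eta_{tst}^2 \frac{r}{d}$ normalization.

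The main obstacle I expect is the partial-fraction bookkeeping in the $c<1$ case: the integrand $x(x + 1/\eta_{trn}^2)(xc + \eta_{trn}^2)^{-2}$ has a double pole, so its MP integral is a combination of $\int (xc+\eta_{trn}^2)^{-1} d\mu_{c_r}$, $\int (xc+\eta_{trn}^2)^{-2} d\mu_{c_r}$, and lower-order pieces; the second of these is the $z$-derivative of the first, so one must differentiate the closed-form Stieltjes transform and then simplify the resulting mess into the compact $T_2$ form — bounding the remainder uniformly as $c_r \to 1^-$ (where the MP density touches zero) is the delicate analytic point, but it is the same estimate already carried out for the noise spectrum in the proof of Theorem~\ref{thm:main}, so it can be invoked rather than redone. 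A secondary, more mechanical obstacle is verifying the identity $T_1 = T_3 - z T_2$ reproduces exactly the combination that falls out of the partial fractions; this is a finite algebraic check. Everything else — the reduction of $\Sigma_{trn}^2$ to a Wishart spectrum, the concentration argument, and the error-order tracking — is a direct transcription of the techniques already established for Theorem~\ref{thm:main}, now applied to the \emph{data} matrix rather than the noise matrix.
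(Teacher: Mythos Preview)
Your proposal is correct and follows essentially the same approach as the paper's proof: both start from Theorem~\ref{thm:main}, observe that the squared singular values of $X_{trn}$ coincide with the nonzero eigenvalues of the Wishart-type matrix $Z^TZ$ (equivalently $ZZ^T$) whose empirical spectral distribution converges to Marchenko--Pastur with shape $c_r = r/N$, and then replace the normalized trace in the variance term by the corresponding MP integral. The paper's proof is terser---it simply writes out the variance sum, identifies the three integrands $\lambda^2/(\lambda+z)^2$, $\lambda/(\lambda+z)^2$, $\lambda/(\lambda+z)$, and cites the closed-form MP integrals from \cite{sonthaliaUnderPeak} rather than deriving them via Stieltjes-transform differentiation---so your anticipated partial-fraction and double-pole bookkeeping is exactly the content hidden behind that citation.
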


Figure \ref{fig:gaussian_train} shows that the theoretical curves align perfectly with the empirical results where the training data is I.I.D. from a Gaussian with dimension 50. The test datasets from CIFAR, STL10, and SVHN datasets are also projected onto the low-dimensional subspace.

\paragraph{I.I.D Test and Training Data}

We can combine the two cases where training and test data are I.I.D.. Specifically, for the case when $X_{tst}$ has $\kappa I$ as the covariance and $X_{trn}$ is as in the previous instantiation Section. Then the generalization error is given by the following corollary.

\begin{restatable}[I.I.D. Train and Tests Data With Isotropic Covariance]{cor}{iid} \label{cor:iid-train-test}
Let $c = d/N$ and $c_r = r/N$. Then if $c < 1$
\begin{align*}
    \mathbb{E}_{X_{trn}}[\mathcal{R}] &= \eta_{trn}^4\cdot r \cdot \kappa \cdot T_4(c_r,\eta_{trn}^2/c)\\
    &\qquad +  \frac{r}{d}\frac{1}{1-c}\left(T_1(c_r,\eta_{trn}^2/c) + \frac{1}{\eta_{trn}^2}T_2(c_r,\eta_{trn}^2/c) \right)+ o\left(\frac{1}{N}\right)
\end{align*}

and if $c > 1$
\[
    \mathbb{E}_{X_{trn}}[\mathcal{R}]  = \eta_{trn}^4\cdot r \cdot \kappa \cdot T_4(c_r,\eta_{trn}^2) + \frac{r}{d} \frac{c}{c-1}T_3(c_r,\eta_{trn}^2) + O\left(\frac{1}{N}\right)
\]
where $T_1(c_r,z) = T_3(c_r,z) - zT_2(c_r,z)$, and 
\[
    T_2(c_r,z)  = \frac{1+c_r + zc_r}{2\sqrt{(1-c_r +c_rz)^2 +4c_r^2z}} - \frac{1}{2}, \
    T_3(c_r,z) =  \frac{1}{2} + \frac{1+zc_r - \sqrt{(1-c_r+zc_r)^2 + 4c_r^2z}}{2c_r},
\]
\[
    T_4(c_r,z) = \frac{zc_r^2 + c_r^2 + zc_r -2c_r + 1}{2z^2c_r\sqrt{(1-c_r +c_rz)^2 +4c_r^2z}} - \frac{1}{2z^2}\left(1-\frac{1}{c_r}\right) .
\]
\end{restatable}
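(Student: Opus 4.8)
The plan is to obtain this corollary by chaining the two one-sided I.I.D.\ results already in hand: first average over the I.I.D.\ test data via Corollary~\ref{thm:iid-test} with test covariance $\Sigma=\kappa I$, and then average the resulting (still $\Sigma_{trn}$-dependent) expression over the I.I.D.\ isotropic training data exactly as in Theorem~\ref{thm:iid-train}. The only genuinely new computation is a single Marchenko--Pastur integral, which is what the function $T_4$ encodes.

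First, applying Corollary~\ref{thm:iid-test} with $\Sigma=\kappa I$ and using $\|(\Sigma_{trn}^2 f(c)+\eta_{trn}^2 I)^{-1}(\kappa I)^{1/2}\|_F^2=\kappa\,\Tr\!\left((\Sigma_{trn}^2 f(c)+\eta_{trn}^2 I)^{-2}\right)$ (with $f(c)=c$ for $c<1$ and $f(c)=1$ for $c>1$, and $\beta=I$ so $\beta_U\beta_U^T=U^TU=I_r$), the test error averaged over the test data equals
\[
    \eta_{trn}^4\,\kappa\,\Tr\!\left((\Sigma_{trn}^2 f(c)+\eta_{trn}^2 I)^{-2}\right) \;+\; V_c(\Sigma_{trn}) \;+\; o(1/N),
\]
where $V_c(\Sigma_{trn})$ denotes the variance trace term of Theorem~\ref{thm:main} (the $\frac{\eta_{tst}^2}{d}\frac{c^2}{1-c}\Tr(\cdots)$ term for $c<1$, resp.\ the $\frac{\eta_{tst}^2}{d}\frac{c}{c-1}\Tr(\cdots)$ term plus $O(\|\Sigma_{trn}\|^2/N^2)$ for $c>1$). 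This expression depends on the random training data only through the spectrum $\Sigma_{trn}$.

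Next I take $\mathbb{E}_{X_{trn}}$. Writing $X_{trn}=UZ$ with $Z\in\R^{r\times N}$ having i.i.d.\ mean-zero, variance-$1/r$ entries with bounded fourth moments, the squared singular values of $X_{trn}$ are the eigenvalues of $ZZ^T$, whose empirical spectral distribution concentrates --- by the same fourth-moment bound plus resolvent-perturbation argument used to prove Theorem~\ref{thm:iid-train} --- around an appropriately rescaled Marchenko--Pastur law $\mu_{c_r}$ with shape $c_r=r/N$. Applied to $V_c(\Sigma_{trn})$ this reproduces verbatim the terms in $T_1,T_2,T_3$ already computed for Theorem~\ref{thm:iid-train}. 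Applied to the new bias trace it gives
\[
    \mathbb{E}_{X_{trn}}\!\left[\Tr\!\left((\Sigma_{trn}^2 f(c)+\eta_{trn}^2 I)^{-2}\right)\right] \;=\; r\int \frac{d\mu_{c_r}(x)}{\bigl(x\,f(c)+\eta_{trn}^2\bigr)^2} \;+\; o(1/N)
\]
(and $O(1/N)$ for $c>1$); the integral on the right is the derivative at $-\eta_{trn}^2/f(c)$ of the Stieltjes transform of $\mu_{c_r}$, which has a closed form obtained exactly as for $T_1,T_2,T_3$. Rearranging the constants expresses it as $T_4(c_r,\eta_{trn}^2/c)$ for $c<1$ and $T_4(c_r,\eta_{trn}^2)$ for $c>1$; multiplying by $\eta_{trn}^4\kappa r$, adding the inherited variance term, and collecting the $o(1/N)$ (resp.\ $O(1/N)$) remainders gives the stated formulas.

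I expect the only delicate part to be the closed-form evaluation of $\int (x f(c)+\eta_{trn}^2)^{-2}\,d\mu_{c_r}(x)$ together with the scaling bookkeeping needed to land precisely on $T_4$ as written --- in particular keeping track of how the variance-$1/r$ normalization of $Z$ rescales $\mu_{c_r}$, how the shift $f(c)$ interacts with the pair $(c,c_r)$, and checking that the $1/z^2$ prefactor appearing in $T_4$ is exactly what cancels the outer $\eta_{trn}^4$. Nothing here is conceptually new beyond Theorems~\ref{thm:main} and~\ref{thm:iid-train} and Corollary~\ref{thm:iid-test}: the concentration estimate is the one already established for the latter, simply re-run with the squared-resolvent test function, so the obstacle is purely the algebra of the Marchenko--Pastur moment integral and constant tracking.
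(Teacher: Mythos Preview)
Your proposal is correct and follows essentially the same approach as the paper: chain Corollary~\ref{thm:iid-test} with $\Sigma=\kappa I$ into Theorem~\ref{thm:iid-train}, so that the only new work is the Marchenko--Pastur integral $\mathbb{E}[(\lambda+z)^{-2}]$ encoding $T_4$. The paper's own proof is terser---it simply writes down the bias term and cites \cite{sonthaliaUnderPeak} for the closed-form value of that integral---but your more explicit chaining and Stieltjes-derivative description amount to the same computation.
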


Figure \ref{fig:iid_both_error} shows that the theoretical error aligns perfectly with the empirical result. 

\begin{figure}[h!]
    \centering
    \includegraphics[scale=0.5]{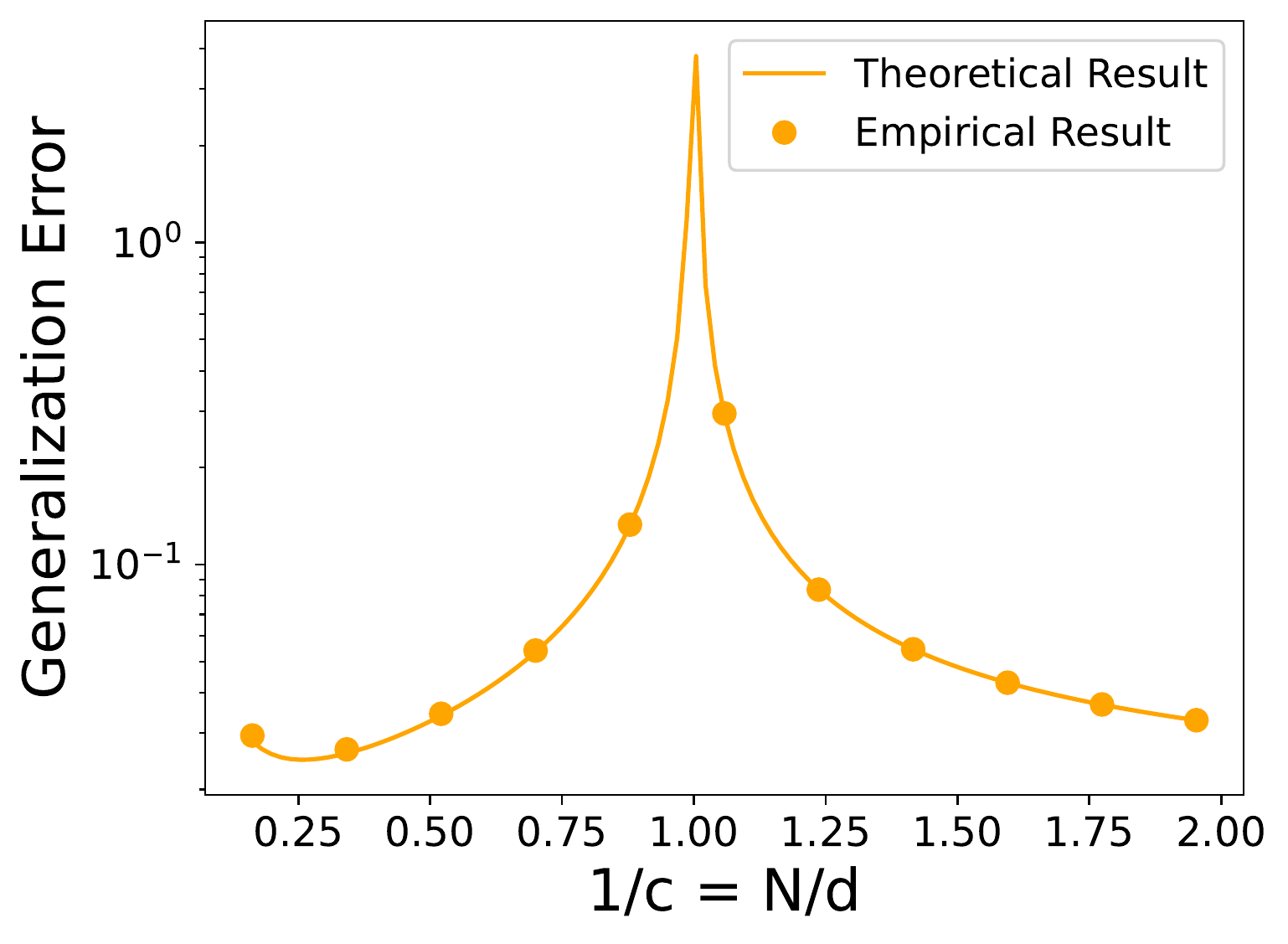}
    \caption{Figure showing the generalization error vs 1/c where training and test datasets are both I.I.D. The theoretical solid curve is obtained from Corollary \ref{cor:iid-train-test}. The empirical generalization error, shown by markers, is averaged over 50 trials.}
    \label{fig:iid_both_error}
\end{figure}

\section{Proofs}
\label{app:proofs}

In all proofs, WLOG we assume $d/N = c$ since even though $d/N = c +o(1)$, the \textit{relative} error we will accumulate from this assumption be $o(1)$. For instance, this means that the absolute error from this assumption in Theorem~\ref{thm:main} will be $o(1/N)$, which can be absorbed into the $o(1/N)$ estimation error in the theorem.

\subsection{Proof for Theorem \ref{thm:main}, Test Error}
\label{sec:proof-main}





\subsubsection{The Overparametrized Regime, $d > N$}\label{ssec:proofs-overparam}

We derive test error bounds for $\beta = I$ in our problem setting. We also denote $W_{opt}$ by $W$ in this subsection, for ease of notation.
We begin by defining some notation. Let $X_{trn} = U\Sigma_{trn}V_{trn}^T$. Here $U$ is $d \times r$ with $U^TU=I$, $\Sigma_{trn}$ is $r \times r$, and $V_{trn}$ is $r \times N$. All of the following matrices are full rank.

\begin{multicols}{2}
\begin{enumerate}[nosep]
    \item $P = -(I - A_{trn}A_{trn}^\dag)U\Sigma_{trn} \in \mathbb{R}^{d \times r}$.
    \item $H = V_{trn}^TA_{trn}^\dag \in \mathbb{R}^{r \times d}$
    \item $Z = I + V_{trn}^TA_{trn}^\dag U\Sigma_{trn} \in \mathbb{R}^{r \times r}$
    \item $K_1 = HH^T + Z(P^TP)^{-1}Z^T \in \mathbb{R}^{r \times r}$.
    
\end{enumerate}
\columnbreak
\begin{enumerate}[nosep] \setcounter{enumi}{4}
    \item $U$ is $d \times r$ with $U^TU = I_{r \times r}$.
    \item $\Sigma_{trn}$ is $r \times r$, with rank $r$.
    \item $A_{trn} = \eta_{trn}\tilde{U} \tilde{\Sigma}\tilde{V}^T$ with $\tilde{U}$ is $d \times d$ unitary and $\tilde{\Sigma}$ is $d \times N$. 
\end{enumerate}
\end{multicols}

We will henceforth drop the subscript $A_{trn}$ in the expectation $\E_{A_{trn}}$.

\begin{lemma}\label{lem:Wopt_c_greater_than_one} If $d>N$ and $A_{trn}$ has full column rank, then
\begin{equation}\label{eq:Wopt_c_greater_than_one}
    W := X_{trn}(X_{trn} + A_{trn})^\dag  = U\Sigma_{trn} (P^TP)^{-1}Z^TK_1^{-1}H - U \Sigma_{trn} Z^{-1}HH^TK_1^{-1}ZP^\dag.
\end{equation}
\end{lemma}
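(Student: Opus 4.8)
The plan is to compute $(X_{trn}+A_{trn})^{\dagger}$ in closed form and then left-multiply by $X_{trn}=U\Sigma_{trn}V_{trn}^{T}$. Write $M:=X_{trn}+A_{trn}=A_{trn}+U\Sigma_{trn}V_{trn}^{T}$, set $G:=A_{trn}^{\dagger}U\Sigma_{trn}\in\R^{N\times r}$ and $S:=I_{N}+GV_{trn}^{T}$. Splitting $U\Sigma_{trn}$ into its component in $\mathrm{col}(A_{trn})$ and its component $-P$ in $\mathrm{col}(A_{trn})^{\perp}$ yields the factorization $M=BC$ with $B:=[\,A_{trn}\ \ {-P}\,]\in\R^{d\times(N+r)}$ and $C:=\begin{bmatrix}S\\ V_{trn}^{T}\end{bmatrix}\in\R^{(N+r)\times N}$, using $A_{trn}A_{trn}^{\dagger}A_{trn}=A_{trn}$ and $A_{trn}^{\dagger}A_{trn}=I_{N}$ (valid since $A_{trn}$ has full column rank when $d>N$). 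Because $r<|d-N|=d-N$, the columns of $A_{trn}$ and of $P$ together span an $(N+r)$-dimensional subspace, so $B$ has full column rank; the $I_{N}$ block inside $S$, together with invertibility of $Z$, makes $C$ full column rank. Hence $M$ has full column rank and $M^{\dagger}=(M^{T}M)^{-1}M^{T}$.

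The first key point is $A_{trn}^{T}P=-A_{trn}^{T}(I-A_{trn}A_{trn}^{\dagger})U\Sigma_{trn}=0$, so $B^{T}B=\mathrm{diag}(A_{trn}^{T}A_{trn},\,P^{T}P)$ is block diagonal and therefore
\[
    M^{T}M=C^{T}(B^{T}B)C=S^{T}DS+V_{trn}(P^{T}P)V_{trn}^{T},\qquad D:=A_{trn}^{T}A_{trn}.
\]
I would invert this with the Woodbury identity. The elementary relations $V_{trn}^{T}G=Z-I$ (hence $S^{-1}=I_{N}-GZ^{-1}V_{trn}^{T}$ and $V_{trn}^{T}S^{-1}=Z^{-1}V_{trn}^{T}$), $D^{-1}=A_{trn}^{\dagger}(A_{trn}^{\dagger})^{T}$, and $V_{trn}^{T}A_{trn}^{\dagger}=H$ give $V_{trn}^{T}(S^{T}DS)^{-1}V_{trn}=Z^{-1}HH^{T}Z^{-T}$. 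The Woodbury capacitance matrix is then
\[
    (P^{T}P)^{-1}+Z^{-1}HH^{T}Z^{-T}=Z^{-1}\big(HH^{T}+Z(P^{T}P)^{-1}Z^{T}\big)Z^{-T}=Z^{-1}K_{1}Z^{-T},
\]
so its inverse is $Z^{T}K_{1}^{-1}Z$ --- this is exactly where $K_{1}$ enters. Assembling the Woodbury expansion, left-multiplying by $V_{trn}^{T}$, and using $V_{trn}^{T}S^{-1}=Z^{-1}V_{trn}^{T}$ together with $I-HH^{T}K_{1}^{-1}=Z(P^{T}P)^{-1}Z^{T}K_{1}^{-1}$ collapses everything to $V_{trn}^{T}(M^{T}M)^{-1}=(P^{T}P)^{-1}Z^{T}K_{1}^{-1}H\,(S^{-1}A_{trn}^{\dagger})^{T}$.

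Finally, right-multiply by $M^{T}=S^{T}A_{trn}^{T}-V_{trn}P^{T}$, using $(S^{-1}A_{trn}^{\dagger})^{T}S^{T}A_{trn}^{T}=A_{trn}A_{trn}^{\dagger}$, $HA_{trn}A_{trn}^{\dagger}=H$, and $(S^{-1}A_{trn}^{\dagger})^{T}V_{trn}=H^{T}Z^{-T}$, to get $V_{trn}^{T}M^{\dagger}=(P^{T}P)^{-1}Z^{T}K_{1}^{-1}H-(P^{T}P)^{-1}Z^{T}K_{1}^{-1}HH^{T}Z^{-T}P^{T}$. The last term is rewritten via the commutation identity $(P^{T}P)^{-1}Z^{T}K_{1}^{-1}HH^{T}Z^{-T}=Z^{-1}HH^{T}K_{1}^{-1}Z(P^{T}P)^{-1}$ (which holds because $HH^{T}$ and $Z(P^{T}P)^{-1}Z^{T}$ commute through $K_{1}^{-1}$, as $K_{1}$ is their sum), turning $Z^{-T}P^{T}$ into $(P^{T}P)^{-1}P^{T}=P^{\dagger}$. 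Left-multiplying by $U\Sigma_{trn}$ then gives precisely the claimed formula. The same chain of manipulations can alternatively be packaged as a direct application of the sum-of-pseudoinverses formula of \cite{wei2001pinv}.

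The main obstacle is bookkeeping rather than ideas: one must (i) verify the three full-rank conditions so that $M^{\dagger}=(M^{T}M)^{-1}M^{T}$ and all push-through/Woodbury manipulations are legitimate, (ii) track transposes and signs carefully, and (iii) spot the two non-obvious collapses --- "capacitance $=Z^{-1}K_{1}Z^{-T}$" and the final commutation identity producing $P^{\dagger}$. A safer alternative is to use the derivation above only to guess $M^{\dagger}$ and then verify the four Moore--Penrose conditions directly, which again reduces to the orthogonality relations $A_{trn}^{T}P=0$, $U^{T}U=I$, $V_{trn}^{T}V_{trn}=I$, and $A_{trn}^{\dagger}A_{trn}=I_{N}$.
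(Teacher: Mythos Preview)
Your proof is correct and arrives at the same formula, but by a genuinely different route than the paper. The paper invokes Corollary~2.2 of \cite{wei2001pinv} as a black-box identity for $(A_{trn}+U\Sigma_{trn}V_{trn}^T)^\dag$, then left-multiplies by $U\Sigma_{trn}V_{trn}^T$ and simplifies term-by-term using the substitutions $V_{trn}^TA_{trn}^\dag=H$, $HU\Sigma_{trn}=Z-I$, and $HH^T+Z(P^TP)^{-1}Z^T=K_1$. You instead derive the pseudoinverse from scratch: you factor $M=BC$ with $B=[A_{trn}\ {-P}]$ and $C=\begin{bmatrix}S\\V_{trn}^T\end{bmatrix}$, use the block-diagonality of $B^TB$ (coming from $A_{trn}^TP=0$) to write $M^TM$ as a low-rank update of $S^TDS$, apply Woodbury, and then collapse via the push-through $V_{trn}^TS^{-1}=Z^{-1}V_{trn}^T$ and the commutation $BK_1^{-1}A=AK_1^{-1}B$ for $A=HH^T$, $B=Z(P^TP)^{-1}Z^T$.

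Your approach is more self-contained---it does not rely on an external pseudoinverse-of-a-sum formula and makes transparent exactly where each structural assumption (full column rank of $A_{trn}$, full rank of $P$ and $Z$, the condition $r<d-N$) is used. The trade-off is more intermediate algebra, and you correctly flag the two non-obvious steps: that the Woodbury capacitance matrix is $Z^{-1}K_1Z^{-T}$, and the commutation identity that turns $Z^{-T}P^T$ into $P^\dag$. The paper's route is shorter on the page but opaque unless one already knows the Wei formula; yours is longer but essentially \emph{rederives} that formula in this special case, as you note in your final sentence.
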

\begin{proof}
Note that $P$ has full column rank and $A_{trn}$ has rank $N$. Thus, we can use corollary 2.2 from \citet{wei2001pinv} to obtain 
\begin{equation*}
    (A_{trn}+U \Sigma_{trn} V_{trn}^T)^\dag = A_{trn}^\dag + A_{trn}^\dag U \Sigma_{trn} P^\dag - (A_{trn}^\dag H^T + A_{trn}^\dag U \Sigma_{trn} (P^TP)^{-1}Z^T)K_1^{-1}(H+ZP^\dag).
\end{equation*}
We are interested in simiplifying the expression. Multiplying this through, we obtain 
\begin{multline*}
    \displaystyle W = 
    U \Sigma_{trn}V_{trn}^TA_{trn}^\dag + U \Sigma_{trn}V_{trn}^TA_{trn}^\dag U \Sigma_{trn} P^\dag \\
    \qquad - U \Sigma_{trn}V_{trn}^T(A_{trn}^\dag H^T + A_{trn}^\dag U \Sigma_{trn} (P^TP)^{-1}Z^T)K_1^{-1}(H+ZP^\dag).
\end{multline*}
Replacing $V_{trn}^TA_{trn} = H$, 
\begin{multline*}
    \displaystyle W = 
    U \Sigma_{trn}H + U \Sigma_{trn}H U \Sigma_{trn} P^\dag  - U \Sigma_{trn}V_{trn}^T(A_{trn}^\dag H^TK_1^{-1}H + A_{trn}^\dag H^TK_1^{-1}ZP^\dag \\ 
    \qquad + A_{trn}^\dag U \Sigma_{trn} (P^TP)^{-1}Z^TK_1^{-1}H + A_{trn}^\dag U \Sigma_{trn} (P^TP)^{-1}Z^TK_1^{-1}ZP^\dag ).
\end{multline*}
Through further simplification, we obtain 
\begin{align*}
    \displaystyle W &= U \Sigma_{trn}H + U \Sigma_{trn}H U \Sigma_{trn} P^\dag - U \Sigma_{trn}HH^TK_1^{-1}H - U \Sigma_{trn}HH^TK_1^{-1}ZP^\dag \\
     & \qquad \qquad - U \Sigma_{trn}H U \Sigma_{trn}(P^TP)^{-1}Z^TK_1^{-1}H - U \Sigma_{trn}H U \Sigma_{trn}(P^TP)^{-1}Z^TK_1^{-1}ZP^\dag. 
\end{align*}
Setting $HU\Sigma_{trn} = Z - I$ yields 
\begin{align*}
    \displaystyle W &= U \Sigma_{trn}H + U \Sigma_{trn}Z P^\dag - U \Sigma_{trn}P^\dag - U \Sigma_{trn}HH^TK_1^{-1}H - U \Sigma_{trn}HH^TK_1^{-1}ZP^\dag \\
     & \qquad \qquad - U \Sigma_{trn}Z(P^TP)^{-1}Z^TK_1^{-1}H + U \Sigma_{trn}(P^TP)^{-1}Z^TK_1^{-1}H \\
      & \quad \qquad \qquad - U \Sigma_{trn}Z(P^TP)^{-1}Z^TK_1^{-1}ZP^\dag + U \Sigma_{trn}(P^TP)^{-1}Z^TK_1^{-1}ZP^\dag. 
\end{align*}
Combining terms and replacing $HH^T + Z(P^TP)^{-1}Z^T = K_1$, we prove
\begin{align*}
    \displaystyle W &= - U \Sigma_{trn}P^\dag + U \Sigma_{trn}(P^TP)^{-1}Z^TK_1^{-1}H + U \Sigma_{trn}(P^TP)^{-1}Z^TK_1^{-1}ZP^\dag, \\
    &= U \Sigma_{trn}(P^TP)^{-1}Z^TK_1^{-1}H - U \Sigma_{trn}Z^{-1}(K_1 - Z(P^TP)^{-1}Z^T)K_1^{-1}ZP^\dag, \\
    &= U \Sigma_{trn}(P^TP)^{-1}Z^TK_1^{-1}H - U \Sigma_{trn}Z^{-1}HH^TK_1^{-1}ZP^\dag.
\end{align*}
\end{proof}

\begin{lemma} \label{lem:bias_over} For $d> N + r$, $X_{tst} - WX_{tst} = U\Sigma_{trn} (P^TP)^{-1}Z^TK_1^{-1}\Sigma_{trn}^{-1} L$.
\end{lemma}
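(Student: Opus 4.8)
The plan is to reduce the claim to a one-line identity among $r\times r$ matrices, using the closed form for $W$ established in Lemma~\ref{lem:Wopt_c_greater_than_one}. Since every column of $X_{tst}$ lies in $\mathcal{V}=\mathrm{col}(U)$, we have $X_{tst}=UU^TX_{tst}=UL$, so $X_{tst}-WX_{tst}=(U-WU)L$, and because $U$ has full column rank it suffices to show
\[
  U-WU \;=\; U\Sigma_{trn}(P^TP)^{-1}Z^TK_1^{-1}\Sigma_{trn}^{-1},
\]
after which we right-multiply by $L$.

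First I would record the two ``contractions against $U$'' that appear once $W$ hits $U$. From $Z = I + V_{trn}^TA_{trn}^\dag U\Sigma_{trn} = I + (HU)\Sigma_{trn}$ we read off $HU = (Z-I)\Sigma_{trn}^{-1}$. For $P$, set $Q := I - A_{trn}A_{trn}^\dag$, a symmetric idempotent, so that $P = -QU\Sigma_{trn}$ and $P^TP = \Sigma_{trn}U^TQU\Sigma_{trn}$; since $P$ has full column rank (this is where $d>N+r$ is used, guaranteeing that $\mathcal{V}$ meets $\mathrm{col}(A_{trn})$ trivially) we have $P^\dag = (P^TP)^{-1}P^T$, which collapses to $P^\dag U = -(P^TP)^{-1}\Sigma_{trn}U^TQU = -\Sigma_{trn}^{-1}$. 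Substituting these together with the formula for $W$ gives
\[
  WU = U\Sigma_{trn}(P^TP)^{-1}Z^TK_1^{-1}(Z-I)\Sigma_{trn}^{-1} + U\Sigma_{trn}Z^{-1}HH^TK_1^{-1}Z\Sigma_{trn}^{-1}.
\]

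Then I would subtract this from $U$, cancel the outer $U$, and conjugate by $\Sigma_{trn}$ (left-multiply by $\Sigma_{trn}^{-1}$, right-multiply by $\Sigma_{trn}$); the desired identity becomes the purely algebraic statement $(P^TP)^{-1}Z^TK_1^{-1}Z + Z^{-1}HH^TK_1^{-1}Z = I$. Right-multiplying by $Z^{-1}$ and left-multiplying by $Z$ turns this into $\bigl(Z(P^TP)^{-1}Z^T + HH^T\bigr)K_1^{-1} = I$, which is exactly the definition $K_1 = HH^T + Z(P^TP)^{-1}Z^T$; this closes the argument.

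There is no genuinely hard step: the proof is a chain of substitutions into the formula of Lemma~\ref{lem:Wopt_c_greater_than_one}. The two things that need care are (i) verifying that every inverse in sight — $\Sigma_{trn}^{-1}$, $Z^{-1}$, $(P^TP)^{-1}$, $K_1^{-1}$ — exists, which is covered by the hypothesis $d>N+r$ plus the standing full-rank assumptions on $A_{trn}$, $\Sigma_{trn}$, $Z$, $K_1$ recorded just before Lemma~\ref{lem:Wopt_c_greater_than_one}, and (ii) keeping the order of the non-commuting factors straight throughout; I expect that bookkeeping to be the only place a slip could occur.
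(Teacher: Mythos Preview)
Your proposal is correct and follows essentially the same approach as the paper: both proofs substitute the formula for $W$ from Lemma~\ref{lem:Wopt_c_greater_than_one}, use the two contractions $HU=(Z-I)\Sigma_{trn}^{-1}$ and $P^\dag U=-\Sigma_{trn}^{-1}$, and then collapse the resulting expression via the definition $K_1=HH^T+Z(P^TP)^{-1}Z^T$. The only difference is organizational---the paper factors out $U\Sigma_{trn}Z^{-1}(\cdots)\Sigma_{trn}^{-1}L$ and simplifies the middle bracket directly, whereas you reduce to the equivalent scalar-free identity $(P^TP)^{-1}Z^TK_1^{-1}Z+Z^{-1}HH^TK_1^{-1}Z=I$ and then conjugate by $Z$; the algebra is the same.
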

\begin{proof}
Here, $X_{tst} = UL$ and $W$ is given by Equation \ref{eq:Wopt_c_greater_than_one}. Substituting this, we get 
\begin{equation*}
    X_{tst} - WX_{tst} = UL - U\Sigma_{trn} (P^TP)^{-1}Z^TK_1^{-1}HUL + U \Sigma_{trn} Z^{-1}HH^TK_1^{-1}ZP^\dag UL.
\end{equation*}
Note that $P^\dag U = -\Sigma_{trn}^{-1}$ and $HU\Sigma_{tst} = V_{trn}^TA_{trn}^\dag U\Sigma_{trn}\Sigma_{trn}^{-1}\Sigma_{tst} = (Z-I)\Sigma_{trn}^{-1}\Sigma_{tst}$ which yields 
\begin{align*}
    X_{tst} - WX_{tst} &= UL - U\Sigma_{trn} (P^TP)^{-1}Z^TK_1^{-1}(Z-I)\Sigma_{trn}^{-1}L - U \Sigma_{trn} Z^{-1}HH^TK_1^{-1}Z\Sigma_{trn}^{-1}L, \\
    &= U\Sigma_{trn}Z^{-1}(Z - Z(P^TP)^{-1}Z^TK_1^{-1}(Z-I) - HH^TK_1^{-1}Z)\Sigma_{trn}^{-1}L, \\
    &= U\Sigma_{trn}Z^{-1}(Z - (Z-I) + HH^TK_1^{-1}(Z-I) - HH^TK_1^{-1}Z)\Sigma_{trn}^{-1}L, \\
    &= U\Sigma_{trn}Z^{-1}(K_1 - HH^T)K_1^{-1}\Sigma_{trn}^{-1}L, \\
    &= U\Sigma_{trn}(P^TP)^{-1}Z^TK_1^{-1}\Sigma_{trn}^{-1}L. 
\end{align*}
\end{proof}

\begin{lemma} \label{lem:cross-orthogonal} Let $a^Tb = 0$, and $Q$ is a Haar uniformly random orthogonal matrix. Let $\tilde{a} = Qa$, and $\tilde{b} = Qb$, then $\mathbb{E}[\tilde{a}_i\tilde{b}_i] = 0$.
\end{lemma}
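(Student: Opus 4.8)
The plan is to reduce the claim to computing a single second-moment matrix $M := \mathbb{E}[Q^T e_i e_i^T Q]$, where $e_i$ is the $i$-th standard basis vector and $n$ denotes the size of $Q$, and then to identify $M$ as a scalar multiple of the identity using the rotational invariance of the Haar measure.

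First I would rewrite the quantity of interest as a quadratic form. Since $\tilde a_i = e_i^T Q a$ and $\tilde b_i = e_i^T Q b$ are scalars, their product is $\tilde a_i \tilde b_i = (e_i^T Q a)(e_i^T Q b) = a^T Q^T e_i e_i^T Q b$. Taking expectations and using linearity, $\mathbb{E}[\tilde a_i \tilde b_i] = a^T M b$ with $M = \mathbb{E}[Q^T e_i e_i^T Q]$.

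Next I would identify $M$. Because $Q$ is Haar-distributed on the orthogonal group, the vector $u := Q^T e_i$ is uniformly distributed on the unit sphere in $\mathbb{R}^n$, so for every fixed orthogonal $R$ the vector $Ru$ has the same law as $u$; hence $R M R^T = \mathbb{E}[(Ru)(Ru)^T] = \mathbb{E}[uu^T] = M$. Thus $M$ commutes with every orthogonal matrix, which forces $M = \lambda I$ for some scalar $\lambda$. Taking the trace gives $n\lambda = \mathbb{E}[\mathrm{Tr}(uu^T)] = \mathbb{E}[\|u\|_2^2] = 1$, so $M = \tfrac{1}{n} I$. Combining, $\mathbb{E}[\tilde a_i \tilde b_i] = a^T M b = \tfrac{1}{n}\, a^T b = 0$ by the hypothesis $a^T b = 0$.

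There is no genuine obstacle here; the only step that deserves a word of care is the passage from ``$M$ commutes with all orthogonal matrices'' to ``$M = \lambda I$'', which is a one-line Schur-type fact. If one prefers to avoid invoking it, the same conclusion follows elementarily by conjugating $Q$ by coordinate permutation matrices (showing all diagonal entries of $M$ coincide) and by diagonal sign-flip matrices (showing the off-diagonal entries vanish); I would use whichever phrasing reads more cleanly in context.
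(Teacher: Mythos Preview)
Your proof is correct. The paper's own argument is shorter and takes a slightly different route: it observes that $\tilde a^T\tilde b = a^T Q^T Q b = a^T b = 0$ deterministically (since $Q$ is orthogonal), so $\sum_i \tilde a_i\tilde b_i = 0$, and then invokes symmetry of the coordinates under the Haar distribution to conclude that each summand has the same expectation, hence each is zero. Your approach instead computes the full second-moment matrix $M = \mathbb{E}[Q^T e_i e_i^T Q] = \tfrac{1}{n}I$ via rotational invariance and then pairs it with $a^Tb=0$. Both arguments ultimately rest on the same Haar symmetry, but the paper uses only the permutation-invariance of coordinates together with the exact constraint $\tilde a^T\tilde b = 0$, whereas you use the full orthogonal invariance to pin down $M$ explicitly. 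Your version yields slightly more (the exact value $1/n$), while the paper's is a two-line argument; either is perfectly adequate here.
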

\begin{proof}
    We note that $\mathbb{E}[\tilde{a}^T\tilde{b}] = \mathbb{E}[a^Tb] = 0$. Then we get the result from symmetry. 
\end{proof}

\begin{lemma} \label{lem:Hterm} For $c > 1$, we have that $\displaystyle \mathbb{E}[HH^T] = \frac{c}{\eta_{trn}^2(c-1)} I_r +o(1)$ 
and the variance of each entry is $O(1/(\eta_{trn}^4N))$. For $c < 1$, we have that 
$\displaystyle \mathbb{E}[HH^T] = \frac{c^2}{\eta_{trn}^2(1-c)}I_r+o(1)$
and the variance is $O(1/(\eta_{trn}^4d))$. 
\end{lemma}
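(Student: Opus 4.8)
The plan is to reduce $HH^T$ to a quadratic form in Haar-random vectors whose coefficient matrix is the pseudoinverse of the noise Gram matrix, and then evaluate it using the rotational bi-invariance and the Marchenko--Pastur limit of Assumption~\ref{noise-assumptions}. \emph{Algebraic reduction.} Since $H = V_{trn}^T A_{trn}^\dag$ and $(A^\dag)^T = (A^T)^\dag$, the identity $A^\dag (A^T)^\dag = (A^T A)^\dag$ gives $HH^T = V_{trn}^T (A_{trn}^T A_{trn})^\dag V_{trn}$. By rotational bi-invariance (point~3 of Assumption~\ref{noise-assumptions}) we may write $A_{trn}^T A_{trn} = \tilde V \Lambda \tilde V^T$ with $\tilde V$ a Haar-random $N\times N$ orthogonal matrix, independent of the diagonal matrix $\Lambda = \eta_{trn}^2\tilde\Sigma^T\tilde\Sigma$ of eigenvalues. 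Then $HH^T = M^T \Lambda^\dag M$, where $M := \tilde V^T V_{trn} \in \R^{N\times r}$ satisfies $M^TM = I_r$ and is distributed as the first $r$ columns of a Haar orthogonal matrix.

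\emph{Mean.} Write $(HH^T)_{ab} = M_{\cdot a}^T \Lambda^\dag M_{\cdot b}$. Each column $M_{\cdot a}$ is uniform on $S^{N-1}$, so $\E[M_{ia}^2] = 1/N$, while distinct columns are orthogonal, so $\E[M_{ia}M_{ib}] = 0$ for $a\ne b$ by Lemma~\ref{lem:cross-orthogonal}. Conditioning on $\Lambda$ gives $\E[HH^T\mid\Lambda] = \tfrac1N \Tr(\Lambda^\dag)\,I_r$, hence $\E[HH^T] = \tfrac1N\E[\Tr((A_{trn}^T A_{trn})^\dag)]\,I_r$. Writing $\tfrac1N\Tr((A_{trn}^T A_{trn})^\dag) = \tfrac{1}{\eta_{trn}^2}\cdot\tfrac1N\sum_{\lambda_i\ne0}(\lambda_i/\eta_{trn}^2)^{-1}$ and invoking point~4 of Assumption~\ref{noise-assumptions}, this quantity concentrates on $\eta_{trn}^{-2}$ times the inverse first moment of the relevant Marchenko--Pastur law restricted to its nonzero part. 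A direct computation of that inverse moment (distinguishing $c>1$, where all $N$ eigenvalues of $A_{trn}^T A_{trn}$ are present and the nonzero spectrum is bounded away from $0$, from $c<1$, where the $d$ nonzero eigenvalues coincide with those of $A_{trn}A_{trn}^T$ and an extra $d/N = c$ prefactor appears) produces the constants $\tfrac{c}{\eta_{trn}^2(c-1)}$ and $\tfrac{c^2}{\eta_{trn}^2(1-c)}$ respectively.

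\emph{Variance.} By the law of total variance, $\Var((HH^T)_{ab}) = \E[\Var((HH^T)_{ab}\mid\Lambda)] + \Var(\E[(HH^T)_{ab}\mid\Lambda])$. Conditionally on $\Lambda$, $(HH^T)_{ab}$ is a quadratic form in the first two columns of a Haar orthogonal matrix, and the standard fourth-moment identities for such forms give $\Var((HH^T)_{ab}\mid\Lambda) = O\big(\Tr(((\Lambda^\dag)^2))/N^2\big)$. Since $\Tr((\Lambda^\dag)^2) = \eta_{trn}^{-4}\sum_{\lambda_i\ne0}(\lambda_i/\eta_{trn}^2)^{-2}$ is $O(N/\eta_{trn}^4)$ when $c>1$ ($N$ nonzero eigenvalues, bounded away from $0$) and $O(d/\eta_{trn}^4)$ when $c<1$ ($d$ nonzero eigenvalues, again away from $0$), the first term is $O(1/(\eta_{trn}^4 N))$ for $c>1$ and $O(1/(\eta_{trn}^4 d))$ for $c<1$. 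The second term, $\Var(\tfrac1N\Tr((A_{trn}^T A_{trn})^\dag))$, is the variance of a linear spectral statistic of the noise matrix, of smaller order (e.g.\ $O(1/N^2)$) by concentration, and is absorbed.

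\emph{Main obstacle.} The delicate point is passing from the weak-in-probability Marchenko--Pastur convergence of Assumption~\ref{noise-assumptions} to convergence of $\E[\Tr((A_{trn}^T A_{trn})^\dag)]$ and of $\E[\Tr(((A_{trn}^T A_{trn})^\dag)^2)]$: the maps $x\mapsto x^{-1}$ and $x\mapsto x^{-2}$ blow up near $0$, so one must separately control the smallest nonzero eigenvalue of $A_{trn}^T A_{trn}$ (a hard-edge estimate) and establish uniform integrability to exchange limit and expectation. The rotational-invariance reduction and the Haar fourth-moment computations are routine by comparison.
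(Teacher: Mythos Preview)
Your proposal is correct and follows essentially the same approach as the paper: both reduce $HH^T$ to $V_{trn}^T(A_{trn}^TA_{trn})^\dag V_{trn}$, exploit the Haar distribution of $\tilde V^TV_{trn}$ (the paper's $a,b$ are your columns $M_{\cdot a},M_{\cdot b}$), invoke Lemma~\ref{lem:cross-orthogonal} for the off-diagonal, and compute the diagonal via the Marchenko--Pastur inverse moment. The only cosmetic differences are that the paper outsources the inverse-moment and hard-edge issues to Lemma~6 of \cite{sonthalia2023training} rather than flagging them as you do, and that the paper computes the variance by explicitly expanding $\big(\sum_k \tilde\sigma_k^{-2}a_kb_k\big)^2$ and plugging in Marchenko--Pastur second moments, whereas you phrase the same bound via the law of total variance and generic Haar fourth-moment identities.
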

\begin{proof}
    Here we see that $HH^T = V_{trn}^TA_{trn}^\dag(A_{trn}^\dag)^T V_{trn} = V_{trn}^T(A_{trn}^T A_{trn})^\dag V_{trn}.$
    Thus, if $V_{trn} = [v_1 \cdots v_r]$. Then we see that $HH^T$ is an $r \times r$ matrix such that 
    \[
        (HH^T)_{ij} = v_i^T (A_{trn}^TA_{trn})^\dag v_j = v_i^T \tilde{V}\frac{1}{\eta_{trn}^2}\tilde{\Sigma}^{-2}\tilde{V}^Tv_{j}^T .
    \]
    Since $\tilde{V}$ is Haar uniformly random and independent of $\tilde{\Sigma}$, using Lemma \ref{lem:cross-orthogonal} for $i \neq j$, we see that the expectation is 0. On the other hand if $i = j$, then using Lemma 6 from \cite{sonthalia2023training}, with $p = N$, $q = d$ and $A = \frac{1}{\eta_{trn}}A_{trn}$, we get that for $c>1$ we have that $\displaystyle \mathbb{E}[v_i^T(A_{trn}^TA_{trn})^\dag v_i]  = \frac{c}{\eta_{trn}^2(c-1)} +o(1).$
    while for $c<1$, $\displaystyle \mathbb{E}[v_i^T(A_{trn}^TA_{trn})^\dag v_i]  = \frac{c^2}{\eta_{trn}^2(1-c)} +o(1).$
    For the variance, let $A_{trn} = \eta_{trn}\tilde{U}\tilde{\Sigma}\tilde{V}^T$, then we have that 
    \begin{align*}
        v_i^T(A_{trn}^TA_{trn})^\dag v_j &= \frac{1}{\eta_{trn}^2} v_i^T \tilde{V} \tilde{\Sigma}^{-2} \tilde{V}^T v_j = \frac{1}{\eta_{trn}^2} a^T \tilde{\Sigma}^{-2} b = \sum_{i=1}^{N} \frac{1}{\eta_{trn}^2} \frac{1}{\tilde{\sigma}_i^2}a_i b_i.
    \end{align*}
    Where $a,b$ are orthogonal vectors (when $i \neq j$). Then for computing the variance when $c>1$,
    \begin{align*}
    \mathbb{E}\left[\left(v_i^T(A_{trn}^TA_{trn})^\dag v_j\right)^2\right] &= \mathbb{E}\left[\left(\frac{1}{\eta_{trn}^2} \sum_{i=1}^{N} \frac{1}{\tilde{\sigma}_i^2}a_i b_i\right)^2\right] \\
    &= \frac{1}{\eta_{trn}^4} \mathbb{E}\left[\sum_{i=1}^{N}\sum_{j=1}^{N} \frac{1}{\tilde{\sigma}_i^2 \tilde{\sigma}_j^2} a_ib_ia_jb_j\right] \\
     \\
    &= \left(\frac{c^2}{\eta_{trn}^4(c-1)^2}+o(1)\right)\mathbb{E}\left[\left(\sum_{i=1}^{N}a_ib_i\right)\left( \sum_{j=1}^{N}a_jb_j\right)\right]\\
    & \qquad + \left(\frac{c^3}{\eta_{trn}^4(c-1)^3} - \frac{c^2}{\eta_{trn}^4(c-1)^2}+o(1)\right)\sum_{i=1}^{N}\mathbb{E}[a_i^2b_i^2]  \\
    &= 0 + \left(\frac{c^2}{\eta_{trn}^4(c-1)^3}+o(1)\right)\sum_{i=1}^N \frac{1}{N^2}+o\left(\frac{1}{N}\right)\\
    &= \frac{c^2}{\eta_{trn}^4(c-1)^3}\frac{1}{N} + o\left(\frac{1}{N}\right).
    \end{align*}
    Here even though $a,b$ are not independent, because of the smaller variance in the entries, the error is absorbed in the $o\left(\frac{1}{N}\right)$ term. When $i=j$, we use the same proof \cite{sonthalia2023training}, to see that the variance is at most $O\left(\frac{1}{\eta^4_{trn} N}\right)$.
    A very similar computation follows for the variance when $c<1$.
\end{proof}

We prove a general result on inverses of matrices that whose expected norms are $\Omega(1)$.

\begin{lemma}\label{lem:inverse-variance}
    If $\|\E[X_N]\| = \Omega(1)$ as $N$ grows and $\Var((X_N)_{ij}) = s_N$, then $\E[X_N^{-1}] = \E[X_N]^{-1} + O(s_N)$. Additionally, if $\Var((X_N - \E[X_N])_{ij}^2) = O(t_N)$, then $\Var((X_N^{-1})_{ij}) = O(s_N+t_N)$.
\end{lemma}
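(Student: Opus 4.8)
The plan is to regard $X_N$ as a small random perturbation of its mean and expand the inverse with the resolvent identity. Write $M_N := \E[X_N]$ and $E_N := X_N - M_N$, so $E_N$ is centered with $\Var((E_N)_{ij}) = s_N$. Since the matrices to which we apply this lemma have fixed dimension (at most $r$), summing variances over entries gives $\E\|E_N\|^2 \le \E\|E_N\|_F^2 = O(s_N)$, hence $\E\|E_N\| = O(\sqrt{s_N})$; by Markov's inequality $\mathbb{P}(\|E_N\| \ge \tfrac12\|M_N^{-1}\|^{-1}) = O(s_N)$ once $s_N$ is small. Here, and throughout, I use that $M_N^{-1}$ is bounded in operator norm, which is the content of $\E[X_N]$ being bounded away from being singular (true in every application, since there $\E[X_N]$ concentrates to a scalar multiple of the identity). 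On the good event $G = \{\|M_N^{-1}\|\,\|E_N\| \le \tfrac12\}$ the matrix $X_N = M_N + E_N$ is invertible and iterating $(M+E)^{-1} = M^{-1} - M^{-1}E(M+E)^{-1}$ once yields
\[
    X_N^{-1} = M_N^{-1} - M_N^{-1}E_N M_N^{-1} + R_N, \qquad R_N := M_N^{-1}E_N M_N^{-1} E_N X_N^{-1},
\]
with $\|R_N\| \le \|M_N^{-1}\|^2\|E_N\|^2\|X_N^{-1}\| \le 2\|M_N^{-1}\|^3\|E_N\|^2$ on $G$.

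For the first claim I would take expectations against $\mathbbm{1}_G$. The leading term gives $M_N^{-1}\mathbb{P}(G) = M_N^{-1} + O(s_N)$; the linear term gives $-M_N^{-1}\E[E_N\mathbbm{1}_G]M_N^{-1}$, and since $\E[E_N]=0$ we have $\|\E[E_N\mathbbm{1}_G]\| = \|\E[E_N\mathbbm{1}_{G^c}]\| \le \sqrt{\E\|E_N\|^2\,\mathbb{P}(G^c)} = O(s_N)$; the remainder gives $\|\E[R_N\mathbbm{1}_G]\| \le 2\|M_N^{-1}\|^3\E\|E_N\|^2 = O(s_N)$. The only piece needing care is the bad-event contribution $\E[X_N^{-1}\mathbbm{1}_{G^c}]$: it is handled using a crude a priori bound on $\|X_N^{-1}\|$ together with $\mathbb{P}(G^c) = O(s_N)$, which in our applications is fine because the relevant matrices ($K_1$, $HH^T$, $P^TP$, $Z$ and their combinations) are positive semidefinite with a deterministically — or with overwhelming probability — bounded inverse. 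Assembling, $\E[X_N^{-1}] = M_N^{-1} + O(s_N)$.

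For the variance claim, fix $i,j$ and bound $\Var((X_N^{-1})_{ij}) \le \E[((X_N^{-1})_{ij} - (M_N^{-1})_{ij})^2]$ (variance is at most the second moment about any constant). On $G$, this difference equals $-(M_N^{-1}E_N M_N^{-1})_{ij} + (R_N)_{ij}$, so the square is at most $2(M_N^{-1}E_N M_N^{-1})_{ij}^2 + 2(R_N)_{ij}^2$. The first expectation is a quadratic form in the entries of $E_N$ with $O(1)$ bounded coefficients, hence $O(s_N)$ after Cauchy--Schwarz on each entry covariance. For the second, $\E(R_N)_{ij}^2 \le 4\|M_N^{-1}\|^6\E\|E_N\|_F^4$, and expanding $\|E_N\|_F^4$ as a sum over quadruples of entries and using $\E(E_N)_{kl}^4 = \Var((E_N)_{kl}^2) + (\E(E_N)_{kl}^2)^2 = O(t_N + s_N^2) = O(t_N + s_N)$ gives $\E\|E_N\|_F^4 = O(t_N + s_N)$; the $G^c$ part is absorbed exactly as before. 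Combining, $\Var((X_N^{-1})_{ij}) = O(s_N + t_N)$.

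I expect the main obstacle to be precisely the bad-event term: the perturbative expansion is only valid where $\|M_N^{-1}E_N\| < 1$, and off that event $X_N^{-1}$ could a priori be enormous, so one needs either a deterministic lower bound on $\sigma_{\min}(X_N)$ or strong (e.g. sub-exponential) tails for $\|E_N\|$ to control its expected contribution. Both are available for the specific matrices this lemma is used on, since they are built from the well-conditioned $\Sigma_{trn}$ and from noise whose spectrum concentrates on the Marchenko--Pastur bulk; everything else is routine operator-norm bookkeeping.
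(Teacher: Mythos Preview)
Your approach is essentially the same as the paper's: write $X_N = M_N + E_N$ and expand the inverse via the resolvent/Taylor identity $X_N^{-1} = M_N^{-1} - M_N^{-1}E_N M_N^{-1} + O(E_N^2)$, then read off the expectation and variance term by term. If anything your version is more careful than the paper's, which applies the expansion formally without isolating the good event $\{\|M_N^{-1}E_N\|<1\}$ or discussing the bad-event contribution you flag; the paper simply asserts $\E[X_N]^{-1}=O(1)$ and proceeds.
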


\begin{proof}
    Let $\delta X_N = X_N - \E[X_n]$.
    Notice that $\delta X_N = O_P(s_N)$ and $\E[\delta X_N] = 0$. Additionally, by the Taylor expansion $(Y+ \delta Y)^{-1} = Y^{-1} + Y^{-1}\delta Y Y^{-1} + O(\delta Y^2)$ we have that 
    \[
        X_N^{-1} = \E[X_N]^{-1} + \E[X_N]^{-1} \delta X_N \E[X_N]^{-1} + O(\delta X_N^2).
    \]
    In particular, since $\E[X_n]^{-1} = O(1)$, we have  
    \[
        O(\E[X_N^{-1}] = \E[X_N]^{-1} + O(\Var((X_N)_{ij})) = \E[X_N]^{-1} + O(s_N).
    \]
    Finally, note that $\Var((\delta X_N^2)_{ij}) = O(t_N)$ by assumption. So, 
    \[
        \Var((X_N^{-1})_{ij}) = \Var((\E[X_N]^{-1} \delta X_N \E[X_N]^{-1})_{ij}) + O(\Var((\delta X_N^2)_{ij})) = O(s_N + t_N)
    \]
    since $\E[X_N]^{-1} = O(1)$.
\end{proof}

\begin{lemma}\label{lem:Pterm} For $c>1$, we claim that $\E[\Sigma_{trn}^{-1}P^TP\Sigma_{trn}^{-1}] = \left(1-\frac{1}{c}\right)I_r $, each entry has variance $O\left(\frac{1}{d}\right)$, and 
\[
    \E[\Sigma_{trn}(P^TP)^{-1}\Sigma_{trn}] = \frac{c}{c-1}I_r + O\left(\frac{1}{{d}}\right).
\]
with element-wise variance $O(1/d)$.
\end{lemma}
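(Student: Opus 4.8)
The plan is to strip away $\Sigma_{trn}$ and reduce $P^TP$ to a purely geometric object: the compression of a Haar-random orthogonal projection to the fixed $r$-dimensional subspace $\operatorname{span}(U)$. First I would observe that, since $A_{trn}A_{trn}^\dag$ is the orthogonal projection onto $\operatorname{col}(A_{trn})$ and $A_{trn}$ has full column rank $N$ (as $d>N$), the matrix $G := I - A_{trn}A_{trn}^\dag$ is the orthogonal projection onto $\operatorname{col}(A_{trn})^\perp$, a subspace of dimension $d-N$. By rotational bi-invariance of $A_{trn}$ (Assumption~\ref{noise-assumptions}, item 3) the left singular vectors $\tilde U$ are Haar, so $\operatorname{col}(A_{trn})$ — hence $\operatorname{col}(A_{trn})^\perp$ — is a uniformly random subspace of the stated dimension, and $G$ is independent of $\Sigma_{trn}$ and $U$. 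Using $P = -GU\Sigma_{trn}$ together with $G^2 = G$ and $G^T = G$, one gets
\[
    \Sigma_{trn}^{-1}P^TP\Sigma_{trn}^{-1} = U^TGU =: M \in \R^{r\times r}, \qquad \Sigma_{trn}(P^TP)^{-1}\Sigma_{trn} = M^{-1}.
\]
Since $\E[G] = \tfrac{d-N}{d}I_d$ by rotational symmetry and $U^TU = I_r$, this immediately gives $\E[M] = \tfrac{d-N}{d}I_r = (1-\tfrac1c)I_r$, the first claim.

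Next I would bound the entrywise variance of $M$. The naive estimate through $\|G-\E G\|_{\mathrm{op}}$ is useless, since a projection fluctuates by $\Theta(1)$ in operator norm; the point is that the columns $u_i$ of $U$ are fixed, hence generic relative to the random eigenspaces of $G$. For a diagonal entry, $M_{ii} = \|Gu_i\|_2^2$, which is exactly $\mathrm{Beta}\!\big(\tfrac{d-N}{2},\tfrac N2\big)$-distributed — projecting a uniform point of $S^{d-1}$ onto a uniformly random $(d-N)$-plane is distributed as $\chi^2_{d-N}/(\chi^2_{d-N}+\chi^2_N)$ — whose variance is $\tfrac{2(d-N)N}{d^2(d+2)} = O(1/d)$ because $c>1$ is a fixed constant, so $(d-N)/d$ and $N/d$ are both $\Theta(1)$. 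For an off-diagonal entry I would polarize: with $v_\pm := (u_i\pm u_j)/\sqrt2$ unit vectors, $M_{ij} = \tfrac12\big(v_+^TGv_+ - v_-^TGv_-\big)$, and each summand is again $\mathrm{Beta}\!\big(\tfrac{d-N}{2},\tfrac N2\big)$-distributed, so $\Var(M_{ij}) \le \Var(v_+^TGv_+)+\Var(v_-^TGv_-) = O(1/d)$. This establishes $\E[M] = (1-\tfrac1c)I_r$ with entrywise variance $O(1/d)$.

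Finally I would invert via Lemma~\ref{lem:inverse-variance} applied to $X_N = M$. We have $\|\E[M]\| = 1-\tfrac1c = \Omega(1)$ and entrywise variance $s_N = O(1/d)$; moreover $|M_{ij}| \le 1$, so the fourth central moment is bounded by a constant times the second, giving $\Var\big((M-\E M)^2_{ij}\big) \le \E\big[(M-\E M)^4_{ij}\big] = O(1/d) =: t_N$. The lemma then yields $\E[M^{-1}] = (\E[M])^{-1} + O(1/d) = \tfrac{c}{c-1}I_r + O(1/d)$ and $\Var\big((M^{-1})_{ij}\big) = O(s_N+t_N) = O(1/d)$; since $M^{-1} = \Sigma_{trn}(P^TP)^{-1}\Sigma_{trn}$, this is the second claim. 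The main obstacle is the variance bound in the middle step: any crude spectral argument fails, so one must identify the exact Beta law of the sphere-to-random-subspace projection (and reduce off-diagonals to it by polarization); the first and last steps are essentially bookkeeping plus one invocation of Lemma~\ref{lem:inverse-variance}.
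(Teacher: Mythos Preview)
Your proposal is correct and follows the same overall architecture as the paper: reduce $\Sigma_{trn}^{-1}P^TP\Sigma_{trn}^{-1}$ to the compression $M = U^T(I-A_{trn}A_{trn}^\dag)U$ of a Haar-random projection, compute $\E[M]$ by symmetry, bound the entrywise variance of $M$, and then invoke Lemma~\ref{lem:inverse-variance} to handle $M^{-1}$.

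The genuine difference is in the variance step. The paper argues combinatorially: writing $A_{trn}A_{trn}^\dag = \tilde U\,\mathrm{diag}(I_N,0)\,\tilde U^T$ with $\tilde U$ Haar, it uses the idempotence identity $\sum_k (A_{trn}A_{trn}^\dag)_{ik}^2 = (A_{trn}A_{trn}^\dag)_{ii}$ together with a cited bound on the diagonal second moment (Lemma~14 of \cite{sonthalia2023training}) and symmetry across off-diagonals to extract $\E[(U^TA_{trn}A_{trn}^\dag U)_{ij}^2] = O(1/d)$; the fourth-moment input to Lemma~\ref{lem:inverse-variance} is handled by the same trick since $(A_{trn}A_{trn}^\dag)^p = A_{trn}A_{trn}^\dag$. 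You instead identify the exact law $M_{ii}\sim\mathrm{Beta}\big(\tfrac{d-N}{2},\tfrac{N}{2}\big)$ and read off the variance, then polarize via $v_\pm=(u_i\pm u_j)/\sqrt2$ to reduce off-diagonals to the same Beta law, and finally use the deterministic bound $|M_{ij}|\le 1$ to control the fourth central moment. Your route is more self-contained (no external lemma citations) and gives the exact constant in the variance, at the cost of needing the Beta-on-the-sphere fact; the paper's route is more elementary in that it only uses projection idempotence and symmetry, but leans on prior work for the diagonal moment. Both are valid and land on the same invocation of Lemma~\ref{lem:inverse-variance}.
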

\begin{proof}
    Recall that $P = -(I - A_{trn}A_{trn}^\dag)U\Sigma_{trn}$
    Thus, we have that 
    \begin{align*}
        P^TP &= \Sigma_{trn}^TU^T(I-A_{trn}A_{trn}^\dag)U\Sigma_{trn}. \\
             &= \Sigma_{trn}^T\Sigma_{trn} - \Sigma_{trn}^TU^TA_{trn}A_{trn}^\dag U\Sigma_{trn} \\
             &= \Sigma_{trn}^T\Sigma_{trn} - \Sigma_{trn}^TU^T\tilde{U}\tilde{\Sigma}\tilde{\Sigma}^\dag \tilde{U}^TU\Sigma_{trn} \\
             &= \Sigma_{trn}^T\Sigma_{trn} - \Sigma_{trn}^TR\begin{bmatrix} I_{N} & 0 \\ 0 & 0_{d-N} \end{bmatrix}R^T\Sigma_{trn}.
    \end{align*}

Where $R$ is a uniformly random $r \times d$ unitary matrix. Then by symmetry (of the sign of rows of $R$), we have that
\[
    \mathbb{E}[P^TP] =  \Sigma_{trn}^2 
    - \Sigma_{trn}^T \left( \frac{1}{c} I_r \right) \Sigma_{trn} = \left(1-\frac{1}{c}\right)\Sigma_{trn}^2.
\]

So, we have that
\[
    \mathbb{E}\left[\Sigma_{trn}^{-1}P^TP\Sigma_{trn}^{-1}\right] = U^T\left(I-\mathbb{E}\left[A_{trn}A_{trn}^\dag\right]\right)U = \left(1-\frac{1}{c}\right)I_r.
\]
Thus to compute the variance, we first compute the variance of $(A_{trn}A_{trn}^\dag )_{ij}$. For this, we first note that 

\[
    \begin{bmatrix} \frac{1}{c}I_{N} & 0 \\ 0 & 0 \end{bmatrix} =  \E\left[\tilde{U}\tilde{\Sigma}\tilde{\Sigma}^\dag \tilde{U}^T\right] = \mathbb{E}\left[A_{trn}A_{trn}^\dag \right] = \mathbb{E}\left[A_{trn}A_{trn}^\dag A_{trn} A_{trn}^\dag\right].
\]

The first equality follows from the symmetry of the signs of the rows of $\tilde{U}$.
Then we can see that 
\[
    \sum_{k}^d (A_{trn}A_{trn}^\dag)_{ik}^2 = \begin{cases} \frac{1}{c} & i \le N \\ 0 & i > N \end{cases}.
\]

From Lemma 14 in \cite{sonthalia2023training}, we have that $\mathbb{E}[(A_{trn}A_{trn}^\dag)^2_{ii}] = \frac{1}{c^2} + \frac{2}{cd} + o(1)$. Then combining this with the computation above and using symmetry, we have that for $i \neq j$ and $\min(i,j) \le N$ 
\[
    \mathbb{E}[(A_{trn}A_{trn}^\dag)^2_{ij}] = \frac{1}{N - 1} \left(\frac{1}{c} - \frac{1}{c^2} + \frac{2}{cd} +o(1) \right).
 \]

Now consider the other (full) SVD of $X_{trn}$ given by $\hat{U}_{d \times d}\hat{\Sigma}_{d \times N}\hat{V}^T_{N \times N}$. Note that the top left $r \times r$ block of $\hat{\Sigma}$ is $\Sigma_{trn}$, and we can choose $\hat{U}$ so that the first $r$ columns of $\hat{U}$ give $U$. Note that since $\hat{U}^T\tilde{U}$ is still uniformly random, the symmetry argument above follows for $\hat{U}^TA_{trn}A_{trn}^\dag \hat{U}$. Additionally, for $i,j \leq r$, $(\hat{U}^TA_{trn}A_{trn}^\dag \hat{U})_{ij} = (U^TA_{trn}A_{trn}^\dag U)_{ij}$ Thus, we see that for $i,j \leq r$
\[
    \mathbb{E}\left[(U^TA_{trn}A_{trn}^\dag U)_{ij}^2\right] = \frac{1}{N - 1} \left(\frac{1}{c} - \frac{1}{c^2} + \frac{2}{cd} +o(1) \right),
\]

while for $i=j$, we get that it is $O\left(\frac{1}{N}\right)$ by Lemma 14 of \citet{sonthalia2023training}. Thus, finally, we have that arranged as a matrix 
\[
    \E\left[(\Sigma_{trn}^{-1}P^TP\Sigma_{trn}^{-1}) \odot (\Sigma_{trn}^{-1}P^TP\Sigma_{trn}^{-1})\right] = O\left(\frac{1}{d}\right). 
\]

By an analogous symmetry argument, since $(A_{trn}A_{trn}^\dag)^i = A_{trn}A_{trn}^\dag$ for any $i$, we can show that 
\[
    \Var\left((U^TA_{trn}A_{trn}^\dag U)_{ij}^2\right) = O\left(\frac{1}{d}\right).
\]
We can in principle show a faster decay for this with a more involved argument, but this is enough for our purposes. We can now apply Lemma~\ref{lem:inverse-variance} with $X_N = I - (U^TA_{trn}A_{trn}^\dag U)$ to see that 
\[
    \E[ \Sigma_{trn}(P^TP)^{-1}\Sigma_{trn}] = \frac{c}{c-1}I_r +  O\left(\frac{1}{d}\right)
\]
and has element-wise variance $O(1/d)$.
\end{proof}

\begin{lemma} \label{lem:Zterm} We have that $\displaystyle \mathbb{E}[Z] = I \text{ and } \Var(Z_{ij}) = O\left(\frac{\|\Sigma_{trn}\|^2}{\eta_{trn}^2d}\right).$
Further, $E[Z\Sigma_{trn}^{-1}] = E[\Sigma_{trn}^{-1}Z] = \Sigma_{trn}^{-1}$ and each element has variance $O\left(\frac{1}{d}\right)$. Finally, 
\[
    \E[Z^{-1}] = I + O\left(\frac{\|\Sigma_{trn}\|^2}{d}\right) \text{ with } \Var((Z^{-1})_{ij}) = O\left(\frac{\|\Sigma_{trn}\|^2}{d} + \frac{\|\Sigma_{trn}\|^4}{d^2}\right).
\]
\end{lemma}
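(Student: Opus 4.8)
The plan is to reduce the whole lemma to estimates on the single random matrix $Z - I = V_{trn}^T A_{trn}^\dag U\Sigma_{trn}$. Writing the SVD $A_{trn} = \eta_{trn}\tilde U\tilde\Sigma\tilde V^T$ with $\tilde U,\tilde V$ Haar and independent of $\tilde\Sigma$ (rotational bi-invariance, Assumption~\ref{noise-assumptions}.3) and $A_{trn}^\dag = \tfrac1{\eta_{trn}}\tilde V\tilde\Sigma^\dag\tilde U^T$, one gets $(Z-I)_{ij} = \sigma_j\,v_i^T A_{trn}^\dag u_j$, where $v_i,u_j$ are the (unit, and independent of $A_{trn}$) $i$-th and $j$-th columns of $V_{trn},U$ and $\sigma_j$ is the $j$-th singular value of $X_{trn}$. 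The three displayed facts about $Z\Sigma_{trn}^{-1}$ and $\Sigma_{trn}^{-1}Z$ are then just the bounds for $Z$ with the $\Sigma_{trn}^{\pm1}$ conjugations carried along, using $\sigma_i/\sigma_j = \Theta(1)$ from Assumption~\ref{data-assumptions}.3.

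First I would handle the means. Since $\mathcal D_{trn}$ is rotationally bi-invariant, $A_{trn}$ and $-A_{trn} = (-I)A_{trn}$ have the same law, hence so do $A_{trn}^\dag$ and $-A_{trn}^\dag$, so $\E[A_{trn}^\dag]=0$; this gives $\E[Z]=I$, $\E[Z\Sigma_{trn}^{-1}]=\E[\Sigma_{trn}^{-1}Z]=\Sigma_{trn}^{-1}$ at once. Next, for the second moments I would integrate out $\tilde U$ first: since $\tilde U^T u_j$ is uniform on the unit sphere in $\R^d$, $\E_{\tilde U}[\tilde U^T u_j u_j^T\tilde U]=\tfrac1d I$, so $\E_{\tilde U}\big[A_{trn}^\dag u_j u_j^T (A_{trn}^\dag)^T\big] = \tfrac1d (A_{trn}^TA_{trn})^\dag$ and therefore $\E\big[(v_i^T A_{trn}^\dag u_j)^2\big] = \tfrac1d\,\E\big[v_i^T(A_{trn}^TA_{trn})^\dag v_i\big]$ — exactly the quantity estimated by Lemma 6 of \cite{sonthalia2023training} that is already used in Lemma~\ref{lem:Hterm}, namely $\tfrac{c}{\eta_{trn}^2(c-1)}\cdot\tfrac1d + o(1/d)$. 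Hence $\Var(Z_{ij})\le\sigma_j^2\cdot O\!\big(1/(\eta_{trn}^2 d)\big) = O\!\big(\|\Sigma_{trn}\|^2/(\eta_{trn}^2 d)\big)$, and the conjugated versions give $\Var((Z\Sigma_{trn}^{-1})_{ij})=\Var((\Sigma_{trn}^{-1}Z)_{ij})=O(1/d)$.

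For the $Z^{-1}$ statements I would apply Lemma~\ref{lem:inverse-variance} to $X_N = Z$: here $\|\E[Z]\|=\|I\|=1=\Omega(1)$ and $s_N = O(\|\Sigma_{trn}\|^2/d)$ from Step~2, so $\E[Z^{-1}] = I + O(\|\Sigma_{trn}\|^2/d)$ is immediate. For the variance I need the additional input $\Var((Z-I)_{ij}^2)=O(t_N)$, which I would obtain from a fourth-moment bound: integrating out $\tilde U$ again via the sphere identity $\E_b[(w^Tb)^4]=\tfrac{3\|w\|^4}{d(d+2)}$ with $w := \tfrac1{\eta_{trn}}(\tilde\Sigma^\dag)^T\tilde V^T v_i$, then taking a second sphere average in $a = \tilde V^T v_i$ and using (as in Lemmas~\ref{lem:Hterm} and~\ref{lem:Pterm}, valid since $c>1$ so the $N$ nonzero eigenvalues of $A_{trn}^TA_{trn}$ are bounded away from $0$ and $\infty$ with overwhelming probability by Assumption~\ref{noise-assumptions}.4) that $\sum_k 1/\tilde\sigma_k^2 = \Theta(N)$. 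This yields $\E[(v_i^T A_{trn}^\dag u_j)^4] = O\!\big(1/(\eta_{trn}^4 d^2)\big)$, hence $\Var((Z-I)_{ij}^2)\le\E[(Z-I)_{ij}^4] = O(\|\Sigma_{trn}\|^4/d^2)=:t_N$, and Lemma~\ref{lem:inverse-variance} then gives $\Var((Z^{-1})_{ij}) = O(s_N+t_N) = O\!\big(\|\Sigma_{trn}\|^2/d + \|\Sigma_{trn}\|^4/d^2\big)$, as claimed.

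The main obstacle is not any single estimate but rather (i) the bookkeeping in the Haar integrations, so that everything funnels cleanly into the $v^T(A_{trn}^TA_{trn})^\dag v$ quantities that \cite{sonthalia2023training} already controls, and (ii) the implicit requirement that the Taylor step inside Lemma~\ref{lem:inverse-variance} be legitimate, which needs $s_N = o(1)$, i.e. $\|\Sigma_{trn}\|^2 = o(d)$ — precisely the regime where the overparametrized bound of Theorem~\ref{thm:main} is non-trivial, and the reason that theorem carries the $O(\|\Sigma_{trn}\|^2/N^2)$ error term. Everything else is a routine (if lengthy) moment computation.
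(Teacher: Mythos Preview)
Your proposal is correct and follows essentially the same route as the paper: write $(Z-I)_{ij}=\sigma_j\,v_i^T A_{trn}^\dag u_j$, get $\E[Z]=I$ by sign symmetry, reduce the second moment to the $v^T(A_{trn}^TA_{trn})^\dag v$ quantities already controlled in Lemma~\ref{lem:Hterm} via \cite{sonthalia2023training}, pass to $Z\Sigma_{trn}^{-1}$ and $\Sigma_{trn}^{-1}Z$ using $\sigma_i/\sigma_j=\Theta(1)$, and finish $Z^{-1}$ by feeding a fourth-moment bound into Lemma~\ref{lem:inverse-variance}. The only stylistic difference is that the paper writes $T_{ij}=\sigma_j\sum_k \lambda_k^{-1}a_kb_k$ and bounds $\Var(T_{ij}^2)$ by expanding the sum and invoking $\E[\lambda_k^{-4}]=O(1)$ from Lemma~5 of \cite{sonthalia2023training}, whereas you integrate out $\tilde U$ and $\tilde V$ sequentially via sphere moments; both arrive at $\E[(Z-I)_{ij}^4]=O(\|\Sigma_{trn}\|^4/d^2)$ and are equivalent in substance.
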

\begin{proof}
    The element-wise variance and expectation of $Z$ can be computed exactly as in the proof of Lemma 11 in \citet{sonthalia2023training}. Specifically, by considering the row $u_j$ of $U$ and the row $v_i$ of $V$, treating $Z_{ij}$ as $\beta$, and replacing $\theta_{trn}$ by $\sigma_j$. The expressions for the element-wise expectation and variance of $Z\Sigma_{trn}^{-1}$ and $\Sigma_{trn}^{-1}Z$ immediately follow from those of $Z$ and the fact that $\sigma_i/\sigma_j = \Theta(1)$ by Assumption~\ref{data-assumptions}. 

     For $Z^{-1}$, we continue the computation using $Z_{ij} = 1 + T_{ij}$ with $\displaystyle T_{ij} = \sigma_j\sum_{k=1}^{\min(d,N)} \frac{1}{\lambda_k}a_k b_k$ with $a$ and $b$ obtained using $v_j$ and $u_i$ respectively, and $\lambda_k$ a singular value of $A_{trn}$. It is easy to check that $\displaystyle \Var(T_{ij}^2) = O\left(\frac{\|\Sigma_{trn}\|^4}{N^2}\right)$ using a symmetry argument for $a_k$ and $b_k$ and the fact that $\E[1/\lambda_k^4] = O(1)$ by Lemma 5 of \cite{sonthalia2023training}. Now we can use Lemma~\ref{lem:inverse-variance} to conclude that 
    \[
        \E[Z^{-1}] = I + O\left(\frac{\|\Sigma_{trn}\|^2}{d}\right) \text{ with } \Var((Z^{-1})_{ij}) = O\left(\frac{\|\Sigma_{trn}\|^2}{d} + \frac{\|\Sigma_{trn}\|^4}{d^2}\right).
    \]
\end{proof}

\begin{lemma} \label{lem:K1term} For $c>1$, $\E[K_1] =  \frac{1}{\eta_{trn}^2}\frac{c}{c-1}I_r + \frac{c}{c-1} \Sigma_{trn}^{-2} + o(1)$ with element-wise variance $O(1/d)$. Further,
\[
    \E[K_1^{-1}] = \eta_{trn}^2\left(1 - \frac{1}{c}\right)\left({\eta_{trn}^2} \Sigma_{trn}^{-2} + I_r\right)^{-1} + o(1)
\]
with element-wise variance $O(1/d)$.
\end{lemma}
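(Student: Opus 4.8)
The plan is to read off $K_1 = HH^T + Z(P^TP)^{-1}Z^T$ from its definition, estimate each summand using the building blocks already controlled in Lemmas~\ref{lem:Hterm}, \ref{lem:Pterm} and~\ref{lem:Zterm}, and then pass to the inverse via Lemma~\ref{lem:inverse-variance}. First I would rewrite the second summand so that all factors have known first and second moments:
\[
    Z(P^TP)^{-1}Z^T = \bigl(Z\Sigma_{trn}^{-1}\bigr)\bigl(\Sigma_{trn}(P^TP)^{-1}\Sigma_{trn}\bigr)\bigl(\Sigma_{trn}^{-1}Z^T\bigr).
\]
By Lemma~\ref{lem:Pterm} the middle factor has mean $\tfrac{c}{c-1}I_r + O(1/d)$ and element-wise variance $O(1/d)$, and by Lemma~\ref{lem:Zterm} the outer factors $\Sigma_{trn}^{-1}Z$ and $Z^T\Sigma_{trn}^{-1}$ have mean $\Sigma_{trn}^{-1}$ and element-wise variance $O(1/d)$.

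Second, I would compute $\E[K_1]$. All three factors above concentrate about their means, and the two outer factors have means of norm $\|\Sigma_{trn}^{-1}\| = o(1)$ by Assumption~\ref{data-assumptions}; expanding the triple product into mean-plus-fluctuation and using that the expectation of a bounded product of concentrating factors equals the product of the expectations up to lower-order corrections, one gets $\E[Z(P^TP)^{-1}Z^T] = \tfrac{c}{c-1}\Sigma_{trn}^{-2} + o(1)$. Combining with $\E[HH^T] = \tfrac{c}{\eta_{trn}^2(c-1)}I_r + o(1)$ from Lemma~\ref{lem:Hterm} yields
\[
    \E[K_1] = \frac{1}{\eta_{trn}^2}\frac{c}{c-1}I_r + \frac{c}{c-1}\Sigma_{trn}^{-2} + o(1).
\]
For the element-wise variance, the $HH^T$ term contributes $O(1/(\eta_{trn}^4 N)) = O(1/d)$ per entry, and the triple-product term contributes strictly less because of its two $o(1)$-norm factors, so $\Var((K_1)_{ij}) = O(1/d)$.

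Third, I would invert. The identity term gives $\|\E[K_1]\| = \Omega(1)$, so Lemma~\ref{lem:inverse-variance} applies with $X_N = K_1$ and $s_N = O(1/d)$, once one also verifies $\Var((K_1 - \E[K_1])_{ij}^2) = O(1/d)$ (this follows from the bounded fourth moments of the constituents, as in the variance arguments of Lemmas~\ref{lem:Hterm} and~\ref{lem:Zterm}). Hence $\E[K_1^{-1}] = \E[K_1]^{-1} + O(1/d)$ with element-wise variance $O(1/d)$. Factoring $\tfrac{c}{\eta_{trn}^2(c-1)}$ out of $\E[K_1]$ and inverting the remaining positive-definite factor $I_r + \eta_{trn}^2\Sigma_{trn}^{-2}$ (whose inverse is $O(1)$, so the $o(1)$ perturbation stays $o(1)$) gives
\[
    \E[K_1^{-1}] = \eta_{trn}^2\Bigl(1 - \frac{1}{c}\Bigr)\bigl(\eta_{trn}^2\Sigma_{trn}^{-2} + I_r\bigr)^{-1} + o(1),
\]
as claimed.

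The main obstacle is the second step: rigorously justifying that the expectation of the product of the mutually dependent random matrices $Z$, $(P^TP)^{-1}$ and $HH^T$ factors into the product of expectations up to the stated error. This is the same concentration-of-products phenomenon used throughout the proof of Theorem~\ref{thm:main}; the auxiliary ingredient one must not forget is the fourth-order fluctuation bound $\Var((K_1 - \E[K_1])_{ij}^2) = O(1/d)$ needed to invoke Lemma~\ref{lem:inverse-variance}.
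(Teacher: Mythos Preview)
Your proposal is correct and follows essentially the same route as the paper: the paper also rewrites $Z(P^TP)^{-1}Z^T$ as $(Z\Sigma_{trn}^{-1})(\Sigma_{trn}(P^TP)^{-1}\Sigma_{trn})(\Sigma_{trn}^{-1}Z^T)$, combines Lemmas~\ref{lem:Hterm}, \ref{lem:Pterm}, \ref{lem:Zterm}, and then invokes Lemma~\ref{lem:inverse-variance} for the inverse. The only places the paper is more explicit are (i) the product-of-expectations step, which it justifies via $|\E[XY]-\E[X]\E[Y]|\le\sqrt{\Var X\Var Y}$ together with the Bohrnstedt--Goldberger formula for $\mathrm{Cov}(XY,WZ)$, and (ii) the $O(1/d)$ element-wise variance and the bound on $\Var((\delta K_1)_{ij}^2)$, which it obtains by expanding $(\delta K_1)_{ij}$ term by term rather than by your heuristic about ``two $o(1)$-norm factors''.
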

\begin{proof}
    From Lemma~\ref{lem:Pterm}, we have that $\displaystyle \E[\Sigma_{trn}(P^TP)^{-1}\Sigma_{trn}] = \frac{c}{c-1} I_r + O\left(\frac{1}{d}\right).$ Recall that
\[
K_1 = HH^T + Z(P^TP)^{-1}Z^T = HH^T + Z\Sigma_{trn}^{-1} (\Sigma_{trn}(P^TP)^{-1}\Sigma_{trn}) \Sigma_{trn}^{-1} Z^T.
\]

\noindent Then recall from Lemma~\ref{lem:Hterm} that 
\[
    \E[HH^T] = \frac{1}{\eta_{trn}^2}\frac{c}{c-1} I_r + o(1).
\]
For the second term in the expression for $K_1$, we want to use Lemmas~\ref{lem:Pterm} and \ref{lem:Zterm}, but they give expectations of each term separately. Note that 
\[
    |\E[XY] - \E[X]\E[Y]| = |Cov(X,Y)| \leq \sqrt{\Var(X)\Var(Y)}
\]
and also note the following fact, from \cite{bohrnstedt1969covproduct}.
     \begin{align*}
         Cov(XY, WZ) &= \E X \E W Cov(Y, Z) + \E Y \E Z Cov(X,W) + \E X \E Z Cov(Y, W) +\\
         & \qquad \E Y \E W Cov(X, Z) + Cov(X, W)Cov(Y,Z) + Cov(Y, W)Cov(X,Z)
     \end{align*} 
     
We use the facts above along with Lemmas~\ref{lem:Pterm} and \ref{lem:Zterm} to compute the expectation. Specifically, the second term in $K_1$ is the product of three terms $Z\Sigma_{trn}^{-1}$, $(\Sigma_{trn}(P^TP)^{-1}\Sigma_{trn})$, and $\Sigma_{trn}^{-1} Z^T$. Hence we need the first fact to replace the expectation of the product of two terms with the product of the expectation of the two terms. To use this again, we would need to bound the variance of the product. Hence we need the second fact. Doing this computation, we get that
\[
    \E[K_1] = \frac{1}{\eta_{trn}^2}\frac{c}{c-1}I_r + \frac{c}{c-1} \Sigma_{trn}^{-2} + O\left(\frac{1}{d}\right) + o(1)
\]

For the element-wise variance, consider $\delta K_1 = K_1 - \E[K_1]$. We cover the $i \neq j$ case. The $i=j$ case is analogous. From the proofs of Lemmas~\ref{lem:Hterm}, \ref{lem:Pterm}, and \ref{lem:Zterm}, we have $Z_{ij} = I + T_{ij}$ and $(\Sigma_{trn}(P^TP)^{-1}\Sigma_{trn})_{ij} = U^TA_{trn}A_{trn}^\dag U)_{ij}$. The expanding the product, we get that
\begin{align*}
    (\delta K_1)_{ij} &= \left(v_i (A_{trn}^TA_{trn})^\dag v_j\right) + O\left((U^TA_{trn}A_{trn}^\dag U)_{ij}\right) + O\left((U^TA_{trn}A_{trn}^\dag U)_{ij}^2\right)
    + O(T_{ij}) \\
    & + O\left(\sum_{k=1}^N T_{ik}(U^T A_{trn}A_{trn}^\dag U)_{kj}\right) + O\left(\sum_{k=1}^N T_{ik}(U^TA_{trn}A_{trn}^\dag U)_{kj}^2\right) + O\left(\sum_{k=1}^N T_{ik}T_{kj}\right) \\
    &+ O\left(\sum_{k,l=1}^d T_{ik}(U^T A_{trn}A_{trn}^\dag U)_{kl}T_{lj}\right) +  O\left(\sum_{k,l=1}^d T_{ik}(U^T A_{trn}A_{trn}^\dag U)_{kl}^2T_{lj}\right)
\end{align*}

Then since
\[
    \Var(XY) =  Cov(X^2, Y^2) + (\Var(X) + (\E X)^2)(\Var(Y) + (\E Y)^2) - (Cov(X, Y) + \E X \E Y)^2
\]
using this for terms five through nine, we get that $\Var\left((\delta K_1)_{ij}\right) = O\left(\frac{1}{d}\right).$

For the inverse, we cover the $i \neq j$ case again. The $i=j$ case is analogous. We can perform an analogous computation to the one in the proof of Lemma~\ref{lem:Hterm} to get that 
\[
    \Var\left((v_i (A_{trn}^TA_{trn})^\dag v_j)^2\right) = O\left(\frac{1}{N}\right),
\]
using the fact that $\E\left[\frac{1}{\lambda^4}\right] = O(1)$ for a random eigenvalue $\lambda_k$ of $A_{trn}$. We also use the fact that $(A_{trn}A_{trn}^\dag)^p = A_{trn}A_{trn}^\dag$ for any $p$ and a symmetry argument analogous to the one in the proof of Lemma~\ref{lem:Pterm} to note that \[
    \E\left[(U^TA_{trn}A_{trn}^\dag U)_{ij}^p\right] = O\left(\frac{1}{d}\right) \qquad p = 2, \ldots, 8.
\]
One can also check by the arguments in the proof of Lemma~\ref{lem:Zterm} that 
\[
    \E\left[T_{ij}^{2p}\right] = O\left(\frac{\sigma_i^p\sigma_j^p}{d^p}\right) = O(1).
\]
These together with the facts about $\Var(XY)$ and $Cov(XY, ZW)$ above establish after a tedious but straightforward computation that 
\[
    \Var((\delta K_1)_{ij}^2) = O\left(\frac{1}{d}\right).
\]
We can now use Lemma~\ref{lem:inverse-variance} to establish that 
\begin{align*}
    \E[K_1^{-1}] &= \eta_{trn}^2\left(1 - \frac{1}{c}\right)\left({\eta_{trn}^2} \Sigma_{trn}^{-2} + I_r\right)^{-1} + O\left(\frac{1}{d}\right) + o(1) \\
    &= \eta_{trn}^2\left(1 - \frac{1}{c}\right)\left({\eta_{trn}^2} \Sigma_{trn}^{-2} + I_r\right)^{-1} + o(1)
\end{align*}
and $\Var((K_1^{-1})_{ij}) = O\left(\frac{1}{d}\right).$

\end{proof}

\begin{lemma}\label{lem:variance-term-c-greater-than-1} When $c > 1$, we have for $W = W_{opt}$ that
\[
    \E[\|W\|_F^2]= \frac{c}{c-1}\Tr(\Sigma_{trn}^2(\Sigma_{trn}^2 + \eta_{trn}^2 I)^{-1}) + O\left(\frac{\|\Sigma_{trn}\|^2}{{d}}\right) + o(1).
\]
\end{lemma}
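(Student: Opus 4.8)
The plan is to start from the closed form for $W = W_{opt}$ in Lemma~\ref{lem:Wopt_c_greater_than_one}: write $W = UM$ with $M = M_1 - M_2$, where $M_1 = \Sigma_{trn}(P^TP)^{-1}Z^TK_1^{-1}H$ and $M_2 = \Sigma_{trn}Z^{-1}HH^TK_1^{-1}ZP^\dag$. Since $U^TU = I_r$ we have $\|W\|_F^2 = \Tr(M^TM) = \|M_1\|_F^2 + \|M_2\|_F^2 - 2\Tr(M_1M_2^T)$. The first step is to notice that the cross term vanishes \emph{identically}: $M_1$ ends in $H = V_{trn}^TA_{trn}^\dag$ and $M_2^T$ begins with $(P^\dag)^T = P(P^TP)^{-1}$, while $A_{trn}^\dag P = -A_{trn}^\dag(I - A_{trn}A_{trn}^\dag)U\Sigma_{trn} = 0$ because $A_{trn}^\dag A_{trn}A_{trn}^\dag = A_{trn}^\dag$. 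Hence $H(P^\dag)^T = 0$, so $M_1M_2^T = 0$ and $\|W\|_F^2 = \|M_1\|_F^2 + \|M_2\|_F^2$.

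Next, for each of the two norms I would write $\|M_i\|_F^2 = \Tr(M_iM_i^T)$, use cyclicity of the trace, substitute $P^\dag(P^\dag)^T = (P^TP)^{-1}$, and insert factors $I = \Sigma_{trn}^{-1}\Sigma_{trn}$ so that every matrix in the product becomes one of the normalized building blocks whose mean and entrywise variance are already pinned down: $HH^T$ (Lemma~\ref{lem:Hterm}), $\Sigma_{trn}(P^TP)^{-1}\Sigma_{trn}$ (Lemma~\ref{lem:Pterm}), $Z\Sigma_{trn}^{-1}$, $\Sigma_{trn}^{-1}Z$, $Z^{-1}$ (Lemma~\ref{lem:Zterm}), and $K_1^{-1}$ (Lemma~\ref{lem:K1term}). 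For instance,
\[
    \|M_1\|_F^2 = \Tr\Big( (\Sigma_{trn}(P^TP)^{-1}\Sigma_{trn})\,(Z\Sigma_{trn}^{-1})^T\, K_1^{-1}\,(HH^T)\,K_1^{-1}\,(Z\Sigma_{trn}^{-1})\,(\Sigma_{trn}(P^TP)^{-1}\Sigma_{trn}) \Big),
\]
and there is an analogous expression for $\|M_2\|_F^2$ whose outermost factors are $\Sigma_{trn}Z^{-1}$ and $Z^{-T}\Sigma_{trn}$, with $(P^TP)^{-1}$ (conjugated by $Z$) sitting in the middle between the two $K_1^{-1}$'s.

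I would then estimate these expectations by replacing the expectation of a product with the product of expectations, by exactly the mechanism used in the proof of Lemma~\ref{lem:K1term}: iterate $|\E[XY] - \E[X]\E[Y]| = |\Cov(X,Y)| \le \sqrt{\Var(X)\Var(Y)}$ together with the covariance-of-products identity of \citet{bohrnstedt1969covproduct} to bound the variances of partial products. All the building blocks except $Z^{-1}$ have entrywise variance $O(1/d)$ (or smaller, for $HH^T$), and $\E[Z^{-1}] = I_r + O(\|\Sigma_{trn}\|^2/d)$; once the outer $\Sigma_{trn}$'s of $M_2M_2^T$ are cancelled against the $\Sigma_{trn}^{-2}$ coming from $(P^TP)^{-1}$ and the $\Sigma_{trn}^2(\Sigma_{trn}^2+\eta_{trn}^2 I)^{-1}$ coming from $K_1^{-1}$, the accumulated error is $O(\|\Sigma_{trn}\|^2/d) + o(1)$. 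Plugging in $\E[HH^T] = \frac{c}{\eta_{trn}^2(c-1)}I_r$, $\E[\Sigma_{trn}(P^TP)^{-1}\Sigma_{trn}] = \frac{c}{c-1}I_r$, $\E[Z\Sigma_{trn}^{-1}] = \Sigma_{trn}^{-1}$, $\E[Z^{-1}] = I_r$, and $\E[K_1^{-1}] = \eta_{trn}^2\frac{c-1}{c}\Sigma_{trn}^2(\Sigma_{trn}^2+\eta_{trn}^2 I)^{-1}$ (all up to these errors), and using that these matrices all commute since they are functions of $\Sigma_{trn}$, gives $\E[\|M_1\|_F^2] = \frac{c}{c-1}\eta_{trn}^2\Tr(\Sigma_{trn}^2(\Sigma_{trn}^2+\eta_{trn}^2 I)^{-2})$ and $\E[\|M_2\|_F^2] = \frac{c}{c-1}\Tr(\Sigma_{trn}^4(\Sigma_{trn}^2+\eta_{trn}^2 I)^{-2})$, each up to $O(\|\Sigma_{trn}\|^2/d)+o(1)$. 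Adding the two and using $\eta_{trn}^2\Sigma_{trn}^2 + \Sigma_{trn}^4 = \Sigma_{trn}^2(\Sigma_{trn}^2+\eta_{trn}^2 I)$ collapses the sum to $\frac{c}{c-1}\Tr(\Sigma_{trn}^2(\Sigma_{trn}^2+\eta_{trn}^2 I)^{-1})$, which is the claim.

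The hard part is the error bookkeeping in the decoupling step: $W$ is a product of many matrices that are all functions of the single random matrix $A_{trn}$, hence highly dependent, and several of them ($Z$, $Z^{-1}$, $K_1^{-1}$) have fluctuations whose magnitude grows with $\|\Sigma_{trn}\|$. The delicate point is to group the factors so that every $\Sigma_{trn}$ is paired with an adjacent $(P^TP)^{-1}$ or $Z$ and cancels \emph{before} any variance is taken — otherwise the large $\|\Sigma_{trn}\|$ inflates the error — and then to check that the dominant leftover contribution, coming from $\E[Z^{-1}] = I_r + O(\|\Sigma_{trn}\|^2/d)$ squeezed between the outer $\Sigma_{trn}$'s of $M_2M_2^T$, is exactly of order $\|\Sigma_{trn}\|^2/d$, which is precisely the error term recorded in the statement.
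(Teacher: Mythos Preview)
Your proposal is correct and follows essentially the same approach as the paper: the same three-term decomposition of $\Tr(W^TW)$, the same observation that the cross term vanishes identically via $A_{trn}^\dag(I-A_{trn}A_{trn}^\dag)=0$ (the paper phrases it as $P^\dag H^T=0$, which is the transpose of your $H(P^\dag)^T=0$), the same substitution of the building-block estimates from Lemmas~\ref{lem:Hterm}--\ref{lem:K1term}, and the same covariance-of-products bookkeeping to justify replacing $\E[\text{product}]$ by $\prod\E[\cdot]$, with the $O(\|\Sigma_{trn}\|^2/d)$ error arising from $\E[Z^{-1}]$ in the $\|M_2\|_F^2$ term.
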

\begin{proof}
     We first use the estimates for the expectations from Lemmas \ref{lem:Hterm}, \ref{lem:Pterm}, \ref{lem:Zterm}, and \ref{lem:K1term} to get an estimate for the expectation of $\|W\|_F^2$. We get this estimate by treating various matrices in the product as independent. We then bound the deviation of the true expectation from this estimate using the variance estimates above. We begin the calculation as 
    \[
        \|W\|_F^2 = \Tr(W^TW) 
    \]
    Using Lemma \ref{lem:Wopt_c_greater_than_one}, we see that the trace has three terms. The first term is 
    \[
        \Tr\left(H^T(K_1^{-1})^TZ((P^TP)^{-1})^T \Sigma_{trn}^T U^TU\Sigma_{trn} (P^TP)^{-1}Z^TK_1^{-1}H\right).
    \]
    Here we have that $U$ is $d \times r$ with orthonormal columns. Hence we get that $U^TU = I$. Then since the trace is invariant under cyclic permutations, we get the following term
    \[
       \Tr\left((\Sigma_{trn} (P^TP)^{-1} \Sigma_{trn}) (\Sigma_{trn}^{-1}Z^T)K_1^{-1}H H^T(K_1^{-1})^T(Z\Sigma_{trn}^{-1})(\Sigma_{trn}(P^TP)^{-1} \Sigma_{trn})^T\right).
    \]
    Now we use our random matrix theory estimates for various terms in the product. From Lemma \ref{lem:Zterm}, we have that $\mathbb{E}_{A_{trn}}[Z\Sigma_{trn}^{-1}] = \Sigma_{trn}^{-1}$. Thus, that first term's expectation can be estimated by
    \[
        \Tr\left((\Sigma_{trn} (P^TP)^{-1} \Sigma_{trn}) \Sigma_{trn}^{-1} K_1^{-1}H H^T(K_1^{-1})^T\Sigma_{trn}^{-1}(\Sigma_{trn}(P^TP)^{-1} \Sigma_{trn})^T \right).
    \]
    Then using Lemma \ref{lem:Hterm}, we can further estimate this by
    \[
        \frac{1}{\eta_{trn}^2}\frac{c}{c-1}\Tr\left( (\Sigma_{trn}(P^TP)^{-1}\Sigma_{trn}) \Sigma_{trn}^{-1} K_1^{-1}(K_1^{-1})^T\Sigma_{trn}^{-1}(\Sigma_{trn}(P^TP)^{-1} \Sigma_{trn})^T\right) + o(1).
    \]
    Here, the error contribution of the $o(1)$ error from Lemma~\ref{lem:Hterm} is still $o(1)$ since we will see that all the other estimates are $O(1)$. Then we use Lemma \ref{lem:Pterm}, to replace $\Sigma_{trn}(P^TP)^{-1}\Sigma_{trn}$ to get 
    \[
        \frac{1}{\eta_{trn}^2}\frac{c}{c-1}\left(1-\frac{1}{c}\right)^{-2}\Tr\left(\Sigma_{trn}^{-1} K_1^{-1}(K_1^{-1})^T(\Sigma_{trn}^T)^{-1}\right) + o(1).
    \]
    Finally, we use Lemma \ref{lem:K1term} to replace the last term and get 
    \[
        \frac{1}{\eta_{trn}^2}\frac{c}{c-1}\left(\frac{c}{c-1}\right)^{2}\Tr\left(\Sigma_{trn}^{-2}\eta_{trn}^4\left( 1-\frac{1}{c}\right)^2 \left(I_r + \eta_{trn}^2\Sigma_{trn}^{-2}\right)^{-2}\right) + o(1).
    \]
    This immediately simplifies to
    \begin{align*}
       \eta_{trn}^2 \frac{c}{c-1} \Tr\left(\Sigma_{trn}^2(\Sigma_{trn}^2 + \eta_{trn}^2I_r)^{-2}\right) + o(1). \numberthis \label{eqn:Wopt-c-greater-than-1-term-1}
    \end{align*}

    The second term in $\Tr(W^TW)$ is
    \[
    -2\Tr\left(H^T(K_1^{-1})^TZ^T((P^TP)^{-1})^T\Sigma_{trn}^TU^TU\Sigma_{trn}Z^{-1}HH^TZP^\dag\right).
    \]

    We can rearrange this using cyclic invariance to
    \[
    -2\Tr\left((K_1^{-1})^TZ^T\Sigma_{trn}^{-1}(\Sigma_{trn}(P^TP)^{-1}\Sigma_{trn})^T\Sigma_{trn}Z^{-1}HH^TZP^\dag H^T\right).
    \]

    Let us focus on the $P^\dag H^T$ term. Since $P^TP$ is invertible, we have that $P$ has full column rank. Hence we have that 
    \[
        P^\dag = (P^TP)^{-1}P^T .
    \]
    Further, since $P = -(I-A_{trn}A_{trn}^\dag)U\Sigma_{trn}$ and $H = V_{trn}^T A_{trn}^\dag$, we have that 
    \[
        P^\dag H^T = (P^TP)^{-1}\Sigma_{trn}^T U^T (I - A_{trn}A_{trn}^\dag) (A_{trn}^\dag)^T V_{trn}.
    \]
    Finally, we notice that 
    \[
        A_{trn}A_{trn}^\dag(A_{trn}^\dag)^T = (A_{trn}^\dag)^T.
    \]
    Thus, we have that 
    \[
         P^\dag H^T = (P^TP)^{-1}\Sigma_{trn}^T U^T (I - A_{trn}A_{trn}^\dag) (A_{trn}^\dag)^T V_{trn} = 0. \numberthis \label{eqn:Wopt-c-greater-than-1-term-2}
    \]
    Finally, the last term in $\Tr(W^TW)$ is 
    \[
        \Tr\left((P^\dag)^TZ^T (K_1^{-1})^T HH^T (Z^{-1})^T \Sigma_{trn}^T U^T U \Sigma_{trn} Z^{-1} HH^T K_1^{-1} Z P^\dag\right).
    \]
     We note that 
    \[
        P^\dag (P^\dag)^T = (P^TP)^\dag = (P^TP)^{-1}.
    \]
    We use this observation along with cyclic invariance to get that the last term is the same as
    \[
        \Tr\left( (K_1^{-1})^T HH^T  \Sigma_{trn}^2 Z^{-1} HH^T K_1^{-1} Z\Sigma_{trn}^{-1} (\Sigma_{trn}(P^TP)^{-1}\Sigma_{trn}) \Sigma_{trn}^{-1}Z^T\right).
    \]
    We again use Lemmas \ref{lem:Hterm} and \ref{lem:Zterm} to get that its expectation is estimated by 
    \[
        \frac{1}{\eta_{trn}^4}\left(\frac{c}{c-1}\right)^2 \Tr\left((K_1^{-1})^T\Sigma_{trn}^2 K_1^{-1}\Sigma_{trn}^{-1} (\Sigma_{trn}(P^TP)^{-1}\Sigma_{trn}) \Sigma_{trn}^{-1} \right) +  O\left(\frac{\|\Sigma_{trn}\|^2}{d}\right) + o(1).
    \]

    The contribution of the $O\left(\frac{\|\Sigma_{trn}\|^2}{d}\right)$ error from Lemma~\ref{lem:Zterm} is still $O\left(\frac{\|\Sigma_{trn}\|^2}{d}\right)$ since the estimate for the expectation is $O(1)$. We now use Lemma~\ref{lem:Pterm}, and \ref{lem:K1term} to see that the final term's expectation can be estimated by
    \begin{align*}
        &\frac{1}{\eta_{trn}^4}\left(\frac{c}{c-1}\right)^3 \eta_{trn}^4\left(\frac{c-1}{c}\right)^{-2}(I_r + \eta_{trn}\Sigma_{trn}^{-2})^{-2} +   O\left(\frac{\|\Sigma_{trn}\|^2}{d}\right) + o(1)\\
        = &\frac{c}{c-1}\Tr(\Sigma_{trn}^4 (\Sigma_{trn}^2+\eta_{trn}^2I_r)^{-2}) +   O\left(\frac{\|\Sigma_{trn}\|^2}{d}\right)+ o(1). \numberthis \label{eqn:Wopt-c-greater-than-1-term-3}
    \end{align*}

     Finally, to bound the deviation from this estimate, note that for real valued random variables $X,Y$ we have that $|\E[XY] - \E[X]\E[Y]| = |Cov(X,Y)| \leq \sqrt{\Var(X)\Var(Y)}$ and for real valued random variables $X,Y,Z,W$, we have the following fact, from \cite{bohrnstedt1969covproduct}.
     \begin{align*}
         Cov(XY, WZ) &= \E X \E W Cov(Y, Z) + \E Y \E Z Cov(X,W) + \E X \E Z Cov(Y, W) +\\
         & \qquad \E Y \E W Cov(X, Z) + Cov(X, W)Cov(Y,Z) + Cov(Y, W)Cov(X,Z)
     \end{align*}

     We repeatedly apply these two to upper bound the deviation between the product of the expectations in the estimates above and the expectation of the product. It is then straightforward to see that since all variances are $O(1/d)$ except for those of $Z^{-1}$ and $Z$, which are both $O(1)$ whenever $\Sigma_{trn} = O(\sqrt{d})$, the estimation error is $O(1/\sqrt{d}) = o(1)$.

    So, we can conclude that each of the estimates in equations~\ref{eqn:Wopt-c-greater-than-1-term-1}, \ref{eqn:Wopt-c-greater-than-1-term-2} and \ref{eqn:Wopt-c-greater-than-1-term-3} have error $o(1)$. Combining the terms together, we get from equations~\ref{eqn:Wopt-c-greater-than-1-term-1}, \ref{eqn:Wopt-c-greater-than-1-term-2} and \ref{eqn:Wopt-c-greater-than-1-term-3} that 
    \begin{align*}
        \|W\|_F^2 &= \frac{c}{c-1}\Tr\left(\Sigma_{trn}^2(\Sigma_{trn}^2+\eta_{trn}^2I_r)(\Sigma_{trn} + \eta_{trn}^2I_r)^{-2}\right) + O\left(\frac{\|\Sigma_{trn}\|^2}{d}\right) + o(1)\\
        &= \frac{c}{c-1}\Tr(\Sigma_{trn}^2(\Sigma_{trn}^2 + \eta_{trn}^2 I)^{-1}) + O\left(\frac{\|\Sigma_{trn}\|^2}{d}\right) + o(1).
    \end{align*}
\end{proof}

\begin{theorem} \label{thm:overparametrized-gen-error}
When $d > N+r$ and $\beta = I$, then the test error $\mathcal{R}(W, X_{tst})$ for $W = W_{opt}$ is given by
\[
    \frac{\eta_{trn}^4}{N_{tst}} \|\left(\Sigma_{trn}^2 + \eta_{trn}^2I \right)^{-1} L\|_F^2 + \frac{\eta_{tst}^2}{d}\frac{c}{c-1}\Tr(\Sigma_{trn}^2(\Sigma_{trn}^2 + \eta_{trn}^2 I)^{-1}) + O\left(\frac{\|\Sigma_{trn}\|^2}{d^2}\right)+o\left(\frac{1}{d}\right).
\]
\end{theorem}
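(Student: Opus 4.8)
The plan is to reduce the statement to the bias--variance decomposition of \cite{sonthalia2023training} and then plug in the random-matrix estimates already established above. Since $\beta = I$ we have $Y_{tst} = X_{tst}$, so $\mathcal{R}(W, X_{tst}) = \frac{1}{N_{tst}}\|X_{tst} - WX_{tst}\|_F^2 + \frac{\eta_{tst}^2}{d}\|W\|_F^2$, and it suffices to estimate $\E_{A_{trn}}$ of each summand. The variance summand is handled directly: Lemma~\ref{lem:variance-term-c-greater-than-1} gives $\E[\|W\|_F^2] = \frac{c}{c-1}\Tr(\Sigma_{trn}^2(\Sigma_{trn}^2+\eta_{trn}^2I)^{-1}) + O(\|\Sigma_{trn}\|^2/d) + o(1)$, and multiplying by $\eta_{tst}^2/d$ yields the second displayed term together with the remainder $O(\|\Sigma_{trn}\|^2/d^2) + o(1/d)$.

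For the bias summand I would start from Lemma~\ref{lem:bias_over} (valid since $d > N+r$), namely $X_{tst} - WX_{tst} = U\Sigma_{trn}(P^TP)^{-1}Z^TK_1^{-1}\Sigma_{trn}^{-1}L$. Expanding $\|\cdot\|_F^2 = \Tr((\cdot)^T(\cdot))$, using $U^TU = I_r$ and cyclic invariance, and inserting $\Sigma_{trn}^{-1}\Sigma_{trn}$ factors exactly as in the proof of Lemma~\ref{lem:variance-term-c-greater-than-1}, I would regroup the random part into $(K_1^{-1})^T(Z\Sigma_{trn}^{-1})(\Sigma_{trn}(P^TP)^{-1}\Sigma_{trn})(\Sigma_{trn}(P^TP)^{-1}\Sigma_{trn})(\Sigma_{trn}^{-1}Z^T)K_1^{-1}$, sandwiched between the deterministic $\Sigma_{trn}^{-1}LL^T\Sigma_{trn}^{-1}$. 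Substituting the expectations $\E[\Sigma_{trn}(P^TP)^{-1}\Sigma_{trn}] = \frac{c}{c-1}I_r$ (Lemma~\ref{lem:Pterm}), $\E[Z\Sigma_{trn}^{-1}] = \Sigma_{trn}^{-1}$ (Lemma~\ref{lem:Zterm}), and $\E[K_1^{-1}] = \eta_{trn}^2(1-\frac1c)(\eta_{trn}^2\Sigma_{trn}^{-2}+I_r)^{-1}$ (Lemma~\ref{lem:K1term}), the scalar prefactors telescope since $[\eta_{trn}^2(1-\frac1c)]^2(\frac{c}{c-1})^2 = \eta_{trn}^4$, and the diagonal matrices collapse via $(\eta_{trn}^2\Sigma_{trn}^{-2}+I_r)^{-2}\Sigma_{trn}^{-4} = (\Sigma_{trn}^2+\eta_{trn}^2I_r)^{-2}$, leaving $\frac{\eta_{trn}^4}{N_{tst}}\Tr(LL^T(\Sigma_{trn}^2+\eta_{trn}^2I_r)^{-2}) = \frac{\eta_{trn}^4}{N_{tst}}\|(\Sigma_{trn}^2+\eta_{trn}^2I_r)^{-1}L\|_F^2$, which is the first displayed term.

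The main obstacle is the same bookkeeping that appears in Lemma~\ref{lem:variance-term-c-greater-than-1}: justifying that the expectation of the product of the random blocks agrees with the product of their expectations up to negligible error. I would control this by repeatedly applying $|\E[XY] - \E X\,\E Y| \le \sqrt{\Var(X)\Var(Y)}$ together with the covariance-of-products identity of \cite{bohrnstedt1969covproduct}, using the element-wise variance bounds $O(1/d)$ for $\Sigma_{trn}(P^TP)^{-1}\Sigma_{trn}$, $Z\Sigma_{trn}^{-1}$, and $K_1^{-1}$ supplied by Lemmas~\ref{lem:Pterm}, \ref{lem:Zterm}, and \ref{lem:K1term}. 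The deterministic factor $\Sigma_{trn}^{-1}LL^T\Sigma_{trn}^{-1}$ contributes an extra $1/\sigma_r(X_{trn})^2 = o(1)$ on top of this $O(1/d)$ concentration error, and after dividing by $N_{tst}$ (recalling $\|L\|_F^2 = O(N_{tst})$) the bias estimation error is $o(1/d)$, hence subsumed by the remainder $O(\|\Sigma_{trn}\|^2/d^2) + o(1/d)$ already coming from the variance term. Summing the bias and variance estimates yields the claimed formula.
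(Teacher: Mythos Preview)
Your proposal is correct and follows essentially the same approach as the paper's proof: both invoke Lemma~\ref{lem:bias_over} for the bias, expand the Frobenius norm as a trace, regroup via cyclic invariance into the blocks $\Sigma_{trn}(P^TP)^{-1}\Sigma_{trn}$, $Z\Sigma_{trn}^{-1}$, and $K_1^{-1}$, substitute the expectations from Lemmas~\ref{lem:Pterm}, \ref{lem:Zterm}, \ref{lem:K1term}, and control the deviation with the same Cauchy--Schwarz/covariance-of-products bookkeeping, while the variance term is handled directly by Lemma~\ref{lem:variance-term-c-greater-than-1}. The only cosmetic difference is that the paper treats $LL^T/N_{tst}$ as a constant (since train and test are decoupled) rather than explicitly writing $\|L\|_F^2 = O(N_{tst})$, but the effect is identical.
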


\begin{proof}
    From Lemmas \ref{lem:Wopt_c_greater_than_one} and \ref{lem:bias_over}, we have that
    $$\mathcal{R}(W, X_{tst}) = \E \left[\frac{1}{N_{tst}}\|U\Sigma_{trn} (P^TP)^{-1}Z^TK_1^{-1}\Sigma_{trn}^{-1} L\|_F^2 + \frac{\eta_{tst}^2}{d} \|W\|_F^2\right]$$

    To compute the expectation of the first term, we observe that it is given by 
    \[
    \frac{1}{N_{tst}} Tr(U\Sigma_{trn} (P^TP)^{-1}Z^TK_1^{-1}\Sigma_{trn}^{-1} L L^T \Sigma_{trn}^{-1}K_1^{-1} Z (P^TP)^{-1} \Sigma_{trn} U^T).
    \]

    We apply cyclic invariance to get that it is the same as 
    \[
    \frac{1}{N_{tst}} Tr(\Sigma_{trn}^{-1}K_1^{-1} Z \Sigma_{trn}^{-1} (\Sigma_{trn} (P^TP)^{-1} \Sigma_{trn}) (\Sigma_{trn} (P^TP)^{-1} \Sigma_{trn}) \Sigma_{trn}^{-1} Z^TK_1^{-1}\Sigma_{trn}^{-1} L L^T ).
    \]
    
    We finally use Lemmas~\ref{lem:Pterm}, \ref{lem:Zterm}, and \ref{lem:K1term} to estimate it by
    \begin{align*}
        & \frac{1}{N_{tst}} Tr\left(\Sigma_{trn}^{-2} \left(\frac{c}{c-1}\right)^2 \left(\frac{c-1}{c}\right)^2\left(\Sigma_{trn}^{-2} + \frac{1}{\eta_{trn}^2}I \right)^{-2} \Sigma_{trn}^{-2} LL^T\right) +o\left(\frac{1}{d}\right)\\
        &= \frac{\eta_{trn}^4}{N_{tst}} Tr\left((\Sigma_{trn}^2 + \eta_{trn}^2I)^{-2} LL^T\right) + o\left(\frac{1}{d}\right)\\
        &= \frac{\eta_{trn}^4}{N_{tst}} \|\left(\Sigma_{trn}^2 + \eta_{trn}^2I\right)^{-1} L\|_F^2  + o\left(\frac{1}{d}\right)
    \end{align*}
    Since test and train data are decoupled, we can treat $LL^T/N_{tst}$ as a constant as $N$ grows, noting that due the $\Sigma_{trn}^{-2}$, the final estimate is $o(1)$. So, repeating the deviation argument at the end of the proof of Lemma~\ref{lem:variance-term-c-greater-than-1} above, we then have that the deviation from this estimate is $o\left(\frac{1}{d}\right)$.

Combining this with Lemma~\ref{lem:variance-term-c-greater-than-1}, we get that
\[
    \frac{\eta_{trn}^4}{N_{tst}} \|\left(\Sigma_{trn}^2 + \eta_{trn}^2I \right)^{-1} L\|_F^2 + \frac{\eta_{tst}^2}{d}\frac{c}{c-1}\Tr(\Sigma_{trn}^2(\Sigma_{trn}^2 + \eta_{trn}^2 I)^{-1}) + O\left(\frac{\|\Sigma_{trn}^2\|}{{d}^2}\right) + o\left(\frac{1}{d}\right).
\]
\end{proof}

\subsubsection{The Underparametrized Regime, $d < N$}\label{ssec:proofs-underparam}

We derive test error bounds for $\beta = I$ in our problem setting. We also denote $W_{opt}$ by $W$ in this subsection, for ease of notation.

We begin by defining some notation. Let $X_{trn} = U\Sigma_{trn}V_{trn}^T$. Here $U$ is $d \times r$ with $U^TU=I$, $\Sigma_{trn}$ is $r \times r$, and $V_{trn}$ is $r \times N$. All of the following matrices are full rank.

\begin{multicols}{2}
\begin{enumerate}[nosep]
    \item  $Q = V^T(I - A_{trn}^\dag A_{trn}) \in \mathbb{R}^{N \times r}$.
    \item $H = V_{trn}^TA_{trn}^\dag \in \mathbb{R}^{r \times d}$
    \item $Z = I + V_{trn}^TA_{trn}^\dag U\Sigma_{trn} \in \mathbb{R}^{r \times r}$
    \item $K = -A_{trn}^\dag U\Sigma_{trn} \in \mathbb{R}^{N \times r}$.
    
\end{enumerate}
\columnbreak
\begin{enumerate}[nosep] \setcounter{enumi}{4}
    \item $H_1 = K^TK + Z^T(QQ^T)^{-1}Z \in \mathbb{R}^{r \times r}$.
    \item $U$ is $d \times r$ with $U^TU = I_{r \times r}$.
    \item $\Sigma_{trn}$ is $r \times r$, with rank $r$.
    \item $A_{trn} = \eta_{trn}\tilde{U} \tilde{\Sigma}\tilde{V}^T$ with $\tilde{U}$ is $d \times d$ unitary and $\tilde{\Sigma}$ is $d \times N$. 
\end{enumerate}
\end{multicols}

We will henceforth drop the subscript $A_{trn}$ in the expectation $\E_{A_{trn}}$.

\begin{lemma}\label{lem:Wopt_c_less_than_one}
When $d < N-r$, we have that
\[
    W = -U\Sigma_{trn} H_1^{-1} K^TA_{trn}^\dag + U \Sigma_{trn} H_1^{-1}Z^T(QQ^T)^{-1}H.
\]
\end{lemma}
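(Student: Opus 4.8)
The plan is to mirror the proof of Lemma~\ref{lem:Wopt_c_greater_than_one} in the dual configuration. Since $d < N$ and $A_{trn}$ has full row rank, we have $A_{trn}A_{trn}^\dag = I_d$, so the perturbation $U\Sigma_{trn}V_{trn}^T$ added to $A_{trn}$ contributes no column directions outside the range of $A_{trn}$; that is, $(I - A_{trn}A_{trn}^\dag)U\Sigma_{trn} = 0$. The novelty now lives on the row side: $Q = V_{trn}^T(I - A_{trn}^\dag A_{trn})$ is the component of the right singular directions of $X_{trn}$ lying outside the row space of $A_{trn}$, and by the hypothesis $r < N - d$ together with the rotational bi-invariance of $A_{trn}$ (which makes its row space a uniformly random $d$-dimensional subspace independent of $V_{trn}$), $Q$ has full rank $r$ almost surely; similarly $Z$ and $H_1$ are invertible, as claimed in the lemma's preamble. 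This is exactly the configuration covered by the companion case of the rank-$r$ pseudoinverse update formula of \citet{wei2001pinv} — equivalently, one may apply Corollary 2.2 of \citet{wei2001pinv} to the transposed sum $A_{trn}^T + V_{trn}\Sigma_{trn}U^T$ (where $A_{trn}^T$ now has full column rank and $V_{trn}$ sticks out of its range) and transpose the result back. Either route gives a closed form for $(A_{trn} + U\Sigma_{trn}V_{trn}^T)^\dag$ expressed through $A_{trn}^\dag$, $Q$ (or $Q^\dag = Q^T(QQ^T)^{-1}$), $H$, $Z$, $K$, and $H_1$.

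With that formula in hand, I would left-multiply by $X_{trn} = U\Sigma_{trn}V_{trn}^T$ and expand, then simplify exactly as in Lemma~\ref{lem:Wopt_c_greater_than_one}, repeatedly invoking the bookkeeping identities $V_{trn}^T A_{trn}^\dag = H$, $V_{trn}^T A_{trn}^\dag U\Sigma_{trn} = Z - I$, $-A_{trn}^\dag U\Sigma_{trn} = K$, $A_{trn}^\dag A_{trn} A_{trn}^\dag = A_{trn}^\dag$, and $Q A_{trn}^\dag = V_{trn}^T(I - A_{trn}^\dag A_{trn})A_{trn}^\dag = 0$, along with the definition $H_1 = K^TK + Z^T(QQ^T)^{-1}Z$. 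Just as the term containing $P^\dag H^T$ vanished in the overparametrized proof, here a term built from $Q A_{trn}^\dag$ (or its transpose) will drop out for the same structural reason, and after rewriting $K^TK = H_1 - Z^T(QQ^T)^{-1}Z$ the surviving terms telescope to $W = -U\Sigma_{trn}H_1^{-1}K^TA_{trn}^\dag + U\Sigma_{trn}H_1^{-1}Z^T(QQ^T)^{-1}H$.

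The main obstacle I expect is purely in the bookkeeping: correctly identifying the relevant case of the Wei update in the row-rank-deficient regime and transcribing its somewhat lengthy formula without sign or transpose slips, then tracking the roughly eight terms produced by the left-multiplication and checking that precisely the intended cross terms cancel. A minor but necessary preliminary is to record why $Q$, $Z$, and $H_1$ have full rank $r$ — so that $(QQ^T)^{-1}$, $Z^{-1}$, and $H_1^{-1}$ are legitimate — which follows from $r < N - d$ and the genericity granted by Assumptions~\ref{data-assumptions} and~\ref{noise-assumptions}.
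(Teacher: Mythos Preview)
Your proposal is correct and follows essentially the same route as the paper: invoke the appropriate case of the Wei pseudoinverse update (the paper cites Corollary~2.3 of \citet{wei2001pinv} directly rather than transposing Corollary~2.2, but these are equivalent), left-multiply by $X_{trn}=U\Sigma_{trn}V_{trn}^T$, and simplify using the bookkeeping identities you list together with $H_1 = K^TK + Z^T(QQ^T)^{-1}Z$.

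One small correction on the mechanism of cancellation. The identity $QA_{trn}^\dag = 0$ you highlight is true, but it is the analog of $P^\dag H^T = 0$, which in the overparametrized section is used in the \emph{variance} lemma, not in the derivation of $W_{opt}$ itself. In the paper's proof of the present lemma, the decisive simplification is instead $V_{trn}^T Q^\dag = I$: since $Q = V_{trn}^T(I - A_{trn}^\dag A_{trn})$ has full row rank and $(I - A_{trn}^\dag A_{trn})$ is an orthogonal projector, $Q^\dag = Q^T(QQ^T)^{-1}$ and hence $V_{trn}^T Q^\dag = QQ^T(QQ^T)^{-1} = I$. This collapses the two residual terms carrying the factor $(I - V_{trn}^T Q^\dag)$ to zero in one stroke, after which the claimed two-term formula for $W$ falls out immediately. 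If you carry out your transposed Corollary~2.2 route, you will arrive at the same place, but be aware that the pivotal identity is $V_{trn}^T Q^\dag = I$ rather than $QA_{trn}^\dag = 0$.
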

\begin{proof}
    We know that $W = X(X+A_{trn})^\dag$. By Corollary 2.3 of \citet{wei2001pinv}, setting $X = -CB$ with $C = -U\Sigma_{trn}$ and $B = V^T$, we have that
    \[
        (X+A_{trn})^\dag = A_{trn}^\dag - Q^\dag H - (K+Q^\dag Z)H_1^{-1}(K^TA_{trn}^\dag - Z^T(QQ^T)^{-1}H).
    \]
    
    So, using the facts that $X = U\Sigma_{trn} V^T$, $K = -A_{trn}^\dag U \Sigma_{trn}$, we have that
    \begin{align*}
        W &= X(X+A_{trn}^\dag) \\
        &= U\Sigma_{trn} V^TA_{trn}^\dag - U \Sigma_{trn} Q^\dag H + U \Sigma_{trn} V^TA_{trn}^\dag U \Sigma_{trn} H_1^{-1}K^T A_{trn}^\dag \\
        & \qquad -  U\Sigma_{trn} V^T Q^\dag Z H_1^{-1}K^TA_{trn}^\dag
         -U\Sigma_{trn}  V^TA_{trn}^\dag U \Sigma_{trn} H_1^{-1} Z^T(QQ^T)^{-1}H \\
         & \qquad + U\Sigma_{trn} V^T Q^\dag Z H_1^{-1}Z^T(QQ^T)^{-1}H.
    \end{align*}
    Using the fact that $H = V^T A_{trn}^\dag$, we get that
    \begin{align*}
        W &= U\Sigma_{trn} H - U \Sigma_{trn} Q^\dag H + U \Sigma_{trn} H U \Sigma_{trn} H_1^{-1}K^T A_{trn}^\dag -  U\Sigma_{trn} V^T Q^\dag Z H_1^{-1}K^TA_{trn}^\dag\\
        & \qquad -U\Sigma_{trn}  H U \Sigma_{trn} H_1^{-1} Z^T(QQ^T)^{-1}Z Z^{-1}H + U\Sigma_{trn} V^T Q^\dag Z H_1^{-1}Z^T(QQ^T)^{-1}Z Z^{-1}H.
        \end{align*}
    Using the fact that $Z = I + V^TA_{trn}^\dag U \Sigma_{trn} = I + HU\Sigma_{trn}$, we get that
    \begin{align*}
        W &= U\Sigma_{trn} H - U \Sigma_{trn} Q^\dag H + U \Sigma_{trn} (Z-I) H_1^{-1}K^T A_{trn}^\dag -  U\Sigma_{trn} V^T Q^\dag Z H_1^{-1}K^TA_{trn}^\dag\\
        & \qquad -U\Sigma_{trn} (Z-I) H_1^{-1} Z^T(QQ^T)^{-1}Z Z^{-1}H + U\Sigma_{trn} V^T Q^\dag Z H_1^{-1}Z^T(QQ^T)^{-1}Z Z^{-1}H.
        \end{align*}
    Using the fact that $H_1 = K^TK + Z^T(QQ^T)^{-1}Z$, we get that
    \begin{align*}
        W &= U\Sigma_{trn} H - U \Sigma_{trn} Q^\dag H + U \Sigma_{trn} Z H_1^{-1}K^T A_{trn}^\dag - U \Sigma_{trn} H_1^{-1}K^T A_{trn}^\dag \\
        & \qquad -  U\Sigma_{trn} V^T Q^\dag Z H_1^{-1}K^TA_{trn}^\dag -U\Sigma_{trn} Z H_1^{-1} (H_1-K^TK)Z^{-1} H \\
        & \qquad + U\Sigma_{trn}  H_1^{-1} Z^T(QQ^T)^{-1}H + U\Sigma_{trn} V^T Q^\dag Z H_1^{-1}(H_1-K^TK)Z^{-1} H\\
        &= U\Sigma_{trn} H - U \Sigma_{trn} Q^\dag H + U \Sigma_{trn} Z H_1^{-1}K^T A_{trn}^\dag - U \Sigma_{trn} H_1^{-1}K^T A_{trn}^\dag \\
        & \qquad -  U\Sigma_{trn} V^T Q^\dag Z H_1^{-1}K^TA_{trn}^\dag -U\Sigma_{trn} H + U\Sigma_{trn} Z H_1^{-1}K^TK Z^{-1} H \\
        & \qquad + U\Sigma_{trn} H_1^{-1} Z^T(QQ^T)^{-1}H + U\Sigma_{trn} V^T Q^\dag H - U\Sigma_{trn} V^T Q^\dag Z H_1^{-1}K^TK Z^{-1} H.
        \end{align*}
    Cancelling terms, we get that
    \begin{align*}
        W &= U \Sigma_{trn} Z H_1^{-1}K^T A_{trn}^\dag - U \Sigma_{trn} H_1^{-1}K^T A_{trn}^\dag -  U\Sigma_{trn} V^T Q^\dag Z H_1^{-1}K^TA_{trn}^\dag\\
        & \qquad + U\Sigma_{trn} Z H_1^{-1}K^TK Z^{-1} H + U\Sigma_{trn} H_1^{-1} Z^T(QQ^T)^{-1}H\\
        & \qquad - U\Sigma_{trn} V^T Q^\dag Z H_1^{-1}K^TK Z^{-1} H.
        \end{align*}
        And we rearrange to get that
    \begin{align*}
        W &= - U \Sigma_{trn} H_1^{-1}K^T A_{trn}^\dag + U\Sigma_{trn} H_1^{-1} Z^T(QQ^T)^{-1}H + U \Sigma_{trn}(I-V^TQ^\dag) Z H_1^{-1}K^T A_{trn}^\dag\\
        & \quad + U\Sigma_{trn}(I-V^T Q^\dag) Z H_1^{-1}K^TK Z^{-1} H\\
        &= - U \Sigma_{trn} H_1^{-1}K^T A_{trn}^\dag + U\Sigma_{trn} H_1^{-1} Z^T(QQ^T)^{-1}H,
    \end{align*}
where the last equality is because $Q = V^T(I - A_{trn}^\dag A_{trn})$ has full rank, so $Q^\dag = Q^T(QQ^T)^{-1}$, so $V^TQ^\dag = V^T(I - A_{trn}^\dag A_{trn})V(V^T(I - A_{trn}^\dag A_{trn})V)^{-1}= I$.
\end{proof}

\begin{lemma}
For $d< N-r$, with notation as in Lemma~\ref{lem:Wopt_c_less_than_one} have that 
$\displaystyle X_{tst} - WX_{tst} = U \Sigma_{trn} H_1^{-1}Z^T(QQ^T)^{-1}\Sigma_{trn}^{-1}L.$
\end{lemma}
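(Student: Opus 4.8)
The plan is to mirror the computation done for the overparametrized regime in Lemma~\ref{lem:bias_over}. Since the columns of $X_{tst}$ lie in $\mathcal{V} = \mathrm{col}(U)$, we have $X_{tst} = UL$ with $L = U^TX_{tst}$, so it suffices to substitute the closed form of $W = W_{opt}$ from Lemma~\ref{lem:Wopt_c_less_than_one} into $X_{tst} - WX_{tst} = UL - W\,UL$ and simplify.

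First I would record the two identities that follow immediately from the definitions $K = -A_{trn}^\dag U\Sigma_{trn}$, $H = V_{trn}^TA_{trn}^\dag$, and $Z = I + V_{trn}^TA_{trn}^\dag U\Sigma_{trn}$: namely $A_{trn}^\dag U = -K\Sigma_{trn}^{-1}$ and $HU\Sigma_{trn} = Z - I$, whence $A_{trn}^\dag U L = -K\Sigma_{trn}^{-1}L$ and $HUL = (Z-I)\Sigma_{trn}^{-1}L$. Plugging these into $W\,UL = -U\Sigma_{trn}H_1^{-1}K^TA_{trn}^\dag\,UL + U\Sigma_{trn}H_1^{-1}Z^T(QQ^T)^{-1}H\,UL$ gives
\[
    W\,UL = U\Sigma_{trn}H_1^{-1}K^TK\Sigma_{trn}^{-1}L + U\Sigma_{trn}H_1^{-1}Z^T(QQ^T)^{-1}(Z-I)\Sigma_{trn}^{-1}L.
\]

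Next I would use the definition $H_1 = K^TK + Z^T(QQ^T)^{-1}Z$ to write $K^TK = H_1 - Z^T(QQ^T)^{-1}Z$ in the first term. The term $U\Sigma_{trn}H_1^{-1}H_1\Sigma_{trn}^{-1}L$ collapses to $UL$, and the two remaining pieces share the factor $U\Sigma_{trn}H_1^{-1}Z^T(QQ^T)^{-1}$, leaving the bracket $-Z + (Z - I) = -I$. Thus $W\,UL = UL - U\Sigma_{trn}H_1^{-1}Z^T(QQ^T)^{-1}\Sigma_{trn}^{-1}L$, and subtracting from $X_{tst} = UL$ yields $X_{tst} - WX_{tst} = U\Sigma_{trn}H_1^{-1}Z^T(QQ^T)^{-1}\Sigma_{trn}^{-1}L$, as claimed.

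This is routine linear algebra once the formula for $W$ is available; the only point needing care — invertibility of $H_1$, $QQ^T$, and $\Sigma_{trn}$ in the regime $d < N - r$ — is already established in Lemma~\ref{lem:Wopt_c_less_than_one}, so I do not anticipate any real obstacle.
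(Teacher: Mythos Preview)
Your proposal is correct and follows essentially the same route as the paper: both substitute the expression for $W$ from Lemma~\ref{lem:Wopt_c_less_than_one}, use the identities $A_{trn}^\dag U = -K\Sigma_{trn}^{-1}$ and $HU = (Z-I)\Sigma_{trn}^{-1}$, and then invoke $H_1 = K^TK + Z^T(QQ^T)^{-1}Z$ to collapse the terms. The only cosmetic difference is that you compute $W\,UL$ first and substitute $K^TK = H_1 - Z^T(QQ^T)^{-1}Z$, whereas the paper writes out $UL - W\,UL$ and groups the $K^TK$ and $Z^T(QQ^T)^{-1}Z$ pieces into $H_1$ directly; the algebra is identical.
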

\begin{proof}
    Note that $
        X_{tst} - WX_{tst} = UL - U \Sigma_{trn} H_1^{-1}K^T A_{trn}^\dag U L - U\Sigma_{trn} H_1^{-1} Z^T(QQ^T)^{-1}H U L.$
    Remember that $K = -A_{trn}U\Sigma$, so $A_{trn}U\Sigma_{tst} = -K\Sigma_{trn}^{-1} \Sigma_{tst}$ and $HU\Sigma_{tst} = (HU\Sigma)\Sigma_{trn}^{-1}\Sigma_{tst} = (Z-I)\Sigma_{trn}^{-1}\Sigma_{tst}$
    This gives us the following equality.
    \begin{align*}
        X_{tst} - WX_{tst} &= UL - U \Sigma_{trn} H_1^{-1}K^T K \Sigma_{trn}^{-1} L- U\Sigma_{trn} H_1^{-1} Z^T(QQ^T)^{-1}Z \Sigma_{trn}^{-1} L\\
        & \qquad + U\Sigma_{trn} H_1^{-1} Z^T(QQ^T)^{-1}\Sigma_{trn}^{-1} L\\
        &= U(I - \Sigma_{trn} H_1^{-1}(K^TK + Z^T(QQ^T)^{-1}Z)\Sigma_{trn}^{-1} + \Sigma_{trn} H_1^{-1} Z^T(QQ^T)^{-1}\Sigma_{trn}^{-1}) L.\\
    \end{align*}
    Using the fact that $H_1 = K^TK + Z^T(QQ^T)^{-1}Z$, we get that
    \begin{align*}
        X_{tst} - WX_{tst} = UL - U \Sigma_{trn} H_1^{-1}H_1\Sigma_{trn}^{-1} L + U\Sigma_{trn} H_1^{-1} Z^T(QQ^T)^{-1}\Sigma_{trn}^{-1} L = U\Sigma_{trn} H_1^{-1} Z^T(QQ^T)^{-1}\Sigma_{trn}^{-1} L.
    \end{align*}
\end{proof}
\begin{lemma}\label{lem:Kterm}
For $c<1$, we have that $\displaystyle \E[\Sigma_{trn}^{-1} K^TK \Sigma_{trn}^{-1}] = \frac{1}{\eta_{trn}^2}\frac{c}{1-c} +o(1)$
and the variance of the $ij^{th}$ entry is $O\left(\frac{1}{N}\right)$.
\end{lemma}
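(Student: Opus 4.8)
The plan is to reduce $\Sigma_{trn}^{-1}K^TK\Sigma_{trn}^{-1}$ to a quadratic form in a Haar-random frame against the inverse noise spectrum, mirroring the treatment of $HH^T$ in Lemma~\ref{lem:Hterm} but with the \emph{left} singular vectors $U$ in place of the right singular vectors $V_{trn}$. First I would simplify: since $K = -A_{trn}^\dag U\Sigma_{trn}$, we have $K^TK = \Sigma_{trn}U^T(A_{trn}^\dag)^TA_{trn}^\dag U\Sigma_{trn}$, and because $c<1$ so that $A_{trn}$ has full row rank with probability one, $(A_{trn}^\dag)^TA_{trn}^\dag = (A_{trn}A_{trn}^T)^{-1}$. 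Hence $\Sigma_{trn}^{-1}K^TK\Sigma_{trn}^{-1} = U^T(A_{trn}A_{trn}^T)^{-1}U$, with no pseudoinverse subtleties since $A_{trn}A_{trn}^T$ is a genuine $d\times d$ invertible matrix in this regime.

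Next, write the SVD $A_{trn} = \eta_{trn}\tilde U\tilde\Sigma\tilde V^T$ and set $D = \tilde\Sigma\tilde\Sigma^T = \mathrm{diag}(\tilde\sigma_1^2,\dots,\tilde\sigma_d^2)$ and $R = \tilde U^TU \in \mathbb{R}^{d\times r}$; by rotational bi-invariance (Assumption~\ref{noise-assumptions}), $R$ is a uniformly random matrix with orthonormal columns, independent of $D$, so $(\Sigma_{trn}^{-1}K^TK\Sigma_{trn}^{-1})_{ij} = \eta_{trn}^{-2}\sum_{k=1}^d R_{ki}R_{kj}\tilde\sigma_k^{-2}$. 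For the off-diagonal entries, conditioning on $D$ and invoking Lemma~\ref{lem:cross-orthogonal} with $a = u_i$, $b = u_j$, $Q = \tilde U^T$ gives $\mathbb{E}[R_{ki}R_{kj}] = 0$, so the expectation is zero. For the diagonal, $\mathbb{E}[R_{ki}^2] = 1/d$ by row-exchangeability together with $\sum_k R_{ki}^2 = 1$, and independence of $R$ and $D$ yields $\mathbb{E}[(\Sigma_{trn}^{-1}K^TK\Sigma_{trn}^{-1})_{ii}] = \frac{1}{\eta_{trn}^2 d}\,\mathbb{E}\!\left[\Tr\big((A_{trn}A_{trn}^T/\eta_{trn}^2)^{-1}\big)\right]$. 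Now apply Lemma~6 of \cite{sonthalia2023training} (or a direct Marchenko--Pastur moment computation, as permitted by Assumption~\ref{noise-assumptions}) exactly as in Lemma~\ref{lem:Hterm}, except that here the trace of the inverse noise matrix is normalized by $d$ rather than by $N$; the extra factor $d/N = c$ is precisely what turns the $c^2/(1-c)$ of Lemma~\ref{lem:Hterm} into $\frac{c}{1-c}$, so that $\mathbb{E}[(\Sigma_{trn}^{-1}K^TK\Sigma_{trn}^{-1})_{ii}] = \frac{1}{\eta_{trn}^2}\frac{c}{1-c} + o(1)$.

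For the variance of the $(i,j)$ entry I would expand $\mathbb{E}[((\Sigma_{trn}^{-1}K^TK\Sigma_{trn}^{-1})_{ij})^2] = \eta_{trn}^{-4}\sum_{k,\ell}\mathbb{E}[R_{ki}R_{\ell i}R_{kj}R_{\ell j}\,\tilde\sigma_k^{-2}\tilde\sigma_\ell^{-2}]$, split into the $k=\ell$ and $k\neq\ell$ pieces, use independence of $R$ and $D$ to separate the spectral factors, and control them with the fourth-moment bound $\mathbb{E}[\tilde\sigma_k^{-4}] = O(1)$ (Lemma~5 of \cite{sonthalia2023training}). The $R$-side moments are $O(1/d^2)$ per term, and for $i\neq j$ the off-diagonal cross terms telescope via $\sum_{k,\ell}R_{ki}R_{\ell i}R_{kj}R_{\ell j} = (r_i^T r_j)^2 = 0$, giving $O(1/d) = O(1/N)$ after summing; the $i=j$ case is handled by the same argument used in \cite{sonthalia2023training} and in the diagonal-variance part of Lemma~\ref{lem:Hterm}. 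Since $\eta_{trn} = \Theta(1)$ and $d = \Theta(N)$, all of these are $O(1/N)$.

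The only real labor is the variance bookkeeping: because $r_i$ and $r_j$ are orthonormal but not independent, one must track the covariances of products just as in Lemmas~\ref{lem:Hterm} and \ref{lem:K1term}, where the sub-$\Theta(1)$ scale of the entries of $R$ is what absorbs the cross terms into the $o(1/N)$ remainder. Everything else is a near-verbatim transcription of the $c<1$ case of Lemma~\ref{lem:Hterm} with $U$ (ambient dimension $d$) in place of $V_{trn}$ (ambient dimension $N$); the one genuinely new observation is that replacing the $\tfrac1N\Tr$ normalization by a $\tfrac1d\Tr$ normalization is exactly what produces the claimed $\frac{c}{1-c}$ rather than $\frac{c^2}{1-c}$.
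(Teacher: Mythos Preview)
Your proposal is correct and follows essentially the same route as the paper: both reduce $\Sigma_{trn}^{-1}K^TK\Sigma_{trn}^{-1}$ to $U^T(A_{trn}A_{trn}^T)^{-1}U$, diagonalize via the Haar frame $R=\tilde U^TU$, use Lemma~\ref{lem:cross-orthogonal} for the off-diagonal expectation, invoke Lemma~6 of \cite{sonthalia2023training} (applied to $A_{trn}^T$) for the diagonal, and bound the variance by the same square-expansion used in Lemma~\ref{lem:Hterm}. One small remark: your telescoping identity $\sum_{k,\ell}R_{ki}R_{\ell i}R_{kj}R_{\ell j}=0$ only applies directly after the spectral weights $\tilde\sigma_k^{-2}\tilde\sigma_\ell^{-2}$ have been replaced by their common limit; the paper makes this explicit by splitting the double sum into a ``mean'' part (which telescopes) plus a diagonal correction, but you already flag this bookkeeping as the main labor.
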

\begin{proof}
    Note that $K^TK = \Sigma_{trn} U^T(A_{trn}A_{trn}^T)^\dag U \Sigma_{trn}$. So, $(K^TK)_{ij} = \sigma_i u_i^T(A_{trn}A_{trn}^T)^\dag u_j \sigma_j$. Using ideas from \citet{sonthalia2023training}, we see that if $i \neq j$, then the expectation is 0. On the other hand if $i = j$, then using Lemma 6 from \cite{sonthalia2023training}, with $p = N$, $q = d$, $A = \frac{1}{\eta_{trn}} A_{trn}^T$, we get that 
    \[
        \E[(\Sigma_{trn}^{-1} K^TK \Sigma_{trn}^{-1})_{ii}] = \frac{1}{\eta_{trn}^2}\frac{c}{1-c} + o(1).
    \]
    The result on the expectation follows immediately from this.
    
    For the variance, pick arbitrary $i \neq j$ and fix them. Consider $a = \Tilde{U}^*u_i$ and $b = \Tilde{U}^*u_j$. They are uniformly random orthogonal unit vectors, not necessarily independent. Now note that
    \begin{align*}
        (\Sigma_{trn}^{-1}(K^TK)\Sigma_{trn}^{-1})_{ij} &= u_i^T(\Tilde{U} \Tilde{\Sigma}\Tilde{\Sigma}^*\Tilde{U}^*)^\dag u_j \\
        &=  u_i^T\Tilde{U} (\Tilde{\Sigma} \Tilde{\Sigma}^*)^\dag \Tilde{U}^* u_j\\
        &=  a^T (\Tilde{\Sigma} \Tilde{\Sigma}^*)^\dag b\\
        &= \sum_{k=1}^d \frac{1}{\tilde{\sigma}_k^2} a_k b_k.
    \end{align*}
    
So, we get that
\begin{align*}
    \E[((\Sigma_{trn}^{-1}(K^TK)\Sigma_{trn}^{-1})_{ij})^2] &= \E\left[ \sum_{k=1}^d \sum_{l=1}^d \frac{1}{\tilde{\sigma}_k^2\tilde{\sigma}_l^2} a_k b_k a_lb_l\right]\\
    &=  \left(\frac{c^2}{(1-c)^2}+o(1)\right)\E\left[ \left(\sum_{k=1}^d a_kb_k\right)^2\right] \\
    &\ \ \ + \left( \frac{c^2}{(1-c)^3}-\frac{c^2}{(1-c)^2}+o(1)\right)\E\left[\sum_{k=1}^d a_k^2b_k^2\right]\\
    &=  \left(\frac{c^3}{(1-c)^3}+o(1)\right)\E\left[\sum_{k=1}^d a_k^2b_k^2\right] \\
    &=  \frac{c^3}{(1-c)^3}\sum_{k=1}^d \E[a_k^2]\E[b_k^2] + o\left(\frac{1}{d}\right),
\end{align*}
where the last line holds due to the following reasoning, even though $a$ and $b$ are not independent. We then use the fact that \[
    \E[a_k^2b_k^2] - \E[a_k^2]\E[b_k^2] \leq \sqrt{\Var(a_k^2)\Var(b_k^2)}
\] 
and Lemma 13 of \cite{sonthalia2023training}, to get that $\Var\left(\sum_{k=1}^d a_k^2\right) = O\left(\frac{1}{d}\right).$
So, by symmetry of coordinates, $\Var(a_k^2) = O\left(\frac{1}{d^2}\right).$
The same holds for $b_k$, giving us that 
\[
    \left|\E[a_k^2b_k^2] - \E[a_k^2]\E[b_k^2]\right| \leq O\left(\frac{1}{d^2}\right).
\]
This gives us that 
\[
    \Var\left((\Sigma_{trn}^{-1}(K^TK)\Sigma_{trn}^{-1})_{ij}^2\right) =  \frac{c^3}{d(1-c)^3} + o\left(\frac{1}{d}\right) \qquad i \neq j.
\]
 
For $i=j$, we use \citet{sonthalia2023training} to see that the variance is $O\left(\frac{1}{d}\right) = O\left(\frac{1}{N}\right)$ since $d = cN$.
\end{proof}

\begin{lemma} \label{lem:KAterm}
For $c<1$, we have that $\displaystyle \E[\Sigma_{trn}^{-1}K^TA_{trn}^\dag (A_{trn}^\dag)^TK \Sigma_{trn}^{-1}] = \frac{1}{\eta_{trn}^2}\frac{c^2}{(1-c)^3} + o(1)$
and the variance of the $ij^{th}$ entry is $O\left(\frac{1}{N}\right)$.
\end{lemma}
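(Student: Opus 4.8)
The plan is to mirror the proofs of Lemma~\ref{lem:Kterm} and Lemma~\ref{lem:Hterm}. The first step is to simplify the matrix algebraically. Since $\Sigma_{trn}$ is symmetric and $K = -A_{trn}^\dag U\Sigma_{trn}$, we have $\Sigma_{trn}^{-1}K^T = -U^T(A_{trn}^\dag)^T$. Using that $A_{trn}$ has full row rank when $c<1$, so that $(A_{trn}^\dag)^TA_{trn}^\dag = (A_{trn}A_{trn}^T)^{-1}$, $(A_{trn}^\dag)^T = (A_{trn}A_{trn}^T)^{-1}A_{trn}$, and $A_{trn}A_{trn}^\dag = I_d$, a short chain of substitutions collapses the product to $\Sigma_{trn}^{-1}K^TA_{trn}^\dag (A_{trn}^\dag)^TK \Sigma_{trn}^{-1} = U^T(A_{trn}A_{trn}^T)^{-2}U$. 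This is precisely the analogue of the identity $\Sigma_{trn}^{-1}K^TK\Sigma_{trn}^{-1} = U^T(A_{trn}A_{trn}^T)^{-1}U$ used in Lemma~\ref{lem:Kterm}, but with the inverse raised to the second power.

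The second step is to compute the entrywise expectation. Writing $A_{trn} = \eta_{trn}\tilde U\tilde\Sigma\tilde V^T$ gives $(A_{trn}A_{trn}^T)^{-2} = \eta_{trn}^{-4}\,\tilde U(\tilde\Sigma\tilde\Sigma^T)^{-2}\tilde U^T$, so setting $a = \tilde U^T u_i$ and $b = \tilde U^T u_j$ (with $u_i$ the $i$-th row of $U$) — Haar-orthogonal unit vectors, not independent but independent of $\tilde\Sigma$ — we obtain $(U^T(A_{trn}A_{trn}^T)^{-2}U)_{ij} = \eta_{trn}^{-4}\sum_{k=1}^{d}\tilde\sigma_k^{-4}a_kb_k$. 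For $i\neq j$ the vectors $a,b$ are orthogonal, so the sign-symmetry argument of Lemma~\ref{lem:cross-orthogonal} forces the expectation to vanish. For $i=j$, coordinate symmetry and independence from $\tilde\Sigma$ give $\E[a_k^2]=1/d$, so the expectation equals $\tfrac{1}{\eta_{trn}^4 d}\,\E\big[\sum_{k=1}^d\tilde\sigma_k^{-4}\big]$, which is the second inverse moment of the empirical spectral distribution of $\tilde\Sigma\tilde\Sigma^T$. By Assumption~\ref{noise-assumptions}(4) together with the hard-edge control of \cite{sonthalia2023training} (the mechanism behind their Lemma~6, using their Lemma~5 for integrability of $\lambda^{-4}$), this converges to the corresponding Marchenko–Pastur integral $\int x^{-2}\,d\mu_c(x)$, which is evaluated in closed form from the Stieltjes transform of $\mu_c$; this yields the claimed value up to $o(1)$.

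The final step is the variance bound, done exactly as in Lemma~\ref{lem:Hterm} and Lemma~\ref{lem:Kterm}: expand $\E[((\cdot)_{ij})^2] = \eta_{trn}^{-8}\E[\sum_{k,\ell}\tilde\sigma_k^{-4}\tilde\sigma_\ell^{-4}a_ka_\ell b_kb_\ell]$, separate the $k=\ell$ and $k\neq\ell$ contributions, and, for $i\neq j$, observe that orthogonality of $a,b$ annihilates the leading term while the remainder is controlled via $\Var(\sum_k a_k^2)=O(1/d)$ (Lemma~13 of \cite{sonthalia2023training}), coordinate symmetry, and boundedness of $\E[\lambda_k^{-4}]$ and the higher inverse moments ($O(1)$ by Lemma~5 of \cite{sonthalia2023training}); this gives variance $O(1/d)=O(1/N)$, and the $i=j$ case follows from the same \cite{sonthalia2023training} estimate. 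I expect the main obstacle to be the random-matrix input: convergence and fluctuation control of the inverse moments of the noise spectrum up to fourth order, which hinges on the smallest singular value of $A_{trn}$ staying bounded away from $0$ — guaranteed strictly when $c<1$ — but is imported wholesale from \cite{sonthalia2023training}; everything else is essentially the bookkeeping already carried out in Lemma~\ref{lem:Kterm}.
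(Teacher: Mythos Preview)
Your proposal is correct and follows essentially the same route as the paper: reduce the matrix to $U^T(A_{trn}A_{trn}^T)^{-2}U$, read off the diagonal expectation from the second inverse Marchenko--Pastur moment via Lemma~6 of \cite{sonthalia2023training}, and bound the variance by expanding $\sum_{k,\ell}\tilde\sigma_k^{-4}\tilde\sigma_\ell^{-4}a_kb_ka_\ell b_\ell$ exactly as in Lemma~\ref{lem:Kterm}. Your algebraic reduction is in fact cleaner than the paper's displayed identity (which carries stray $\sigma_i,\sigma_j$ factors that should have cancelled against $\Sigma_{trn}^{-1}$), and your tracking of the $\eta_{trn}^{-4}$ prefactor is the correct one; the remaining steps and the variance bookkeeping match the paper's argument.
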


\begin{proof}
    Let $M := \Sigma_{trn}^{-1}K^TA_{trn}^\dag (A_{trn}^\dag)^TK\Sigma_{trn}^{-1}$ and note that 
    
    \[
        \Sigma_{trn}^{-1}K^TA_{trn}^\dag (A_{trn}^\dag)^TK\Sigma_{trn}^{-1} = \Sigma_{trn} U^T(A_{trn}A_{trn}^T)^\dag (A_{trn}A_{trn}^T)^\dag U \Sigma_{trn}.
    \]
    
    So, $M_{ij} = \sigma_i u_i^T(A_{trn}A_{trn}^T)^\dag(A_{trn}A_{trn}^T)^\dag u_j \sigma_j.$ Using ideas from \cite{sonthalia2023training}, we see that if $i \neq j$, then the expectation is 0. On the other hand if $i = j$, then using Lemma 6 from \cite{sonthalia2023training}, with $p = N$, $q = d$, we get that 
    \[
        \E[M_{ii}] = \frac{\sigma_i^2}{\eta_{trn}^2}\frac{c^2}{(1-c)^3} + o(1).
    \]
    
    For the variance, pick arbitrary $i \neq j$ and fix them. Consider $a = \Tilde{U}^*u_i$ and $b = \Tilde{U}^*u_j$. They are uniformly random orthogonal unit vectors, not necessarily independent. Now note that
    \begin{align*}
        M_{ij} &= u_i^T(A_{trn}A_{trn}^T)^\dag (A_{trn}A_{trn}^T)^\dag u_j \\
        &= u_i^T(\Tilde{U} \Tilde{\Sigma}\Tilde{\Sigma}^*\Tilde{\Sigma}\Tilde{\Sigma}^*\Tilde{U}^*)^\dag u_j\\
        &= u_i^T\Tilde{U} (\Tilde{\Sigma} \Tilde{\Sigma}^*\Tilde{\Sigma}\Tilde{\Sigma}^*)^\dag \Tilde{U}^* u_j\\
        &= a^T (\Tilde{\Sigma} \Tilde{\Sigma}^*\Tilde{\Sigma}\Tilde{\Sigma}^*)^\dag b\\
        &= \sum_{k=1}^d \frac{1}{\tilde{\sigma}_k^4} a_k b_k.
    \end{align*}
    
So, we get that
\begin{align*}
    \E[M_{ij}^2] &= \E\left[ \left( \sum_{k=1}^d \frac{1}{\tilde{\sigma}_k^4} a_k b_k \right)^2\right]\\
    &= \E\left[ \sum_{k=1}^d \sum_{l=1}^d \frac{1}{\sigma_k^4\sigma_l^4} a_k b_k a_lb_l\right]\\
    &= \left(\frac{c^4(c^2+22/6c + 1)}{(1-c)^7}+o(1)\right)\E\left[ \left(\sum_{k=1}^d a_kb_k\right)^2\right] + (\chi(c)+o(1))\E\left[\sum_{k=1}^d a_k^2b_k^2\right] \\
    &= (\chi(c)+o(1))\E\left[\sum_{k=1}^d a_k^2b_k^2\right] \\
    &= (\chi(c)+o(1))\E\left[\sum_{k=1}^d a_k^2b_k^2\right]\\
    &= \chi(c)\sum_{k=1}^d \E[a_k^2]\E[b_k^2] + o\left(\frac{1}{d}\right),
\end{align*}
where the last line holds due to the argument in the proof of Lemma~\ref{lem:Kterm}. Here $\chi(c)$ is some function of $c$. 
This gives us that $\Var[M_{ij}] =\frac{1}{d} \chi(c) + o\left(\frac{1}{d}\right)$ for $i \neq j$. For $i=j$, we use \citet{sonthalia2023training} to see that the variance is $O\left(\frac{1}{d}\right)$.
\end{proof}

\begin{lemma} \label{lem:Qterm}
For $c<1$, we have that $\E[QQ^T] = (1-c)I_r$ and the variance of each entry is $O\left(\frac{1}{d}\right)$. Further,
\[
    \E[(QQ^T)^{-1}] = \frac{1}{1-c}I_r + O\left(\frac{1}{{d}}\right).
\]
and each element has variance $O(1/d)$
\end{lemma}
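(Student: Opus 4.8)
The plan is to exploit that $I-A_{trn}^\dag A_{trn}$ is the orthogonal projection onto $\ker(A_{trn})$, hence symmetric and idempotent, so that $QQ^T = V_{trn}^T(I-A_{trn}^\dag A_{trn})^2V_{trn} = V_{trn}^T(I-A_{trn}^\dag A_{trn})V_{trn}$, and $V_{trn}^TV_{trn}=I_r$. Writing $A_{trn}=\eta_{trn}\tilde U\tilde\Sigma\tilde V^T$ with $\tilde V$ a Haar-uniform $N\times N$ orthogonal matrix independent of $\tilde\Sigma$ (rotational bi-invariance, Assumption~\ref{noise-assumptions}.3), we get $A_{trn}^\dag A_{trn}=\tilde V D\tilde V^T$ with $D=\mathrm{diag}(I_d,0_{N-d})$, independent of $\tilde\Sigma$. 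By invariance of the Haar measure under coordinate permutations and sign flips, $\E[\tilde V D\tilde V^T]$ is a scalar multiple of $I_N$, and taking traces forces the scalar to be $d/N=c$ (we use $d/N=c$ WLOG as in the rest of the proofs). Hence $\E[QQ^T] = (1-c)V_{trn}^TV_{trn} = (1-c)I_r$.

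For the element-wise variance, write $(QQ^T)_{ij} = \delta_{ij} - v_i^TA_{trn}^\dag A_{trn}v_j$ where $v_i$ is the $i$-th column of $V_{trn}$, and set $a=\tilde V^Tv_i$, $b=\tilde V^Tv_j$, a pair of (possibly dependent) uniformly random orthonormal unit vectors in $\R^N$. Then $v_i^TA_{trn}^\dag A_{trn}v_j = \sum_{k=1}^d a_kb_k$. For $i=j$, coordinate symmetry together with Lemma 13 of \cite{sonthalia2023training} gives $\Var\big(\sum_{k=1}^d a_k^2\big)=O(1/d)$. For $i\ne j$ the cross term has mean zero by Lemma~\ref{lem:cross-orthogonal}, and its second moment $\E\big[(\sum_{k=1}^d a_kb_k)^2\big]=\sum_{k,l\le d}\E[a_kb_ka_lb_l]$ is handled exactly as in the proofs of Lemmas~\ref{lem:Hterm} and~\ref{lem:Kterm}: the diagonal $k=l$ terms contribute $\sum_{k\le d}\E[a_k^2b_k^2]=O(1/d)$, while the off-diagonal terms cancel to higher order using $\sum_{k=1}^N a_kb_k=0$ and coordinate symmetry. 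Thus $\Var((QQ^T)_{ij})=O(1/d)$ for all $i,j$.

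For the inverse, the plan is to invoke Lemma~\ref{lem:inverse-variance} with $X_N=QQ^T$. We have just shown $\|\E[QQ^T]\|=(1-c)=\Omega(1)$ and element-wise variance $s_N=O(1/d)$, which immediately yields $\E[(QQ^T)^{-1}]=\frac{1}{1-c}I_r+O(1/d)$. For the variance of the inverse we additionally need $\Var\big((QQ^T-\E[QQ^T])_{ij}^2\big)=O(1/d)$; this follows from the same symmetry argument applied to the fourth moments of $\sum_{k\le d}a_kb_k$, using that the coordinates of a uniform unit vector have bounded scaled moments (cf.\ Lemma 13 of \cite{sonthalia2023training}). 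Lemma~\ref{lem:inverse-variance} then gives $\Var\big(((QQ^T)^{-1})_{ij}\big)=O(1/d)$, completing the proof.

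The main obstacle I expect is the $i\ne j$ second- and fourth-moment bookkeeping for $\sum_{k=1}^d a_kb_k$ with $a,b$ dependent orthonormal unit vectors; this is a delicate but routine symmetry computation that mirrors the ones already done in Lemmas~\ref{lem:Hterm}, \ref{lem:Kterm}, and~\ref{lem:Pterm}. The one genuinely new feature here is that the diagonal is the deterministic projection $D$ rather than a random matrix built from Marchenko--Pastur eigenvalues, which actually simplifies matters since no moment bounds on $1/\tilde\sigma_k$ are required.
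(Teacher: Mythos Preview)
Your proposal is correct and follows essentially the same architecture as the paper: identify $QQ^T = V_{trn}^T(I-A_{trn}^\dag A_{trn})V_{trn}$ via idempotency of the projection, compute the expectation by Haar symmetry, bound element-wise variances, and then invoke Lemma~\ref{lem:inverse-variance} for the inverse. The only cosmetic difference is in the variance step: the paper uses the idempotency identity $\E[A_{trn}^\dag A_{trn}]=\E[(A_{trn}^\dag A_{trn})^2]$ together with Lemma~15 of \cite{sonthalia2023training} and a full-SVD transfer (exactly as in Lemma~\ref{lem:Pterm}), whereas you compute $\Var\big(\sum_{k\le d}a_kb_k\big)$ directly via coordinate exchangeability and the constraint $\sum_{k\le N}a_kb_k=0$, in the style of Lemmas~\ref{lem:Hterm} and~\ref{lem:Kterm}; both routes yield the same $O(1/d)$ bound.
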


\begin{proof}
    Recall that $Q = V^T(I - A_{trn}^\dag A_{trn})$. We thus have that
    \begin{align*}
        QQ^T &= V^T(I- A_{trn}^\dag A_{trn})V. \\
             &= V^TV - V^TA_{trn}^\dag A_{trn} V \\
             &= I_r - V^T\tilde{V}\tilde{\Sigma}^\dag \tilde{\Sigma} \tilde{V}^TV \\
             &= I_r - R\begin{bmatrix} I_{d} & 0 \\ 0 & 0_{N-d} \end{bmatrix}R^T.
    \end{align*}
    
Where $R$ is a uniformly random $r \times N$ unitary matrix. Then by symmetry (of the sign of rows of $R$), we have that
\[
    \mathbb{E}[QQ^T] =  I_r
    - cI_r = \left(1-c\right)I_r.
\]

Next notice that 
\[
    \mathbb{E}[QQ^T] = V^T(I-\mathbb{E}[A_{trn}^\dag A_{trn}])V,
\]
thus to compute the variance, we first compute the variance of $(A_{trn}^\dag A_{trn})_{ij}$. For this, we first note that 
\[
    \begin{bmatrix} c I_{d} & 0 \\ 0 & 0 \end{bmatrix} = \mathbb{E}[A_{trn}^\dag A_{trn}] = \mathbb{E}[A_{trn}^\dag A_{trn}A_{trn}^\dag A_{trn}].
\]
Since $A_{trn}^\dag A_{trn}$ is symmetric, we can see that 
\[
    \sum_{k}^d ((A_{trn}^\dag A_{trn})_{ik})^2 = \begin{cases} c & i \le d \\ 0 & i > d \end{cases}.
\]

From Lemma 15 in \cite{sonthalia2023training}, we have that $\mathbb{E}[((A_{trn}^\dag A_{trn})_{ii})^2] = c^2 + \frac{2c}{N} + o(1)$. Then combining this with the computation above and using symmetry, we have that for $i \neq j$ and $\min(i,j) \le d$ 
\[
    \mathbb{E}[(A_{trn}^\dag A_{trn})^2_{ij}] = \frac{1}{d - 1} \left(\frac{1}{c} - \frac{1}{c^2} + \frac{3}{cd} +o(1) \right).
 \]

Now consider the other (full) SVD of $X_{trn}$ given by $\hat{U}_{d \times d}\hat{\Sigma}_{d \times N}\hat{V}^T_{N \times N}$. Note that the top left $r \times r$ block of $\hat{\Sigma}$ is $\Sigma_{trn}$, and the first $r$ rows of $\hat{V}$ give $V$. Note that since $\hat{V}^T\tilde{V}$ is still uniformly random, the variance argument above follows for $\hat{V}^TA_{trn}^\dag A_{trn} \hat{V}$. Additionally, for $i,j \leq r$, $(\hat{V}^TA_{trn}^\dag A_{trn} \hat{V})_{ij} = (V^TA_{trn}^\dag A_{trn} V)_{ij}$ Thus, we see that for $i,j \leq r$,
\[
    \mathbb{E}[((V^TA_{trn}^\dag A_{trn} V)_{ij})^2] = \frac{1}{d - 1} \left(c - c^2 + \frac{2}{cd} +o(1) \right).
\]

Thus, finally, we have that arranged as a matrix $\E[QQ^T \odot QQ^T] = O\left(\frac{1}{d}\right). $
By an analogous symmetry argument, we can show that $\Var\left((V^TA_{trn}^\dag A_{trn} V)_{ij}^2\right) = O\left(\frac{1}{d}\right).$
In principle, one can get a faster decay bound with a more sophisticated argument, but this is sufficient for our purposes. Now, by Lemma~\ref{lem:inverse-variance}, we get that
\[
    \E[(QQ^T)^{-1}] = \frac{1}{1-c}I_r + O\left(\frac{1}{{d}}\right).
\]
and each element has variance $O(1/d)$.

\end{proof}

\begin{lemma} \label{lem:H1term} For $c<1$, 
\[
    \E\left[\Sigma_{trn}^{-1}H_1\Sigma_{trn}^{-1}\right] = \frac{1}{1-c} \Sigma_{trn}^{-2} + \frac{1}{\eta_{trn}^2}\frac{c}{1-c}I_r + o(1)
\]
and the variance of each element is $O\left(\frac{1}{d}\right)$. Additionally
\[
    \E\left[\Sigma_{trn}H_1^{-1}\Sigma_{trn}\right] = (1-c)\eta_{trn}^2(\eta_{trn}^2\Sigma_{trn}^{-2} + cI_r)^{-1} + o(1),
\]
and the variance of each term is $O\left(\frac{1}{d}\right)$
\end{lemma}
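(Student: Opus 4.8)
The plan is to expand $H_1$ via its definition $H_1 = K^TK + Z^T(QQ^T)^{-1}Z$ and conjugate by $\Sigma_{trn}^{-1}$, giving
\[
    \Sigma_{trn}^{-1}H_1\Sigma_{trn}^{-1} = \Sigma_{trn}^{-1}K^TK\Sigma_{trn}^{-1} + (\Sigma_{trn}^{-1}Z^T)(QQ^T)^{-1}(Z\Sigma_{trn}^{-1}).
\]
The first summand is controlled directly by Lemma~\ref{lem:Kterm}, which yields expectation $\frac{1}{\eta_{trn}^2}\frac{c}{1-c}I_r + o(1)$ and element-wise variance $O(1/d)$ (using $d = cN$). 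For the second summand I would use Lemma~\ref{lem:Zterm}, which gives $\E[\Sigma_{trn}^{-1}Z^T] = \E[Z\Sigma_{trn}^{-1}] = \Sigma_{trn}^{-1}$ with element-wise variance $O(1/d)$, together with Lemma~\ref{lem:Qterm}, which gives $\E[(QQ^T)^{-1}] = \frac{1}{1-c}I_r + O(1/d)$ with element-wise variance $O(1/d)$. Treating the three factors as approximately independent produces the estimate $\frac{1}{1-c}\Sigma_{trn}^{-1}\Sigma_{trn}^{-1} = \frac{1}{1-c}\Sigma_{trn}^{-2}$ for the expectation, and adding the two pieces gives the claimed formula for $\E[\Sigma_{trn}^{-1}H_1\Sigma_{trn}^{-1}]$.

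To make the ``approximately independent'' step rigorous and to bound the element-wise variance of $\Sigma_{trn}^{-1}H_1\Sigma_{trn}^{-1}$ by $O(1/d)$, I would reuse the bookkeeping already employed in the proofs of Lemmas~\ref{lem:K1term} and~\ref{lem:variance-term-c-greater-than-1}: the inequality $|\E[XY] - \E[X]\E[Y]| \le \sqrt{\Var(X)\Var(Y)}$ and the identity for $\Cov(XY, WZ)$ from \citet{bohrnstedt1969covproduct}, applied entrywise to the matrix product. Since $\Sigma_{trn}^{-1}Z^T$, $(QQ^T)^{-1}$, and $Z\Sigma_{trn}^{-1}$ each have $O(1)$ expectation and $O(1/d)$ element-wise variance, repeated application of these facts shows the triple product's entries deviate from those of $\frac{1}{1-c}\Sigma_{trn}^{-2}$ by $o(1)$ in expectation and have variance $O(1/d)$; combined with Lemma~\ref{lem:Kterm}, the whole matrix $\Sigma_{trn}^{-1}H_1\Sigma_{trn}^{-1}$ inherits element-wise variance $O(1/d)$.

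For the inverse, observe $\Sigma_{trn}H_1^{-1}\Sigma_{trn} = (\Sigma_{trn}^{-1}H_1\Sigma_{trn}^{-1})^{-1}$. Because $1/\sigma_i = o(1)$ by Assumption~\ref{data-assumptions}, the leading term $\frac{1}{1-c}\Sigma_{trn}^{-2} + \frac{1}{\eta_{trn}^2}\frac{c}{1-c}I_r$ has norm $\Theta(1)$, so $\|\E[\Sigma_{trn}^{-1}H_1\Sigma_{trn}^{-1}]\| = \Omega(1)$ and Lemma~\ref{lem:inverse-variance} applies with $s_N = O(1/d)$. I would additionally need the fourth-moment-type bound $\Var((\delta X_N)_{ij}^2) = O(1/d)$ for $X_N = \Sigma_{trn}^{-1}H_1\Sigma_{trn}^{-1}$, which follows from the same $\Var(XY)$ and $\Cov(XY, WZ)$ expansions used for $K_1$ in Lemma~\ref{lem:K1term}, using $\E[1/\tilde\sigma_k^{2p}] = O(1)$ from Lemma~5 of \citet{sonthalia2023training} and the moment bounds on the entries of $Z$. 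Lemma~\ref{lem:inverse-variance} then gives
\[
    \E[\Sigma_{trn}H_1^{-1}\Sigma_{trn}] = \left(\frac{1}{1-c}\Sigma_{trn}^{-2} + \frac{1}{\eta_{trn}^2}\frac{c}{1-c}I_r\right)^{-1} + O\!\left(\frac{1}{d}\right) = (1-c)\eta_{trn}^2(\eta_{trn}^2\Sigma_{trn}^{-2} + cI_r)^{-1} + o(1),
\]
with element-wise variance $O(1/d)$. I expect the main obstacle to be the variance bookkeeping for the triple product $(\Sigma_{trn}^{-1}Z^T)(QQ^T)^{-1}(Z\Sigma_{trn}^{-1})$ and verifying the fourth-moment bound required to invoke Lemma~\ref{lem:inverse-variance}; the remaining steps are routine substitutions of the component lemmas.
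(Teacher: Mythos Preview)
Your proposal is correct and follows essentially the same approach as the paper: decompose $H_1$ into $K^TK$ and $Z^T(QQ^T)^{-1}Z$, invoke Lemmas~\ref{lem:Kterm}, \ref{lem:Zterm}, and \ref{lem:Qterm} for the component estimates, use the $|\E[XY]-\E[X]\E[Y]|\le\sqrt{\Var(X)\Var(Y)}$ and \citet{bohrnstedt1969covproduct} bookkeeping to pass from products of expectations to expectations of products (exactly as in Lemma~\ref{lem:K1term}), and then apply Lemma~\ref{lem:inverse-variance} for the inverse. Your grouping $(\Sigma_{trn}^{-1}Z^T)(QQ^T)^{-1}(Z\Sigma_{trn}^{-1})$ is in fact slightly cleaner than the paper's $Z^T\Sigma_{trn}^{-1}(\Sigma_{trn}(QQ^T)^{-1}\Sigma_{trn})\Sigma_{trn}^{-1}Z$, but the substance is identical.
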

\begin{proof}

Recall that
$$H_1 = K^TK + Z^T(QQ^T)^{-1}Z = K^TK + Z^T\Sigma_{trn}^{-1}(\Sigma_{trn}(QQ^T)^{-1}\Sigma_{trn})\Sigma_{trn}^{-1}Z.$$

Using Lemmas~\ref{lem:Zterm}, \ref{lem:Kterm} and \ref{lem:Qterm} along with an argument analogous to the one in Lemma~\ref{lem:K1term}, we get that 
$$\E[\Sigma_{trn}^{-1}H_1\Sigma_{trn}^{-1}] = \frac{1}{1-c} \Sigma_{trn}^{-2} + \frac{1}{\eta_{trn}^2}\frac{c}{1-c}I_r + O\left(\frac{1}{d}\right) + o(1)$$
and the variance of each element is $O\left(\frac{1}{d}\right)$.

For the inverse, we define $\delta H_1 := H_1 - \E[H_1]$ and by an argument analogous to the one in the proof of Lemma~\ref{lem:K1term}, we get that
\[
    \E\left[\Sigma_{trn}H_1^{-1}\Sigma_{trn}\right] = (1-c)\eta_{trn}^2(\eta_{trn}^2\Sigma_{trn}^{-2} + cI_r)^{-1} + o(1)
\]
and the variance of each term is $O\left(\frac{1}{d}\right)$.
\end{proof}

\begin{lemma}\label{lem:variance-term-c-less-than-1} When $c < 1$, we have for $W = W_{opt}$ that 
\[
    \E[\|W\|_F^2] = \frac{c^2}{1-c} \Tr\left(\Sigma_{trn}^2\left(\Sigma_{trn}^2 + \frac{1}{\eta_{trn}^2}I_r\right)(\Sigma_{trn}^2c + \eta_{trn}^2I_r)^{-2}\right) + o(1).
\]
\end{lemma}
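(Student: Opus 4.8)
The idea is to run the same program as in the proof of Lemma~\ref{lem:variance-term-c-greater-than-1}, but in the underparametrized regime. I would start from the closed form $W = W_1 + W_2$ given by Lemma~\ref{lem:Wopt_c_less_than_one}, with $W_1 = -U\Sigma_{trn}H_1^{-1}K^TA_{trn}^\dag$ and $W_2 = U\Sigma_{trn}H_1^{-1}Z^T(QQ^T)^{-1}H$, and expand $\|W\|_F^2 = \Tr(W^TW) = \|W_1\|_F^2 + 2\Tr(W_1^TW_2) + \|W_2\|_F^2$. The first step is purely algebraic: using $U^TU = I_r$, cyclic invariance of the trace, and the pseudoinverse identities valid when $d < N$ (for instance $A_{trn}A_{trn}^\dag = I_d$, $A_{trn}^\dag(A_{trn}^\dag)^T = (A_{trn}^TA_{trn})^\dag$, $K^TA_{trn}^\dag = -\Sigma_{trn}U^T(A_{trn}A_{trn}^T)^{-1}$, $QA_{trn}^\dag = 0$, and $V^TQ^\dag = I_r$), I would rewrite each of the three traces so that the only remaining random factors are the building blocks already controlled elsewhere: $\Sigma_{trn}^{-1}K^TK\Sigma_{trn}^{-1}$ (Lemma~\ref{lem:Kterm}), $\Sigma_{trn}^{-1}K^TA_{trn}^\dag(A_{trn}^\dag)^TK\Sigma_{trn}^{-1}$ (Lemma~\ref{lem:KAterm}), $HH^T$ (Lemma~\ref{lem:Hterm}), $(QQ^T)^{-1}$ (Lemma~\ref{lem:Qterm}), $Z\Sigma_{trn}^{-1}$ (Lemma~\ref{lem:Zterm}), and $\Sigma_{trn}H_1^{-1}\Sigma_{trn}$ (Lemma~\ref{lem:H1term}). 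Writing $G := \Sigma_{trn}H_1^{-1}\Sigma_{trn}$, this should turn $\|W_1\|_F^2$ into $\Tr\!\left((\Sigma_{trn}^{-1}K^TA_{trn}^\dag(A_{trn}^\dag)^TK\Sigma_{trn}^{-1})\,G^2\right)$ and $\|W_2\|_F^2$ into $\Tr\!\left(HH^T(QQ^T)^{-1}(Z\Sigma_{trn}^{-1})\,G^2\,(Z\Sigma_{trn}^{-1})^T(QQ^T)^{-1}\right)$. The cross term $2\Tr(W_1^TW_2)$ should be disposed of separately: either it collapses to zero by an identity of the same flavour as the $P^\dag H^T=0$ cancellation used in the overparametrized proof, or, failing that, it reduces to a trace whose leading estimate is $0$ because the rotationally-invariant factors enter to an odd order, leaving an $o(1)$ remainder; either way it does not change the final form.

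Next I would estimate each surviving trace by substituting each building block with its expectation, i.e.\ treating the factors in a product as independent. Feeding in $\E[\Sigma_{trn}H_1^{-1}\Sigma_{trn}] = (1-c)\eta_{trn}^2(\eta_{trn}^2\Sigma_{trn}^{-2}+cI_r)^{-1}$, $\E[Z\Sigma_{trn}^{-1}] = \Sigma_{trn}^{-1}$, $\E[(QQ^T)^{-1}] = \frac{1}{1-c}I_r$, and the scalar limits for $\E[HH^T]$, $\E[\Sigma_{trn}^{-1}K^TK\Sigma_{trn}^{-1}]$ and $\E[\Sigma_{trn}^{-1}K^TA_{trn}^\dag(A_{trn}^\dag)^TK\Sigma_{trn}^{-1}]$, together with the identity $(\eta_{trn}^2\Sigma_{trn}^{-2}+cI_r)^{-1} = \Sigma_{trn}^2(\Sigma_{trn}^2c+\eta_{trn}^2I_r)^{-1}$, one surviving piece should collapse to a multiple of $\Tr(\Sigma_{trn}^4(\Sigma_{trn}^2c+\eta_{trn}^2I_r)^{-2})$ and the other to a multiple of $\Tr(\Sigma_{trn}^2(\Sigma_{trn}^2c+\eta_{trn}^2I_r)^{-2})$; adding them and using $\Sigma_{trn}^2c+\eta_{trn}^2I_r = c(\Sigma_{trn}^2+\frac{\eta_{trn}^2}{c}I_r)$ should produce exactly $\frac{c^2}{1-c}\Tr(\Sigma_{trn}^2(\Sigma_{trn}^2+\frac{1}{\eta_{trn}^2}I_r)(\Sigma_{trn}^2c+\eta_{trn}^2I_r)^{-2})$.

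To justify replacing $\E[\text{product of blocks}]$ by the product of expectations, I would reuse the deviation argument from the end of the proof of Lemma~\ref{lem:variance-term-c-greater-than-1}: each swap $\E[XY]\to\E[X]\E[Y]$ incurs an error $|\Cov(X,Y)|\le\sqrt{\Var(X)\Var(Y)}$, and for longer products one iterates this with the expansion of $\Cov(XY,WZ)$ from \cite{bohrnstedt1969covproduct}. Since every building block has entrywise variance $O(1/d)$ except $Z$ and $Z^{-1}$, whose variances are $O(1)$ and only that small because $\|\Sigma_{trn}\| = O(\sqrt{d})$ by Assumption~\ref{data-assumptions}, every pairwise covariance carries at least one factor $O(1/\sqrt{d})$, so the total estimation error is $O(1/\sqrt{d}) = o(1)$. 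Assembling the three contributions then gives the claim.

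I expect the main difficulty to be bookkeeping rather than any single hard estimate: tracking which powers of $\Sigma_{trn}$ end up where after the cyclic rearrangements, handling the fact that $H_1^{-1}$ is itself a nonlinear function of $K^TK$, $Z$ and $(QQ^T)^{-1}$ (so estimating $G$ already needs the inversion step in Lemma~\ref{lem:H1term}), resolving the cross term cleanly, and, most delicately, verifying that the accumulated deviation is genuinely $o(1)$, which is exactly where the $O(1/d)$ variance bounds of Lemmas~\ref{lem:Kterm}--\ref{lem:H1term} and the data-growth clause of Assumption~\ref{data-assumptions} are needed.
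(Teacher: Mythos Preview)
Your proposal is correct and follows essentially the same route as the paper's proof: the same decomposition $W=W_1+W_2$ from Lemma~\ref{lem:Wopt_c_less_than_one}, the same building blocks (Lemmas~\ref{lem:Hterm}, \ref{lem:Zterm}, \ref{lem:Kterm}, \ref{lem:KAterm}, \ref{lem:Qterm}, \ref{lem:H1term}), the same substitute-then-bound-covariances argument, and the same final combination of the two surviving traces. One clarification on the cross term: it is your \emph{second} option that applies, not the first. There is no algebraic identity analogous to $P^\dag H^T=0$ here; instead the paper isolates the factor $H(A_{trn}^\dag)^TK = -V_{trn}^TA_{trn}^\dag(A_{trn}^\dag)^TA_{trn}^\dag U\Sigma_{trn}$ inside the cross trace and observes that, writing $A_{trn}=\eta_{trn}\tilde U\tilde\Sigma\tilde V^T$, this factor mixes $\tilde U$ and $\tilde V$ in a way that has mean zero and $O(1/d)$ entrywise variance by the independence of $\tilde U$ and $\tilde V$ (Lemma~8 of \cite{sonthalia2023training}), so the entire cross term is $o(1)$.
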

\begin{proof}
    Again, like in Lemma~\ref{lem:variance-term-c-greater-than-1}, we first use the estimates for the expectations from the lemmas above to get an estimate for the expectation of $\|W\|_F^2$, and then bound the deviation from it using the variance estimates in this section. We see that the first term in $\Tr(W^TW)$ is
    \[
        \Tr((A_{trn}^\dag)^T K(H_1^{-1})^T \Sigma_{trn}^2 H_1^{-1}K^TA_{trn}^\dag) = \Tr(K^TA_{trn}^\dag(A_{trn}^\dag)^T K(H_1^{-1})^T \Sigma_{trn}^2 H_1^{-1}).
    \]
    Then using Lemma \ref{lem:KAterm} along with cyclic invariance of traces, we see that this is estimated by
    \[
        \frac{1}{\eta_{trn}^2}\frac{c^2}{(1-c)^3}\Tr(\Sigma_{trn}(H_1^{-1})^T\Sigma_{trn}^2H_1^{-1}\Sigma_{trn}) + o(1).
    \]
    Then using Lemma \ref{lem:H1term}, we get that this is estimated by
    \begin{align*}
        &\eta_{trn}^2\frac{c^2}{(1-c)^3}(1-c)^2(cI_r + \eta_{trn}^2\Sigma_{trn}^{-2})^{-2} + o(1)\\
        &= \eta_{trn}^2\frac{c^2}{1-c}\Tr\left(\Sigma_{trn}^4(\Sigma_{trn}^2c + \eta_{trn}^2I_r)^{-2}\right) + o(1).
    \end{align*}

    The second term is 
    \[
        \Tr(((QQ^T)^{-1})^TZ(H_1^{-1})^T\Sigma_{trn}^2H_1^{-1}Z^T(QQ^T)^{-1}HH^T).
    \]
    We can rewrite this as
    \[
        \Tr(((QQ^T)^{-1})^TZ\Sigma_{trn}^{-1}(\Sigma_{trn}(H_1^{-1})^T\Sigma_{trn})(\Sigma_{trn}H_1^{-1}\Sigma_{trn})\Sigma_{trn}^{-1}Z^T(QQ^T)^{-1}HH^T).
    \]
    Using Lemmas \ref{lem:Hterm} and \ref{lem:Zterm}, we can estimate its expectation by  
    \[
        \frac{1}{\eta_{trn}^2}\frac{c^2}{1-c}\Tr\left(((QQ^T)^{-1})^T\Sigma_{trn}^{-1}(\Sigma_{trn}(H_1^{-1})^T\Sigma_{trn})(\Sigma_{trn}H_1^{-1}\Sigma_{trn})\Sigma_{trn}^{-1}(QQ^T)^{-1}\right) + o(1).
    \]
    Then using Lemma \ref{lem:Qterm} and the fact that $H_1^T = H_1$, we get that this be further estimated by
    \[
        \frac{1}{\eta_{trn}^2}\frac{c^2}{(1-c)^3} \Tr(\Sigma_{trn}^{-1}(\Sigma_{trn}(H_1^{-1})\Sigma_{trn})^2\Sigma_{trn}^{-1}) + o(1).
    \]
    Then using Lemma \ref{lem:H1term}, we can simplify this estimate to
    \[
        \begin{split}
            \frac{1}{\eta_{trn}^2}\frac{c^2}{(1-c)^3} (1-c)^2 \eta_{trn}^4(cI_r + \eta_{trn}^2 \Sigma_{trn}^{-2})^{-2} + o(1) \hspace{4cm}\\
            = \eta_{trn}^2\frac{c^2}{1-c} \Tr\left(\Sigma_{trn}^2 (\Sigma_{trn}^2c + \eta_{trn}^2 I_r)^{-2}\right) + o(1).
        \end{split}
    \]

    The cross term in $\Tr(W^TW)$ is 
    \[
        -2\Tr((A_{trn}^\dag)^TK(H_1^{-1})^T\Sigma_{trn}^2H_1^{-1}Z^T(QQ^T)^{-1} H).
    \]
    Here the term (after cyclically permuting) that we should focus on is 
    \[
        \Tr(H(A_{trn}^\dag)^T K) = -\Tr(V_{trn}^T A_{trn}^\dag (A_{trn}^\dag)^T A_{trn}^\dag \Sigma_{trn}U).
    \]
    Here since $A_{trn} = \eta_{trn}\tilde{U}\tilde{\Sigma}\tilde{V}^T$ and $\tilde{U}, \tilde{V}$ are independent of each other, we see that using ideas from Lemma 8 in \cite{sonthalia2023training} and extending them to rank $r$ as before, the expectation of this term is 0 with $O(1/d)$ variance. Thus, the whole cross-term has an expectation equal to 0. 

    Again, to bound the deviation from this estimate, note that for real valued random variables $X,Y$ we have that $|\E[XY] - \E[X]\E[Y]| = |Cov(X,Y)| \leq \sqrt{\Var(X)\Var(Y)}$. For real valued random variables $X,Y,Z,W$, we have the following fact, from \cite{bohrnstedt1969covproduct}.
     \begin{align*}
         Cov(XY, WZ) &= \E X \E W Cov(Y, Z) + \E Y \E Z Cov(X,W) + \E X \E Z Cov(Y, W) +\\
         & \qquad \E Y \E W Cov(X, Z) + Cov(X, W)Cov(Y,Z) + Cov(Y, W)Cov(X,Z).
     \end{align*}

     We repeatedly apply these two to upper bound the deviation between the product of the expectations in the estimates above and the expectation of the product. It is then straightforward to see that since all variances are $O(1/d)$, the estimation error is $O(1/{d}) = o(1)$.
    
    Finally, combining the terms, we get that 
    \[
        \E[\|W\|_F^2] = \frac{c^2}{1-c} \Tr\left(\Sigma_{trn}^2\left(\Sigma_{trn}^2 + \frac{1}{\eta_{trn}^2}I_r\right)(\Sigma_{trn}^2c + \eta_{trn}^2I_r)^{-2}\right) + o(1).
    \]
\end{proof}

\begin{theorem} \label{thm:underparametrized-gen-error}
When $d < N-r$ and $\beta = I$, then the test error $\mathcal{R}(W, X_{tst})$ for $W = W_{opt}$ is given by
\[
    \begin{split}
        \frac{\eta_{trn}^4}{N_{tst}} \|\left(\Sigma_{trn}^2c + \eta_{trn}^2I\right)^{-1} L\|_F^2 \hspace{8cm} \\ + \frac{\eta_{tst}^2}{d}\frac{c^2}{1-c} \Tr\left(\Sigma_{trn}^2\left(\Sigma_{trn}^2 + \frac{1}{\eta_{trn}^2}I_r\right)(\Sigma_{trn}^2c + \eta_{trn}^2I_r)^{-2}\right) + o\left(\frac{1}{d}\right).
    \end{split}
\]
\end{theorem}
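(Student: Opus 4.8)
The plan is to mirror the argument used for Theorem~\ref{thm:overparametrized-gen-error}, now in the regime $d < N-r$. First I would invoke the bias--variance decomposition
\[
    \mathcal{R}(W, X_{tst}) = \frac{1}{N_{tst}}\E\left[\|X_{tst} - WX_{tst}\|_F^2\right] + \frac{\eta_{tst}^2}{d}\E\left[\|W\|_F^2\right],
\]
so that the second (variance) summand is immediately supplied by Lemma~\ref{lem:variance-term-c-less-than-1}, which already gives
$\E[\|W\|_F^2] = \frac{c^2}{1-c}\Tr(\Sigma_{trn}^2(\Sigma_{trn}^2 + \tfrac{1}{\eta_{trn}^2}I)(\Sigma_{trn}^2c + \eta_{trn}^2I)^{-2}) + o(1)$.
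Multiplying by $\eta_{tst}^2/d$ produces the trace term in the statement, so all the remaining work is the bias term.

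For the bias, I would start from the closed form $X_{tst} - WX_{tst} = U\Sigma_{trn}H_1^{-1}Z^T(QQ^T)^{-1}\Sigma_{trn}^{-1}L$ proved in the lemma following Lemma~\ref{lem:Wopt_c_less_than_one}. Writing $\|\cdot\|_F^2 = \Tr(\cdot^T\cdot)$, using $U^TU = I$, and rearranging via cyclic invariance of the trace, the bias becomes
\[
    \frac{1}{N_{tst}}\Tr\!\left(\Sigma_{trn}^{-1}(QQ^T)^{-1}Z(\Sigma_{trn}H_1^{-1}\Sigma_{trn})^2\Sigma_{trn}^{-1}Z^T(QQ^T)^{-1}\Sigma_{trn}^{-1}LL^T\right)
\]
after inserting $\Sigma_{trn}^{-1}\Sigma_{trn}$ pairs to expose the conditioned quantities $\Sigma_{trn}H_1^{-1}\Sigma_{trn}$, $Z\Sigma_{trn}^{-1}$, and $(QQ^T)^{-1}$. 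Then I would substitute the random-matrix estimates: $\E[Z\Sigma_{trn}^{-1}] = \Sigma_{trn}^{-1}$ with small variance (Lemma~\ref{lem:Zterm}), $\E[(QQ^T)^{-1}] = \frac{1}{1-c}I_r + O(1/d)$ (Lemma~\ref{lem:Qterm}), and $\E[\Sigma_{trn}H_1^{-1}\Sigma_{trn}] = (1-c)\eta_{trn}^2(\eta_{trn}^2\Sigma_{trn}^{-2} + cI_r)^{-1} + o(1)$ (Lemma~\ref{lem:H1term}). Treating $LL^T/N_{tst}$ as a fixed matrix (training and test data are decoupled), algebraic simplification should cancel the $(1-c)$ factors against the $H_1^{-1}$ and $(QQ^T)^{-1}$ estimates and collapse to $\frac{\eta_{trn}^4}{N_{tst}}\|(\Sigma_{trn}^2c + \eta_{trn}^2I)^{-1}L\|_F^2$, exactly the norm term claimed.

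The main obstacle, as in the overparametrized case, is controlling the gap between the product of expectations and the expectation of the product: the matrices $Q$, $Z$, and $H_1$ are all built from the same noise $A_{trn}$, so they are dependent. I would handle this with the same covariance bookkeeping used in Lemma~\ref{lem:variance-term-c-greater-than-1} and Lemma~\ref{lem:variance-term-c-less-than-1}: the Cauchy--Schwarz bound $|\E[XY] - \E[X]\E[Y]| \le \sqrt{\Var(X)\Var(Y)}$ together with the product-covariance identity from \cite{bohrnstedt1969covproduct}, applied recursively to the three- and four-factor products appearing above. Since each relevant factor's entry-wise variance is $O(1/d)$ and every expectation estimate is $O(1)$ (up to the harmless $\Sigma_{trn}^{-2}$ damping on the bias side), the accumulated deviation is $O(1/d) = o(1/d)$ in the underparametrized scaling $d = cN$, which absorbs into the stated error. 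Finally I would add the bias and variance contributions to obtain the displayed formula; note that here, unlike the overparametrized regime, no extra $O(\|\Sigma_{trn}\|^2/d^2)$ term appears because the $Z^{-1}$ factor --- the source of that term --- does not enter the underparametrized $W_{opt}$.
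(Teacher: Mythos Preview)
Your proposal is correct and follows the paper's proof essentially step for step: the same bias--variance split, the same closed form for $X_{tst}-WX_{tst}$, the same cyclic trace rearrangement with Lemmas~\ref{lem:Zterm}, \ref{lem:Qterm}, \ref{lem:H1term} substituted in, and the same covariance bookkeeping (Cauchy--Schwarz plus the product-covariance identity) to control dependence. One small slip to tighten: the assertion ``$O(1/d)=o(1/d)$'' is literally false --- both you and the paper actually rely on the extra $\Sigma_{trn}^{-2}$ decay in the bias estimate (since $1/\sigma_i = o(1)$ by Assumption~\ref{data-assumptions}) to push the accumulated deviation below $o(1/d)$, so make that mechanism explicit rather than writing an incorrect equality; relatedly, your displayed trace is missing a $\Sigma_{trn}^{-1}$ immediately after the first $Z$, though your surrounding prose makes clear you intend $Z\Sigma_{trn}^{-1}$.
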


\begin{proof}
    Note that 
    $\mathcal{R}(W, X_{tst}) = \frac{1}{N_{tst}}\|U \Sigma_{trn} H_1^{-1}Z^T(QQ^T)^{-1}\Sigma_{trn}^{-1}L\|_F^2 + \frac{\eta_{tst}^2}{d} \|W\|_F^2$.

    To compute the first term, we observe that it is given by 
    \[
    \frac{1}{N_{tst}} Tr(U \Sigma_{trn} H_1^{-1}Z^T(QQ^T)^{-1}\Sigma_{trn}^{-1}L L^T \Sigma_{trn}^{-1} (QQ^T)^{-1} Z H_1^{-1} \Sigma_{trn} U^T).
    \]
    This can be rewritten using cyclic invariance as
    \[
    \frac{1}{N_{tst}} Tr(U^TU \Sigma_{trn} H_1^{-1}Z^T\Sigma_{trn}^{-1}\Sigma_{trn}(QQ^T)^{-1}\Sigma_{trn}^{-1}L L^T \Sigma_{trn}^{-1} (QQ^T)^{-1}\Sigma_{trn} \Sigma_{trn}^{-1} Z H_1^{-1} \Sigma_{trn}).
    \]

    We apply Lemmas~\ref{lem:Qterm}, \ref{lem:H1term} and \ref{lem:Zterm} to get that its expectation can be estimated by
    \begin{align*}
        & \frac{1}{N_{tst}} Tr\left(\left((c-1)\eta_{trn}^2(\eta_{trn}^2I + c\Sigma_{trn}^2)^{-1}\right)^2 \left(\frac{1}{1-c}\right)^{2} LL^T\right) + o(1/d)\\
        &= \frac{\eta_{trn}^4}{N_{tst}} Tr\left(\left(\Sigma_{trn}^2c + \eta_{trn}^2I\right)^{-2} LL^T\right) + o(1/d)\\
        &= \frac{\eta_{trn}^4}{N_{tst}} \|\left(\Sigma_{trn}^2c + \eta_{trn}^2I\right)^{-1} L\|_F^2 + o(1/d).
    \end{align*}

We get $o\left(\frac{1}{d}\right)$ due to the $\Sigma_{trn}^{-2}$ term. Again, we can argue as in the proof of Lemma~\ref{lem:variance-term-c-less-than-1} to bound the deviation of the true expectation from this estimate by $o(1/d)$, noting that since train and test data assumptions are decoupled, $LL^T/N_{tst}$ can be treated as constant as $N$ grows.

Combining this with Lemma~\ref{lem:variance-term-c-greater-than-1}, we get that
\[
    \begin{split}
        \frac{\eta_{trn}^4}{N_{tst}} \|\left(\Sigma_{trn}^2c + \eta_{trn}^2I\right)^{-1} L\|_F^2 \hspace{8cm} \\ + \frac{\eta_{tst}^2}{d}\frac{c^2}{1-c} \Tr\left(\Sigma_{trn}^2\left(\Sigma_{trn}^2 + \frac{1}{\eta_{trn}^2}I_r\right)(\Sigma_{trn}^2c + \eta_{trn}^2I_r)^{-2}\right) + o\left(\frac{1}{d}\right).
    \end{split}
\]
\end{proof}

\main*
\begin{proof}
The version for $\beta = I$ follows immediately from Theorem~\ref{thm:overparametrized-gen-error} and Theorem~\ref{thm:underparametrized-gen-error}. 

We now demonstrate how the the general version is a straightforward repetition of the proofs of the two theorems. First denote by $Z_{opt}$ the minimum norm solution to the denoising problem (where $\beta = I$). Then $Z_{opt} = X_{trn}(X_{trn}+A_{trn})^\dag$ and note that 
$$W_{opt} = Y_{trn}(X_{trn}+A_{trn})^\dag = \beta^T X_{trn}(X_{trn}+A_{trn})^\dag = \beta^TZ_{opt}$$

We present the adaptation of Lemma~\ref{lem:variance-term-c-greater-than-1}, the other lemmas can be adapted accordingly. 

    We first use the estimates for the expectations from the lemmas to get an estimate for $\|W_{opt}\|_F^2 = \|\beta^T Z_{opt}\|_F^2$, and then bound the deviation from it using the variance estimates above. We begin the calculation as 
    \[
        \|\beta^TZ_{opt}\|_F^2 = \Tr(Z_{opt}^T\beta\beta^TZ_{opt}) 
    \]
    Using Lemma \ref{lem:Wopt_c_greater_than_one}, we see that the trace has three terms. The first term is 
    \[
        \Tr(H^T(K_1^{-1})^TZ((P^TP)^{-1})^T \Sigma_{trn}^T U^T\beta\beta^TU\Sigma_{trn} (P^TP)^{-1}Z^TK_1^{-1}H)
    \]
    Using $\beta_U^T = \beta^T_{opt}U$
    Then since the trace is invariant under cyclic permutations, we get the following term
    \[
       \Tr(\beta_U^T\Sigma_{trn} (P^TP)^{-1}Z^TK_1^{-1}H H^T(K_1^{-1})^TZ((P^TP)^{-1})^T \Sigma_{trn}^T\beta_U)
    \]
    The rest of the proof for this term is the same as Lemma \ref{lem:variance-term-c-greater-than-1}.

    The second term in $\Tr(W^T\beta\beta^TW)$ is
    \[
    -2\Tr(H^T(K_1^{-1})^TZ^T((P^TP)^{-1})^T\Sigma_{trn}^T\beta_U\beta_U^T\Sigma_{trn}Z^{-1}HH^TZP^\dag)
    \]
    Then the rest of the proof for this term is identical to the one in the proof of Lemma~\ref{lem:variance-term-c-greater-than-1}.
    
    Finally, the last term in $\Tr(W^T\beta\beta^TW)$ is 
    \[
        \Tr((P^\dag)^TZ^T (K_1^{-1})^T HH^T (Z^{-1})^T \Sigma_{trn}^T \beta_U\beta_U^T \Sigma_{trn} Z^{-1} HH^T K_1^{-1} P^\dag)
    \]

    The rest of the proof is the same again, after using the cyclic invariance of the trace. 
    
\end{proof}

\subsection{Proof of Corollary \ref{cor:dist-shift-bound}, The Distribution Shift Bound}
\label{sec:proof-dist-shift}

We first prove Theorem~\ref{thm:test-set-shift-bound}, bounding the difference in generalization error in terms of the change in the test set. Recall the theorem below.
\TestSetShiftBound*
\begin{proof}
 We will first show this for $c<1$.  Let $\mathcal{R}_i := \mathcal{R}(W_{opt}, X_{tst,i})$. Remember that the test error is given by

 \begin{align*}
     \mathcal{R}_i &= \frac{\eta_{trn}^4}{N_{tst}}\left\| \beta_U^T (\Sigma_{trn}^2c + \eta_{trn}^2I)^{-1}L_i\right\|_F^2 \\& \quad + \eta_{tst}^2\eta_{trn}^2\frac{1}{d}\frac{c^2}{1-c}\Tr\left(\beta_U\beta_U^T\Sigma_{trn}^2\left(\Sigma_{trn}^2+\frac{1}{\eta_{trn}^2}I\right) \left(\Sigma_{trn}^2c+\eta_{trn}^2I\right)^{-2}\right) + o\left(\frac{1}{N}\right)
 \end{align*}

Note that the second term above has no dependence on $X_{tst,i}$, so the difference is given by
 \begin{align*}
     \mathcal{R}_2 - \mathcal{R}_1 &=  \frac{\eta_{trn}^4}{N_{tst}} \left(\left\| \beta_U^T(\Sigma_{trn}^2c + \eta_{trn}^2I)^{-1}L_2\right\|_F^2 - \left\| \beta_U^T(\Sigma_{trn}^2c + \eta_{trn}^2I)^{-1}L_1\right\|_F^2\right)\\
     & \qquad + o\left(\frac{1}{N}\right)\\
     &= \frac{\eta_{trn}^4}{N_{tst}} Tr\left((\Sigma_{trn}^2c + \eta_{trn}^2I)^{-1}\beta_U\beta_U^T(\Sigma_{trn}^2c + \eta_{trn}^2I)^{-1}(L_2L_2^T - L_1L_1^T)\right) +o\left(\frac{1}{N}\right)\\
     &\stackrel{(i)}{\leq} \frac{\eta_{trn}^4}{N_{tst}} \|(\Sigma_{trn}^2c + \eta_{trn}^2I)^{-1}\beta_U\beta_U^T(\Sigma_{trn}^2c + \eta_{trn}^2I)^{-1}\|_F \|(L_2L_2^T - L_1L_1^T)\|_F +o\left(\frac{1}{N}\right)\\
     &= \frac{\eta_{trn}^4}{N_{tst}} \|\beta_U\beta_U^T(\Sigma_{trn}^2c + \eta_{trn}^2I)^{-2}\|_F \|(L_2L_2^T - L_1L_1^T)\|_F +o\left(\frac{1}{N}\right)\\
     &\stackrel{(ii)}{\leq} \frac{\eta_{trn}^4}{N_{tst}}  \|\beta_U\beta_U^T\|_2\|(\Sigma_{trn}^2c + \eta_{trn}^2I)^{-2}\|_F\|(L_2L_2^T - L_1L_1^T)\|_F +o\left(\frac{1}{N}\right)
 \end{align*}

 where $(i)$ above is by the Cauchy-Schwarz inequality for the Frobenius norm and $(ii)$ above holds since $\|AB\|_F \leq \|A\|_2\|B\|_F$. So, for $\Sigma_{trn}$ with lower bounded diagonal entries $\sigma_i > M$, we have that
 \begin{align*}
     |\mathcal{R}_2 - \mathcal{R}_1| &\leq \frac{\eta_{trn}^4r}{N_{tst}(\sigma_r(X_{trn})^2c + \eta_{trn}^2)^2} \|\beta_U\beta_U^T\|_2\|(L_2L_2^T - L_1L_1^T)\|_F + o\left(\frac{1}{N}\right)\\
     &= \frac{\eta_{trn}^4r}{N_{tst}(\sigma_r(X_{trn})^2c + \eta_{trn}^2)^2}\|U^T\beta\beta^TU\|_2\|(L_2L_2^T - L_1L_1^T)\|_F +o\left(\frac{1}{N}\right)\\
     &= \frac{\eta_{trn}^4r}{N_{tst}(\sigma_r(X_{trn})^2c + \eta_{trn}^2)^2}\|\beta\beta^T\|_2\|(L_2L_2^T - L_1L_1^T)\|_F +o\left(\frac{1}{N}\right)\\
     &= \frac{\sigma_1(\beta)^2}{N_{tst}}\frac{\eta_{trn}^4r}{(\sigma_r(X_{trn})^2c + \eta_{trn}^2)^2}\|L_2L_2^T - L_1L_1^T\|_F +o\left(\frac{1}{N}\right)\\
 \end{align*}

 Similarly, for $c>1$, we have that
 $$|\mathcal{R}_2 - \mathcal{R}_1| \leq \frac{\sigma_1(\beta)^2}{N_{tst}}\frac{\eta_{trn}^4r}{(\sigma_r(X_{trn})^2 + \eta_{trn}^2)^2}\|L_2L_2^T - L_1L_1^T\|_F + O\left(\frac{\|\Sigma_{trn}\|_F^2}{N^2}\right) + o\left(\frac{1}{N}\right)\\$$
\end{proof}

We now prove our corollary below.

\DistShiftBound*

\begin{proof}

    Let $\bar{L}_{i} := L_{i} - [\mu_i\ \mu_i\ \dots\ \mu_i]$ be the centered version of the test data matrix. In that case, $\E_{X_{tst, i}}[\bar{L}_{i}] = \E_{X_{tst, i}}[U^T\bar{X}_{tst, i}] = 0$ and 
    $$\E_{X_{tst, i}}[\bar{L}_{i} \bar{L}_{i}^T] = \E_{X_{tst, i}}[U^T\bar{X}_{tst, i} \bar{X}_{tst, i}^TU] = N_{tst}\Sigma_i$$
    Now note the following elementary computation.
    \begin{align*}
        \E_{X_{tst, i}}[L_iL_i^T] &= \E_{X_{tst, i}}[(\bar{L}_i+[\mu_i\ \mu_i\ \dots\ \mu_i]) (\bar{L}_i + [\mu_i\ \mu_i\ \dots\ \mu_i])^T]\\
        &= \E_{X_{tst, i}}[\bar{L}_i \bar{L}_i^T] + 0 + 0 + N_{tst}\mu_i\mu_i^T\\
        &= N_{tst}\Sigma_{trn} + N_{tst}\mu_i\mu_i^T
    \end{align*}

    We can now follow the initial part of the proof of Theorem~\ref{thm:test-set-shift-bound} to get the following for $c<1$.

    \begin{align*}
     \mathcal{G}_2 - \mathcal{G}_1 &=  \frac{\eta_{trn}^4}{N_{tst}} Tr\left(\beta_U\beta_U^T(\Sigma_{trn}^2c + \eta_{trn}^2I)^{-2}(\E_{X_{tst},2}[L_2L_2^T] - \E_{X_{tst},1}[L_1L_1^T])\right) +o\left(\frac{1}{N}\right)\\
     &= {\eta_{trn}^4} Tr\left(\beta_U\beta_U^T(\Sigma_{trn}^2c + \eta_{trn}^2I)^{-2}(\Sigma_2 - \Sigma_1 + \mu_2\mu_2^T - \mu_1\mu_1^T)\right) + o\left(\frac{1}{N}\right)
     \end{align*}

     Now, we can follow the rest of the proof of Theorem~\ref{thm:test-set-shift-bound} to complete the proof.
\end{proof}

\subsection{Proofs for Theorem \ref{thm:out-of-subspace}, Out-of-Subspace Generalization}
\label{sec:proof-out-of-subspace}

\OutOfSubspace*
\begin{proof}
    Here we see that 
    \begin{align*}
        \|(I-W)X_{tst} - (I-W)UL\|_F^2 &= \|(I-W)(X_{tst}-UL)\|_F^2  \\
                                     &\le \sigma_1(W-I)^2 \|X_{tst} - UL\|_F^2 \\
                                     &=\alpha^2\sigma_1(W-I)^2 
    \end{align*}
    The inequality is due to Cauchy-Schwarz inequality. Then using the reverse triangle inequality, we have that 
    \[
        \Big| \|(I-W)X_{tst}\|_F^2 - \|(I-W)UL\|_F^2 \Big| \le \alpha^2 \sigma_1(W+I)^2.
    \]
\end{proof}

\subsection{Proofs for Corollary \ref{cor:test-dist}, Generalization Error}
\label{sec:proof-test-dist}

\TestDist*
\begin{proof}
We begin by noting that the variance term is independent of $X_{tst}$. Hence we only need to focus on the bias term. Let $\bar{L} := L - [\mu\ \mu\ \dots\ \mu]$ be the centered version of the test data matrix. In that case, $\E_{X_{tst, i}}[\bar{L}] = \E_{X_{tst, i}}[U^T\bar{X}_{tst, i}] = 0$ and 
    $$\E_{X_{tst, i}}[\bar{L} \bar{L}^T] = \E_{X_{tst, i}}[U^T\bar{X}_{tst, i} \bar{X}_{tst, i}^TU] = N_{tst}\Sigma$$
    Now note the following elementary computation.
    \begin{align*}
        \E_{X_{tst, i}}[LL^T] &= \E_{X_{tst, i}}[(\bar{L}+[\mu\ \mu\ \dots\ \mu]) (\bar{L} + [\mu\ \mu\ \dots\ \mu])^T]\\
        &= \E_{X_{tst, i}}[\bar{L} \bar{L}^T] + 0 + 0 + N_{tst}\mu\mu^T\\
        &= N_{tst}\Sigma_{trn} + N_{tst}\mu\mu^T
    \end{align*}   

Consider the following sequence on computations about the bias term.
\begin{align*}
    & \E_{X_{tst}} \left[\frac{\eta_{trn}^4}{N_{tst}}\left\| \beta_U^T (\Sigma_{trn}^2c + \eta_{trn}^2I)^{-1}L\right\|_F^2 \right]\\
    &= \frac{\eta_{trn}^4}{N_{tst}} Tr\left(\beta_U^T (\Sigma_{trn}^2c + \eta_{trn}^2I)^{-1}\E_{X_{tst}}[LL^T] (\Sigma_{trn}^2c + \eta_{trn}^2I)^{-1}\beta_U\right)\\
    &= \frac{\eta_{trn}^4}{N_{tst}} Tr\left(\beta_U^T (\Sigma_{trn}^2c + \eta_{trn}^2I)^{-1}(\Sigma + \mu\mu^T)(\Sigma_{trn}^2c + \eta_{trn}^2I)^{-1}\beta_U\right)\\
    &= \frac{\eta_{trn}^4}{N_{tst}}\left\| \beta_U^T (\Sigma_{trn}^2c + \eta_{trn}^2I)^{-1}(\Sigma + \mu\mu^T)^{1/2}\right\|_F^2
\end{align*}

This establishes our claim.
\end{proof}

\subsection{Proof for Theorem \ref{thm:benign}, Test Error for $W^*$}\label{sec:proof-benign}

\benign*
\begin{proof}
    To prove the first part of the theorem, we first note that 
    \[
        \mathbb{E}_{A_{trn}}\left[\|Y_{trn} - W(X_{trn} + A_{trn})\|_F^2\right] = \|Y_{trn} - WX_{trn}\|_F^2 + \frac{\eta_{trn}^2N}{d}\|W\|_F^2.
    \]
    Solving this is equivalent to solving
    \[
        \|\begin{bmatrix} Y_{trn} & 0 \end{bmatrix} - W\begin{bmatrix} X_{trn} & \mu I\end{bmatrix}\|_F^2.
    \]
    where $\mu^2 = \frac{\eta_{trn}^2N}{d}$. We know from classical linear algebra that the solution to the above is 
    \[
        W^* = \begin{bmatrix} \beta^T X_{trn} & 0 \end{bmatrix} \begin{bmatrix} X_{trn} & \mu I \end{bmatrix}^\dag.
    \]
    Using Lemmas 5 and 6 from \cite{sonthaliaUnderPeak}, we have that if $X_{trn} = U\Sigma_{trn} V_{trn}^T$ where $U$ is $d$ by $d$, $\Sigma_{trn}$ is $d$ by $d$ and $V_{trn}$ is $N \times d$, then 
    \[
    \begin{bmatrix} X_{trn} & \mu I \end{bmatrix} = U \underbrace{\begin{bmatrix} \sqrt{\sigma_1(X_{trn})^2 + \mu^2} & 0 & \cdots & & & 0 \\
    0 & \ddots & 0 & & & \\
    \vdots & & \sqrt{\sigma_r(X_{trn})^2 + \mu^2} & & & \vdots \\
    & & 0 & \mu & 0 & \\
    & & & 0 & \ddots & 0 \\
    0 & & & & 0 & \mu
    \end{bmatrix}}_{\hat{\Sigma}} \begin{bmatrix} V_{trn}  \Sigma_{trn}  \hat{\Sigma}^{-1} \\ \mu U \hat{\Sigma}^{-1} \end{bmatrix}^T.
    \]

    Thus, we have that 
    \[
        W^* = \begin{bmatrix}\beta^T U\Sigma_{trn} V_{trn}^T & 0 \end{bmatrix}\begin{bmatrix} V_{trn}  \Sigma_{trn}  \hat{\Sigma}^{-1} \\ \mu U \hat{\Sigma}^{-1} \end{bmatrix}\begin{bmatrix} \frac{1}{\sqrt{\sigma_1(X_{trn})^2 + \mu^2}} & 0 & \cdots & & & 0 \\
        0 & \ddots & 0 & & & \\
        \vdots & & \frac{1}{\sqrt{\sigma_r(X_{trn})^2 + \mu^2}} & & & \vdots \\
        & & 0 & \frac{1}{\mu} & 0 & \\
        & & & 0 & \ddots & 0 \\
        0 & & & & 0 & \frac{1}{\mu}
        \end{bmatrix}U^T.
    \]

    Simplifying, we get 
    \begin{align*}
        W^* &= \beta_U^T \Sigma_{trn}^2 \hat{\Sigma}^{-2} U^T \\
        &= \beta_U^T\begin{bmatrix} \frac{\sigma_1(X_{trn})^2}{\sigma_1(X_{trn})^2 + \mu^2} & 0 & \cdots & & & 0 \\
        0 & \ddots & 0 & & & \\
        \vdots & & \frac{\sigma_r(X_{trn})^2}{\sigma_r(X_{trn})^2 + \mu^2} & & & \vdots \\
        & & 0 & 0 & 0 & \\
        & & & 0 & \ddots & 0 \\
        0 & & & & 0 & 0
        \end{bmatrix}U^T \\
        &= \beta_U^T\Sigma_{trn}^2(\Sigma_{trn}^2 + \mu^2I)^{-1} U^T \\
        &= \beta_U^T\Sigma_{trn}^2\left(\Sigma_{trn}^2 + \frac{\eta_{trn}^2N}{d}I\right)^{-1} U^T \\
        &= \beta_U^T\left(I + \frac{\eta_{trn}^2N}{d}\Sigma_{trn}^{-2}\right)^{-1} U^T 
    \end{align*}

    Hence we have finished proving the first part. 

    For the second part, we note that similar to before, we need to calculate 
    \[
        \frac{1}{N_{tst}}\mathbb{E}_{A_{tst}}\left[\|Y_{tst} - W^*(X_{tst} + A_{tst})\|_F^2\right] = \frac{1}{N_{tst}}\|Y_{tst} - W^*X_{tst}\|_F^2 + \frac{\eta_{tst}^2}{d}\|W^*\|_F^2. 
    \]

    For the first term recall that $X_{tst} = UL$ and $Y_{tst} = \beta^T X_{tst}$. Hence we have that 
    \begin{align*}
         \frac{1}{N_{tst}}\|Y_{tst} - W^*X_{tst}\|_F^2 = \frac{1}{N_{tst}}\left\|\beta_U^T\left(I - \left(I + \frac{\eta_{trn}^2N}{d}\Sigma_{trn}^{-2}\right)^{-1}\right)L\right\|_F^2 = \frac{1}{N_{tst}}\frac{\eta_{trn}^4N^2}{d^2}\left\|\beta_U^T\left(\Sigma_{trn}^2 + \frac{\eta_{trn}^2N}{d}\right)^{-1}L\right\|_F^2
    \end{align*}

    For the second term, we have that 
    \begin{align*}
        \frac{\eta_{tst}^2}{d}\|W^*\|_F^2 &= \frac{\eta_{tst}^2}{d} \Tr\left(\beta_U^T\left(I + \frac{\eta_{trn}^2N}{d}\Sigma_{trn}^{-2}\right)^{-2}\beta_U\right) = \frac{\eta_{tst}^2}{d} \Tr\left(\beta_U\beta_U^T\left(I + \frac{\eta_{trn}^2N}{d}\Sigma_{trn}^{-2}\right)^{-2}\right) 
    \end{align*}

\end{proof}

\subsection{Proof for Corollary~\ref{cor:rel-excess-error}, Relative Excess Error}

\RelExcessError*
\begin{proof}
    Recall from Theorem~\ref{thm:benign} that the test error for $W^*$ is given by
    $$    \mathcal{R}(W^*, UL) = \frac{\eta_{trn}^4N^2}{d^2}\left\|\beta_U^T\left(\Sigma_{trn}^2 + \frac{\eta_{trn}^2N}{d}I\right)^{-1}L\right\|^2 + \frac{\eta_{tst}^2}{d}Tr\left(\beta_U\beta_U^T\left(I + \frac{\eta_{trn}^2N}{d}\Sigma_{trn}^{-2}\right)^{-2}\right)$$

    We prove this for $c>1$, the proof for $c<1$ is analogous and in fact simpler. Notice that when $|\Sigma_{trn}\|_F^2 = \Omega(N^{1/2+\epsilon})$, in both $\mathcal{R}(W_{opt}, X_{tst})$ and $\mathcal{R}(W^*, X_{tst})$, the bias terms are $O(1/d^{1+2\epsilon})$ while the variance terms are $\Theta(1/d)$. In particular, as $d, N \to \infty$, with $d/N \to c$, the limit of the excess risk is given by only considering the variance terms and the estimation errors. 

    \begin{align*}
        &\lim_{d, N \to \infty, d/N \to c} \frac{\mathcal{R}(W_{opt}, X_{tst}) - \mathcal{R}(W^*, X_{tst})}{\mathcal{R}(W^*, X_{tst})}\\
        &= \lim_{d, N \to \infty, d/N \to c} \frac{\frac{\eta_{tst}^2}{d}Tr\left(\beta_U\beta_U^T\left(I + \frac{\eta_{trn}^2N}{d}\Sigma_{trn}^{-2}\right)^{-2}\right) - \frac{\eta_{tst}^2}{d} \frac{c}{c-1}\Tr(\beta_U\beta_U^T (I+\eta_{trn}^2\Sigma_{trn}^{-2})^{-1})}{\frac{\eta_{tst}^2}{d}Tr\left(\beta_U\beta_U^T\left(I + \frac{\eta_{trn}^2N}{d}\Sigma_{trn}^{-2}\right)^{-2}\right)}\\
        & \qquad + \lim_{d, N \to \infty, d/N \to c} \frac{ O\left( \frac{\|\Sigma_{trn}\|_F^2}{N^2}\right) + o\left(\frac{1}{N}\right)}{\frac{\eta_{tst}^2}{d}Tr\left(\beta_U\beta_U^T\left(I + \frac{\eta_{trn}^2N}{d}\Sigma_{trn}^{-2}\right)^{-2}\right)}\\
        &= \lim_{d, N \to \infty, d/N \to c} \frac{Tr\left(\beta_U\beta_U^T\left(I + \frac{\eta_{trn}^2}{c}\Sigma_{trn}^{-2}\right)^{-2}\right) - \frac{c}{c-1}\Tr(\beta_U\beta_U^T (I+\eta_{trn}^2\Sigma_{trn}^{-2})^{-1})}{Tr\left(\beta_U\beta_U^T\left(I + \frac{\eta_{trn}^2}{c}\Sigma_{trn}^{-2}\right)^{-2}\right)}\\
        & \qquad + \lim_{d, N \to \infty, d/N \to c} \frac{ O\left( \frac{c\|\Sigma_{trn}\|_F^2}{N}\right) + o\left(c\right)}{\eta_{tst}^2Tr\left(\beta_U\beta_U^T\left(I + \frac{\eta_{trn}^2}{c}\Sigma_{trn}^{-2}\right)^{-2}\right)}\\
        &= \lim_{d, N \to \infty, d/N \to c} \frac{Tr\left(\beta_U\beta_U^T\right) - \frac{c}{c-1}\Tr(\beta_U\beta_U^T)}{Tr\left(\beta_U\beta_U^T\right)} + \lim_{d, N \to \infty, d/N \to c} \frac{ O\left( \frac{c\|\Sigma_{trn}\|_F^2}{N}\right) + o(1)}{\eta_{tst}^2Tr\left(\beta_U\beta_U^T\right)}\\
        &= 1 - \frac{c}{c-1} + \lim_{d, N \to \infty, d/N \to c}  O\left( \frac{\|\Sigma_{trn}\|_F^2}{N}\right)\\
        & = \begin{cases} \frac{1}{c-1} & ;\|\Sigma_{trn}\|_F^2 = o(N)\\
         \frac{1}{c-1} + k & ;\|\Sigma_{trn}\|_F^2 = \Theta(N) \end{cases}
    \end{align*}

for some unknown problem-dependent constant $k$. This establishes the claim for $c>1$, and the proof for when $c<1$ is analogous and in fact simpler.
    
\end{proof}

\subsection{Proof for Corollary \ref{thm:iid-test}}
\iidtest*

\begin{proof}
    Follows immediately from Corollary \ref{cor:test-dist}.
\end{proof}

\subsection{Proofs for Theorem \ref{thm:iid-train}, IID Training Data With Isotropic Covariance}
\label{sec:proof-iid-train}

\iidtrain*

\begin{proof}
Then if $X_{trn}$ is the data matrix, the singular values squared for $X_{trn}$ are the eigenvalues of 
\[
    X_{trn}^T X_{trn} = Z^TU^TUZ = Z^TZ
\]
Then $Z^TZ$ is a $N \times N$ matrix, and due to the normalization of the variance of the entries, this is a Wishart Matrix. Further, we know that the eigenvalue distribution can be approximated by the Marchenko Pastur distribution with shape parameter $r/N$ \cite{Marenko1967DISTRIBUTIONOE, Gotze2011OnTR, Gtze2003RateOC, Gtze2004RateOC, Gtze2005TheRO, Bai2003ConvergenceRO}. 

Then we have that for the $c < 1$ case, we have the variance is 
\[
    \frac{1}{d} \frac{c}{1-c} \sum_{i=1}^r \frac{1}{c^2} \left(\frac{\sigma_i^4}{(\sigma_i^2 + \sigma_{trn}^2/c)^2} + \frac{1}{\sigma_{trn}^2} \frac{\sigma_i^2}{(\sigma_i^2 + \sigma_{trn}^2)^2}\right)
\]
Then we simplify this as the following.
\[
    \frac{r}{d}\frac{1}{c(1-c)}\left(\mathbb{E}\left[\frac{\sigma_i^4}{(\sigma_i^2 + \sigma_{trn}^2/c)^2}\right] + \frac{1}{\sigma_{trn}^2}\mathbb{E}\left[\frac{\sigma_i^2}{(\sigma_i^2 + \sigma_{trn}^2)^2}\right]\right)
\]

If $\lambda$ is an eigenvalue of the training data gram matrix, then the variance term of the generalization error has terms of the following form. 
\[
    \frac{\lambda^2}{(\lambda + 1/c)^2}, \quad \frac{\lambda}{(\lambda + 1/c)^2}, \quad \frac{\lambda}{\lambda + 1}
\]
The value of these for the Marchenko Pastur distribution can be found in \cite{sonthaliaUnderPeak}. 

\[
    \mathbb{E}\left[\frac{\lambda}{\lambda + \eta_{trn}^2}\right] = \frac{1}{2} + \frac{1+\eta_{trn}^2c_r - \sqrt{(1-c_r+\eta_{trn}^2c_r)^2 + 4c_r^2\eta_{trn}^2}}{2c_r}
\]
\[
    \mathbb{E}\left[ \frac{\lambda}{(\lambda+ \eta_{trn}^2)^2} \right] = \frac{1+c_r + \eta_{trn}^2c_r}{2\sqrt{(1-c_r +c_r\eta_{trn}^2)^2 +4c_r^2\eta_{trn}^2}} - \frac{1}{2} + o(1)
\]
\[
    \mathbb{E}\left[ \frac{\lambda^2}{(\lambda + \eta_{trn}^2)^2} \right] = \mathbb{E}\left[\frac{\lambda}{\lambda + \eta_{trn}^2}\right] - \eta_{trn}^2 \left( \mathbb{E}\left[ \frac{\lambda}{(\lambda+ \eta_{trn}^2)^2} \right]\right)
\]
$c_r = r/N$

The proofs for the rest of the terms are similar.
\end{proof}

\subsection{Proofs for Corollary \ref{cor:iid-train-test}, IID Training and Test Data With Isotropic Covariance}
\label{sec:proof-iid}

\iid*
\begin{proof}

For the bias, we get 
\[
    \frac{\eta_{trn}^4}{N_{tst}} \frac{1}{c^2}\mathbb{E}\left[\frac{1}{(\sigma_i^2 + \eta_{trn}^2/c)^2}\right] \|L\|_F^2
\]

The value of these for the Marchenko Pastur distribution can be found in \cite{sonthaliaUnderPeak}. 

\[
    \mathbb{E}\left[\frac{1}{(\lambda + \eta_{trn}^2)^2}\right] = \frac{\eta_{trn}^2c_r^2 + c_r^2 + \eta_{trn}^2c_r - 2c_r + 1}{2\eta_{trn}^4c_r \sqrt{4\eta_{trn}^2c_r^2 + (1-c_r+\eta_{trn}^2c_r)^2}} + \frac{1}{2\eta_{trn}^4}\left(1-\frac{1}{c_r}\right)
\]
\end{proof}

\section{Numerical Details}
\label{sec:experimental}
In this section, we include the computational details required to reproduce the data and figures in the paper. The code for the experiments can be found in the following anonymized repository \href{https://anonymous.4open.science/r/Low-Rank-Generalization-Without-IID-99F7/README.md}{[Link]}.

\paragraph{Data}
For our transfer learning results, we use real datasets namely CIFAR \cite{Krizhevsky2009LearningML}, STL10 \cite{Coates2011AnAO} and SVHN \cite{Netzer2011ReadingDI}. We will mostly be working with the training and test split of CIFAR, training split of STL10 and training split of SVHN. We will also use the test split of STL10 for our data augmentation results, refer figure \ref{fig:non-identical_tf} and section \ref{sec:experimental-data-aug}, to avoid overlaps between training and test data. 

To verify the application of our results to I.I.D. data, we generate datasets from certain distributions, the details of which are presented in the upcoming sections. 

The test data is normalized so that each coordinate has mean zero and a standard deviation of 5. This is done before we do any other pre-processing. 

\paragraph{Compute Time}
For figures \ref{fig:combined}, \ref{fig:regression_tf}, \ref{fig:iid_test_error} and \ref{fig:outsubspacefull_tf}, we use the same training data from CIFAR train split. Thus, we combine our code implementation for these figures. This saves up compute time for mean empirical error since inversion of the matrix $X_{trn} + A_{trn}$, for obtaining $W_{opt}$, occurs once for each empirical run for all 4 figures. The code was implemented using Google Colab with A100 Nvidia GPU which took approximately 1 hour for the 200 trials for each value of r. Since the results are computed for 4 values of r, the entire experiment was completed within approximately 4 hours.

Figures \ref{fig:combined-aug} and \ref{fig:non-identical_tf} took approximately 4 hours each using A100 Nvidia GPU on Google Colab. Figures \ref{fig:optimalNoise} and \ref{fig:opt-gen} were computed together in approximately 40 minutes. Figure \ref{fig:gaussian_train} took approximately 1 hour to compute. Figure \ref{fig:iid_both_error} only took around 10 minutes due to less number of $N$ values and only 50 trials. All the above was implemented using A100 GPU on Colab. Figure \ref{fig:full} took approximately 4.5 hours using T4 Nvidia GPU on Google Colab.

\subsection{Principal Component Regression}
\label{sec:experimental-pcr}
We use four datasets for the set of results obtained through principal component regression namely, CIFAR train split, CIFAR test split, STL10 dataset and SVHN dataset. 

\paragraph{In-Subspace}
\label{sec:experimental-pcr-is}
For figure \ref{fig:insubspace_tf}, the test data lies in the same low-dimensional subspace as the training dataset. The experimental setting is as follows. 
\begin{itemize}[nosep,leftmargin=*]
    \item Training data, of order $d \times N$, is sampled from flattened CIFAR train split such that $d = 3072$ and N ranges between 1050 and 10500 with an increment of 550 for the results.
    \item We project our training data over the first $r$ principal components where $r$ refers to the rank and varies as 25, 50, 100 and 150.
    \item Test datasets, of order $d \times N_{tst}$, are sampled from CIFAR test split, STL10 train split and SVHN train split where $d = 3072$ and $N_{tst} = 2500$. 
    \item We also project these test datasets onto the low-dimensional subspace using the projection matrices. 
    \item For denoising, we generate Gaussian noise matrix $A_{trn}$ with norm $\sqrt{N}$ for the training data and $A_{tst}$ with norm $\sqrt{N_{tst}}$ for the test datasets.
\end{itemize}
 The theoretical error is calculated using the formula in Theorem \ref{thm:main} and the empirical error is the mean squared error. 

\paragraph{Out-of-Subspace}
\label{sec:experimental-pcr-os}
Next, we test our formulas for test datasets which lie outside the training distribution space. 

\paragraph{Small $\alpha$}
We detail the numerical setup required to generate figure \ref{fig:outsubspaceperturb_tf}. 
\begin{itemize}[nosep,leftmargin=*]
    \item Training data, of order $d \times N$, is sampled from flattened CIFAR train split such that $d = 3072$ and N ranges between 1050 and 10500 with an increment of 550 for the results.
    \item We project our training data over the first $r$ principal components where $r$ refers to the rank and varies as 25, 50, 100 and 150.
    \item Test datasets, of order $d \times N_{tst}$, are sampled from CIFAR test split, STL10 train split and SVHN train split where $d = 3072$ and $N_{tst} = 2500$. 
    \item We project these test datasets onto the low-dimensional subspace using the projection matrices. 
    \item We add a small amount of full-dimensional Gaussian noise to the projected datasets to generate out-of-subspace datasets with small $\alpha$. Here, we consider the case where $\alpha = 0.1$. 
    \item For denoising, we generate Gaussian noise matrix $A_{trn}$ with norm $\sqrt{N}$ for the training data and $A_{tst}$ with norm $\sqrt{N_{tst}}$ for the test datasets.
\end{itemize}
The empirical error shown in figure \ref{fig:outsubspaceperturb_tf} is the square root of the mean squared error. The theoretical bounds on the error are calculated using Theorem \ref{thm:out-of-subspace}. 

\paragraph{Large $\alpha$.}
For figure \ref{fig:outsubspacefull_tf}, the experimental setup is as follows. 
\begin{itemize}[nosep,leftmargin=*]
    \item Training data, of order $d \times N$, is sampled from flattened CIFAR train split such that $d = 3072$ and N ranges between 1050 and 10500 with an increment of 550 for the results.
    \item We project our training data over the first $r$ principal components where $r$ refers to the rank and varies as 25, 50, 100 and 150.
    \item Test datasets, of order $d \times N_{tst}$, are sampled from CIFAR test split, STL10 train split and SVHN train split where $d = 3072$ and $N_{tst} = 2500$. 
    \item We do not project these test datasets onto the low-dimensional subspace. We retain their high dimensions. The values of $\alpha$ for different values of $r$ are provided in figure \ref{fig:outsubspacefull_tf}.
    \item For denoising, we generate Gaussian noise matrix $A_{trn}$ with norm $\sqrt{N}$ for the training data and $A_{tst}$ with norm $\sqrt{N_{tst}}$ for the test datasets.
\end{itemize}

\subsection{Linear Regression}
\label{sec:experimental-regression}
To consider the linear regression case for figure \ref{fig:regression_tf}, 
\begin{itemize}[nosep,leftmargin=*]
    \item Training data, of order $d \times N$, is sampled from flattened CIFAR train split such that $d = 3072$ and N ranges between 1050 and 10500 with an increment of 550 for the results.
    \item We project our training data over the first $r$ principal components where $r$ refers to the rank and varies as 25, 50, 100 and 150.
    \item Gaussian noise matrix with norm $\sqrt{N}$ is added to the training data. 
    \item We generate normally-distributed $\beta_{opt}$ of order $d \times 1$ with norm 1. The learned estimator is computed as $\beta^T = \beta_{opt}^T W$ where $W$ is the minimum norm solution to the least squares denoising problem. For theoretical error, we compute $\hat{\beta}^T = \beta_{opt}U$.
    \item Test datasets, of order $d \times N_{tst}$, are sampled from CIFAR test split, STL10 train split and SVHN train split where $d = 3072$ and $N_{tst} = 2500$. 
    \item We also project these test datasets onto the low-dimensional subspace using the projection matrices. 
    \item Gaussian noise matrix with norm $\sqrt{N_{tst}}$ is added to the test datasets.
    \item Finally, the test datasets, $X_{tst}$, are replaced with $\beta^TX_{tst}$ to compute the error for the linear regression problem. 
\end{itemize}

\subsection{Data Augmentation}
\label{sec:experimental-data-aug}
To emphasize the application of our results to non-I.I.D. data, we consider two cases of data augmentation to our training data. 

\paragraph{Without Independence}
\label{sec:experimental-without-independence}
The experimental setting to obtain the empirical generalization error is as follows. 
\begin{itemize}[nosep,leftmargin=*]
    \item We sample 1000 images from the CIFAR train split as the first batch of our training data. 
    For experimental results
    \item We augment the above batch with the same batch to vary $N$ between 1000 and 6000 with an increment of 1000. We project the dataset onto its first $r$ principal components where $r = 25, 50, 100$ and 150.
    \item We add gaussian noise with norm $\sqrt{N}$ to the training data as before. Note that the noise on augmented batches would be independent of the noise in the original batch. This is the only assumption required for our result. 
    \item Test datasets, of order $d \times N_{tst}$, sampled from CIFAR test split, STL10 train split and SVHN train split where $d = 3072$ and $N_{tst} = 2500$ are also projected onto the low-dimensional subspace.
\end{itemize}
We calculate the theoretical generalization error for more values of $c$ to obtain smoother curves. Note that the left singular vectors i.e., the columns of matrix $U$, do not change when we augment our training batches. We utilize this to speed-up our computation for theoretical curves. 
\begin{itemize}[nosep,leftmargin=*]
    \item  We sample 1000 images from the CIFAR train split as the first batch of our training data. 
    \item We obtain the projection matrix $P = UU^T$ and the matrix $L = U^TX_{tst}$ from the SVD of the first batch itself. 
    \item The generalization error is computed from the formula in Theorem \ref{thm:main} for values of $N$ between 1000 and 6000 with an increment of 50. 
    \item We scale the singular values by a factor of $\nicefrac{N}{1000}$ to account for the augmenting. 
\end{itemize}

\paragraph{Without Identicality} 
\label{sec:experimental-without-identicality}
To generate figure \ref{fig:non-identical_tf}, 
\begin{itemize}[nosep,leftmargin=*]
 \item We use training data, of order $d \times N$, such that $d = 3072$ and N ranges between 1050 and 10500 with an increment of 550 for the results.
 \item We use $N/2$ images from the CIFAR training split and $N/2$ images from the STL10 training split concatenated together for our training data.
    \item We project our training data over the first $r$ principal components where $r$ refers to the rank and varies as 25, 50, 100 and 150.
    \item Test datasets, of order $d \times N_{tst}$, are sampled from CIFAR test split, STL10 test split and SVHN train split where $d = 3072$ and $N_{tst} = 2500$. This is done to avoid any overlaps between training and test data.
    \item We also project these test datasets onto the low-dimensional subspace using the projection matrices. 
    \item For denoising, we generate Gaussian noise matrix $A_{trn}$ with norm $\sqrt{N}$ for the training data and $A_{tst}$ with norm $\sqrt{N_{tst}}$ for the test datasets.
\end{itemize}

\subsection{I.I.D. Data}
\label{sec:experimental-without-iid}
We also perform experiments to verify our results in cases where training and test datasets are I.I.D. The numerical details for those experiments are presented in this section. 

\paragraph{I.I.D. Test Data}
\label{sec:experimental-without-iid-test}
To generate figure \ref{fig:iid_test_error}, 
\begin{itemize}[nosep,leftmargin=*]
    \item Training data, of order $d \times N$, is sampled from flattened CIFAR train split such that $d = 3072$ and N ranges between 1050 and 10500 with an increment of 550 for the results.
    \item We project our training data over the first $r$ principal components where $r$ refers to the rank and varies as 25, 50, 100 and 150.
    \item We generate $L$ from Gaussian distribution of norm $\sqrt{N_{tst}}$ where $N_{tst} = 2500$. 
    \item We obtain our I.I.D. test data of order $d \times N_{tst}$ as $X_{tst} = UL$ where $U$ contains the left singular vectors of the projected training data. 
    \item For denoising, we generate Gaussian noise matrix $A_{trn}$ with norm $\sqrt{N}$ for the training data and $A_{tst}$ with norm $\sqrt{N_{tst}}$ for the test datasets.
\end{itemize}

\paragraph{I.I.D. Train Data}
\label{sec:experimental-without-iid-train}
To generate figure \ref{fig:gaussian_train}, 
\begin{itemize}[nosep,leftmargin=*]
    \item We generate the left singular matrix $U$ from the SVD of a Gaussian matrix of order $d \times r$ where $M = 3072$ and $r = 50$.
    \item We generate the training matrix $X_{trn} = UZ$ where $Z$ is of order $r \times N$ such that each column is normally distributed with mean 0 and variance $1/r$. \item Here, $N$ varies from 1050 to 10500 with an increment of 550. 
    \item Test datasets, of order $d \times N_{tst}$, are sampled from CIFAR test split, STL10 train split and SVHN train split where $d = 3072$ and $N_{tst} = 2500$.  
    \item We also project these test datasets onto the $r$-dimensional subspace using projection matrices. 
    \item For denoising, we generate Gaussian noise matrix $A_{trn}$ with norm $\sqrt{N}$ for the training data and $A_{tst}$ with norm $\sqrt{N_{tst}}$ for the test datasets.
\end{itemize}

\paragraph{I.I.D Train and Test Data}
\label{sec:experimental-without-iid-train-test}
To generate figure \ref{fig:iid_both_error}, 
\begin{itemize}[nosep,leftmargin=*]
    \item We generate the left singular matrix $U$ from the SVD of a Gaussian matrix of order $d \times r$ where $M = 3072$ and $r = 50$.
    \item We generate the training matrix $X_{trn} = UZ$ where $Z$ is of order $r \times N$ such that each column is normally distributed with mean 0 and variance $1/r$. \item Here, $N$ varies from 500 to 6010 with an increment of 550 for the empirical markers and with an increment of 55 for theoretical values on the solid curve. 
    \item We generate $L$ from Gaussian distribution of norm $\sqrt{N_{tst}}$ where $N_{tst} = 5000$. 
    \item We obtain our I.I.D. test data of order $d \times N_{tst}$ as $X_{tst} = UL$ where $U$ contains the left singular vectors of the projected training data. 
    \item For denoising, we generate Gaussian noise matrix $A_{trn}$ with norm $\sqrt{N}$ for the training data and $A_{tst}$ with norm $\sqrt{N_{tst}}$ for the test datasets.
\end{itemize}

\subsection{Full Dimensional Denoising}
\label{sec:experimental-denoising}
To generate figure \ref{fig:full}, 
\begin{itemize}[nosep,leftmargin=*]
   \item Training data, of order $d \times N$, is sampled from flattened CIFAR train split such that $d = 3072$ and N ranges between 1050 and 10500 with an increment of 550 for the results.
    \item We project our training data over the first $r$ principal components where $r$ is the minimum of $d$ and $N$. This implies that the data is full dimensional. 
    \item Test datasets, of order $d \times N_{tst}$, are sampled from CIFAR test split, STL10 train split and SVHN train split where $d = 3072$ and $N_{tst} = 2500$. 
    \item We also project these test datasets onto the low-dimensional subspace using the projection matrices. 
    \item For denoising, we generate Gaussian noise matrix $A_{trn}$ with norm $\sqrt{N}$ for the training data and $A_{tst}$ with norm $\sqrt{N_{tst}}$ for the test datasets.
\end{itemize}

\subsection{Optimal $\eta_{trn}$}
\label{sec:experimental-sigma}
To generate figures \ref{fig:optimalNoise} and \ref{fig:opt-gen}, 
\begin{itemize}[nosep,leftmargin=*]
    \item Training data, of order $d \times N$, is sampled from flattened CIFAR train split such that $d = 3072$ and N ranges between 500 and 5500 as \{500, 750, 1000, 1250, 1500, 1750, 2000, 2250, 2500, 2600, 2700, 2800, 2900, 3000, 3020, 3130, 3200, 3300, 3400, 3500, 3750, 4000, 4250, 4500, 4750, 5000, 5250, 5500\}.
    \item We project our training data over the first $r$ principal components where $r = 50$. 
    \item Test datasets, of order $d \times N_{tst}$, are the training dataset with new noise and sampled from CIFAR test split, STL10 train split and SVHN train split where $d = 3072$ and $N_{tst} = N$. 
    \item We compute generalization error for 2000 $\eta_{trn}$ values ranging from 1/3.5 to 100 for each $N$ from our formula in Theorem \ref{thm:main}.
    \item We report the optimal $\eta_{trn}$ found to minimise the generalization error in figure \ref{fig:optimalNoise} and the optimal generalization error in figure \ref{fig:opt-gen}. 
\end{itemize}

\end{document}